\documentclass[11pt]{article}

\usepackage{iftex}
\ifPDFTeX
  \usepackage[utf8]{inputenc}
  \usepackage[T1]{fontenc}
  \usepackage{lmodern}
\fi

\usepackage[a4paper,margin=25mm]{geometry}
\usepackage{setspace}
\usepackage{microtype}

\usepackage{amsmath,amssymb,amsfonts,amsthm,mathtools,bm}

\usepackage{booktabs}
\usepackage{array}
\usepackage{enumitem}
\usepackage{algorithm}
\usepackage{algpseudocode}
\usepackage{graphicx}

\usepackage{xcolor}
\usepackage{kotex}

\usepackage[numbers,sort&compress]{natbib}
\usepackage[colorlinks=true,linkcolor=blue,citecolor=red,urlcolor=blue]{hyperref}
\usepackage[nameinlink,capitalize,noabbrev]{cleveref}
\crefname{assumption}{Assumption}{Assumptions}
\Crefname{assumption}{Assumption}{Assumptions}
\crefname{algorithm}{Algorithm}{Algorithms}
\Crefname{algorithm}{Algorithm}{Algorithms}
\makeatletter
\providecommand*{\theHalgorithm}{\thealgorithm}
\@ifundefined{theHALG@line}{%
  \newcommand*{\theHALG@line}{\theHalgorithm.\arabic{ALG@line}}%
}{%
  \renewcommand*{\theHALG@line}{\theHalgorithm.\arabic{ALG@line}}%
}
\makeatother
\numberwithin{equation}{section}
\newtheorem{definition}{Definition}[section]
\newtheorem{assumption}{Assumption}[section]
\newtheorem{proposition}{Proposition}[section]
\newtheorem{lemma}{Lemma}[section]
\newtheorem{theorem}{Theorem}[section]
\newtheorem{corollary}{Corollary}[section]

\newcommand{\R}{\mathbb{R}}
\newcommand{\jsr}{\rho}
\newcommand{\E}{\mathbb{E}}

\newcommand{\co}{\operatorname{co}}
\newcommand{\diag}{\operatorname{diag}}

\title{Heavy-Ball Q-Learning with Residual Weighting Correction}

\author{%
\small Donghwan Lee\\
\small Department of Electrical Engineering\\
\small Korea Advanced Institute of Science and Technology (KAIST)\\
\small Daejeon 34141, South Korea (email: \texttt{donghwan@kaist.ac.kr})
}

\date{}

\begin{document}

\maketitle

\begin{abstract}
This paper proposes a corrected heavy-ball Q-learning method for reinforcement
learning (RL) and establishes convergence of its deterministic mean dynamics. It also identifies conditions
under which the method is theoretically guaranteed to converge faster than
standard Q-learning. The same construction is then extended to Q-learning with
linear function approximation, where analogous convergence and acceleration
statements are derived for the corresponding corrected fixed point. The sampled stochastic versions are treated through conditional-mean recursions and, in the stated linear-function-approximation setting, finite-time bounds. The analysis is based on a switched linear system (SLS)
representation of Q-learning algorithms and on the joint spectral radius (JSR)
of the associated switching families. This SLS viewpoint is not commonly used in
standard analyses of Q-learning, and it provides a complementary framework and
new insight into how heavy-ball momentum can accelerate Q-learning.
\end{abstract}

\begin{center}
\small
\textbf{Keywords.}
Q-value iteration, heavy-ball momentum, Bellman residual, joint spectral radius,
switched linear systems, constant-preserving diagonal weighting
\end{center}

\section{Introduction}
\label{sec:introduction}

Q-learning~\citep{watkins1992qlearning} is one of the most fundamental and
widely used methods in reinforcement learning (RL)~\citep{sutton2018reinforcement}.
In the tabular discounted setting, it learns the optimal Q-function through
Bellman optimality updates without requiring an explicit transition model, and
it also underlies modern deep variants such as deep
Q-learning~\citep{mnih2015human}. The usual convergence argument relies on the
contraction property of the Bellman optimality operator, but this contraction can
be weak when the discount factor is close to one. Therefore, many methods have
tried to improve the speed of Q-learning or closely related value-iteration
recursions. Examples include speedy Q-learning~\citep{azar2011speedy},
momentum-based Q-learning~\citep{weng2019momentum}, proportional-integral-derivative (PID)-type value
iteration~\citep{farahmand2021pid}, first-order accelerated value
iteration~\citep{goyal2022firstorder}, and rank-one correction or deflation
methods~\citep{bertsekas1995rankone,lee2026deflated}. However, theoretical
guarantees that a modified recursion is faster than standard Q-learning remain
comparatively limited, especially for control problems in which the active greedy
policy can change along the trajectory.

This paper proposes a corrected heavy-ball Q-learning method obtained by
modifying the simplest heavy-ball Q-learning recursion by using the actual momentum coefficient $\alpha\eta$. The
correction is designed so that the associated mean mappings share a common
eigenvector. Along this common eigenvector, we derive conditions under which the
corrected heavy-ball recursion has a smaller certified rate than the standard
Q-learning recursion. The common eigenvector makes the analysis tractable and is
the reason such a theoretical acceleration guarantee can be obtained. In the
directions orthogonal to this common eigenvector, the present argument does not
give the same acceleration guarantee. The contribution is therefore a precise
certificate in one analytically identifiable direction, rather than a
full-direction acceleration theorem, and this gives a useful perspective on the
geometry of the heavy-ball Q-learning.

For this analysis, we model the heavy-ball Q-learning mean dynamics as a
switched linear system (SLS)~\citep{liberzon2003switching,lin2009stability,shorten2007stability}.
The greedy action in the Bellman maximum determines the active mode of the SLS,
and the resulting matrix family is analyzed through the joint spectral radius
(JSR)~\citep{rota1960note,jungers2009joint}. This switching-system viewpoint
provides a new route for studying the proposed heavy-ball Q-learning recursion
and gives additional intuition about when acceleration can be certified.

We also extend the method to linear function approximation and obtain analogous
conclusions for the corrected fixed point defined by the modified update, not
necessarily for the standard projected Bellman fixed point. The main analysis is
written for deterministic mean dynamics, which keeps the role of the common
eigenvector clear. The same update can be implemented as a model-free stochastic
RL recursion. For the constant-step-size tabular sampled recursion, the result is
a conditional-mean certificate; finite-time stochastic bounds are proved only in
the stated sampled linear-function-approximation setting. We also give a short
discussion of the independent and identically distributed (i.i.d.) observation
version. A Markovian observation version can be developed with standard stochastic approximation tools, as in
\citep{lee2026lyapunovcertified}, but that extension is not developed here.

\section{Related work}
\label{sec:related_works}

Momentum-based modifications of value iteration have been studied from
several perspectives. The proportional-integral-derivative (PID) accelerated value iteration framework
in~\citep{farahmand2021pid} interprets value iteration as a feedback-control
system and introduces proportional-derivative (PD), proportional-integral (PI),
and related PID variants. In fixed-policy settings, the PD/heavy-ball term changes the
eigenvalues of the value-iteration error dynamics through a second-order
companion polynomial associated with each eigenvalue of the transition matrix.
Momentum value computation and accelerated value iteration were also studied
in~\citep{goyal2022firstorder}, which connects value iteration with first-order
optimization methods and analyzes momentum-type value-iteration updates for
policy evaluation. Momentum-based accelerated Q-learning has also been studied
in stochastic approximation settings~\citep{weng2019momentum,bowen2021finitetime}.

Other acceleration mechanisms for dynamic programming iterations are related but
technically different. Adaptive relaxation and lookahead methods for value
iteration were studied in~\citep{herzberg1994accelerating,shlakhter2010acceleration};
Anderson acceleration applies a general fixed-point acceleration scheme to
dynamic programming iterations~\citep{geist2018anderson}; momentum value
iteration in~\citep{vieillard2020momentum} averages successive $Q$-functions
rather than using the heavy-ball Q-value iteration update analyzed here; and
anchored/Halpern-type value iteration gives another provably accelerated
Bellman-error mechanism for both Bellman consistency and optimality
operators~\citep{lee2023anchoring}. Speedy Q-learning is also relevant because it
uses two successive Q-estimates to obtain faster finite-sample bounds in a
model-free setting~\citep{azar2011speedy}. These works support the broader idea
that past iterates or modified Bellman updates can accelerate Bellman-type
recursions.

In contrast, this paper proposes a modified heavy-ball Q-learning recursion that
yields a theoretically verifiable acceleration certificate. To the best of our
knowledge, the algorithm studied here has not appeared previously, and the
convergence and acceleration analysis developed below is new. Unlike much of the
existing momentum-based value-iteration and Q-learning literature, which focuses mainly on policy
evaluation or stochastic approximation bounds, this paper treats the control
setting for a simple constant-step-size heavy-ball Q-learning recursion. The proof
technique is also different: we represent the recursion as a switched linear
system and compare the resulting switching families through the joint spectral
radius rather than relying on fixed-policy spectral or contraction arguments.

The slow constant direction of discounted dynamic programming iterations has a
long history. Rank-one correction and extrapolation methods exploit the dominant
stochastic matrix direction to improve value-iteration behavior~\citep{bertsekas1995rankone}.
Recent rank-one or deflation-based methods also use dominant-direction
information, including rank-one modified value iteration and deflated dynamics
value iteration~\citep{kolarijani2025rankone,lee2025deflateddynamics}. More
recent deflation-based approaches remove or quotient out the component along the
common eigenvector in order to expose faster transverse dynamics; see, for
example, the switching-geometry analysis of deflated Q-value iteration (Q-VI)~\citep{lee2026deflated}.
The present paper is related in that every standard Q-VI switching matrix treats
the constant-vector direction in the same way. This shared direction fixes the
ambient-space benchmark rate, even though the remaining policy-dependent
directions may decay faster. The mechanism studied here is different:
heavy-ball momentum does not deflate or remove this direction, but turns it into
a second-order linear time-invariant dynamics.

The JSR is a standard measure of worst-case exponential growth
for products of matrices under arbitrary switching; see, for
example,~\citep{jungers2009joint}. Value-iteration updates naturally lead to
SLS representations because the greedy action can change with
the iterate. The SLS analysis used here is close to the direct switching representation of
Q-learning~\citep{lee2026lyapunovcertified}. The
projected-error viewpoint is also related to recent JSR
analyses of Q-VI geometry, including~\citep{lee2026geometry,lee2026deflated}.

\section{Preliminaries}
\label{sec:preliminaries}
\subsection{Notation}
\label{sec:notation}

The set of real numbers is denoted by $\R$; $\R^m$ is the $m$-dimensional
Euclidean space; and $\R^{m\times n}$ is the set of all $m\times n$ real
matrices. For a matrix $A$, $A^\top$ denotes its transpose. The identity
matrix is denoted by $I$. For a finite state set $\mathcal S$ and a finite
action set $\mathcal A$, $e_s$ and $e_a$ denote the standard basis vectors
associated with $s\in\mathcal S$ and $a\in\mathcal A$, respectively.  With the
action-block ordering used throughout, for the state-action index $i=(s,a)$,
write $e_i=e_a\otimes e_s$.  More generally,
standard basis vectors in other Euclidean spaces use the same notation when the
dimension is clear from context, and $\otimes$ denotes the Kronecker product.
For a finite set $\mathcal S$, $|\mathcal S|$ denotes its cardinality. For finite tabular state and action sets, set $n:=|\mathcal S||\mathcal A|$. Moreover, we write $\Delta_m:=\left\{q\in\R^m:q_i\geq0,\ \sum_{i=1}^m q_i=1\right\}$ for the probability simplex in $\R^m$. For a finite matrix family
$\mathcal H=\{A_1,\ldots,A_N\}$, $\co(\mathcal H)
:=\left\{\sum_{i=1}^N\lambda_i A_i:
\lambda_i\geq0,\ \sum_{i=1}^N\lambda_i=1\right\}$ denotes its convex hull. We also use standard matrix notation that appears repeatedly below. For a
vector $x$, $\|x\|_2$ is the Euclidean norm. For a square matrix $A$,
$\rho(A)$ denotes its ordinary spectral radius. After the matrix-family radius is
introduced in~\Cref{def:jsr}, the notation $\rho(\mathcal H)$ is used for that
quantity when the argument is a switching family.
For a matrix $B$, $\operatorname{range}(B)$ denotes its column space,
$B\succ0$ means that $B$ is symmetric positive definite, and
$\lambda_{\max}(B)$ denotes the largest eigenvalue when $B$ is symmetric.
For square matrices $A$ and $B$ of the same size, we write $A\sim B$ when
they are similar, that is, when $B=S^{-1}AS$ for some nonsingular matrix $S$.
Expectations are denoted by $\mathbb E[\cdot]$.

\subsection{Switched linear systems}
\label{sec:switching_systems}

The stability certificates used later are stated in the language of switched
systems, so we first recall the basic model before specializing it to the
Bellman-induced switching families. Consider the discrete-time switching
affine system (SAS)~\citep{liberzon2003switching,lin2009stability,shorten2007stability}
\begin{align*}
  x_{k+1}=A_{\sigma_k}x_k+b_{\sigma_k},
\end{align*}
where each index $i\in\{1,2,\ldots,M\}$, equivalently each affine pair
$(A_i,b_i)$, is called a \emph{mode}, and $\sigma_k$ is the switching signal that
selects the active mode at time $k$. The matrix $A_{\sigma_k}$ is selected from
the prescribed family $\mathcal H:=\{A_1,A_2,\ldots,A_M\}$, which is called a
\emph{switching family}; $b_{\sigma_k}$ is a mode-dependent affine term. When
$b_{\sigma_k}=0$, the deterministic part reduces to a switched linear system
(SLS), $x_{k+1}=A_{\sigma_k}x_k$. The worst-case exponential rate of the SLS
family is characterized by the joint spectral radius (JSR), defined as follows.
\begin{definition}
\label{def:jsr}
For a bounded set of matrices $\mathcal H\subset\R^{m\times m}$, its JSR is
\begin{align*}
\jsr(\mathcal H)
:=
\lim_{k\to\infty}
\sup_{A_1,\ldots,A_k\in\mathcal H}
\|A_k\cdots A_1\|^{1/k}.
\end{align*}
\end{definition}
We note that the JSR is independent of the chosen submultiplicative
norm~\citep{rota1960note,jungers2009joint}. When $\mathcal H$ is finite, the
supremum for each fixed product length is a maximum over products generated by
matrices in $\mathcal H$. For a finite family $\mathcal H$, the notation
$\jsr(\co(\mathcal H))$ means the JSR computed when each factor in a product is
allowed to be any convex combination of matrices in $\mathcal H$.
Throughout the later JSR certificates, $\rho(\mathcal H)$ denotes this same JSR
value when the argument is a switching family.

Three standard JSR facts used below, convex-hull invariance, similarity invariance, and block upper triangular decomposition, are collected in~\Cref{app:standard-jsr-and-selector-lemmas}. In particular, because the policy-mode families used below are finite, passing from deterministic modes to their convex hull does not change the JSR.

\subsection{Joint spectral radius and Lyapunov certificates}
\label{sec:joint_spectral_radius}

The JSR in~\Cref{def:jsr} turns arbitrary switched products into a
single worst-case exponential rate. An SLS is uniformly exponentially stable under arbitrary switching if there
exist constants $C\geq1$
and $\eta\in(0,1)$ such that $\|A_{\sigma_{k-1}}\cdots A_{\sigma_0}x\|_2\leq C\eta^k\|x\|_2$ for every horizon $k\geq0$, every initial state $x\in\R^m$, and every switching
sequence. A \emph{common Lyapunov function} for $\mathcal H$ is a positive definite
function that decreases along every mode. In the analysis below, the Bellman
maximum induces stochastic-policy switching, and the Lyapunov functions are built from products of the corresponding mode matrices.
The following finite-family piecewise-quadratic construction~\citep{lee2026lyapunovcertified,hushenzhang2010generating} is the Lyapunov certificate used in the deterministic arguments.
\setcounter{lemma}{2}
\begin{lemma}
\label{lem:common_lyapunov_construction}
Let $\mathcal H=\{A_1,A_2,\ldots,A_M\}\subset\R^{m\times m}$ and fix
$\epsilon>0$ such that $\beta_\epsilon:=\rho(\mathcal H)+\epsilon\in(0,1)$.
For a word $\sigma=(\sigma_1,\ldots,\sigma_k)\in\{1,\ldots,M\}^k$, write
\begin{align*}
A_\sigma:=A_{\sigma_k}\cdots A_{\sigma_1},
\end{align*}
with the convention that the empty word gives $A_\sigma=I$. Define
\begin{align*}
V_\epsilon^\infty(x)
:=\sum_{k=0}^\infty \beta_\epsilon^{-2k}
\max_{\sigma\in\{1,\ldots,M\}^k}\|A_\sigma x\|_2^2,
\qquad x\in\R^m.
\end{align*}
Then $V_\epsilon^\infty$ is finite for every $x$, and there exists
$C_\epsilon>0$ such that
\begin{align*}
\|x\|_2^2\leq V_\epsilon^\infty(x)\leq C_\epsilon\|x\|_2^2,
\qquad \forall x\in\R^m.
\end{align*}
The function $p_\epsilon(x):=\sqrt{V_\epsilon^\infty(x)}$ is a norm on
$\R^m$, and every mode satisfies
\begin{align*}
p_\epsilon(A_i x)\leq\beta_\epsilon p_\epsilon(x),
\qquad \forall x\in\R^m,
\qquad i=1,\ldots,M.
\end{align*}
\end{lemma}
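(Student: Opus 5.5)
The plan is to derive everything from one uniform product bound that follows from the definition of the JSR. Since $\beta_\epsilon=\rho(\mathcal H)+\epsilon$ exceeds $\rho(\mathcal H)$, choose an intermediate rate $\gamma:=\rho(\mathcal H)+\epsilon/2$. By \Cref{def:jsr} applied with the Euclidean operator norm, there is $K$ such that $\max_{|\sigma|=k}\|A_\sigma\|_2\leq\gamma^k$ for all $k\geq K$. Setting $L:=\max_{0\leq k<K}\max_{|\sigma|=k}\|A_\sigma\|_2\gamma^{-k}$ and $c:=\max\{1,L\}$, we obtain
\begin{align*}
\max_{|\sigma|=k}\|A_\sigma\|_2\leq c\,\gamma^k\qquad\text{for all }k\geq0 .
\end{align*}
Each term of the series is then at most $c^2(\gamma/\beta_\epsilon)^{2k}\|x\|_2^2$. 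Since $\gamma<\beta_\epsilon$, the series is geometric and summable, which gives finiteness and the upper bound with $C_\epsilon:=c^2/(1-(\gamma/\beta_\epsilon)^2)$. The lower bound $\|x\|_2^2\leq V_\epsilon^\infty(x)$ comes from the $k=0$ term alone, because the empty word gives $A_\sigma=I$. If $\rho(\mathcal H)=0$, the same argument applies with $\gamma=\epsilon/2>0$, so this case needs no separate treatment.

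Next I will show that $p_\epsilon$ is a norm. For each fixed word $\sigma$, the map $x\mapsto\|A_\sigma x\|_2$ is a seminorm. A maximum over finitely many seminorms is again a seminorm, and $\beta_\epsilon^{-k}\max_\sigma\|A_\sigma x\|_2$ is a scaled seminorm. Then $p_\epsilon(x)=\bigl(\sum_k q_k(x)^2\bigr)^{1/2}$, with $q_k(x):=\beta_\epsilon^{-k}\max_{|\sigma|=k}\|A_\sigma x\|_2$, is the $\ell^2$-norm of the sequence $(q_k(x))_k$. The triangle inequality follows by applying $q_k(x+y)\leq q_k(x)+q_k(y)$ termwise and then Minkowski's inequality in $\ell^2$. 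Convergence is guaranteed by the previous step. Absolute homogeneity is immediate, and definiteness follows from the lower bound $p_\epsilon(x)\geq\|x\|_2$.

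Finally, I will prove the mode contraction by reindexing. For fixed $i$ and every $k$, each word $\sigma$ of length $k$ produces the word $(i,\sigma)$ of length $k+1$, namely $A_\sigma A_i=A_{(i,\sigma)}$. Hence
\begin{align*}
\max_{|\sigma|=k}\|A_\sigma A_i x\|_2\leq\max_{|\tau|=k+1}\|A_\tau x\|_2 .
\end{align*}
Multiplying by $\beta_\epsilon^{-2k}=\beta_\epsilon^{2}\beta_\epsilon^{-2(k+1)}$ and summing over $k\geq0$ gives
\begin{align*}
V_\epsilon^\infty(A_ix)\leq\beta_\epsilon^2\sum_{j\geq1}\beta_\epsilon^{-2j}\max_{|\tau|=j}\|A_\tau x\|_2^2\leq\beta_\epsilon^2V_\epsilon^\infty(x),
\end{align*}
where the last step simply drops the nonnegative $j=0$ term. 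Taking square roots yields $p_\epsilon(A_ix)\leq\beta_\epsilon p_\epsilon(x)$.

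The only step that needs some care is the first one. The JSR is defined as a limit, so it gives a $\gamma^k$ bound only for large $k$, and the finitely many short products must be absorbed into the constant $c$. This is also why we need the slack between $\gamma$ and $\beta_\epsilon$: bounding by $\beta_\epsilon^k$ directly would not give a summable series. Everything else is routine.
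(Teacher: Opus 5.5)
Your proof is correct. The paper omits its own proof and defers to its cited references, and your argument is the standard one for this construction. It uses a uniform bound $\max_{|\sigma|=k}\|A_\sigma\|_2\le c\gamma^k$ with slack $\gamma<\beta_\epsilon$, the empty-word term for the lower bound, termwise subadditivity plus $\ell^2$-Minkowski for the norm property, and the one-letter shift $A_\sigma A_i=A_{(i,\sigma)}$ for the contraction.
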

\begin{proof}
The construction and proof are given in~\citep{lee2026lyapunovcertified,hushenzhang2010generating}; we
omit the proof here.
\end{proof}
Throughout the sequel, whenever this construction is applied to a switching
family with JSR less than one, we call the resulting $V_\epsilon^\infty$ a \emph{JSR
Lyapunov function} for that family, and we call the associated norm $p_\epsilon$
a \emph{JSR Lyapunov norm}. The following lemma is the common bridge from a JSR bound to convergence of the corresponding error recursion.
\begin{lemma}
\label{lem:standard_jsr_convergence}
\label{lem:basic_jsr_convergence}
Let $\mathcal H=\{A_1,\ldots,A_M\}\subset\R^{n\times n}$ be finite and suppose
$\rho(\mathcal H)<1$. Consider any error recursion
\begin{align*}
x_{k+1}=A_kx_k,
\qquad A_k\in\co(\mathcal H),
\qquad k\in\{0,1,\ldots\}.
\end{align*}
Then, for every $\epsilon>0$ such that
$\beta_\epsilon:=\rho(\mathcal H)+\epsilon<1$, the Lyapunov function
$V_\epsilon^\infty$ and norm $p_\epsilon$ from~\Cref{lem:common_lyapunov_construction}, applied to
$\mathcal H$, satisfy
\begin{align*}
V_\epsilon^\infty(x_{k+1})
\leq
\beta_\epsilon^2 V_\epsilon^\infty(x_k),
\qquad
p_\epsilon(x_{k+1})\leq \beta_\epsilon p_\epsilon(x_k).
\end{align*}
Consequently, if $C_\epsilon$ is the constant from~\Cref{lem:common_lyapunov_construction}, then
\begin{align*}
p_\epsilon(x_k)\leq \beta_\epsilon^k p_\epsilon(x_0),
\qquad
\|x_k\|_2
\leq
\beta_\epsilon^k p_\epsilon(x_0)
\leq
\sqrt{C_\epsilon}\,\beta_\epsilon^k\|x_0\|_2,
\end{align*}
and hence $x_k\to0$.
\end{lemma}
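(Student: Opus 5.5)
The plan is to reduce everything to the per-mode contraction in~\Cref{lem:common_lyapunov_construction}. The only new ingredient is that $A_k$ lies in the convex hull $\co(\mathcal H)$ rather than in $\mathcal H$ itself. I would show that the JSR Lyapunov norm $p_\epsilon$ still contracts by $\beta_\epsilon$ under any convex combination of modes.

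Fix $\epsilon>0$ with $\beta_\epsilon<1$ and write $A_k=\sum_{i=1}^M\lambda_i A_i$ with $\lambda_i\geq0$ and $\sum_i\lambda_i=1$. Since $p_\epsilon$ is a norm by~\Cref{lem:common_lyapunov_construction}, I would apply the triangle inequality and absolute homogeneity, and then the per-mode bound $p_\epsilon(A_ix)\leq\beta_\epsilon p_\epsilon(x)$:
\begin{align*}
p_\epsilon(A_kx_k)\leq\sum_{i=1}^M\lambda_i\,p_\epsilon(A_ix_k)\leq\beta_\epsilon\sum_{i=1}^M\lambda_i\,p_\epsilon(x_k)=\beta_\epsilon p_\epsilon(x_k).
\end{align*}
Squaring both sides, and using $V_\epsilon^\infty=p_\epsilon^2$ together with nonnegativity, gives $V_\epsilon^\infty(x_{k+1})\leq\beta_\epsilon^2V_\epsilon^\infty(x_k)$.

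Iterating this one-step bound gives $p_\epsilon(x_k)\leq\beta_\epsilon^kp_\epsilon(x_0)$. For the Euclidean bound I would use the sandwich inequality from~\Cref{lem:common_lyapunov_construction}. The lower half gives $\|x_k\|_2\leq p_\epsilon(x_k)$, since $\|x\|_2^2\leq V_\epsilon^\infty(x)$. The upper half gives $p_\epsilon(x_0)\leq\sqrt{C_\epsilon}\|x_0\|_2$. Chaining these two with the iterated bound yields the stated chain of inequalities. Since $\beta_\epsilon<1$, the right-hand side tends to $0$, so $x_k\to0$.

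The only step needing care is the convex-hull extension. It rests entirely on $p_\epsilon$ being a genuine norm, in particular on its convexity, which~\Cref{lem:common_lyapunov_construction} already provides. If one preferred not to rely on that, an alternative is to note that $\rho(\co(\mathcal H))=\rho(\mathcal H)$ by the convex-hull invariance fact cited in the appendix. However, applying~\Cref{lem:common_lyapunov_construction} directly to $\co(\mathcal H)$ would be awkward because that family is infinite, so the triangle-inequality route is the one I would use.
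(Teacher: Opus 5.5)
Your proposal is correct and is essentially the paper's own proof. Convexity of the JSR Lyapunov norm $p_\epsilon$ extends the per-generator contraction to every $A_k\in\co(\mathcal H)$, and squaring gives the $V_\epsilon^\infty$ bound; iterating and then applying the sandwich inequality from \Cref{lem:common_lyapunov_construction} gives the Euclidean estimate and $x_k\to0$.
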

\begin{proof}
By~\Cref{lem:common_lyapunov_construction}, each generator satisfies
\(p_\epsilon(A_i x)\le \beta_\epsilon p_\epsilon(x)\).  If
\(A=\sum_{i=1}^M\lambda_i A_i\in\co(\mathcal H)\), where
\(\lambda_i\ge0\) and \(\sum_i\lambda_i=1\), then convexity of the norm
\(p_\epsilon\) gives
\begin{align*}
p_\epsilon(Ax)
&=p_\epsilon\!\left(\sum_{i=1}^M\lambda_i A_i x\right) \\
&\le \sum_{i=1}^M\lambda_i p_\epsilon(A_i x) \\
&\le \beta_\epsilon p_\epsilon(x).
\end{align*}
Applying this bound with \(A=A_k\) yields
\(p_\epsilon(x_{k+1})\le \beta_\epsilon p_\epsilon(x_k)\).  Squaring gives the
corresponding inequality for \(V_\epsilon^\infty=p_\epsilon^2\), and iteration
and~\Cref{lem:common_lyapunov_construction} give the Euclidean estimate.
\end{proof}

The rest of the paper uses this JSR implication as the main convergence
certificate. Once an error recursion has been written as an SLS whose mode
family has JSR below one, convergence and an exponential rate bound follow
from~\Cref{lem:basic_jsr_convergence}. After each deterministic mean
recursion is introduced, the corresponding sampled stochastic RL recursion is
also given. For the sampled versions, a JSR bound below one for the associated
conditional-mean switching family gives the drift certificate used in the
stochastic analysis. Exact stochastic convergence is not asserted for the
constant-step-size tabular sampled recursion; finite-time error bounds are proved
only in the sampled linear-function-approximation setting in
\Cref{sec:sampled_stochastic_hb_lfa}. This paper treats only the independent and identically distributed (i.i.d.)
sampling case to keep the formulas transparent. The same conditional-mean
and noise decomposition can be combined with Markovian-observation
stochastic-approximation arguments, as in~\citep{chen2024lyapunov}, to
extend the setting beyond i.i.d. samples.

\subsection{Discounted Markov decision processes}
\label{sec:discounted_mdps}

We consider a finite discounted Markov decision process (MDP), the standard model for reinforcement learning (RL) control~\citep{puterman1994markov,bertsekas1996neuro}, with state-space
$\mathcal S=\{1,\ldots,|\mathcal S|\}$, action-space
$\mathcal A=\{1,\ldots,|\mathcal A|\}$, transition probability
$P(s'\mid s,a)$, real-valued one-step reward $r(s,a,s')$, expected reward $R(s,a):=\sum_{s'\in\mathcal S}P(s'\mid s,a)r(s,a,s')$, and discount factor $\gamma\in(0,1)$. State-action functions are viewed as
vectors in $\R^{n}$ using the action-block ordering $(1,1),(2,1),\ldots,(|\mathcal S|,1),
(1,2),(2,2),\ldots,(|\mathcal S|,|\mathcal A|)$. All matrices and vectors indexed by state-action pairs use this ordering. Define
\begin{align*}
P:=
\begin{bmatrix}
P_1\\
\vdots\\
P_{|\mathcal A|}
\end{bmatrix}
\in\R^{n\times |\mathcal S|},
\qquad
R:=
\begin{bmatrix}
R(\cdot,1)\\
\vdots\\
R(\cdot,|\mathcal A|)
\end{bmatrix}
\in\R^{n},
\end{align*}
where $P_a=P(\cdot\mid\cdot,a)\in\R^{|\mathcal S|\times|\mathcal S|}$.
Let $\Theta$ denote the set of deterministic stationary policies
$\pi:\mathcal S\to\mathcal A$. For any stochastic policy
$\mu:\mathcal S\to\Delta_{|\mathcal A|}$, define
\begin{align*}
\Pi^\mu:=
\begin{bmatrix}
\mu(1)^\top\otimes e_1^\top\\
\mu(2)^\top\otimes e_2^\top\\
\vdots\\
\mu(|\mathcal S|)^\top\otimes e_{|\mathcal S|}^\top
\end{bmatrix}
\in\R^{|\mathcal S|\times n}.
\end{align*}
For a deterministic policy $\pi\in\Theta$, the same notation $\Pi^\pi$ is used by
identifying $\pi(s)$ with its one-hot encoding.
For
$Q\in\R^{n}$, define
\begin{align*}
V_Q(s):=\max_{a\in\mathcal A}Q(s,a),
\qquad
V_Q:=(V_Q(1),\ldots,V_Q(|\mathcal S|))^\top.
\end{align*}
The Bellman optimality operator is $F(Q):=R+\gamma P V_Q$.
Let $Q^\star$ denote the unique fixed point of the Bellman optimality operator, $Q^\star=F(Q^\star)$. For a tie-broken greedy policy $\pi_Q$ satisfying
\begin{align*}
  \pi_Q(s)\in\operatorname*{arg\,max}_{a\in\mathcal A}Q(s,a),
  \qquad s\in\mathcal S,
\end{align*}
one has $V_Q=\Pi^{\pi_Q}Q$ and
\begin{equation*}
  F(Q)=R+\gamma P\Pi^{\pi_Q}Q.
\end{equation*}
The following lemma is used repeatedly to write Bellman differences as linear maps depending on stochastic policies.
\begin{lemma}
\label{lem:bellman_difference_selector}
For any two vectors $Q,\bar Q\in\R^{n}$, there exists a stochastic policy
$\mu_{Q,\bar Q}$ such that
\begin{equation*}
  V_Q-V_{\bar Q}=\Pi^{\mu_{Q,\bar Q}}(Q-\bar Q).
\end{equation*}
Consequently,
\begin{equation}
  F(Q)-F(\bar Q)=\gamma P\Pi^{\mu_{Q,\bar Q}}(Q-\bar Q).
  \label{eq:bellman-difference-selector-general}
\end{equation}
Moreover, for every stochastic policy $\mu$,
\begin{equation*}
  P\Pi^\mu\in\co\left\{P\Pi^\pi:\pi\in\Theta\right\},
\end{equation*}
and $P\Pi^\mu$ is row-stochastic and satisfies $P\Pi^\mu\mathbf{1}=\mathbf{1}$.
\end{lemma}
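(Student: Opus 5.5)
The proof is a statewise construction, one state $s\in\mathcal S$ at a time. Fix $s$ and let $a^\star\in\operatorname*{arg\,max}_a Q(s,a)$ and $\bar a^\star\in\operatorname*{arg\,max}_a \bar Q(s,a)$. Greedy optimality sandwiches the value difference:
\begin{align*}
Q(s,\bar a^\star)-\bar Q(s,\bar a^\star)\le V_Q(s)-V_{\bar Q}(s)\le Q(s,a^\star)-\bar Q(s,a^\star).
\end{align*}
The left inequality holds because $V_Q(s)\ge Q(s,\bar a^\star)$. The right inequality holds because $V_{\bar Q}(s)\ge \bar Q(s,a^\star)$.

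Write $d:=Q-\bar Q$. By the sandwich, $V_Q(s)-V_{\bar Q}(s)$ lies in the interval $[d(s,\bar a^\star),d(s,a^\star)]$. Hence there is $\lambda_s\in[0,1]$ with
\begin{align*}
V_Q(s)-V_{\bar Q}(s)=\lambda_s d(s,a^\star)+(1-\lambda_s)d(s,\bar a^\star).
\end{align*}
If the endpoints coincide, take $\lambda_s=1$. Define $\mu(s):=\lambda_s e_{a^\star}+(1-\lambda_s)e_{\bar a^\star}$; when $a^\star=\bar a^\star$ this is just $e_{a^\star}$. Then $\mu(s)\in\Delta_{|\mathcal A|}$.

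By the definition of $\Pi^\mu$, row $s$ of $\Pi^\mu d$ equals $\sum_a\mu(s)_a d(s,a)$. This uses the action-block ordering, under which $(\mu(s)^\top\otimes e_s^\top)(e_a\otimes e_{s'})=\mu(s)_a\,\delta_{ss'}$. Therefore $V_Q-V_{\bar Q}=\Pi^{\mu_{Q,\bar Q}}(Q-\bar Q)$ with $\mu_{Q,\bar Q}:=\mu$.

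The Bellman identity \eqref{eq:bellman-difference-selector-general} then follows immediately: $F(Q)-F(\bar Q)=\gamma P(V_Q-V_{\bar Q})$, because $R$ cancels.

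For the convex-hull claim, note that $\Pi^\mu$ is affine in each $\mu(s)$ separately, since row $s$ depends only on $\mu(s)$. Expand the product measure:
\begin{align*}
\Pi^\mu=\sum_{\pi\in\Theta}\Big(\prod_{s}\mu(s)_{\pi(s)}\Big)\Pi^\pi .
\end{align*}
The weights are nonnegative and sum to $\prod_s\sum_a\mu(s)_a=1$. To verify the identity row by row, observe that row $s$ of the right-hand side is $\sum_\pi\prod_{s'}\mu(s')_{\pi(s')}\,(e_{\pi(s)}^\top\otimes e_s^\top)$. Summing out the coordinates $s'\ne s$ leaves $\sum_a\mu(s)_a(e_a^\top\otimes e_s^\top)$, which is row $s$ of $\Pi^\mu$. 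Left-multiplying by $P$ gives $P\Pi^\mu\in\co\{P\Pi^\pi:\pi\in\Theta\}$.

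Finally, $\Pi^\mu$ has nonnegative entries, and its rows sum to one because $\mu(s)$ is a probability vector, so $\Pi^\mu\mathbf 1=\mathbf 1$. $P$ is row-stochastic, since each row $(s,a)$ is $P(\cdot\mid s,a)$. The product of row-stochastic matrices is row-stochastic, so $P\Pi^\mu\mathbf 1=P\mathbf 1=\mathbf 1$.

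The only step needing care is the sandwich bound and its handling of tie-breaking; once it holds, the interpolation is elementary. I expect the Kronecker indexing bookkeeping in the convex-hull expansion to be the most error-prone part, rather than a conceptual obstacle.
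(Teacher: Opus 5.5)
Your proof is correct and complete. The sandwich $V_Q(s)-V_{\bar Q}(s)\in[(Q-\bar Q)(s,\bar a^\star),\,(Q-\bar Q)(s,a^\star)]$, the per-state interpolation, the Kronecker row identity under the action-block ordering, the product-weight expansion $\Pi^\mu=\sum_{\pi}\bigl(\prod_s\mu(s)_{\pi(s)}\bigr)\Pi^\pi$, and the row-sum argument all check out. This includes the degenerate cases $a^\star=\bar a^\star$ and coinciding endpoints.

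The paper gives no proof of this lemma; it only defers to earlier work on the switching-system representation of Q-learning. So there is no in-text argument to compare against, and your statewise sandwich-and-interpolate construction supplies a self-contained proof.

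Beyond bare existence, your explicit construction gives two extras. First, the selector mixes at most two actions per state. Second, fix a deterministic tie-breaking rule for $a^\star$ and $\bar a^\star$, and define $\lambda_s$ by the explicit ratio formula, with $\lambda_s=1$ when the endpoints coincide. Then $\mu_{Q,\bar Q}$ is a Borel function of $(Q,\bar Q)$. That justifies a claim the paper asserts without comment in the sampled linear-function-approximation section: the Bellman-difference selector $\mu_k$ can be taken $\mathcal F_k$-measurable.

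One cosmetic point: rename your $d:=Q-\bar Q$, since $d$ already denotes the state-action sampling distribution throughout the paper.
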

\begin{proof}
The proof can be found in~\citep{lee2026lyapunovcertified}.
\end{proof}

We use \(d\) to denote a state-action sampling distribution on ${\mathcal S}\times {\mathcal A}$. In the i.i.d. observation model, \(d\) is the sampling distribution of \((s_k,a_k)\); in the Markovian observation model, \(d\) is the stationary state-action distribution of the behavior-induced chain.
Throughout the paper, we assume that the sampling distribution satisfies \(d(s,a)>0\) for every \((s,a)\in {\mathcal S}\times {\mathcal A}\).
This distribution forms the full-support state-action weight vector $d\in\R^n$ with $d^\top\mathbf{1}=1$, and we define
\begin{equation*}
  D:=\operatorname{diag}(d),
  \qquad
  d_{\max}:=\max_{(s,a)\in\mathcal S\times\mathcal A} d(s,a) .
\end{equation*}

\section{Corrected Q-learning}
\label{sec:constant_preserving_d_weighting}

In this section, we first introduce the deterministic standard Q-learning (QL) update in~\Cref{alg:direct-d-standard-update} as the baseline against which the subsequent convergence rates will be
compared.
\begin{algorithm}[H]
\caption{Deterministic standard Q-learning}
\label{alg:direct-d-standard-update}
\begin{algorithmic}
\Require $Q_0\in\R^n$, step-size $\alpha>0$, and $D=\diag(d)$
\For{$k=0,1,2,\ldots$}
  \State Update
  \[
    Q_{k+1}\gets Q_k+\alpha D\{F(Q_k)-Q_k\}.
  \]
\EndFor
\end{algorithmic}
\end{algorithm}
This deterministic recursion is a natural starting point for stochastic RL
extensions with sampled coordinate updates. However, after inserting $D$, the
common all-ones eigenvector argument in~\citep{lee2026spectralhbqvi}, which is
central to the acceleration analysis developed below, no longer applies in
general: the corresponding matrices need not map the all-ones vector to a scalar
multiple of itself unless the diagonal weights in $D$ are uniform.
Motivated by this obstruction, this section introduces a modification that keeps
a nonuniform diagonal weighting on the Bellman residual components while
restoring the common eigenvector needed for the SLS-based JSR analysis below.
\begin{definition}
\label{def:constant-preserving-d-preconditioner}
Define the matrix
\begin{equation*}
  \Pi_d:=\mathbf{1}d^\top .
\end{equation*}
For a scalar $\delta>0$, define the correction matrix
\begin{equation}
  H:=\delta\Pi_d+D(I-\Pi_d)
  =D+(\delta\mathbf{1}-d)d^\top .
  \label{eq:constant-preserving-d-preconditioner}
\end{equation}
\end{definition}
The next lemma states the property that makes this correction useful: multiplication by $H$ maps a constant vector to a constant vector.
\begin{lemma}
\label{lem:hd-preserves-one}
The matrix $H$ in~\Cref{eq:constant-preserving-d-preconditioner} satisfies
\begin{equation}
  H\mathbf{1}=\delta\mathbf{1}.
  \label{eq:hd-preserves-one}
\end{equation}
\end{lemma}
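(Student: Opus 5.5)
The plan is to multiply the definition of $H$ in~\Cref{eq:constant-preserving-d-preconditioner} by $\mathbf{1}$ on the right and use only the normalization $d^\top\mathbf{1}=1$ from~\Cref{sec:discounted_mdps}. I will work with the first form, $H=\delta\Pi_d+D(I-\Pi_d)$, because it splits the computation into two pieces that are each immediate. The second form gives a consistency check.

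First, I will show that $\Pi_d=\mathbf{1}d^\top$ fixes $\mathbf{1}$. Indeed $\Pi_d\mathbf{1}=\mathbf{1}(d^\top\mathbf{1})=\mathbf{1}$, since $d$ is a probability vector on $\mathcal S\times\mathcal A$. It follows that $(I-\Pi_d)\mathbf{1}=\mathbf{1}-\mathbf{1}=0$, so the weighted part $D(I-\Pi_d)$ annihilates the constant vector. The remaining term gives $\delta\Pi_d\mathbf{1}=\delta\mathbf{1}$. Adding the two contributions yields $H\mathbf{1}=\delta\mathbf{1}$, which is~\Cref{eq:hd-preserves-one}.

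As a check, I will use the expanded form $H=D+(\delta\mathbf{1}-d)d^\top$. Here $D\mathbf{1}=d$ and $(\delta\mathbf{1}-d)d^\top\mathbf{1}=\delta\mathbf{1}-d$. Their sum is again $\delta\mathbf{1}$. Confirming that the two expressions in~\Cref{eq:constant-preserving-d-preconditioner} agree is a one-line expansion: $D\Pi_d=D\mathbf{1}d^\top=dd^\top$.

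I do not expect any real obstacle. The only point to state explicitly is that the normalization $d^\top\mathbf{1}=1$ is what makes $\Pi_d$ a projector onto $\operatorname{span}\{\mathbf{1}\}$. Full support of $d$ and positivity of $\delta$ are not needed for this identity; they matter only later, for invertibility and the role of $H$ as a weighting.
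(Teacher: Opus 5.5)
Your proposal is correct and follows essentially the same route as the paper: the paper's proof also uses $\Pi_d\mathbf 1=\mathbf 1$ and $(I-\Pi_d)\mathbf 1=0$ (from $d^\top\mathbf 1=1$) in the form $H=\delta\Pi_d+D(I-\Pi_d)$ to conclude $H\mathbf 1=\delta\mathbf 1$. Your extra check with the expanded form $D+(\delta\mathbf 1-d)d^\top$ is also correct. You are right that full support of $d$ and $\delta>0$ play no role in this identity.
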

\begin{proof}
Using $d^\top\mathbf 1=1$, one has $\Pi_d\mathbf 1=\mathbf 1d^\top\mathbf 1=\mathbf 1$ and $(I-\Pi_d)\mathbf 1=\mathbf 1-\mathbf 1=0$.
Substituting these identities into~\Cref{eq:constant-preserving-d-preconditioner}
gives $H\mathbf 1
  =\delta\Pi_d\mathbf 1+D(I-\Pi_d)\mathbf 1
    =\delta\mathbf 1$, which completes the proof.
\end{proof}
\begin{lemma}
\label{lem:cpd-H-nonsingular}
The matrix $H$ in~\Cref{eq:constant-preserving-d-preconditioner} is nonsingular.
\end{lemma}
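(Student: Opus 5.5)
The plan is to show directly that the kernel of $H$ is trivial. We use the decomposition $\R^n=\operatorname{span}\{\mathbf 1\}\oplus\ker(d^\top)$ induced by the projection $\Pi_d=\mathbf 1d^\top$. Because $d^\top\mathbf 1=1$, $\Pi_d$ is idempotent: $\Pi_d^2=\mathbf 1(d^\top\mathbf 1)d^\top=\Pi_d$. Its range is $\operatorname{span}\{\mathbf 1\}$ and its kernel is $\{x:d^\top x=0\}$.

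Suppose $Hx=0$ and write $x=c\mathbf 1+y$ with $c=d^\top x$ and $d^\top y=0$. By \Cref{lem:hd-preserves-one}, $H\mathbf 1=\delta\mathbf 1$. For $y$ we have $\Pi_d y=0$, so $Hy=D(I-\Pi_d)y=Dy$. Therefore
\begin{equation*}
  0=Hx=c\delta\mathbf 1+Dy .
\end{equation*}

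Next, multiply this identity on the left by $\mathbf 1^\top D^{-1}$; this is allowed because $d$ has full support, so $D$ is invertible. The result is
\begin{equation*}
  c\delta\,\mathbf 1^\top D^{-1}\mathbf 1+\mathbf 1^\top y=0 .
\end{equation*}
This pairing is not quite the right one, since we know $d^\top y=0$, not $\mathbf 1^\top y=0$. So instead we solve the identity for $y$: $y=-c\delta D^{-1}\mathbf 1$. Applying $d^\top$ gives
\begin{equation*}
  0=d^\top y=-c\delta\, d^\top D^{-1}\mathbf 1=-c\delta\,\mathbf 1^\top\mathbf 1=-c\delta n .
\end{equation*}
Since $\delta>0$, this forces $c=0$. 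Then $Dy=0$, and the invertibility of $D$ gives $y=0$, so $x=0$.

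As a cross-check, the matrix determinant lemma applied to $H=D+(\delta\mathbf 1-d)d^\top$ gives
\begin{equation*}
  \det H=\det D\,\bigl(1+d^\top D^{-1}(\delta\mathbf 1-d)\bigr)=\det D\,(1+\delta n-1)=\delta n\det D\neq 0 ,
\end{equation*}
using $d^\top D^{-1}=\mathbf 1^\top$ and $\mathbf 1^\top d=1$.

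There is no serious obstacle. The only delicate point is to pair with $d$, equivalently to use $d^\top D^{-1}=\mathbf 1^\top$, rather than with $\mathbf 1$. Both full support of $d$ and $\delta>0$ are used.
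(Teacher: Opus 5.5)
Your proof is correct and is essentially the paper's argument. The paper uses the same splitting $x=(d^\top x)\mathbf 1+z$ with $d^\top z=0$, and then multiplies $\delta(d^\top x)\mathbf 1+Dz=0$ on the left by $d^\top D^{-1}=\mathbf 1^\top$, which is exactly your step of solving for $y$ and applying $d^\top$. Your matrix-determinant-lemma check, giving $\det H=\delta n\det D$, is a valid one-line alternative that the paper does not use; in a final write-up you can drop the abandoned $\mathbf 1^\top D^{-1}$ pairing.
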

\begin{proof}
Take an arbitrary $x\in\R^n$ and separate it into its $d$-weighted constant
part and the remaining $d$-mean-free part. Set
$a:=d^\top x$ and $z:=x-a\mathbf 1$. Then
\begin{equation*}
  x=a\mathbf 1+z=(d^\top x)\mathbf 1+z,
  \qquad
  d^\top z=d^\top x-a d^\top\mathbf 1=0.
\end{equation*}
This decomposition is unique: if $x=a\mathbf 1+z$ with $d^\top z=0$, then
multiplying by $d^\top$ gives $a=d^\top x$. Hence $\Pi_d x=a\mathbf 1$ and
$(I-\Pi_d)x=z$. The identity $Hx=0$ is therefore equivalent to
\begin{equation*}
  \delta(d^\top x)\mathbf 1+Dz=0.
\end{equation*}
Multiplication by $d^\top D^{-1}$ gives
\begin{align*}
  0
  &=\delta(d^\top x)d^\top D^{-1}\mathbf 1+d^\top z
    =\delta(d^\top x)d^\top D^{-1}\mathbf 1.
\end{align*}
Because $d$ has full support, $D^{-1}$ exists and
$d^\top D^{-1}\mathbf 1=\sum_i d_i(1/d_i)=n>0$. Since $\delta>0$, we obtain
$d^\top x=0$. Then $Dz=0$, and the positivity of the diagonal entries of $D$
implies $z=0$. Hence $x=0$, so $H$ is nonsingular.
\end{proof}
\begin{algorithm}[H]
\caption{Deterministic corrected Q-learning}
\label{alg:cpd-standard-update}
\begin{algorithmic}
\Require $Q_0\in\R^n$, step-size $\alpha>0$, and $H$ from~\Cref{eq:constant-preserving-d-preconditioner}
\For{$k=0,1,2,\ldots$}
  \State Update
  \[
    Q_{k+1}\gets Q_k+\alpha H\{F(Q_k)-Q_k\}.
  \]
\EndFor
\end{algorithmic}
\end{algorithm}
The proposed construction replaces the stochastic weighting matrix $D$ in the update of~\Cref{alg:direct-d-standard-update} with the correction matrix $H$. The resulting algorithm is given in~\Cref{alg:cpd-standard-update} and is called corrected Q-learning (CQL). To analyze its fixed point, define the corresponding map
\begin{equation}
  g(Q):=Q+\alpha H\{F(Q)-Q\}.
  \label{eq:cpd-standard-map}
\end{equation}
The next lemma shows that the correction does not move the Bellman fixed point $Q^\star$.
\begin{lemma}
\label{lem:cpd-standard-fixed-point}
The map $g$ in~\Cref{eq:cpd-standard-map} has the unique fixed point $Q^\star$.
\end{lemma}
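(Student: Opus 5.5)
The plan is to reduce the fixed-point equation of $g$ to the Bellman fixed-point equation, using the nonsingularity of $H$ from \Cref{lem:cpd-H-nonsingular}. Both directions are short, so the proof is essentially a two-line equivalence.

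\emph{Existence.} Since $Q^\star=F(Q^\star)$, the residual $F(Q^\star)-Q^\star$ vanishes. Substituting into \Cref{eq:cpd-standard-map} gives
\begin{equation*}
g(Q^\star)=Q^\star+\alpha H\cdot 0=Q^\star,
\end{equation*}
so $Q^\star$ is a fixed point of $g$.

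\emph{Uniqueness.} Suppose $g(\bar Q)=\bar Q$ for some $\bar Q\in\R^n$. Then
\begin{equation*}
\alpha H\{F(\bar Q)-\bar Q\}=0 .
\end{equation*}
Because $\alpha>0$ and $H$ is nonsingular by \Cref{lem:cpd-H-nonsingular}, this forces $F(\bar Q)=\bar Q$. Hence $\bar Q$ is a fixed point of the Bellman optimality operator. The paper states that $Q^\star$ is the unique such fixed point, which follows from the $\gamma$-contraction of $F$ in the sup norm. Therefore $\bar Q=Q^\star$.

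The only step with real content is the nonsingularity of $H$, which is needed because $H$ is not diagonal and could otherwise have a kernel intersecting the set of Bellman residuals. That fact is already supplied by \Cref{lem:cpd-H-nonsingular}, so I do not expect any obstacle. The statement holds for every $\alpha>0$ and $\delta>0$; no condition on the step size is needed for the fixed-point claim itself, as opposed to convergence.
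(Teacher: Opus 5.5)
Your proposal is correct and essentially identical to the paper's proof. Both reduce $g(\bar Q)=\bar Q$ to $F(\bar Q)=\bar Q$ using $\alpha>0$ and the nonsingularity of $H$ from \Cref{lem:cpd-H-nonsingular}, then invoke uniqueness of the Bellman fixed point, with the converse following directly from $F(Q^\star)=Q^\star$.
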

\begin{proof}
If $\bar Q$ is a fixed point of $g$, then $H\{F(\bar Q)-\bar Q\}=0$.
By~\Cref{lem:cpd-H-nonsingular}, $F(\bar Q)-\bar Q=0$ since $H$ is nonsingular. The discounted Bellman
optimality operator has the unique fixed point $Q^\star$, so $\bar Q=Q^\star$.
Conversely, $F(Q^\star)=Q^\star$ immediately implies that $Q^\star$ is fixed by
$g$.
\end{proof}

Subtracting $Q^\star=F(Q^\star)$ from the update in~\Cref{alg:cpd-standard-update} gives the error recursion
\begin{align}
  Q_{k+1}-Q^\star=(Q_k-Q^\star) +\alpha H\{F(Q_k)-F(Q^\star)-(Q_k-Q^\star)\}.\label{eq:1}
\end{align}

We next show that the error recursion in~\Cref{eq:1} admits an exact SLS representation.
\begin{lemma}
\label{lem:cpd-standard-sls}
The error recursion in~\Cref{eq:1} satisfies the SLS
\begin{equation}
  Q_{k+1}-Q^\star=A_{\mu_k}^{\rm CQL}(Q_k-Q^\star).
  \label{eq:cpd-standard-error-system}
\end{equation}
where for any stochastic policy $\mu$,
\begin{equation}
  A_\mu^{\rm CQL}
  :=I+\alpha H(\gamma P\Pi^\mu-I)
  \label{eq:cpd-standard-mode}
\end{equation}
\end{lemma}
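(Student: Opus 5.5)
The plan is to apply \Cref{lem:bellman_difference_selector} to the Bellman difference in the error recursion~\Cref{eq:1} and then regroup terms. For each iteration $k$, we take $Q=Q_k$ and $\bar Q=Q^\star$ in \Cref{lem:bellman_difference_selector}. This gives a stochastic policy $\mu_k:=\mu_{Q_k,Q^\star}$ such that
\begin{equation*}
  F(Q_k)-F(Q^\star)=\gamma P\Pi^{\mu_k}(Q_k-Q^\star),
\end{equation*}
which is exactly~\Cref{eq:bellman-difference-selector-general}. The policy $\mu_k$ depends on the current iterate, and this dependence is what produces the switching in the SLS.

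Next we substitute this identity into~\Cref{eq:1}. Writing $x_k:=Q_k-Q^\star$, the recursion becomes
\begin{equation*}
  x_{k+1}=x_k+\alpha H\{\gamma P\Pi^{\mu_k}x_k-x_k\}
  =\bigl(I+\alpha H(\gamma P\Pi^{\mu_k}-I)\bigr)x_k .
\end{equation*}
The right-hand side is $A_{\mu_k}^{\rm CQL}x_k$ by the definition~\Cref{eq:cpd-standard-mode}, which is~\Cref{eq:cpd-standard-error-system}. We also record that~\Cref{eq:1} itself follows by subtracting $Q^\star=F(Q^\star)$ from the update of \Cref{alg:cpd-standard-update}. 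This subtraction is valid because $Q^\star$ is a fixed point of $g$ by \Cref{lem:cpd-standard-fixed-point}.

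Finally, the same lemma shows that $P\Pi^{\mu_k}\in\co\{P\Pi^\pi:\pi\in\Theta\}$. Since $A_\mu^{\rm CQL}$ is affine in $P\Pi^\mu$, it follows that $A_{\mu_k}^{\rm CQL}\in\co\{A_\pi^{\rm CQL}:\pi\in\Theta\}$, which puts the recursion in the form needed later for \Cref{lem:basic_jsr_convergence}. This remark is not needed for the statement itself.

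There is no genuine obstacle here. The only point needing care is that the selector $\mu_k$ exists for each $k$ and is an arbitrary stochastic policy, not necessarily a greedy deterministic one. This is exactly what \Cref{lem:bellman_difference_selector} provides, so the representation is exact rather than an inequality.
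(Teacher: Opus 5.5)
Your proposal is correct and follows essentially the same route as the paper: you apply \Cref{lem:bellman_difference_selector} with $Q=Q_k$ and $\bar Q=Q^\star$, then substitute into \Cref{eq:1} and regroup to obtain $A_{\mu_k}^{\rm CQL}$. Your added convex-hull remark also matches the observation the paper makes right after the lemma.
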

\begin{proof}
Consider the error recursion in~\Cref{eq:1}. By~\Cref{eq:bellman-difference-selector-general}, we have $F(Q_k)-F(Q^\star)=\gamma P\Pi^{\mu_k}(Q_k-Q^\star)$, and therefore
\begin{align*}
  Q_{k+1}-Q^\star
  &=\{I+ \alpha H(\gamma P\Pi^{\mu_k}-I)\}(Q_k-Q^\star)\\
  &=A_{\mu_k}^{\rm CQL}(Q_k-Q^\star),
\end{align*}
which is~\Cref{eq:cpd-standard-error-system}.
\end{proof}

The corresponding switching family associated with deterministic policies is
\begin{equation*}
  \mathcal A^{\rm CQL}
  :=\left\{A_\pi^{\rm CQL}:\pi\in\Theta\right\}.
\end{equation*}
It follows directly that, for every stochastic selector $\mu_k$,
$A_{\mu_k}^{\rm CQL}\in\co(\mathcal A^{\rm CQL})$.
Using the identity in~\Cref{eq:hd-preserves-one}, one can prove that every
switching mode in $\co(\mathcal A^{\rm CQL})$ has a common eigenvector.
\begin{lemma}
\label{lem:cpd-common-vector}
For every stochastic policy $\mu$, it holds that
\begin{equation}
  A_\mu^{\rm CQL}\mathbf{1}=\bigl(1-\alpha\delta(1-\gamma)\bigr)\mathbf{1}.
  \label{eq:cpd-common-vector}
\end{equation}
\end{lemma}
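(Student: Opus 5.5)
The plan is a direct computation: apply the mode matrix in~\Cref{eq:cpd-standard-mode} to $\mathbf 1$ and simplify using two facts established earlier. Fix an arbitrary stochastic policy $\mu$. By definition,
\begin{equation*}
  A_\mu^{\rm CQL}\mathbf 1=\mathbf 1+\alpha H\bigl(\gamma P\Pi^\mu\mathbf 1-\mathbf 1\bigr).
\end{equation*}
Everything therefore reduces to evaluating $P\Pi^\mu\mathbf 1$ and then $H\mathbf 1$.

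For the first quantity, I will invoke~\Cref{lem:bellman_difference_selector}, which states that $P\Pi^\mu$ is row-stochastic and satisfies $P\Pi^\mu\mathbf 1=\mathbf 1$. This can also be checked directly. Each row of $\Pi^\mu$ is $\mu(s)^\top\otimes e_s^\top$, whose entries sum to $\sum_a\mu(a\mid s)=1$, so $\Pi^\mu\mathbf 1_n=\mathbf 1_{|\mathcal S|}$. Each row of $P$ is a probability distribution over next states, so $P\mathbf 1_{|\mathcal S|}=\mathbf 1_n$. Substituting gives
\begin{equation*}
  \gamma P\Pi^\mu\mathbf 1-\mathbf 1=-(1-\gamma)\mathbf 1.
\end{equation*}

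Next I apply~\Cref{lem:hd-preserves-one}, namely $H\mathbf 1=\delta\mathbf 1$. This yields
\begin{equation*}
  \alpha H\bigl(-(1-\gamma)\mathbf 1\bigr)=-\alpha\delta(1-\gamma)\mathbf 1.
\end{equation*}
Adding the leading $\mathbf 1$ gives $A_\mu^{\rm CQL}\mathbf 1=\bigl(1-\alpha\delta(1-\gamma)\bigr)\mathbf 1$, which is~\Cref{eq:cpd-common-vector}.

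There is no real obstacle here. The only point requiring care is dimensional bookkeeping: $\mathbf 1$ appears in both $\R^{|\mathcal S|}$ and $\R^n$, and one must confirm that the Kronecker-product row structure of $\Pi^\mu$, under the action-block ordering, sends $\mathbf 1_n$ to $\mathbf 1_{|\mathcal S|}$. Since the identity holds for every stochastic $\mu$, it holds in particular for every deterministic $\pi\in\Theta$. By linearity, the same eigenvalue is then shared by all of $\co(\mathcal A^{\rm CQL})$.
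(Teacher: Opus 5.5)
Your proposal is correct and follows essentially the same route as the paper: compute $(\gamma P\Pi^\mu-I)\mathbf 1=-(1-\gamma)\mathbf 1$ from row-stochasticity of $P\Pi^\mu$, then apply $H\mathbf 1=\delta\mathbf 1$ from \Cref{lem:hd-preserves-one}. Your direct check of $\Pi^\mu\mathbf 1_n=\mathbf 1_{|\mathcal S|}$ and $P\mathbf 1_{|\mathcal S|}=\mathbf 1_n$ is a harmless elaboration of what the paper takes from \Cref{lem:bellman_difference_selector}.
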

\begin{proof}
For every stochastic policy $\mu$, one has $(\gamma P\Pi^\mu-I)\mathbf 1
  =\gamma\mathbf 1-\mathbf 1
  =-(1-\gamma)\mathbf 1$. Using~\Cref{eq:hd-preserves-one}, we obtain
\begin{align*}
  A_\mu^{\rm CQL}\mathbf 1
  &=\left[I+\alpha H(\gamma P\Pi^\mu-I)\right]\mathbf 1\\
  &=\mathbf 1-\alpha(1-\gamma)H\mathbf 1\\
  &=\left(1-\alpha\delta(1-\gamma)\right)\mathbf 1.
\end{align*}
This proves the claim.
\end{proof}
Therefore, the all-ones direction is again a common invariant direction,
equivalently a common eigenvector, with scalar factor
$1-\alpha\delta(1-\gamma)$. Throughout the paper, we require this scalar
constant-direction factor to lie strictly between zero and one. This condition
stabilizes only the common all-ones mode; stability of the full switching family
also depends on the projected dynamics introduced below.
\begin{assumption}
\label{ass:cpd-constant-mode-range}
The parameters $\alpha$ and $\delta$ satisfy
\begin{equation*}
  0<1-\alpha\delta(1-\gamma)<1.
\end{equation*}
\end{assumption}

The correction matrix $H$ should be viewed as a structural correction
rather than as a universal accelerator. It is introduced in place of $D$ to
recover the common eigenvector property in~\Cref{eq:cpd-common-vector} for all
switching modes. This property alone does not imply that CQL is always faster
than QL. The value of $H$ is that it creates the decomposition needed to certify
the acceleration of the heavy-ball CQL recursion introduced next.
Indeed, adding a heavy-ball momentum term directly to QL may also improve convergence, and in some instances it may even be faster than CQL with a heavy-ball momentum term. The
difficulty is that, without the common eigenvector in~\Cref{eq:cpd-common-vector}, an analogous JSR-based certificate is harder to
obtain for the heavy-ball QL without the modification. The subsequent theory and numerical examples show cases in which CQL with momentum is both theoretically
certified and empirically observed to be faster than the corresponding corrected or standard baseline.

The analysis in this paper separates this all-ones direction from its orthogonal complement.
For this purpose, we introduce the following projection and the associated projected SLS.
Let $U\in\R^{n\times(n-1)}$ have orthonormal columns spanning
$\operatorname{span}\{\mathbf{1}\}^\perp$, so that $U^\top U=I_{n-1}$,
$U^\top\mathbf{1}=0$, and
$UU^\top=I-n^{-1}\mathbf{1}\mathbf{1}^\top$. Define the projected
standard family
\begin{equation}
  \bar A_\pi^{\rm CQL}:=U^\top A_\pi^{\rm CQL}U,
  \qquad
  \bar{\mathcal A}^{\rm CQL}
  :=\left\{\bar A_\pi^{\rm CQL}:\pi\in\Theta\right\}.
  \label{eq:cpd-projected-standard-family}
\end{equation}
The matrix $\bar A_\pi^{\rm CQL}$ is the mode induced by
$A_\pi^{\rm CQL}$ on the orthogonal component. The next lemma makes this
statement precise at each iteration.
\begin{lemma}
\label{lem:cpd-projected-error-sls}
Let $Q_k$ follow~\Cref{alg:cpd-standard-update}. The projected error
$U^\top(Q_k-Q^\star)$ satisfies the SLS
\begin{equation}
  U^\top(Q_{k+1}-Q^\star)
  =U^\top A_{\mu_k}^{\rm CQL}U\,U^\top(Q_k-Q^\star),
  \label{eq:cpd-projected-error-system}
\end{equation}
where each projected mode belongs to
$\co(\bar{\mathcal A}^{\rm CQL})$.
\end{lemma}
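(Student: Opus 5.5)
Starting from the exact switched-system representation of \Cref{lem:cpd-standard-sls}, we apply $U^\top$ to both sides of \Cref{eq:cpd-standard-error-system}. Write $e_k:=Q_k-Q^\star$ and split it orthogonally as
\begin{equation*}
  e_k=UU^\top e_k+n^{-1}\mathbf 1\mathbf 1^\top e_k ,
\end{equation*}
using $UU^\top=I-n^{-1}\mathbf 1\mathbf 1^\top$. Then
\begin{equation*}
  U^\top e_{k+1}=U^\top A_{\mu_k}^{\rm CQL}UU^\top e_k+n^{-1}(\mathbf 1^\top e_k)\,U^\top A_{\mu_k}^{\rm CQL}\mathbf 1 .
\end{equation*}

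The key step is to show that the second term vanishes. By \Cref{lem:cpd-common-vector}, $A_{\mu_k}^{\rm CQL}\mathbf 1=(1-\alpha\delta(1-\gamma))\mathbf 1$. Since $U^\top\mathbf 1=0$, we get $U^\top A_{\mu_k}^{\rm CQL}\mathbf 1=0$. This leaves exactly \Cref{eq:cpd-projected-error-system}.

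It remains to show that $U^\top A_{\mu_k}^{\rm CQL}U$ lies in $\co(\bar{\mathcal A}^{\rm CQL})$. By \Cref{lem:bellman_difference_selector}, $P\Pi^{\mu_k}=\sum_{\pi\in\Theta}\lambda_\pi P\Pi^\pi$ with $\lambda_\pi\ge0$ and $\sum_\pi\lambda_\pi=1$. The map $\mu\mapsto A_\mu^{\rm CQL}=I+\alpha H(\gamma P\Pi^\mu-I)$ is affine in $P\Pi^\mu$, so the same weights give $A_{\mu_k}^{\rm CQL}=\sum_\pi\lambda_\pi A_\pi^{\rm CQL}$. Conjugation $X\mapsto U^\top XU$ is linear, hence
\begin{equation*}
  U^\top A_{\mu_k}^{\rm CQL}U=\sum_\pi\lambda_\pi\bar A_\pi^{\rm CQL}\in\co(\bar{\mathcal A}^{\rm CQL}).
\end{equation*}

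No step is a real obstacle. The only point needing care is the cancellation of the cross term, which depends entirely on the common eigenvector property. That property holds for every stochastic selector, not just deterministic policies, so the identity is exact at each iteration and is not merely an invariance statement for the family.
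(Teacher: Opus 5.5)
Your proof is correct and follows essentially the same route as the paper: you decompose the error via $UU^\top+n^{-1}\mathbf 1\mathbf 1^\top=I$, kill the cross term using the common-eigenvector identity together with $U^\top\mathbf 1=0$, and get convex-hull membership from the affine dependence of $A_\mu^{\rm CQL}$ on $P\Pi^\mu$ and the linearity of $X\mapsto U^\top XU$. Your explicit bookkeeping of the convex weights $\lambda_\pi$ is simply a more detailed version of the paper's one-line affine argument.
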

\begin{proof}
By~\Cref{eq:cpd-standard-error-system}, $Q_{k+1}-Q^\star=A_{\mu_k}^{\rm CQL}(Q_k-Q^\star)$. Because $q=n^{-1/2}\mathbf 1$ and the columns of $U$ are
orthonormal and orthogonal to $q$, the matrix $[q\ U]$ is orthogonal. Hence
\begin{equation*}
  qq^\top+UU^\top
  =[q\ U]
    \begin{bmatrix}q^\top\\ U^\top\end{bmatrix}
  =I.
\end{equation*}
Therefore,
\begin{align*}
  Q_k-Q^\star
  &=(qq^\top+UU^\top)(Q_k-Q^\star)\\
  &=q\,q^\top(Q_k-Q^\star)+U\,U^\top(Q_k-Q^\star)\\
  &=\frac{1}{n}\mathbf 1\mathbf 1^\top(Q_k-Q^\star)
    +UU^\top(Q_k-Q^\star).
\end{align*}
Equivalently, the first term is the component along $\operatorname{span}\{\mathbf 1\}$, and the second term is the component in
$\operatorname{span}\{\mathbf 1\}^\perp$.
The common-vector identity in~\Cref{eq:cpd-common-vector} gives
$A_{\mu_k}^{\rm CQL}\mathbf 1=
\{1-\alpha\delta(1-\gamma)\}\mathbf 1$, hence
$U^\top A_{\mu_k}^{\rm CQL}q=0$. Therefore
\begin{align*}
  U^\top(Q_{k+1}-Q^\star)
  &=U^\top A_{\mu_k}^{\rm CQL}
    \{q q^\top(Q_k-Q^\star)+UU^\top(Q_k-Q^\star)\}\\
  &=U^\top A_{\mu_k}^{\rm CQL}U\,U^\top(Q_k-Q^\star),
\end{align*}
which is~\Cref{eq:cpd-projected-error-system}. The map
$P\Pi^\mu\mapsto U^\top[I+\alpha H(\gamma P\Pi^\mu-I)]U$ is affine in
$P\Pi^\mu$. The convex-hull statement in~\Cref{lem:bellman_difference_selector} then implies
$U^\top A_{\mu_k}^{\rm CQL}U\in\co(\bar{\mathcal A}^{\rm CQL})$.
\end{proof}

This decomposition gives a direct expression for the full JSR $\rho(\mathcal A^{\rm CQL})$ in terms of the
constant scalar block corresponding to the common eigenvector and the projected SLS family $\bar{\mathcal A}^{\rm CQL}$.
\begin{lemma}
\label{lem:cpd-standard-jsr-decomposition}
The JSR $\rho(\mathcal A^{\rm CQL})$ satisfies the general identity
\begin{equation}
  \rho(\mathcal A^{\rm CQL})
  =\max\left\{\left|1-\alpha\delta(1-\gamma)\right|,\rho(\bar{\mathcal A}^{\rm CQL})\right\}.
  \label{eq:cpd-standard-jsr-decomposition-general}
\end{equation}
Under~\Cref{ass:cpd-constant-mode-range}, this reduces to
\begin{equation}
  \rho(\mathcal A^{\rm CQL})
  =\max\left\{1-\alpha\delta(1-\gamma),\rho(\bar{\mathcal A}^{\rm CQL})\right\}.
  \label{eq:cpd-standard-jsr-decomposition}
\end{equation}
\end{lemma}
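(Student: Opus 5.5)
We prove the identity with an orthogonal change of basis that makes every mode of $\mathcal A^{\rm CQL}$ block upper triangular. After that, the block-triangular JSR fact from \Cref{app:standard-jsr-and-selector-lemmas} and similarity invariance give the claim. Set $q:=n^{-1/2}\mathbf 1$ and $S:=[q\ U]$, which is orthogonal, so $S^{-1}=S^\top$. For each $\pi\in\Theta$ we compute
\begin{equation*}
  S^\top A_\pi^{\rm CQL}S
  =\begin{bmatrix}
    q^\top A_\pi^{\rm CQL}q & q^\top A_\pi^{\rm CQL}U\\
    U^\top A_\pi^{\rm CQL}q & U^\top A_\pi^{\rm CQL}U
  \end{bmatrix}.
\end{equation*}
By \Cref{lem:cpd-common-vector}, $A_\pi^{\rm CQL}q=(1-\alpha\delta(1-\gamma))q$. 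Hence $U^\top A_\pi^{\rm CQL}q=0$ and $q^\top A_\pi^{\rm CQL}q=1-\alpha\delta(1-\gamma)$. Thus every transformed mode is block upper triangular:
\begin{equation*}
  S^\top A_\pi^{\rm CQL}S
  =\begin{bmatrix}
    1-\alpha\delta(1-\gamma) & c_\pi^\top\\
    0 & \bar A_\pi^{\rm CQL}
  \end{bmatrix},
  \qquad c_\pi^\top:=q^\top A_\pi^{\rm CQL}U .
\end{equation*}
The $(1,1)$ block is the same scalar for all $\pi$, and the lower-right block is exactly the projected mode from \Cref{eq:cpd-projected-standard-family}.

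Similarity invariance of the JSR, applied with the common matrix $S$ for the whole family, gives $\rho(\mathcal A^{\rm CQL})=\rho(\{S^\top A_\pi^{\rm CQL}S:\pi\in\Theta\})$. The block upper triangular decomposition fact then states that the JSR of a family with a common block upper triangular structure equals the maximum of the JSRs of the diagonal-block families. The $(1,1)$ diagonal-block family is the singleton scalar family $\{1-\alpha\delta(1-\gamma)\}$, whose JSR is $|1-\alpha\delta(1-\gamma)|$. The $(2,2)$ family is $\bar{\mathcal A}^{\rm CQL}$. This yields \Cref{eq:cpd-standard-jsr-decomposition-general}. Under \Cref{ass:cpd-constant-mode-range} the scalar lies in $(0,1)$, so the absolute value can be dropped, which gives \Cref{eq:cpd-standard-jsr-decomposition}.

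There is no real obstacle, because the appendix facts are taken as given. The one point needing care is that the off-diagonal blocks $c_\pi^\top$ vary with $\pi$ and need not vanish, so the modes are not block diagonal. The block-triangular JSR lemma is exactly what allows these coupling terms to be ignored.

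If the appendix lemma were not available, we would prove it directly. The lower bound $\rho\ge$ the maximum of the block JSRs is immediate, since the diagonal blocks of a product are the products of the diagonal blocks. For the upper bound, we would expand a length-$k$ product: its off-diagonal block is a sum of $k$ terms, each of the form (scalar power) times $c^\top$ times (product of projected modes). Bounding each term by $C(\max+\epsilon)^{k-1}$ makes the $k$th root of the total tend to the maximum.
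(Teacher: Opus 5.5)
Your proof is correct and follows the paper's argument essentially verbatim: the orthogonal basis $S=[q\ U]$ with $q=n^{-1/2}\mathbf 1$, the common-eigenvector identity from \Cref{lem:cpd-common-vector} to make $U^\top A_\pi^{\rm CQL}q=0$, similarity invariance, the block upper triangular JSR lemma, and then dropping the absolute value under \Cref{ass:cpd-constant-mode-range}. Your optional direct proof of the block-triangular fact is also sound. The lower bound comes from diagonal blocks of products, and the upper bound from expanding the off-diagonal block into $k$ terms each of size $O\bigl((\max+\epsilon)^{k-1}\bigr)$. The paper does not prove this fact and simply cites it.
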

\begin{proof}
Let
\begin{equation*}
  q:=n^{-1/2}\mathbf 1,
  \qquad
  S:=\begin{bmatrix}q&U\end{bmatrix}.
\end{equation*}
Since $U^\top U=I_{n-1}$, $U^\top\mathbf 1=0$, and
$UU^\top=I-n^{-1}\mathbf 1\mathbf 1^\top$, the matrix $S$ is orthogonal. Indeed,
\begin{equation*}
  S^\top S=
  \begin{bmatrix}
    q^\top q&q^\top U\\
    U^\top q&U^\top U
  \end{bmatrix}
  =I_n,
  \qquad
  SS^\top=qq^\top+UU^\top
  =n^{-1}\mathbf 1\mathbf 1^\top+I-n^{-1}\mathbf 1\mathbf 1^\top=I_n.
\end{equation*}
For every deterministic policy $\pi$, \Cref{eq:cpd-common-vector} gives $A_\pi^{\rm CQL}q
  =\bigl(1-\alpha\delta(1-\gamma)\bigr)q$. Therefore, in the orthogonal basis $S$, we can derive
\begin{align*}
  S^\top A_\pi^{\rm CQL}S
  &=
  \begin{bmatrix}
    q^\top A_\pi^{\rm CQL}q & q^\top A_\pi^{\rm CQL}U\\
    U^\top A_\pi^{\rm CQL}q & U^\top A_\pi^{\rm CQL}U
  \end{bmatrix} \\
  &=
  \begin{bmatrix}
    1-\alpha\delta(1-\gamma) & q^\top A_\pi^{\rm CQL}U\\
    0 & \bar A_\pi^{\rm CQL}
  \end{bmatrix},
\end{align*}
where we use the fact that $q^\top A_\pi^{\rm CQL}q=1-\alpha\delta(1-\gamma)$, $U^\top A_\pi^{\rm CQL}q
  =\bigl(1-\alpha\delta(1-\gamma)\bigr)U^\top q=0$, and $U^\top A_\pi^{\rm CQL}U=\bar A_\pi^{\rm CQL}$ by
\Cref{eq:cpd-projected-standard-family}. Thus all matrices in
$\mathcal A^{\rm CQL}$ are simultaneously similar to block upper triangular
matrices with scalar diagonal block $1-\alpha\delta(1-\gamma)$ and projected
block $\bar A_\pi^{\rm CQL}$. By the similarity invariance in
\Cref{lem:similarity-jsr}, we may compute the JSR in this common basis. Applying
\Cref{lem:block-triangular-jsr} gives
\Cref{eq:cpd-standard-jsr-decomposition-general}.  By
\Cref{ass:cpd-constant-mode-range}, $1-\alpha\delta(1-\gamma)>0$, so the
absolute value can be removed and
\Cref{eq:cpd-standard-jsr-decomposition} follows.
\end{proof}

The decomposition immediately implies that the projected SLS family $\bar{\mathcal A}^{\rm CQL}$ cannot have a
larger JSR than the full SLS family ${\mathcal A}^{\rm CQL}$.
\begin{corollary}
\label{lem:cpd-weak-inequality}
The following inequality holds:
\begin{equation*}
  \rho(\bar{\mathcal A}^{\rm CQL})\leq \rho({\mathcal A}^{\rm CQL}).
\end{equation*}
\end{corollary}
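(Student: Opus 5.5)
This follows directly from the decomposition in \Cref{lem:cpd-standard-jsr-decomposition}. The general identity \Cref{eq:cpd-standard-jsr-decomposition-general} expresses $\rho(\mathcal A^{\rm CQL})$ as the maximum of two nonnegative quantities: the modulus $|1-\alpha\delta(1-\gamma)|$ of the scalar block attached to the common eigenvector $\mathbf 1$, and the projected radius $\rho(\bar{\mathcal A}^{\rm CQL})$. A maximum always dominates each of its arguments, so I would simply write
\begin{equation*}
  \rho(\bar{\mathcal A}^{\rm CQL})
  \le \max\left\{\left|1-\alpha\delta(1-\gamma)\right|,\rho(\bar{\mathcal A}^{\rm CQL})\right\}
  =\rho(\mathcal A^{\rm CQL}).
\end{equation*}
This uses only the general identity, so \Cref{ass:cpd-constant-mode-range} is not needed.

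As a sanity check that does not rely on the block triangular lemma, I would also note a direct argument. Since $U^\top A_\pi^{\rm CQL}\mathbf 1=0$ for every $\pi$ (by \Cref{lem:cpd-common-vector}), the identity $U^\top A_\pi^{\rm CQL}=\bar A_\pi^{\rm CQL}U^\top$ holds. Iterating it shows that any product of projected modes satisfies
\begin{equation*}
  \bar A_{\pi_k}^{\rm CQL}\cdots\bar A_{\pi_1}^{\rm CQL}
  =U^\top A_{\pi_k}^{\rm CQL}\cdots A_{\pi_1}^{\rm CQL}U.
\end{equation*}
Because $U$ has orthonormal columns, $\|U\|_2=\|U^\top\|_2=1$, and so
\begin{equation*}
  \|\bar A_{\pi_k}^{\rm CQL}\cdots\bar A_{\pi_1}^{\rm CQL}\|_2
  \le\|A_{\pi_k}^{\rm CQL}\cdots A_{\pi_1}^{\rm CQL}\|_2 .
\end{equation*}
Taking the maximum over words of length $k$, then $k$-th roots, and letting $k\to\infty$ in \Cref{def:jsr} gives the claim again.

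There is essentially no obstacle here. The only point needing care is the intertwining identity in the second argument, and that is immediate from the common eigenvector property, exactly as used in the proof of \Cref{lem:cpd-projected-error-sls}.
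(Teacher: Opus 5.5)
Your main argument is the paper's proof: the inequality is read off from the maximum in the general identity of \Cref{lem:cpd-standard-jsr-decomposition}, and you are right that \Cref{ass:cpd-constant-mode-range} is not needed for it. Your supplementary intertwining argument is also correct and more elementary, since it bypasses \Cref{lem:block-triangular-jsr}; it uses $U^\top A_\pi^{\rm CQL}=\bar A_\pi^{\rm CQL}U^\top$ followed by $\|U^\top M U\|_2\le\|M\|_2$.
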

\begin{proof}
The claim follows directly from~\Cref{lem:cpd-standard-jsr-decomposition}.
\end{proof}
The same decomposition also gives the corresponding lower bound for the
constant scalar mode.
\begin{corollary}
\label{cor:cpd-constant-mode-lower-bound}
Under~\Cref{ass:cpd-constant-mode-range}, we have
\begin{equation*}
  1-\alpha\delta(1-\gamma)\leq \rho({\mathcal A}^{\rm CQL}).
\end{equation*}
\end{corollary}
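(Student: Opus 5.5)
The plan is to read the bound directly off the decomposition identity in \Cref{lem:cpd-standard-jsr-decomposition}, exactly as was done for \Cref{lem:cpd-weak-inequality}. Under \Cref{ass:cpd-constant-mode-range}, \Cref{eq:cpd-standard-jsr-decomposition} expresses $\rho(\mathcal A^{\rm CQL})$ as the maximum of two quantities. One of them is $1-\alpha\delta(1-\gamma)$ itself. Since a maximum dominates each of its arguments, the claimed inequality follows at once.

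As an independent sanity check, and as a fallback that does not rely on the block-triangular decomposition, I would also argue from the common eigenvector. Fix any deterministic policy $\pi$ and any word of length $k$. By \Cref{lem:cpd-common-vector}, every product $A_{\pi_k}^{\rm CQL}\cdots A_{\pi_1}^{\rm CQL}$ maps $\mathbf 1$ to $(1-\alpha\delta(1-\gamma))^k\mathbf 1$. Hence, in the Euclidean induced norm,
\[
\|A_{\pi_k}^{\rm CQL}\cdots A_{\pi_1}^{\rm CQL}\|_2\ge |1-\alpha\delta(1-\gamma)|^k .
\]
Taking $k$-th roots, then the maximum over words, and then the limit in \Cref{def:jsr}, gives $\rho(\mathcal A^{\rm CQL})\ge|1-\alpha\delta(1-\gamma)|$. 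This holds without any assumption on the parameters.

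Finally, I would use \Cref{ass:cpd-constant-mode-range} only to remove the absolute value, since $1-\alpha\delta(1-\gamma)>0$. There is no real obstacle here. The only point needing care is that the assumption is exactly what licenses dropping the absolute value, which is why it appears in the hypothesis.
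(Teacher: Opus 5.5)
Your proposal is correct and matches the paper's proof, which reads the bound directly off the maximum in \Cref{eq:cpd-standard-jsr-decomposition} under \Cref{ass:cpd-constant-mode-range}. Your fallback via the common eigenvector $\mathbf 1$ is also valid. It gives $\rho(\mathcal A^{\rm CQL})\ge|1-\alpha\delta(1-\gamma)|$ without any parameter assumption and without the block-triangular lemma.
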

\begin{proof}
The claim follows directly from~\Cref{lem:cpd-standard-jsr-decomposition}.
\end{proof}

This paper focuses on the case in which the projected SLS family
$\bar{\mathcal A}^{\rm CQL}$ is strictly faster than the full SLS family
${\mathcal A}^{\rm CQL}$. In that regime, the decomposition in
\Cref{eq:cpd-standard-jsr-decomposition} shows that the standard rate is governed
by the common all-ones eigenvalue. The proposed modifications can then improve
convergence along the corresponding invariant direction without being blocked by
the projected SLS dynamics represented by $\bar{\mathcal A}^{\rm CQL}$.
\begin{assumption}
\label{ass:cpd-projected-gap}
Throughout the paper, we assume that the following strict inequality holds:
\begin{equation*}
  \rho(\bar{\mathcal A}^{\rm CQL})<\rho({\mathcal A}^{\rm CQL}).
\end{equation*}
\end{assumption}

Under this assumption, together with~\Cref{ass:cpd-constant-mode-range}, the decomposition in~\Cref{eq:cpd-standard-jsr-decomposition} shows that $\rho(\mathcal A^{\rm CQL})$ is exactly the eigenvalue of the common all-ones eigenvector.
\begin{lemma}
\label{lem:cpd-standard-benchmark}
Under~\Cref{ass:cpd-constant-mode-range,ass:cpd-projected-gap},
\begin{equation}
  \rho(\mathcal A^{\rm CQL})=1-\alpha\delta(1-\gamma).
  \label{eq:cpd-standard-benchmark}
\end{equation}
\end{lemma}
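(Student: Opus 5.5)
The plan is to read the identity off the decomposition in \Cref{lem:cpd-standard-jsr-decomposition}, and then use \Cref{ass:cpd-projected-gap} to rule out the projected term as the maximizer. Set $c:=1-\alpha\delta(1-\gamma)$. By \Cref{ass:cpd-constant-mode-range} we have $c\in(0,1)$, so \Cref{eq:cpd-standard-jsr-decomposition} applies in its form without absolute values:
\begin{equation*}
  \rho(\mathcal A^{\rm CQL})=\max\{c,\rho(\bar{\mathcal A}^{\rm CQL})\}.
\end{equation*}

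The key step is a short case argument on which term attains this maximum. Suppose the maximum were $\rho(\bar{\mathcal A}^{\rm CQL})$ and were strictly larger than $c$. Then $\rho(\mathcal A^{\rm CQL})=\rho(\bar{\mathcal A}^{\rm CQL})$, which contradicts the strict inequality $\rho(\bar{\mathcal A}^{\rm CQL})<\rho(\mathcal A^{\rm CQL})$ in \Cref{ass:cpd-projected-gap}. Hence $\rho(\bar{\mathcal A}^{\rm CQL})\le c$.

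With that bound the maximum equals $c$. Equivalently: if $\max\{c,b\}>b$, the maximum cannot be $b$, so it must be $c$. This gives \Cref{eq:cpd-standard-benchmark}.

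There is no genuine obstacle here, since all the substantive work (the block upper triangular similarity and the JSR of block triangular families) is already contained in \Cref{lem:cpd-standard-jsr-decomposition}. The only point needing care is to invoke \Cref{ass:cpd-constant-mode-range} explicitly, so that the scalar block is $c$ itself rather than $|c|$.
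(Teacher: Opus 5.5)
Your proof is correct and is essentially the paper's argument: the paper also obtains the identity directly from the decomposition in \Cref{lem:cpd-standard-jsr-decomposition}, using \Cref{ass:cpd-constant-mode-range} to drop the absolute value and the strict gap in \Cref{ass:cpd-projected-gap} to rule out the projected term as the maximizer. Your case argument simply spells out that last step.
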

\begin{proof}
The claim follows directly from~\Cref{lem:cpd-standard-jsr-decomposition}.
\end{proof}
Combining~\Cref{ass:cpd-projected-gap,lem:cpd-standard-benchmark} with
\Cref{ass:cpd-constant-mode-range}, we obtain the equivalent form
\begin{equation*}
  \rho(\bar{\mathcal A}^{\rm CQL})<1-\alpha\delta(1-\gamma)<1.
\end{equation*}

The projected modes given in~\Cref{eq:cpd-projected-standard-family} have a useful structural feature: although the tuning
parameter $\delta$ changes the scalar block associated with the common
all-ones direction and may affect the upper-right coupling in the block
triangular representation, it does not affect the projected diagonal block. This
result is formally summarized in the following.
\begin{lemma}
\label{lem:cpd-projected-mode-delta-free}
For every deterministic policy $\pi$, the projected standard mode satisfies
\begin{equation}
  \bar A_\pi^{\rm CQL}
  =I_{n-1}
  +\alpha U^\top D(I-\Pi_d)(\gamma P\Pi^\pi-I)U.
  \label{eq:cpd-projected-mode-delta-free}
\end{equation}
Consequently, the matrices in $\bar{\mathcal A}^{\rm CQL}$ do not contain
$\delta$.
\end{lemma}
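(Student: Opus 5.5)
The plan is to substitute the explicit form of $H$ from \Cref{eq:constant-preserving-d-preconditioner} into the definition of $\bar A_\pi^{\rm CQL}$ in \Cref{eq:cpd-projected-standard-family}, and then show that the $\delta$-dependent piece is annihilated by the projection. Since $U^\top U=I_{n-1}$, we have
\begin{equation*}
  \bar A_\pi^{\rm CQL}
  =U^\top\bigl[I+\alpha H(\gamma P\Pi^\pi-I)\bigr]U
  =I_{n-1}+\alpha U^\top H(\gamma P\Pi^\pi-I)U .
\end{equation*}
With $H=\delta\Pi_d+D(I-\Pi_d)$, the middle term splits into $\alpha\delta\,U^\top\Pi_d(\gamma P\Pi^\pi-I)U$ plus $\alpha U^\top D(I-\Pi_d)(\gamma P\Pi^\pi-I)U$. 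The second summand is exactly the claimed expression in \Cref{eq:cpd-projected-mode-delta-free}. It therefore suffices to show that the first summand vanishes.

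The key observation is that $\Pi_d=\mathbf 1 d^\top$ has range equal to $\operatorname{span}\{\mathbf 1\}$. Hence
\begin{equation*}
  U^\top\Pi_d=(U^\top\mathbf 1)d^\top=0
\end{equation*}
by the defining property $U^\top\mathbf 1=0$ of $U$. Consequently
\begin{equation*}
  \alpha\delta\,U^\top\Pi_d(\gamma P\Pi^\pi-I)U=0
\end{equation*}
for every deterministic policy $\pi$, and the identity follows. Because the remaining expression involves only $\alpha,\gamma,U,D,d,P,\Pi^\pi$, none of the matrices in $\bar{\mathcal A}^{\rm CQL}$ depends on $\delta$.

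There is no real obstacle in this argument; the only point requiring care is the bookkeeping of which factor the projection $U^\top$ acts on. The $\delta$ term sits to the left of $(\gamma P\Pi^\pi-I)$, so it is killed by the left multiplication by $U^\top$, not by any property of the right factor $U$. The same computation holds verbatim for stochastic policies $\mu$, which is consistent with the convex-hull statement in \Cref{lem:cpd-projected-error-sls}.
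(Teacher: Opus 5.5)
Your proposal is correct and follows the paper's proof exactly: you substitute $H=\delta\Pi_d+D(I-\Pi_d)$, split off the term $\alpha\delta\,U^\top\Pi_d(\gamma P\Pi^\pi-I)U$, and remove it using $U^\top\Pi_d=(U^\top\mathbf 1)d^\top=0$. Your added remark that the same computation holds for stochastic policies $\mu$ is also correct.
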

\begin{proof}
From~\Cref{eq:cpd-standard-mode,eq:constant-preserving-d-preconditioner},
\begin{align*}
  \bar A_\pi^{\rm CQL}
  &=U^\top\left\{I+
    \alpha\left[\delta\Pi_d+D(I-\Pi_d)\right](\gamma P\Pi^\pi-I)
    \right\}U \\
  &=I_{n-1}
    +\alpha\delta U^\top\Pi_d(\gamma P\Pi^\pi-I)U
    +\alpha U^\top D(I-\Pi_d)(\gamma P\Pi^\pi-I)U.
\end{align*}
Since
\begin{equation*}
  U^\top\Pi_d=U^\top\mathbf{1}d^\top=0,
\end{equation*}
the middle term is zero.  This proves~\Cref{eq:cpd-projected-mode-delta-free}.
The right-hand side of~\Cref{eq:cpd-projected-mode-delta-free} contains the MDP
term $P\Pi^\pi$, the step-size $\alpha$, the diagonal weighting $D$, and
$\Pi_d=\mathbf{1}d^\top$, but it does not contain $\delta$.
\end{proof}
Intuitively, changing $\delta$ moves the all-ones common eigenvalue in~\Cref{eq:cpd-common-vector} and may change the upper-right coupling from projected coordinates into the constant coordinate, while leaving the projected SLS family in~\Cref{eq:cpd-projected-standard-family} unchanged by~\Cref{eq:cpd-projected-mode-delta-free}. This separation is useful when choosing $\delta$: in the JSR decomposition, the dependence on $\delta$ enters through the constant-direction scalar block rather than through the projected JSR term in~\Cref{eq:cpd-standard-jsr-decomposition}.

\section{Heavy-ball corrected Q-learning}
\label{sec:cpd_tabular_heavy_ball}

Before introducing the corrected heavy-ball recursion, it is useful to contrast
it with the deterministic heavy-ball Q-value iteration studied
in~\citep{lee2026spectralhbqvi}. That work considers the unweighted heavy-ball Q-value iteration update
\begin{equation}
  Q_{k+1}
  =Q_k+\alpha\{F(Q_k)-Q_k\}+\eta(Q_k-Q_{k-1}).
  \label{eq:unweighted-hb-qvi-update}
\end{equation}
Its acceleration mechanism is to improve the convergence rate along the common
eigenvector direction. In the present paper the momentum coefficient is scaled by the step-size, so the actual coefficient is $\alpha\eta$. A stochastic heavy-ball Q-learning analogue with this scaling, however, has the conditional mean dynamics
\begin{equation*}
  Q_{k+1}=Q_k+\alpha D\{F(Q_k)-Q_k\}+\alpha\eta(Q_k-Q_{k-1}),
\end{equation*}
in which the diagonal sampling matrix $D$ multiplies the Bellman residual. As noted after~\Cref{alg:direct-d-standard-update},
this multiplication generally destroys the common eigenvector property unless
$D$ is uniform. The corrected matrix $H$ in~\Cref{eq:constant-preserving-d-preconditioner}
is introduced precisely to retain nonuniform sampling weights while restoring the
common eigenvector direction needed by the JSR analysis below.

This section studies the effect of adding a heavy-ball momentum term to the CQL update in~\Cref{alg:cpd-standard-update}. The resulting update is given in~\Cref{alg:cpd-hb-update} and is called heavy-ball corrected Q-learning (HBCQL). Throughout the HBCQL analysis, $\eta$ denotes a momentum gain and the actual momentum coefficient is $\alpha\eta$.
\begin{algorithm}[H]
\caption{Deterministic HBCQL}
\label{alg:cpd-hb-update}
\begin{algorithmic}
\Require $Q_{-1},Q_0\in\R^n$, step-size $\alpha>0$, momentum gain $\eta\geq0$, and $H$ from~\Cref{eq:constant-preserving-d-preconditioner}
\For{$k=0,1,2,\ldots$}
  \State Update
  \[
    Q_{k+1}
    \gets Q_k+\alpha H\{F(Q_k)-Q_k\}+\alpha\eta(Q_k-Q_{k-1}).
  \]
\EndFor
\end{algorithmic}
\end{algorithm}

To proceed, define the augmented map
\begin{equation*}
  g(Q,Q^-)
  :=
  \begin{bmatrix}
    Q+\alpha H\{F(Q)-Q\}+\alpha\eta(Q-Q^-)\\
    Q
  \end{bmatrix}.
\end{equation*}
The augmented map has the same fixed point in both blocks as the corrected standard map in~\Cref{eq:cpd-standard-map}.
\begin{lemma}
\label{lem:cpd-hb-fixed-point}
A pair $(\bar Q,\bar Q^-)$ is a fixed point of $g$, in the sense that
\begin{equation*}
  g(\bar Q,\bar Q^-)
  =\begin{bmatrix}\bar Q\\\bar Q^-\end{bmatrix},
\end{equation*}
if and only if $\bar Q=\bar Q^-=Q^\star$.
\end{lemma}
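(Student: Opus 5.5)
The plan is to prove the two directions separately, reading the fixed-point equation blockwise and then reducing to the fixed-point characterization of the corrected standard map already established in~\Cref{lem:cpd-standard-fixed-point} (equivalently, to the nonsingularity of $H$ in~\Cref{lem:cpd-H-nonsingular}).

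\emph{Necessity.} Suppose $g(\bar Q,\bar Q^-)=(\bar Q,\bar Q^-)$. The second block of $g$ equals its first input, so it reads $\bar Q=\bar Q^-$. Substituting this into the first block makes the momentum term vanish:
\begin{equation*}
  \bar Q=\bar Q+\alpha H\{F(\bar Q)-\bar Q\}+\alpha\eta(\bar Q-\bar Q)
  =\bar Q+\alpha H\{F(\bar Q)-\bar Q\}.
\end{equation*}
Hence $\alpha H\{F(\bar Q)-\bar Q\}=0$. Since $\alpha>0$ and $H$ is nonsingular by~\Cref{lem:cpd-H-nonsingular}, this forces $F(\bar Q)=\bar Q$. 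Uniqueness of the Bellman fixed point then gives $\bar Q=Q^\star$, and therefore also $\bar Q^-=Q^\star$. Equivalently, once the momentum term is gone, $\bar Q$ is a fixed point of the map in~\Cref{eq:cpd-standard-map}, and~\Cref{lem:cpd-standard-fixed-point} applies directly.

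\emph{Sufficiency.} Take $\bar Q=\bar Q^-=Q^\star$. Then $F(Q^\star)-Q^\star=0$ and $Q^\star-Q^\star=0$, so the first block of $g$ returns $Q^\star$ and the second block returns $Q^\star$. Thus $(Q^\star,Q^\star)$ is a fixed point of $g$.

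I expect no step to be a real obstacle. The only point needing care is the order of the argument: the second block must be used first to force $\bar Q=\bar Q^-$, because otherwise the momentum term $\alpha\eta(\bar Q-\bar Q^-)$ would obstruct the reduction to the standard fixed-point equation. Note also that the value of the momentum gain $\eta\ge0$ plays no role in the argument.
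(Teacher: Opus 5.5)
Your proposal is correct and follows essentially the same route as the paper: use the second block to get $\bar Q=\bar Q^-$, substitute into the first block to obtain $H\{F(\bar Q)-\bar Q\}=0$, then invoke nonsingularity of $H$ and uniqueness of the Bellman fixed point. The converse is a direct check, as in the paper.
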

\begin{proof}
If $(\bar Q,\bar Q^-)$ is a fixed point of $g$, then the second block gives
$\bar Q=\bar Q^-$.  Substituting this identity into the first block gives
\begin{equation*}
  H\{F(\bar Q)-\bar Q\}=0.
\end{equation*}
By~\Cref{lem:cpd-H-nonsingular}, $H$ is nonsingular, so
$F(\bar Q)=\bar Q$ and hence $\bar Q=Q^\star$.  Therefore
$\bar Q^-=Q^\star$ as well.  Conversely, $F(Q^\star)=Q^\star$ immediately gives
$g(Q^\star,Q^\star)=\begin{bmatrix}Q^\star\\Q^\star\end{bmatrix}$.
\end{proof}
Subtracting $Q^\star=F(Q^\star)$ from the HBCQL update in~\Cref{alg:cpd-hb-update} gives the error recursion
\begin{align}
  Q_{k+1}-Q^\star
  &=(Q_k-Q^\star)
    +\alpha H\{F(Q_k)-F(Q^\star)-(Q_k-Q^\star)\} +\alpha\eta\{(Q_k-Q^\star)-(Q_{k-1}-Q^\star)\}.\label{eq:2}
\end{align}
The next lemma shows that the error recursion in~\Cref{eq:2} can be expressed as an SLS.
\begin{lemma}
\label{lem:cpd-hb-sls}
The error recursion in~\Cref{eq:2} satisfies
\begin{equation}
  \begin{bmatrix}Q_{k+1}-Q^\star\\Q_k-Q^\star\end{bmatrix}
  =A_{\mu_k}^{\rm HBCQL}
  \begin{bmatrix}Q_k-Q^\star\\Q_{k-1}-Q^\star\end{bmatrix},
  \label{eq:cpd-hb-error-system}
\end{equation}
where for a stochastic policy $\mu$,
\begin{equation*}
  A_\mu^{\rm HBCQL}:=
  \begin{bmatrix}
    A_\mu^{\rm CQL}+\alpha\eta I&-\alpha\eta I\\
    I&0
  \end{bmatrix}.
\end{equation*}
\end{lemma}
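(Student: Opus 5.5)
The plan is a direct computation that mirrors the proof of \Cref{lem:cpd-standard-sls}, with one extra block for the momentum term. Start from the error recursion \Cref{eq:2} and write $x_k:=Q_k-Q^\star$. Apply \Cref{lem:bellman_difference_selector} with $Q=Q_k$ and $\bar Q=Q^\star$. This yields a stochastic policy $\mu_k:=\mu_{Q_k,Q^\star}$ with
\begin{equation*}
  F(Q_k)-F(Q^\star)=\gamma P\Pi^{\mu_k}x_k .
\end{equation*}
It is the same selector used in \Cref{eq:cpd-standard-error-system}.

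Substituting this into \Cref{eq:2} gives
\begin{equation*}
  x_{k+1}=\{I+\alpha H(\gamma P\Pi^{\mu_k}-I)\}x_k+\alpha\eta x_k-\alpha\eta x_{k-1}.
\end{equation*}
By \Cref{eq:cpd-standard-mode}, the bracketed matrix is exactly $A_{\mu_k}^{\rm CQL}$. The right-hand side therefore equals
\begin{equation*}
  (A_{\mu_k}^{\rm CQL}+\alpha\eta I)x_k-\alpha\eta I\,x_{k-1}.
\end{equation*}
This is the first block row of $A_{\mu_k}^{\rm HBCQL}$ applied to $[x_k^\top\ x_{k-1}^\top]^\top$.

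The second block row is the trivial identity $x_k=I\,x_k+0\cdot x_{k-1}$. Stacking the two rows gives \Cref{eq:cpd-hb-error-system}.

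None of these steps is hard. The only care needed is that the same $\mu_k$ appears in both the Bellman difference and the mode, which is guaranteed by \Cref{lem:bellman_difference_selector}. One should also note that the recursion is well defined at $k=0$ because $Q_{-1}$ is given as an input to \Cref{alg:cpd-hb-update}.
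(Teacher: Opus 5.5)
Your proposal is correct and matches the paper's proof essentially step for step: it uses the Bellman-difference selector from \Cref{lem:bellman_difference_selector}, identifies $I+\alpha H(\gamma P\Pi^{\mu_k}-I)$ as $A_{\mu_k}^{\rm CQL}$, and stacks the result with the trivial identity for the second block row. The paper's proof additionally records that $A_{\mu_k}^{\rm HBCQL}\in\co(\mathcal A^{\rm HBCQL})$, since the mode is affine in $P\Pi^{\mu_k}$; this is used later but is not part of the stated claim.
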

\begin{proof}
Using~\Cref{eq:bellman-difference-selector-general},
\begin{align*}
  Q_{k+1}-Q^\star
  &=\left[I+\alpha H(\gamma P\Pi^{\mu_k}-I)+\alpha\eta I\right]
    (Q_k-Q^\star)-\alpha\eta(Q_{k-1}-Q^\star)\\
  &=(A_{\mu_k}^{\rm CQL}+\alpha\eta I)(Q_k-Q^\star)
    -\alpha\eta(Q_{k-1}-Q^\star).
\end{align*}
Stacking this identity with the identity
$Q_k-Q^\star=Q_k-Q^\star$ gives~\Cref{eq:cpd-hb-error-system}. Since the
upper-left block depends affinely on $P\Pi^{\mu_k}$ and the convex-hull statement in
\Cref{lem:bellman_difference_selector} holds,
\begin{equation*}
  A_{\mu_k}^{\rm HBCQL}
  \in\co\{A_\pi^{\rm HBCQL}:\pi\in\Theta\}
  =\co(\mathcal A^{\rm HBCQL}).
\end{equation*}
\end{proof}
The corresponding SLS family is defined and denoted by
\begin{equation}
  \mathcal A^{\rm HBCQL}
  :=\left\{A_\pi^{\rm HBCQL}:\pi\in\Theta\right\}.
  \label{eq:cpd-hb-family}
\end{equation}
Moreover, for every stochastic selector $\mu_k$, one has
$A_{\mu_k}^{\rm HBCQL}\in\co(\mathcal A^{\rm HBCQL})$.
For the augmented heavy-ball recursion, the one-dimensional common eigenvector
argument on $\operatorname{span}\{\mathbf 1\}$ becomes an argument with
respect to a two-dimensional invariant subspace. To show this, define the
subspace
\begin{equation}
  \mathcal I
  :=\left\{
  \begin{bmatrix}a\mathbf 1\\ b\mathbf 1\end{bmatrix}:a,b\in\R
  \right\}.
  \label{eq:cpd-hb-constant-subspace}
\end{equation}
If the augmented error belongs to this subspace, then the next augmented
error also belongs to it, and the scalar coefficients are updated by the
following $2\times2$ matrix:
\begin{equation*}
  C:=
  \begin{bmatrix}
    \bigl(1-\alpha\delta(1-\gamma)\bigr)+\alpha\eta&-\alpha\eta\\
    1&0
  \end{bmatrix}.
\end{equation*}
This invariance is summarized in the next lemma.
\begin{lemma}
\label{lem:cpd-hb-constant-subspace}
For every stochastic policy $\mu$, the subspace
$\mathcal I$ in~\Cref{eq:cpd-hb-constant-subspace} is invariant
under $A_\mu^{\rm HBCQL}$. More precisely, for every $a,b\in\R$,
\begin{equation}
  A_\mu^{\rm HBCQL}
  \begin{bmatrix}a\mathbf 1\\ b\mathbf 1\end{bmatrix}
  =
  \begin{bmatrix}a_+\mathbf 1\\ b_+\mathbf 1\end{bmatrix},
  \qquad
  \begin{bmatrix}a_+\\ b_+\end{bmatrix}
  =C\begin{bmatrix}a\\ b\end{bmatrix}.
  \label{eq:cpd-hb-constant-subspace-action}
\end{equation}
\end{lemma}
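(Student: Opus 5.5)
The plan is a direct block computation that applies the common-eigenvector identity of \Cref{lem:cpd-common-vector} to the upper block row of $A_\mu^{\rm HBCQL}$. Fix a stochastic policy $\mu$ and scalars $a,b\in\R$. By the block structure in \Cref{lem:cpd-hb-sls},
\begin{equation*}
  A_\mu^{\rm HBCQL}\begin{bmatrix}a\mathbf 1\\ b\mathbf 1\end{bmatrix}
  =\begin{bmatrix}
    a(A_\mu^{\rm CQL}+\alpha\eta I)\mathbf 1-\alpha\eta b\mathbf 1\\
    a\mathbf 1
  \end{bmatrix}.
\end{equation*}
The lower block is already a multiple of $\mathbf 1$, with coefficient $b_+=a$. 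This matches the second row $[1\ \ 0]$ of $C$.

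For the upper block, use \Cref{eq:cpd-common-vector}, which gives $A_\mu^{\rm CQL}\mathbf 1=(1-\alpha\delta(1-\gamma))\mathbf 1$ for every stochastic policy $\mu$. Together with $\alpha\eta I\mathbf 1=\alpha\eta\mathbf 1$, this yields
\begin{equation*}
  a_+\mathbf 1
  =\bigl[\bigl(1-\alpha\delta(1-\gamma)\bigr)+\alpha\eta\bigr]a\,\mathbf 1-\alpha\eta b\,\mathbf 1 .
\end{equation*}
The coefficient $a_+$ is exactly the first row of $C$ applied to $(a,b)^\top$. This proves \Cref{eq:cpd-hb-constant-subspace-action}.

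Invariance of $\mathcal I$ follows at once, since the image again has the form $(a_+\mathbf 1,b_+\mathbf 1)$. The only point that needs care is that the scalar $1-\alpha\delta(1-\gamma)$ does not depend on $\mu$. This holds because $P\Pi^\mu\mathbf 1=\mathbf 1$ for every stochastic $\mu$ (\Cref{lem:bellman_difference_selector}) and $H\mathbf 1=\delta\mathbf 1$ (\Cref{lem:hd-preserves-one}), and both facts are already packaged in \Cref{lem:cpd-common-vector}. It is this policy-independence that makes the same matrix $C$ work for every mode. I do not expect any genuine obstacle; the argument is bookkeeping with the block matrix.
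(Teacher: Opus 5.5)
Your proposal is correct and follows essentially the same route as the paper: you block-multiply $A_\mu^{\rm HBCQL}$ against $(a\mathbf 1, b\mathbf 1)$, apply \Cref{lem:cpd-common-vector} in the upper block, and read off the coefficients as $C(a,b)^\top$. Your remark that the scalar $1-\alpha\delta(1-\gamma)$ does not depend on $\mu$ is precisely why a single matrix $C$ works for every mode.
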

\begin{proof}
Using~\Cref{eq:cpd-common-vector}, we have $A_\mu^{\rm CQL}\mathbf 1 =\bigl(1-\alpha\delta(1-\gamma)\bigr)\mathbf 1$. Hence, we obtain
\begin{align*}
  A_\mu^{\rm HBCQL}
  \begin{bmatrix}a\mathbf 1\\ b\mathbf 1\end{bmatrix}
  &=
  \begin{bmatrix}
    (A_\mu^{\rm CQL}+\alpha\eta I)a\mathbf 1-\alpha\eta b\mathbf 1\\
    a\mathbf 1
  \end{bmatrix}\\
  &=
  \begin{bmatrix}
    \left(\bigl(1-\alpha\delta(1-\gamma)\bigr)+\alpha\eta\right)a\mathbf 1
    -\alpha\eta b\mathbf 1\\
    a\mathbf 1
  \end{bmatrix}.
\end{align*}
The scalar coordinates therefore satisfy
\begin{equation*}
  \begin{bmatrix}a_+\\ b_+\end{bmatrix}
  =
  \begin{bmatrix}
    \bigl(1-\alpha\delta(1-\gamma)\bigr)+\alpha\eta&-\alpha\eta\\
    1&0
  \end{bmatrix}
  \begin{bmatrix}a\\ b\end{bmatrix}
  =C\begin{bmatrix}a\\ b\end{bmatrix},
\end{equation*}
which proves both invariance and~\Cref{eq:cpd-hb-constant-subspace-action}.
\end{proof}

Thus, the common eigenvector direction $\operatorname{span}\{\mathbf 1\}$ of CQL is lifted, in HBCQL, to the invariant subspace $\mathcal I$ of the augmented state. On $\operatorname{span}\{\mathbf 1\}$, CQL follows the first-order scalar dynamics
\[
  x_{k+1}=\bigl(1-\alpha\delta(1-\gamma)\bigr)x_k,\quad x_k\in \operatorname{span}\{\mathbf 1\},
\]
whereas HBCQL follows the second-order dynamics on $\mathcal I$ governed by the matrix $C$ in~\Cref{eq:cpd-hb-constant-subspace-action} in the scalar coordinates of the augmented vector.  Namely, if
\[
  x_k=\begin{bmatrix}a_k\mathbf 1\\ b_k\mathbf 1\end{bmatrix}\in\mathcal I,
\]
then
\[
  \begin{bmatrix}a_{k+1}\\ b_{k+1}\end{bmatrix}
  =C\begin{bmatrix}a_k\\ b_k\end{bmatrix}.
\]
The preceding lemma therefore implies that the augmented SLS family of HBCQL cannot have a JSR
smaller than the spectral radius of $C$. To explain this, define, for any
stochastic policy $\mu$, the following projected mode and its associated SLS family:
\begin{equation}
  \bar A_\mu^{\rm HBCQL}:=
  \begin{bmatrix}
    U^\top A_\mu^{\rm CQL}U+\alpha\eta I_{n-1}&-\alpha\eta I_{n-1}\\
    I_{n-1}&0
  \end{bmatrix},
  \qquad
  \bar{\mathcal A}^{\rm HBCQL}
  :=\left\{\bar A_\pi^{\rm HBCQL}:\pi\in\Theta\right\}.
  \label{eq:cpd-projected-hb-family}
\end{equation}
The next lemma shows that the projected augmented error associated with~\Cref{eq:cpd-hb-error-system} can be represented by an SLS.
\begin{lemma}
\label{lem:cpd-hb-projected-error-sls}
Let $Q_k$ follow~\Cref{alg:cpd-hb-update}. Then
\begin{equation}
  \begin{bmatrix}U^\top(Q_{k+1}-Q^\star)\\U^\top(Q_k-Q^\star)
  \end{bmatrix}
  =\bar A_{\mu_k}^{\rm HBCQL}
  \begin{bmatrix}U^\top(Q_k-Q^\star)\\U^\top(Q_{k-1}-Q^\star)
  \end{bmatrix},
  \label{eq:cpd-hb-projected-error-system}
\end{equation}
and the mode matrix in \Cref{eq:cpd-hb-projected-error-system} belongs to
$\co(\bar{\mathcal A}^{\rm HBCQL})$.
\end{lemma}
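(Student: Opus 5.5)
The plan mirrors the proof of \Cref{lem:cpd-projected-error-sls}, now applied blockwise to the augmented recursion. Write $e_k:=Q_k-Q^\star$. By \Cref{lem:cpd-hb-sls}, the first block of \Cref{eq:cpd-hb-error-system} reads
\begin{equation*}
  e_{k+1}=(A_{\mu_k}^{\rm CQL}+\alpha\eta I)e_k-\alpha\eta e_{k-1}.
\end{equation*}
Multiply on the left by $U^\top$ and use the decomposition $I=qq^\top+UU^\top$ with $q=n^{-1/2}\mathbf 1$, established in the proof of \Cref{lem:cpd-projected-error-sls}. Inserting this identity in front of $e_k$ in the term $U^\top A_{\mu_k}^{\rm CQL}e_k$ gives
\begin{equation*}
  U^\top A_{\mu_k}^{\rm CQL}e_k=U^\top A_{\mu_k}^{\rm CQL}q\,q^\top e_k+U^\top A_{\mu_k}^{\rm CQL}U\,U^\top e_k .
\end{equation*}
By the common eigenvector identity \Cref{eq:cpd-common-vector}, $U^\top A_{\mu_k}^{\rm CQL}q=(1-\alpha\delta(1-\gamma))U^\top q=0$. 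Hence only the projected part $U^\top A_{\mu_k}^{\rm CQL}U\,U^\top e_k$ survives.

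The momentum terms need no such decomposition, since $U^\top(\alpha\eta e_k)=\alpha\eta\,U^\top e_k$ and $U^\top(\alpha\eta e_{k-1})=\alpha\eta\,U^\top e_{k-1}$. This gives
\begin{equation*}
  U^\top e_{k+1}=(U^\top A_{\mu_k}^{\rm CQL}U+\alpha\eta I_{n-1})U^\top e_k-\alpha\eta\,U^\top e_{k-1}.
\end{equation*}
The second block row is trivially $U^\top e_k=I_{n-1}U^\top e_k+0\cdot U^\top e_{k-1}$. Stacking the two rows yields exactly $\bar A_{\mu_k}^{\rm HBCQL}$ as defined in \Cref{eq:cpd-projected-hb-family}, which proves \Cref{eq:cpd-hb-projected-error-system}.

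For the convex-hull membership, note that $\bar A_\mu^{\rm HBCQL}$ depends on $\mu$ only through its upper-left block $U^\top[I+\alpha H(\gamma P\Pi^\mu-I)]U+\alpha\eta I_{n-1}$. This block is affine in $P\Pi^\mu$, and all other blocks are constant. By \Cref{lem:bellman_difference_selector}, $P\Pi^{\mu_k}=\sum_{\pi}\lambda_\pi P\Pi^\pi$ with convex weights $\lambda_\pi$. Because an affine map of a convex combination, with weights summing to one, equals the same convex combination of the images, we get $\bar A_{\mu_k}^{\rm HBCQL}=\sum_\pi\lambda_\pi\bar A_\pi^{\rm HBCQL}\in\co(\bar{\mathcal A}^{\rm HBCQL})$.

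The only genuinely nontrivial step is killing the cross term $U^\top A_{\mu_k}^{\rm CQL}q$. That step rests entirely on \Cref{lem:cpd-common-vector}; the rest is bookkeeping.
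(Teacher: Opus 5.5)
Your proposal is correct and follows essentially the same route as the paper. You take the first block of the augmented error recursion, left-multiply by $U^\top$, use the common eigenvector identity to kill $U^\top A_{\mu_k}^{\rm CQL}q$, stack with the trivial second row, and get convex-hull membership from the affine dependence on $P\Pi^{\mu_k}$; you merely spell out the insertion of $I=qq^\top+UU^\top$ and the affine-map step that the paper cites from the earlier CQL projected-error lemma.
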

\begin{proof}
By~\Cref{eq:cpd-hb-error-system},
\begin{equation*}
  Q_{k+1}-Q^\star
  =(A_{\mu_k}^{\rm CQL}+\alpha\eta I)(Q_k-Q^\star)
  -\alpha\eta(Q_{k-1}-Q^\star).
\end{equation*}
As in the proof of~\Cref{lem:cpd-projected-error-sls},
$U^\top A_{\mu_k}^{\rm CQL}q=0$ for $q=n^{-1/2}\mathbf 1$.  Therefore
\begin{align*}
  U^\top(Q_{k+1}-Q^\star)
  &=(U^\top A_{\mu_k}^{\rm CQL}U+\alpha\eta I_{n-1})U^\top(Q_k-Q^\star)
    -\alpha\eta U^\top(Q_{k-1}-Q^\star).
\end{align*}
Stacking this identity with the identity
$U^\top(Q_k-Q^\star)=U^\top(Q_k-Q^\star)$ gives
\Cref{eq:cpd-hb-projected-error-system}.  The affine convex-hull relation for
$U^\top A_{\mu_k}^{\rm CQL}U$ and \Cref{eq:cpd-projected-hb-family} imply that
the projected heavy-ball mode belongs to
$\co(\bar{\mathcal A}^{\rm HBCQL})$.
\end{proof}

As in the CQL analysis, we can now decompose the augmented SLS family in~\Cref{eq:cpd-hb-family} into the constant-error subspace in~\Cref{eq:cpd-hb-constant-subspace} and the projected dynamics in~\Cref{eq:cpd-projected-hb-family}.
\begin{lemma}
\label{lem:cpd-hb-jsr-decomposition}
The HBCQL SLS family in~\Cref{eq:cpd-hb-family} satisfies
\begin{equation}
  \rho(\mathcal A^{\rm HBCQL})
  =\max\left\{\rho(C),\rho(\bar{\mathcal A}^{\rm HBCQL})\right\}.
  \label{eq:cpd-hb-jsr-decomposition}
\end{equation}
\end{lemma}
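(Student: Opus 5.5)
The plan is to mimic the proof of \Cref{lem:cpd-standard-jsr-decomposition}, now in the augmented space $\R^{2n}$. I will build one orthogonal change of basis under which every mode in $\mathcal A^{\rm HBCQL}$ becomes block upper triangular. The diagonal blocks should be $C$ and $\bar A_\pi^{\rm HBCQL}$. Then \Cref{lem:similarity-jsr} and \Cref{lem:block-triangular-jsr} give \Cref{eq:cpd-hb-jsr-decomposition}.

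Let $q:=n^{-1/2}\mathbf 1$, and let $U$ be as before. Define the augmented orthogonal matrix
\begin{equation*}
  T:=\begin{bmatrix} q & 0 & U & 0\\ 0 & q & 0 & U\end{bmatrix}\in\R^{2n\times 2n}.
\end{equation*}
Its first two columns span exactly the subspace $\mathcal I$ in \Cref{eq:cpd-hb-constant-subspace}. Its last $2(n-1)$ columns span the complement $\{[U x; U y]\}$. Orthogonality of $T$ follows from that of $[q\ U]$, which was shown in the proof of \Cref{lem:cpd-standard-jsr-decomposition}.

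Next I compute $T^\top A_\pi^{\rm HBCQL}T$ block by block.
\begin{itemize}
\item Lower-left block (the $(\mathcal I^\perp,\mathcal I)$ block). It vanishes because $\mathcal I$ is invariant, by \Cref{lem:cpd-hb-constant-subspace}.
\item Upper-left $2\times2$ block. By \Cref{eq:cpd-hb-constant-subspace-action}, and since $q$ is a rescaling of $\mathbf 1$ in both components, this block equals $C$ exactly. The rescaling is the same in both coordinates, so it cancels.
\item Lower-right block. Apply $\mathrm{diag}(U^\top,U^\top)$ on the left and $\mathrm{diag}(U,U)$ on the right. This gives $U^\top(A_\pi^{\rm CQL}+\alpha\eta I)U=\bar A_\pi^{\rm CQL}+\alpha\eta I_{n-1}$, together with $-\alpha\eta U^\top U=-\alpha\eta I_{n-1}$, $U^\top U=I_{n-1}$, and $0$. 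This is precisely $\bar A_\pi^{\rm HBCQL}$ from \Cref{eq:cpd-projected-hb-family}.
\item Upper-right block. This is some coupling term, and its form does not matter.
\end{itemize}

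To get the textbook block-triangular form I will order the coordinates as ($\mathcal I$-coordinates first, complement second). After that ordering the matrices are simultaneously block upper triangular with diagonal blocks $C$ and $\bar A_\pi^{\rm HBCQL}$.

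Finally, similarity invariance lets me compute the JSR in this common basis. The block-triangular lemma then gives $\rho(\mathcal A^{\rm HBCQL})=\max\{\rho(\{C\}),\rho(\bar{\mathcal A}^{\rm HBCQL})\}$. Since $\{C\}$ is a single matrix, its JSR is the ordinary spectral radius $\rho(C)$, by Gelfand's formula.

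The main obstacle is bookkeeping rather than anything conceptual. I have to check that the column permutation in $T$ preserves the correct upper-triangular orientation, that is, that the invariant subspace corresponds to the leading block. I also have to verify that the off-diagonal (complement $\to\mathcal I$) block is the one left unconstrained. That block may be nonzero, because $q^\top A_\pi^{\rm CQL}U\neq0$ in general, but it is irrelevant to the JSR.
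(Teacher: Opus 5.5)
Your proposal is correct and is essentially the paper's proof. Your matrix $T$ is exactly the paper's conjugation by $\diag(S,S)$ with $S=[q\ U]$, followed by the permutation that swaps the second and third block coordinates. It yields the same block upper triangular form, with diagonal blocks $C$ and $\bar A_\pi^{\rm HBCQL}$ and coupling given by $q^\top A_\pi^{\rm CQL}U$, after which \Cref{lem:similarity-jsr} and \Cref{lem:block-triangular-jsr} finish the argument.
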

\begin{proof}
Let
\begin{equation*}
  q:=n^{-1/2}\mathbf 1,
  \qquad
  S:=\begin{bmatrix}q&U\end{bmatrix}.
\end{equation*}
The matrix $S$ is orthogonal.  By the proof of
\Cref{lem:cpd-standard-jsr-decomposition}, each deterministic standard mode has
\begin{equation*}
  S^\top A_\pi^{\rm CQL}S
  =
  \begin{bmatrix}
    1-\alpha\delta(1-\gamma) & q^\top A_\pi^{\rm CQL}U\\
    0 & \bar A_\pi^{\rm CQL}
  \end{bmatrix}.
\end{equation*}
Conjugating the current and previous coordinates by the same orthogonal basis gives
\begin{align*}
  \begin{bmatrix}S^\top&0\\0&S^\top\end{bmatrix}
  A_\pi^{\rm HBCQL}
  \begin{bmatrix}S&0\\0&S\end{bmatrix}
  &=
  \begin{bmatrix}
    S^\top(A_\pi^{\rm CQL}+\alpha\eta I)S & -\alpha\eta I\\
    I & 0
  \end{bmatrix} \\
  &=
  \begin{bmatrix}
    1-\alpha\delta(1-\gamma)+\alpha\eta & q^\top A_\pi^{\rm CQL}U
      & -\alpha\eta & 0\\
    0 & \bar A_\pi^{\rm CQL}+\alpha\eta I_{n-1}
      & 0 & -\alpha\eta I_{n-1}\\
    1 & 0 & 0 & 0\\
    0 & I_{n-1} & 0 & 0
  \end{bmatrix},
\end{align*}
where the block coordinates are ordered as current constant, current projected,
previous constant, and previous projected. Permuting the second and third block
coordinates gives the similar matrix
\begin{equation*}
  \begin{bmatrix}
    1-\alpha\delta(1-\gamma)+\alpha\eta & -\alpha\eta
      & q^\top A_\pi^{\rm CQL}U & 0\\
    1 & 0 & 0 & 0\\
    0 & 0 & \bar A_\pi^{\rm CQL}+\alpha\eta I_{n-1} & -\alpha\eta I_{n-1}\\
    0 & 0 & I_{n-1} & 0
  \end{bmatrix}.
\end{equation*}
The upper-left block in this matrix is
\begin{equation*}
  \begin{bmatrix}
    1-\alpha\delta(1-\gamma)+\alpha\eta & -\alpha\eta\\
    1 & 0
  \end{bmatrix}
  =C,
\end{equation*}
and the lower-right block is
\begin{equation*}
  \begin{bmatrix}
    \bar A_\pi^{\rm CQL}+\alpha\eta I_{n-1} & -\alpha\eta I_{n-1}\\
    I_{n-1} & 0
  \end{bmatrix}
  =\bar A_\pi^{\rm HBCQL}.
\end{equation*}
The only nonzero upper-right coupling, from the projected second-order block into the
constant second-order block, is the row $q^\top A_\pi^{\rm CQL}U$ in the current
coordinate. Thus, equivalently,
\begin{equation*}
  A_\pi^{\rm HBCQL}
  \sim
  \begin{bmatrix}
    C & E_\pi\\
    0 & \bar A_\pi^{\rm HBCQL}
  \end{bmatrix},
  \qquad
  E_\pi:=
  \begin{bmatrix}
    q^\top A_\pi^{\rm CQL}U & 0\\
    0 & 0
  \end{bmatrix}.
\end{equation*}
The same similarity and permutation are used for every $\pi\in\Theta$, so the
whole family is simultaneously block upper triangular with diagonal families
$\{C\}$ and $\bar{\mathcal A}^{\rm HBCQL}$. By the similarity invariance in
\Cref{lem:similarity-jsr}, we may compute the JSR in this common basis. Applying
\Cref{lem:block-triangular-jsr} gives
\begin{equation*}
  \rho(\mathcal A^{\rm HBCQL})
  =\max\{\rho(C),\rho(\bar{\mathcal A}^{\rm HBCQL})\}.
\end{equation*}
\end{proof}
The decomposition in~\Cref{eq:cpd-hb-jsr-decomposition} gives the following two lower bounds.
\begin{corollary}
\label{cor:cpd-hb-common-lower-bound}
The following inequality holds:
\begin{equation*}
  \rho(C)\leq \rho(\mathcal A^{\rm HBCQL}).
\end{equation*}
\end{corollary}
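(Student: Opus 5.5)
The corollary follows directly from the decomposition in \Cref{lem:cpd-hb-jsr-decomposition}. That lemma shows that $\rho(\mathcal A^{\rm HBCQL})$ equals the maximum of $\rho(C)$ and $\rho(\bar{\mathcal A}^{\rm HBCQL})$. Any maximum is at least each of its arguments, so $\rho(C)\leq\max\{\rho(C),\rho(\bar{\mathcal A}^{\rm HBCQL})\}=\rho(\mathcal A^{\rm HBCQL})$. No assumption on $\alpha$, $\delta$ or $\eta$ is needed beyond those already used in that lemma. This matches how \Cref{lem:cpd-weak-inequality} and \Cref{cor:cpd-constant-mode-lower-bound} were derived from \Cref{lem:cpd-standard-jsr-decomposition}.

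As a sanity check, I would also give a direct argument that avoids the block-triangular JSR lemma and uses only the invariant subspace in \Cref{lem:cpd-hb-constant-subspace}. Choose a deterministic policy $\pi$ and set every factor in the product to $A_\pi^{\rm HBCQL}$. Then for any initial vector $x=[a\mathbf 1;\,b\mathbf 1]\in\mathcal I$, \Cref{eq:cpd-hb-constant-subspace-action} gives
\begin{equation*}
  (A_\pi^{\rm HBCQL})^k x=\begin{bmatrix}a_k\mathbf 1\\ b_k\mathbf 1\end{bmatrix},
  \qquad
  \begin{bmatrix}a_k\\ b_k\end{bmatrix}=C^k\begin{bmatrix}a\\ b\end{bmatrix}.
\end{equation*}
The map $(a,b)\mapsto[a\mathbf 1;\,b\mathbf 1]$ scales Euclidean norms by exactly $\sqrt n$. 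It follows that $\|(A_\pi^{\rm HBCQL})^k\|_2\geq\|C^k\|_2$.

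Taking $k$-th roots and sending $k\to\infty$, Gelfand's formula gives $\rho(C)\leq\rho(A_\pi^{\rm HBCQL})$. The JSR of a family is at least the spectral radius of each of its members, so this yields $\rho(C)\leq\rho(\mathcal A^{\rm HBCQL})$.

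No step is a genuine obstacle. The only point to state carefully is that the restriction of the augmented dynamics to the invariant subspace $\mathcal I$ is exactly $C$, with no dependence on the mode. That is precisely the content of \Cref{lem:cpd-hb-constant-subspace}.
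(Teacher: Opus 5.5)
Your main argument is exactly the paper's proof: \Cref{lem:cpd-hb-jsr-decomposition} gives $\rho(\mathcal A^{\rm HBCQL})=\max\{\rho(C),\rho(\bar{\mathcal A}^{\rm HBCQL})\}$, and a maximum is at least its first entry. Your invariant-subspace sanity check is also correct and bypasses the block-triangular JSR lemma. The invariance of $\mathcal I$, with the restriction represented by $C$ in the scaled coordinates, already puts the spectrum of $C$ inside the spectrum of any single mode $A_\pi^{\rm HBCQL}$.
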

\begin{proof}
The claim follows from~\Cref{eq:cpd-hb-jsr-decomposition} because the maximum
is at least its first entry.
\end{proof}

\begin{corollary}
\label{cor:cpd-hb-projected-lower-bound}
The following inequality holds:
\begin{equation*}
  \rho(\bar{\mathcal A}^{\rm HBCQL})\leq \rho(\mathcal A^{\rm HBCQL}).
\end{equation*}
\end{corollary}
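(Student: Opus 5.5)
The inequality follows at once from the decomposition in~\Cref{lem:cpd-hb-jsr-decomposition}, in the same way that~\Cref{cor:cpd-hb-common-lower-bound} was obtained. The identity in~\Cref{eq:cpd-hb-jsr-decomposition} writes $\rho(\mathcal A^{\rm HBCQL})$ as the maximum of the two quantities $\rho(C)$ and $\rho(\bar{\mathcal A}^{\rm HBCQL})$. A maximum is never smaller than any of its entries, so taking the second entry gives
\begin{equation*}
  \rho(\bar{\mathcal A}^{\rm HBCQL})
  \leq \max\left\{\rho(C),\rho(\bar{\mathcal A}^{\rm HBCQL})\right\}
  =\rho(\mathcal A^{\rm HBCQL}),
\end{equation*}
which is the claim.

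There is no real obstacle here. All the substantive work sits in~\Cref{lem:cpd-hb-jsr-decomposition}. That lemma uses a single common orthogonal change of basis $\diag(S,S)$ followed by a fixed block permutation. Together these put every mode $A_\pi^{\rm HBCQL}$ simultaneously into block upper triangular form with diagonal blocks $C$ and $\bar A_\pi^{\rm HBCQL}$. The lemma then invokes~\Cref{lem:similarity-jsr} and~\Cref{lem:block-triangular-jsr}. The only point to check is that the lower-right diagonal family produced by that similarity is exactly $\bar{\mathcal A}^{\rm HBCQL}$ as defined in~\Cref{eq:cpd-projected-hb-family}. This holds because $U^\top A_\pi^{\rm CQL}U=\bar A_\pi^{\rm CQL}$ by~\Cref{eq:cpd-projected-standard-family}.

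The inequality can also be seen directly. The subspace $\{(Ux,Uy)\}$ does not in general map to itself, so a restriction argument is not available. Instead, the block-triangular form shows that every product of modes has the corresponding product of projected modes as a diagonal block. The norm of a matrix is at least the norm of any of its diagonal blocks, so taking $k$-th roots and passing to the limit in~\Cref{def:jsr} gives the same inequality.
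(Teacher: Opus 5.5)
Your proof is correct and is the same as the paper's: the inequality is read off the identity \Cref{eq:cpd-hb-jsr-decomposition}, because a maximum is at least its second entry. Your additional direct argument is also valid, but not needed. In the common orthogonal basis, the diagonal block of each product is the product of the projected modes, and the spectral norm of a matrix dominates that of any submatrix.
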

\begin{proof}
The claim follows from~\Cref{eq:cpd-hb-jsr-decomposition} because the maximum
is at least its second entry.
\end{proof}
A natural question is whether the HBCQL recursion can improve the convergence rate. The common eigenvector direction can be accelerated by a suitable positive momentum. The next theorem gives a sufficient condition under which HBCQL has a strictly smaller JSR than CQL.
\begin{theorem}
\label{thm:cpd-global-jsr-improvement}
Under~\Cref{ass:cpd-constant-mode-range,ass:cpd-projected-gap}, there exists $\eta_0>0$ such that every momentum gain satisfying
\begin{equation}
  0<\alpha\eta<\min\{\bigl(1-\alpha\delta(1-\gamma)\bigr)^2,\eta_0\}
  \label{eq:cpd-eta-condition}
\end{equation}
satisfies
\begin{equation}
  \rho(\mathcal A^{\rm HBCQL})
  <\rho(\mathcal A^{\rm CQL})=\bigl(1-\alpha\delta(1-\gamma)\bigr)<1.
  \label{eq:cpd-global-improvement}
\end{equation}
\end{theorem}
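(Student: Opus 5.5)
Write $\lambda:=1-\alpha\delta(1-\gamma)\in(0,1)$ (by \Cref{ass:cpd-constant-mode-range}) and $\beta:=\alpha\eta$. The right-hand equality and the bound $<1$ in \Cref{eq:cpd-global-improvement} are exactly \Cref{lem:cpd-standard-benchmark} together with \Cref{ass:cpd-constant-mode-range}. For the strict inequality, I use the decomposition \Cref{eq:cpd-hb-jsr-decomposition}, $\rho(\mathcal A^{\rm HBCQL})=\max\{\rho(C),\rho(\bar{\mathcal A}^{\rm HBCQL})\}$. It then suffices to show separately that $\rho(C)<\lambda$ and that $\rho(\bar{\mathcal A}^{\rm HBCQL})<\lambda$ for all $\beta$ in the stated range.

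\emph{Constant block.} The characteristic polynomial of $C$ is $z^2-(\lambda+\beta)z+\beta$. Its discriminant is $(\lambda+\beta)^2-4\beta$, and it has two real roots whose product is $\beta>0$ and whose sum is $\lambda+\beta>0$. Both roots are therefore positive. I evaluate the polynomial at $z=\lambda$:
\[
\lambda^2-(\lambda+\beta)\lambda+\beta=\beta(1-\lambda)>0 .
\]
The vertex lies at $(\lambda+\beta)/2$, and $(\lambda+\beta)/2<\lambda$ whenever $\beta<\lambda$, which holds because $\beta<\lambda^2<\lambda$. Since the polynomial is positive at $z=\lambda$ and $\lambda$ lies to the right of the vertex, both roots are strictly less than $\lambda$. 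If the roots are complex, their common modulus is $\sqrt\beta$, and $\sqrt\beta<\lambda$ is exactly the hypothesis $\beta<\lambda^2$. In either case $\rho(C)<\lambda$. This is the step where the condition $\alpha\eta<\lambda^2$ is used.

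\emph{Projected block.} At $\beta=0$, $\bar A_\pi^{\rm HBCQL}$ is block upper triangular (after conjugation), with diagonal blocks $\bar A_\pi^{\rm CQL}$ and $0$. This follows by reordering: $\bar A_\pi^{\rm HBCQL}|_{\beta=0}=\begin{bmatrix}\bar A_\pi^{\rm CQL}&0\\ I&0\end{bmatrix}$, which is block lower triangular, so \Cref{lem:block-triangular-jsr} applies to the transposed ordering. Hence
\[
\rho(\bar{\mathcal A}^{\rm HBCQL})\big|_{\beta=0}=\rho(\bar{\mathcal A}^{\rm CQL})<\lambda
\]
by \Cref{ass:cpd-projected-gap} and \Cref{lem:cpd-standard-benchmark}. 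I then need continuity of the JSR in $\beta$. The JSR of a finite (bounded) family depends continuously on the matrices, which is a standard fact. Alternatively, one can argue directly with \Cref{lem:common_lyapunov_construction}: take the JSR Lyapunov norm $p_\epsilon$ for the $\beta=0$ family with $\rho(\bar{\mathcal A}^{\rm CQL})+\epsilon<\lambda$. Each mode at parameter $\beta$ differs from its $\beta=0$ counterpart by $\beta\begin{bmatrix}I&-I\\0&0\end{bmatrix}$, so its induced $p_\epsilon$-norm is at most $\rho(\bar{\mathcal A}^{\rm CQL})+\epsilon+c\beta$ for a constant $c$ coming from norm equivalence. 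Because $\Theta$ is finite, this bound is uniform over modes. Choosing
\[
\eta_0:=\bigl(\lambda-\rho(\bar{\mathcal A}^{\rm CQL})-\epsilon\bigr)/c>0
\]
gives $\rho(\bar{\mathcal A}^{\rm HBCQL})<\lambda$ whenever $\beta<\eta_0$, since the JSR is bounded by any uniform induced-norm bound. The theorem's $\eta_0$ constrains $\alpha\eta$, which matches this choice.

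The main obstacle is this projected-block perturbation step. One subtlety is that at $\beta=0$ the $\beta=0$ family is not the full family $\bar{\mathcal A}^{\rm CQL}$ but its augmentation with a zero block. Its JSR still equals $\rho(\bar{\mathcal A}^{\rm CQL})$, since $\max\{\rho(\bar{\mathcal A}^{\rm CQL}),0\}$ is what the triangular structure gives, but this must be stated carefully. The Lyapunov-norm perturbation argument above handles this cleanly. Combining the two blocks through \Cref{eq:cpd-hb-jsr-decomposition} yields \Cref{eq:cpd-global-improvement}.
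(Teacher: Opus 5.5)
Your proposal is correct and follows essentially the same route as the paper: the decomposition $\rho(\mathcal A^{\rm HBCQL})=\max\{\rho(C),\rho(\bar{\mathcal A}^{\rm HBCQL})\}$, the block-triangular evaluation $\rho(\bar{\mathcal A}^{\rm HBCQL})=\rho(\bar{\mathcal A}^{\rm CQL})<\lambda$ at $\alpha\eta=0$ followed by continuity in $\alpha\eta$, and a scalar root bound for $C$ that uses $\alpha\eta<\lambda^2$. The differences are minor: you locate the roots of $z^2-(\lambda+\beta)z+\beta$ directly instead of applying the rescaled Jury criterion, using the sign at $z=\lambda$ and the vertex position when the roots are real and the modulus $\sqrt\beta$ when they are complex; and your optional Lyapunov-norm bound $\rho(\bar{\mathcal A}^{\rm CQL})+\epsilon+c\beta$ turns the paper's bare appeal to JSR continuity into an explicit choice of $\eta_0$.
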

\begin{proof}
By~\Cref{ass:cpd-constant-mode-range},
\begin{equation*}
  0<1-\alpha\delta(1-\gamma)<1.
\end{equation*}
From~\Cref{eq:cpd-standard-jsr-decomposition} and
\Cref{ass:cpd-projected-gap}, the projected block cannot attain the maximum;
hence
\begin{equation*}
  \rho(\mathcal A^{\rm CQL})=1-\alpha\delta(1-\gamma),
  \qquad
  \rho(\bar{\mathcal A}^{\rm CQL})<1-\alpha\delta(1-\gamma).
\end{equation*}
At $\eta=0$, \Cref{eq:cpd-projected-hb-family} gives, for each
$\pi\in\Theta$,
\begin{equation*}
  \left.\bar A_\pi^{\rm HBCQL}\right|_{\eta=0}
  =
  \begin{bmatrix}
    \bar A_\pi^{\rm CQL}&0\\
    I_{n-1}&0
  \end{bmatrix}.
\end{equation*}
For every word $\pi_1,\ldots,\pi_k$ with $k\ge1$, the corresponding product is
\begin{equation*}
  \left.\bar A_{\pi_k}^{\rm HBCQL}\right|_{\eta=0}
  \cdots
  \left.\bar A_{\pi_1}^{\rm HBCQL}\right|_{\eta=0}
  =
  \begin{bmatrix}
    \bar A_{\pi_k}^{\rm CQL}\cdots\bar A_{\pi_1}^{\rm CQL}&0\\
    \bar A_{\pi_{k-1}}^{\rm CQL}\cdots\bar A_{\pi_1}^{\rm CQL}&0
  \end{bmatrix},
\end{equation*}
where the lower-left block is interpreted as $I_{n-1}$ when $k=1$. Equivalently,
the zero-momentum projected family is block lower triangular with diagonal
families $\bar{\mathcal A}^{\rm CQL}$ and $\{0\}$.  Hence
\Cref{lem:block-triangular-jsr} gives
\begin{equation*}
  \rho\left(\left.\bar{\mathcal A}^{\rm HBCQL}\right|_{\eta=0}\right)
  =\max\{\rho(\bar{\mathcal A}^{\rm CQL}),0\}
  =\rho(\bar{\mathcal A}^{\rm CQL}).
\end{equation*}
By continuity of the JSR for finite matrix families, there exists $\eta_0>0$ such
that
\begin{equation*}
  \rho(\bar{\mathcal A}^{\rm HBCQL})<1-\alpha\delta(1-\gamma),
  \qquad 0<\alpha\eta<\eta_0.
\end{equation*}
If, in addition,
\begin{equation*}
  0<\alpha\eta<\bigl(1-\alpha\delta(1-\gamma)\bigr)^2,
\end{equation*}
then~\Cref{app:prop-cpd-constant-mode-acceleration} gives
\begin{equation*}
  \rho(C)<1-\alpha\delta(1-\gamma).
\end{equation*}
Using~\Cref{eq:cpd-hb-jsr-decomposition},
\begin{align*}
  \rho(\mathcal A^{\rm HBCQL})
  &=\max\{\rho(C),\rho(\bar{\mathcal A}^{\rm HBCQL})\}\\
  &<1-\alpha\delta(1-\gamma)
   =\rho(\mathcal A^{\rm CQL}).
\end{align*}
This proves~\Cref{eq:cpd-global-improvement}.  For admissible stochastic
Bellman-difference selectors, the active modes lie in the convex hull of the
finite deterministic-policy family by the convex-hull statement in
\Cref{lem:bellman_difference_selector}. Since this family is finite,
\Cref{lem:convex-hull-jsr} applies, so passing to the convex hull does not change
the JSR.  Since the resulting JSR is less than one, the uniform exponential
stability statement follows from~\Cref{lem:basic_jsr_convergence}.
\end{proof}
\Cref{thm:cpd-global-jsr-improvement} implies that, under its hypotheses,
HBCQL converges to $Q^\star$ and has a strictly smaller JSR certificate than the
CQL recursion in~\Cref{alg:cpd-standard-update}. This is a fixed-step-size JSR acceleration statement: the actual momentum coefficient is $\alpha\eta$. Consequently, when $\eta$ is kept fixed and $\alpha\to0$, the common-mode improvement over CQL is second order in $\alpha$ rather than a leading-order heavy-ball acceleration.
Note that~\Cref{thm:cpd-global-jsr-improvement} assumes the strict inequality in~\Cref{ass:cpd-projected-gap}. Because this condition may be hard to verify
directly, the following corollary gives a simple sufficient interval for $\delta$ that automatically guarantees~\Cref{ass:cpd-projected-gap}.
\begin{corollary}
\label{lem:cpd-admissible-delta-eta}
Suppose that $\rho(\bar{\mathcal A}^{\rm CQL})<1$. For any
\begin{equation}
  0<\delta<\frac{1-\rho(\bar{\mathcal A}^{\rm CQL})}{\alpha(1-\gamma)},
  \label{eq:cpd-admissible-delta-interval}
\end{equation}
\Cref{ass:cpd-projected-gap} holds. After such a $\delta$ has been fixed, any
momentum gain satisfying~\Cref{eq:cpd-eta-condition} also satisfies the
hypotheses of~\Cref{thm:cpd-global-jsr-improvement}.
\end{corollary}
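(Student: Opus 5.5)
The plan is to use \Cref{lem:cpd-projected-mode-delta-free}, which says the projected family $\bar{\mathcal A}^{\rm CQL}$ does not depend on $\delta$. Hence the number $\bar\rho:=\rho(\bar{\mathcal A}^{\rm CQL})$ appearing in the hypothesis and in \Cref{eq:cpd-admissible-delta-interval} is one fixed quantity, computed once from $\alpha$, $\gamma$, $D$, and the MDP. Choosing $\delta$ in the stated interval therefore does not change $\bar\rho$. This is the one point where circularity might seem to threaten, and the lemma removes it.

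Next, I multiply the upper bound in \Cref{eq:cpd-admissible-delta-interval} by $\alpha(1-\gamma)>0$. This gives $0<\alpha\delta(1-\gamma)<1-\bar\rho$, which is equivalent to
\begin{equation*}
  \bar\rho<1-\alpha\delta(1-\gamma)<1 .
\end{equation*}
The right inequality uses $\alpha\delta(1-\gamma)>0$. The left inequality uses the interval bound. Since the JSR is nonnegative, $\bar\rho\ge 0$, so the factor $1-\alpha\delta(1-\gamma)$ is strictly positive. Thus \Cref{ass:cpd-constant-mode-range} holds as a by-product.

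With \Cref{ass:cpd-constant-mode-range} established, I apply the decomposition \Cref{eq:cpd-standard-jsr-decomposition} of \Cref{lem:cpd-standard-jsr-decomposition}. It gives $\rho(\mathcal A^{\rm CQL})=\max\{1-\alpha\delta(1-\gamma),\bar\rho\}=1-\alpha\delta(1-\gamma)$. This value is strictly larger than $\bar\rho$, which is exactly \Cref{ass:cpd-projected-gap}.

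For the second sentence, I note that with $\delta$ fixed in this way, both \Cref{ass:cpd-constant-mode-range} and \Cref{ass:cpd-projected-gap} hold. These are precisely the hypotheses of \Cref{thm:cpd-global-jsr-improvement}. That theorem then supplies $\eta_0>0$, and any $\eta$ satisfying \Cref{eq:cpd-eta-condition} falls under its conclusion.

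There is no real obstacle here. The only subtle step is the $\delta$-independence of $\bar\rho$, together with noting that positivity of $1-\alpha\delta(1-\gamma)$ follows from $\bar\rho\ge0$.
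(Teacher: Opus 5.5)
Your proposal is correct and follows essentially the same route as the paper. Like the paper, you invoke the $\delta$-independence of $\bar{\mathcal A}^{\rm CQL}$ from \Cref{lem:cpd-projected-mode-delta-free}, rearrange the interval into $\rho(\bar{\mathcal A}^{\rm CQL})<1-\alpha\delta(1-\gamma)<1$ (using nonnegativity of the JSR for positivity), and conclude that \Cref{ass:cpd-constant-mode-range} and \Cref{ass:cpd-projected-gap} both hold; your explicit use of \Cref{eq:cpd-standard-jsr-decomposition} to identify $\rho(\mathcal A^{\rm CQL})$ simply spells out a step the paper states tersely.
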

\begin{proof}
For fixed MDP data, $\alpha$, $D$, and $d$, the projected matrices satisfy
\begin{equation*}
  \bar A_\pi^{\rm CQL}
  =I_{n-1}+\alpha U^\top D(I-\Pi_d)(\gamma P\Pi^\pi-I)U,
\end{equation*}
by~\Cref{eq:cpd-projected-mode-delta-free}.  Hence
$\rho(\bar{\mathcal A}^{\rm CQL})$ is independent of $\delta$.  The strict
projected-gap condition is therefore
\begin{equation*}
  \rho(\bar{\mathcal A}^{\rm CQL})<1-\alpha\delta(1-\gamma),
\end{equation*}
together with $0<1-\alpha\delta(1-\gamma)<1$.  The interval in
\Cref{eq:cpd-admissible-delta-interval} gives
\begin{equation*}
  0<\alpha\delta(1-\gamma)<1-\rho(\bar{\mathcal A}^{\rm CQL})\le 1,
\end{equation*}
and therefore
\begin{equation*}
  \rho(\bar{\mathcal A}^{\rm CQL})<1-\alpha\delta(1-\gamma)<1.
\end{equation*}
Thus both the constant-mode range and projected-gap requirements hold.  Once
$\delta$ is fixed,~\Cref{eq:cpd-eta-condition} is precisely the remaining
small-actual-momentum condition in~\Cref{thm:cpd-global-jsr-improvement}.
\end{proof}

\section{Hyperparameter conditions for improving over standard Q-learning}
The previous theorem compares HBCQL with CQL ($\eta =0$, equivalently $\alpha\eta=0$).
We also state a useful sufficient condition under which HBCQL is faster than the standard QL benchmark in~\Cref{alg:direct-d-standard-update}, obtained by setting the momentum term to zero and not
applying the correction.
Since $D$ is nonsingular, the fixed point of
\Cref{alg:direct-d-standard-update} is $Q^\star$. Subtracting $Q^\star=F(Q^\star)$ from the update given in~\Cref{alg:direct-d-standard-update} gives the
error recursion
\begin{align}
  Q_{k+1}-Q^\star
  &=(Q_k-Q^\star)
    +\alpha D\{F(Q_k)-F(Q^\star)-(Q_k-Q^\star)\}.\label{eq:3}
\end{align}
The next lemma uses~\Cref{eq:bellman-difference-selector-general} to derive an SLS model associated with the error recursion in~\Cref{eq:3}.
\begin{lemma}
\label{lem:direct-d-standard-sls}
If $Q_k$ follows
\Cref{alg:direct-d-standard-update}, then the error recursion in~\Cref{eq:3} satisfies the SLS
\begin{equation}
  Q_{k+1}-Q^\star=A_{\mu_k}^{\rm QL}(Q_k-Q^\star),
  \label{eq:direct-d-standard-error-system}
\end{equation}
where for any stochastic policy $\mu$,
\begin{equation}
  A_\mu^{\rm QL}:=I+\alpha D(\gamma P\Pi^\mu-I).
  \label{eq:direct-d-standard-mode}
\end{equation}
\end{lemma}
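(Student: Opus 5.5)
The argument mirrors the proof of \Cref{lem:cpd-standard-sls}, with the correction matrix $H$ replaced by the diagonal weighting $D$. I would start from the error recursion in~\Cref{eq:3}, which was obtained by subtracting $Q^\star=F(Q^\star)$ from the update of \Cref{alg:direct-d-standard-update}; this subtraction is valid because $D$ is nonsingular, so $Q^\star$ is the fixed point.

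Next, I would apply \Cref{lem:bellman_difference_selector} with $Q=Q_k$ and $\bar Q=Q^\star$. This gives a stochastic policy $\mu_k:=\mu_{Q_k,Q^\star}$ for which~\Cref{eq:bellman-difference-selector-general} reads
\begin{equation*}
  F(Q_k)-F(Q^\star)=\gamma P\Pi^{\mu_k}(Q_k-Q^\star).
\end{equation*}
Substituting this into~\Cref{eq:3} and collecting the terms multiplying $Q_k-Q^\star$ yields
\begin{equation*}
  Q_{k+1}-Q^\star=\{I+\alpha D(\gamma P\Pi^{\mu_k}-I)\}(Q_k-Q^\star)=A_{\mu_k}^{\rm QL}(Q_k-Q^\star),
\end{equation*}
which is exactly~\Cref{eq:direct-d-standard-error-system} with the mode defined in~\Cref{eq:direct-d-standard-mode}.

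For later JSR use, I would also record that $A_\mu^{\rm QL}$ depends affinely on $P\Pi^\mu$. By the convex-hull statement in \Cref{lem:bellman_difference_selector}, this gives
\begin{equation*}
  A_{\mu_k}^{\rm QL}\in\co\{A_\pi^{\rm QL}:\pi\in\Theta\}.
\end{equation*}
The affine map sends convex combinations of the matrices $P\Pi^\pi$ to the same convex combinations of the modes $A_\pi^{\rm QL}$, because the weights sum to one.

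There is no real obstacle here. The only point needing care is that the selector $\mu_k$ depends on the iterate $Q_k$. It is therefore a state-dependent switching signal rather than a fixed policy, which is precisely what the SLS formulation allows.
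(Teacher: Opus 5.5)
Your proposal is correct and matches the paper's proof. You substitute the selector identity \Cref{eq:bellman-difference-selector-general} with $\mu_k=\mu_{Q_k,Q^\star}$ into \Cref{eq:3}, collect the terms into $A_{\mu_k}^{\rm QL}$, and then use the affine dependence on $P\Pi^{\mu_k}$ together with the convex-hull part of \Cref{lem:bellman_difference_selector} to obtain $A_{\mu_k}^{\rm QL}\in\co(\mathcal A^{\rm QL})$, exactly as the paper does.
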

\begin{proof}
Using~\Cref{eq:bellman-difference-selector-general}, the error recursion in~\Cref{eq:3} can be written as
\begin{align*}
  Q_{k+1}-Q^\star
  &=\{I+\alpha D(\gamma P\Pi^{\mu_k}-I)\}(Q_k-Q^\star)\\
  &=A_{\mu_k}^{\rm QL}(Q_k-Q^\star),
\end{align*}
which proves~\Cref{eq:direct-d-standard-error-system}.  The affine dependence
on $P\Pi^{\mu_k}$ and the convex-hull statement in
\Cref{lem:bellman_difference_selector} give
$A_{\mu_k}^{\rm QL}\in\co(\mathcal A^{\rm QL})$.
\end{proof}
The corresponding SLS family is
\begin{equation}
  \mathcal A^{\rm QL}:=\left\{A_\pi^{\rm QL}:\pi\in\Theta\right\}.
  \label{eq:direct-d-standard-family}
\end{equation}
We now compare CQL with the standard QL benchmark. The correction in CQL is designed to preserve a common eigenvector, but this structural property alone does not imply an improvement over QL. The next result gives a sufficient condition under which the JSR of CQL is no larger than that of the QL benchmark.
\begin{lemma}
\label{lem:cpd-cql-no-larger-than-direct-d}
Assume
\begin{equation*}
  0<\alpha d(s,a)\le 1
  \quad\text{for every }(s,a)\in\mathcal S\times\mathcal A,
\end{equation*}
and
\begin{equation*}
  d_{\max}\le \delta
  <\frac{1-\rho(\bar{\mathcal A}^{\rm CQL})}{\alpha(1-\gamma)}.
\end{equation*}
Then
\begin{equation*}
  \rho(\mathcal A^{\rm CQL})\le \rho(\mathcal A^{\rm QL}).
\end{equation*}
The interval for $\delta$ is nonempty if and only if
\begin{equation*}
  \rho(\bar{\mathcal A}^{\rm CQL})<1-\alpha d_{\max}(1-\gamma).
\end{equation*}
\end{lemma}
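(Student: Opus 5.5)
Under the stated interval for $\delta$, \Cref{lem:cpd-admissible-delta-eta} (with $\delta>0$ and the upper bound) gives \Cref{ass:cpd-constant-mode-range,ass:cpd-projected-gap}, so by \Cref{lem:cpd-standard-benchmark} we have $\rho(\mathcal A^{\rm CQL})=1-\alpha\delta(1-\gamma)$. It therefore suffices to prove the lower bound
\begin{equation*}
  \rho(\mathcal A^{\rm QL})\ge 1-\alpha d_{\max}(1-\gamma),
\end{equation*}
because $\delta\ge d_{\max}$ then gives $1-\alpha\delta(1-\gamma)\le 1-\alpha d_{\max}(1-\gamma)\le\rho(\mathcal A^{\rm QL})$.

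For the lower bound I will use a Perron--Frobenius argument on a single mode. The JSR dominates the spectral radius of every member of the family, so $\rho(\mathcal A^{\rm QL})\ge\rho(A_\pi^{\rm QL})$ for any fixed $\pi\in\Theta$. Since $\alpha d(s,a)\le1$, the matrix
\begin{equation*}
  A_\pi^{\rm QL}=(I-\alpha D)+\alpha\gamma DP\Pi^\pi
\end{equation*}
is entrywise nonnegative. Its row sums are
\begin{equation*}
  A_\pi^{\rm QL}\mathbf 1=\mathbf 1-\alpha(1-\gamma)d,
\end{equation*}
because $P\Pi^\pi$ is row stochastic by \Cref{lem:bellman_difference_selector}. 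For a nonnegative matrix, the spectral radius is at least the minimum row sum. This yields only the bound $1-\alpha(1-\gamma)d_{\max}$, which is exactly what is needed. If one prefers to avoid quoting the min-row-sum bound, the alternative is to take $v=\mathbf 1\ge0$, note that $A_\pi^{\rm QL}v\ge(1-\alpha(1-\gamma)d_{\max})v$ entrywise, and apply the Collatz--Wielandt (subinvariance) inequality for nonnegative matrices.

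For the nonemptiness claim, I will simply compare endpoints. The interval $[d_{\max},\,(1-\rho(\bar{\mathcal A}^{\rm CQL}))/(\alpha(1-\gamma)))$ is nonempty iff
\begin{equation*}
  d_{\max}<\frac{1-\rho(\bar{\mathcal A}^{\rm CQL})}{\alpha(1-\gamma)},
\end{equation*}
which rearranges to $\rho(\bar{\mathcal A}^{\rm CQL})<1-\alpha d_{\max}(1-\gamma)$. This uses $\alpha(1-\gamma)>0$, and it relies on the fact that $\rho(\bar{\mathcal A}^{\rm CQL})$ does not depend on $\delta$, by \Cref{lem:cpd-projected-mode-delta-free}. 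Nonemptiness also implies $\rho(\bar{\mathcal A}^{\rm CQL})<1$, which \Cref{lem:cpd-admissible-delta-eta} requires.

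The main step needing care is the nonnegative-matrix lower bound. It is where the hypothesis $\alpha d\le1$ enters, and one must state the min-row-sum bound correctly: it follows from $A v\ge c v$ with $v>0$, giving $\rho(A)\ge c$. The remaining steps are bookkeeping against earlier lemmas.
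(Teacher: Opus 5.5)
Your proposal is correct and follows essentially the same route as the paper. You establish $\rho(\mathcal A^{\rm CQL})=1-\alpha\delta(1-\gamma)$ from the projected gap, lower-bound $\rho(\mathcal A^{\rm QL})$ by the spectral radius of a single nonnegative mode via the Collatz--Wielandt min-row-sum bound, and compare endpoints for nonemptiness. The only cosmetic difference is that you obtain the two assumptions through \Cref{lem:cpd-admissible-delta-eta}, correctly noting that $\rho(\bar{\mathcal A}^{\rm CQL})<1$ follows from $d_{\max}>0$; the paper instead reads them off directly from the upper bound on $\delta$ and the nonnegativity of the JSR.
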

\begin{proof}
The upper bound on $\delta$ gives
\begin{equation*}
  \rho(\bar{\mathcal A}^{\rm CQL})<1-\alpha\delta(1-\gamma).
\end{equation*}
Since JSRs are nonnegative, the same upper bound gives
$1-\alpha\delta(1-\gamma)>0$. Also, $d_{\max}\le\delta$ and full support of
$d$ imply $\delta>0$, so $1-\alpha\delta(1-\gamma)<1$. Hence
\Cref{ass:cpd-constant-mode-range} holds, and the decomposition in
\Cref{eq:cpd-standard-jsr-decomposition} yields
\begin{equation*}
  \rho(\mathcal A^{\rm CQL})=1-\alpha\delta(1-\gamma).
\end{equation*}
By~\Cref{lem:direct-d-nonnegative-row-sum}, every deterministic direct mode
satisfies
\begin{equation*}
  \rho(A_\pi^{\rm QL})\ge 1-\alpha d_{\max}(1-\gamma),
  \qquad \pi\in\Theta.
\end{equation*}
Since the JSR of a family is at least the spectral radius of each one-step mode,
\begin{equation*}
  \rho(\mathcal A^{\rm QL})\ge 1-\alpha d_{\max}(1-\gamma).
\end{equation*}
Using $d_{\max}\le\delta$, we obtain
\begin{equation*}
  \rho(\mathcal A^{\rm CQL})
  =1-\alpha\delta(1-\gamma)
  \le 1-\alpha d_{\max}(1-\gamma)
  \le \rho(\mathcal A^{\rm QL}),
\end{equation*}
which proves the claimed comparison. Finally,
\begin{equation*}
  d_{\max}<\frac{1-\rho(\bar{\mathcal A}^{\rm CQL})}{\alpha(1-\gamma)}
\end{equation*}
is equivalent to
\begin{equation*}
  \rho(\bar{\mathcal A}^{\rm CQL})<1-\alpha d_{\max}(1-\gamma),
\end{equation*}
and this proves the nonempty-interval statement.
\end{proof}

The remaining question is whether HBCQL can be faster than the QL benchmark
itself. Combining the previous comparison with the heavy-ball improvement
theorem~\Cref{thm:cpd-global-jsr-improvement} gives the following sufficient
condition for a strict JSR comparison of HBCQL against QL.
\begin{lemma}
\label{lem:cpd-hbcql-faster-than-direct-d}
Assume
\begin{equation*}
  0<\alpha d(s,a)\le 1
  \quad\text{for every }(s,a)\in\mathcal S\times\mathcal A,
\end{equation*}
and
\begin{equation*}
  d_{\max}\le \delta
  <\frac{1-\rho(\bar{\mathcal A}^{\rm CQL})}{\alpha(1-\gamma)}.
\end{equation*}
Then the hypotheses of~\Cref{thm:cpd-global-jsr-improvement} are satisfied, namely
\Cref{ass:cpd-constant-mode-range,ass:cpd-projected-gap} hold. Let
$\eta_0>0$ be the corresponding small-actual-momentum radius guaranteed by
\Cref{thm:cpd-global-jsr-improvement}. If
\begin{equation*}
  0<\alpha\eta<\min\{\bigl(1-\alpha\delta(1-\gamma)\bigr)^2,\eta_0\},
\end{equation*}
then
\begin{equation*}
  \rho(\mathcal A^{\rm HBCQL})<\rho(\mathcal A^{\rm QL}).
\end{equation*}
\end{lemma}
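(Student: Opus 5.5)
The plan is to chain \Cref{lem:cpd-cql-no-larger-than-direct-d} with \Cref{thm:cpd-global-jsr-improvement}. The first step is to verify the hypotheses of the theorem, exactly as in the first part of the proof of \Cref{lem:cpd-cql-no-larger-than-direct-d}. Since $d$ has full support, $\delta\ge d_{\max}>0$. The upper bound on $\delta$ gives
\[
\rho(\bar{\mathcal A}^{\rm CQL})<1-\alpha\delta(1-\gamma).
\]
Because $\rho(\bar{\mathcal A}^{\rm CQL})\ge0$, this forces $1-\alpha\delta(1-\gamma)>0$. Positivity of $\alpha\delta(1-\gamma)$ gives $1-\alpha\delta(1-\gamma)<1$. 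Hence \Cref{ass:cpd-constant-mode-range} holds.

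With this range in hand, \Cref{lem:cpd-standard-jsr-decomposition} yields $\rho(\mathcal A^{\rm CQL})=1-\alpha\delta(1-\gamma)>\rho(\bar{\mathcal A}^{\rm CQL})$, which is \Cref{ass:cpd-projected-gap}. Equivalently, the projected-gap part follows from \Cref{lem:cpd-admissible-delta-eta}, whose interval condition is exactly the upper bound on $\delta$ assumed here. This settles the first claim of the lemma, so the radius $\eta_0>0$ from \Cref{thm:cpd-global-jsr-improvement} is well defined.

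For the strict comparison, fix $\alpha\eta$ in the stated interval. \Cref{thm:cpd-global-jsr-improvement} then gives
\[
\rho(\mathcal A^{\rm HBCQL})<\rho(\mathcal A^{\rm CQL}).
\]
The assumptions of \Cref{lem:cpd-cql-no-larger-than-direct-d} are identical to those here, so $\rho(\mathcal A^{\rm CQL})\le\rho(\mathcal A^{\rm QL})$. Chaining the strict and the weak inequality gives $\rho(\mathcal A^{\rm HBCQL})<\rho(\mathcal A^{\rm QL})$.

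I expect no substantive obstacle, since all the analytic work is already in the cited results. The point needing care is that the weak inequality of \Cref{lem:cpd-cql-no-larger-than-direct-d} depends on the lower bound $\rho(A_\pi^{\rm QL})\ge1-\alpha d_{\max}(1-\gamma)$ from \Cref{lem:direct-d-nonnegative-row-sum}. That bound is where $\alpha d(s,a)\le1$ is used, because this condition makes each $A_\pi^{\rm QL}$ nonnegative. I would simply invoke \Cref{lem:cpd-cql-no-larger-than-direct-d} as stated rather than reprove it. I would also note explicitly that $\eta_0$ depends only on $\alpha,\delta,D$ and the MDP data, not on $\eta$, so the interval for $\alpha\eta$ is fixed before $\eta$ is chosen.
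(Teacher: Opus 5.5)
Your proposal is correct and follows essentially the same route as the paper's proof. Both arguments verify \Cref{ass:cpd-constant-mode-range,ass:cpd-projected-gap} from $\delta\ge d_{\max}>0$ and the upper bound on $\delta$, apply \Cref{thm:cpd-global-jsr-improvement} to get the strict inequality against CQL, and then chain with \Cref{lem:cpd-cql-no-larger-than-direct-d}. The only difference is that you invoke \Cref{lem:cpd-standard-jsr-decomposition} explicitly to obtain the projected gap, a step the paper leaves implicit.
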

\begin{proof}
Because $0<\alpha d(s,a)$ for every $(s,a)$ and $d_{\max}\le\delta$, we have
$\delta>0$. The upper bound on $\delta$ gives
\begin{equation*}
  \rho(\bar{\mathcal A}^{\rm CQL})<1-\alpha\delta(1-\gamma).
\end{equation*}
Since the left-hand side is nonnegative, this also gives
$0<1-\alpha\delta(1-\gamma)<1$. Hence
\Cref{ass:cpd-constant-mode-range,ass:cpd-projected-gap} hold, and
\Cref{thm:cpd-global-jsr-improvement} applies. Therefore
\begin{equation*}
  \rho(\mathcal A^{\rm HBCQL})<\rho(\mathcal A^{\rm CQL})
\end{equation*}
for the stated range of the actual momentum coefficient $\alpha\eta$. Combining this strict inequality with
\Cref{lem:cpd-cql-no-larger-than-direct-d} gives
\begin{equation*}
  \rho(\mathcal A^{\rm HBCQL})<\rho(\mathcal A^{\rm QL}).
\end{equation*}
\end{proof}

The previous lemma packages two requirements into the interval for $\delta$: the
upper bound gives the hypotheses of~\Cref{thm:cpd-global-jsr-improvement}, while
$d_{\max}\le\delta$ gives the comparison between CQL and QL. The following
corollary separates these two roles. It assumes the projected gap directly and
states the conclusion in terms of the hyperparameters.
\begin{corollary}
\label{cor:cpd-hb-faster-than-direct-d}
Assume the following hyperparameter conditions hold:
\begin{enumerate}[label=(\roman*)]
  \item $0<\alpha d(s,a)\le 1$ for every $(s,a)\in\mathcal S\times\mathcal A$;
  \item $d_{\max}\le \delta<\frac{1}{\alpha(1-\gamma)}$;
  \item \Cref{ass:cpd-projected-gap} holds;
  \item $0<\alpha\eta<\min\{\rho(\mathcal A^{\rm CQL})^2,\eta_0\}$,
  where $\eta_0>0$ is the small-actual-momentum radius from
  \Cref{thm:cpd-global-jsr-improvement}.
\end{enumerate}
Then
\begin{equation*}
  \rho(\mathcal A^{\rm HBCQL})<\rho(\mathcal A^{\rm QL}).
\end{equation*}
In particular, with the boundary choice $\delta=d_{\max}$, condition (ii)
becomes
\begin{equation*}
  \alpha< \frac{1}{d_{\max}(1-\gamma)}.
\end{equation*}
Together with condition (i), it is sufficient to require
\begin{equation*}
  0<\alpha\le \frac{1}{d_{\max}}.
\end{equation*}
If~\Cref{ass:cpd-projected-gap} is also verified for this choice, then it is
sufficient to choose
\begin{equation*}
  0<\alpha\eta<\min\left\{\rho(\mathcal A^{\rm CQL})^2,\eta_0\right\}.
\end{equation*}
\end{corollary}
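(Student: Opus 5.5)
The plan is to reduce the corollary to \Cref{thm:cpd-global-jsr-improvement} combined with the one-step spectral lower bound for QL used in the proof of \Cref{lem:cpd-cql-no-larger-than-direct-d}. The point of separating the roles is that condition (iii) replaces the upper bound on $\delta$ involving $\rho(\bar{\mathcal A}^{\rm CQL})$, so the only remaining requirement on $\delta$ from above is $\delta<1/(\alpha(1-\gamma))$.

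\emph{Step 1: verify \Cref{ass:cpd-constant-mode-range}.} Condition (i) and the full support of $d$ give $d_{\max}>0$, so (ii) yields $\delta>0$ and hence $\alpha\delta(1-\gamma)>0$. The upper bound in (ii) gives $\alpha\delta(1-\gamma)<1$. Together these give $0<1-\alpha\delta(1-\gamma)<1$.

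\emph{Step 2: HBCQL beats CQL.} With (iii), \Cref{lem:cpd-standard-benchmark} gives $\rho(\mathcal A^{\rm CQL})=1-\alpha\delta(1-\gamma)$. Condition (iv) is therefore literally \Cref{eq:cpd-eta-condition}, and \Cref{thm:cpd-global-jsr-improvement} yields $\rho(\mathcal A^{\rm HBCQL})<1-\alpha\delta(1-\gamma)$.

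\emph{Step 3: CQL is no slower than QL.} By \Cref{lem:direct-d-nonnegative-row-sum}, which needs $\alpha d(s,a)\le1$ from (i), every deterministic QL mode satisfies $\rho(A_\pi^{\rm QL})\ge 1-\alpha d_{\max}(1-\gamma)$. The JSR dominates the spectral radius of each generator, so $\rho(\mathcal A^{\rm QL})\ge 1-\alpha d_{\max}(1-\gamma)\ge 1-\alpha\delta(1-\gamma)$, using $d_{\max}\le\delta$. Chaining this with Step 2 gives the strict inequality $\rho(\mathcal A^{\rm HBCQL})<\rho(\mathcal A^{\rm QL})$.

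\emph{Step 4: the particular choice.} Set $\delta=d_{\max}$. Condition (ii) then reads $d_{\max}<1/(\alpha(1-\gamma))$, i.e.\ $\alpha<1/(d_{\max}(1-\gamma))$. If $0<\alpha\le 1/d_{\max}$, then (i) holds, since $\alpha d(s,a)\le\alpha d_{\max}\le1$. Condition (ii) also holds, because $1/d_{\max}<1/(d_{\max}(1-\gamma))$ as $\gamma\in(0,1)$. Once (iii) is checked, (iv) is exactly the stated momentum range.

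The main obstacle is notational rather than mathematical. One must make sure $\eta_0$ refers to the radius produced by \Cref{thm:cpd-global-jsr-improvement} for this fixed $\delta$ and $\alpha$. That radius depends only on $\bar{\mathcal A}^{\rm CQL}$ and $1-\alpha\delta(1-\gamma)$, so it is well defined once (i)–(iii) hold.
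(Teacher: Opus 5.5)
Your proposal is correct and follows essentially the same route as the paper: you derive $0<1-\alpha\delta(1-\gamma)<1$ from (ii), obtain $\rho(\mathcal A^{\rm CQL})=1-\alpha\delta(1-\gamma)$ from (iii) via \Cref{lem:cpd-standard-benchmark}, apply \Cref{thm:cpd-global-jsr-improvement} under (iv), and chain the result with the Collatz--Wielandt bound $\rho(\mathcal A^{\rm QL})\ge 1-\alpha d_{\max}(1-\gamma)\ge 1-\alpha\delta(1-\gamma)$ from \Cref{lem:direct-d-nonnegative-row-sum}. Your Step 4 spells out the verification for the choice $\delta=d_{\max}$, which the paper leaves implicit, in particular the strict inequality $1/d_{\max}<1/(d_{\max}(1-\gamma))$.
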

\begin{proof}
By~\Cref{lem:direct-d-nonnegative-row-sum}, every deterministic direct mode
satisfies
\begin{equation*}
  \rho(A_\pi^{\rm QL})\ge 1-\alpha d_{\max}(1-\gamma),
  \qquad \pi\in\Theta.
\end{equation*}
Therefore
\begin{equation*}
  \rho(\mathcal A^{\rm QL})\ge 1-\alpha d_{\max}(1-\gamma).
\end{equation*}
On the other hand, $\delta\ge d_{\max}$ implies
\begin{equation*}
  1-\alpha\delta(1-\gamma)
  \le 1-\alpha d_{\max}(1-\gamma).
\end{equation*}
Condition (ii) gives $0<1-\alpha\delta(1-\gamma)<1$, so
\Cref{ass:cpd-constant-mode-range} holds. Condition (iii) and
\Cref{lem:cpd-standard-benchmark} give
\begin{equation*}
  \rho(\mathcal A^{\rm CQL})=1-\alpha\delta(1-\gamma).
\end{equation*}
Then condition (iv), together with~\Cref{thm:cpd-global-jsr-improvement}, gives
\begin{equation*}
  \rho(\mathcal A^{\rm HBCQL})<\rho(\mathcal A^{\rm CQL}).
\end{equation*}
Combining these inequalities yields
\begin{equation*}
  \rho(\mathcal A^{\rm HBCQL})
  <\rho(\mathcal A^{\rm CQL})
  =1-\alpha\delta(1-\gamma)
  \le 1-\alpha d_{\max}(1-\gamma)
  \le \rho(\mathcal A^{\rm QL}),
\end{equation*}
which proves the claim. The simplified conditions follow by setting
$\delta=d_{\max}$.
\end{proof}

\section{Numerical illustration}
\label{sec:numerical_toy_example}

This section gives an example for~\Cref{alg:cpd-standard-update,alg:cpd-hb-update}. The purpose of this example is to show in a reproducible finite example
how the heavy-ball momentum term can accelerate convergence. We consider an MDP with $\mathcal S=\{1,2\}$, $\mathcal A=\{1,2\}$, and \(\gamma=0.9\). The
transition-to-next-state probabilities are
\begin{equation*}
  P_s=
  \begin{bmatrix}
    0.85&0.15\\
    0.25&0.75\\
    0.70&0.30\\
    0.10&0.90
  \end{bmatrix}.
\end{equation*}
The one-step reward entries are
$r(1,1,1)=r(1,1,2)=1.0$, $r(1,2,1)=r(1,2,2)=0.2$,
$r(2,1,1)=r(2,1,2)=0.1$, and $r(2,2,1)=r(2,2,2)=0.8$.
The rows of $P_s$ and the entries of all state-action vectors in this section follow the ordering $(1,1),(2,1),(1,2),(2,2)$. The nonuniform sampling vector is set to $d=(0.45,0.40,0.05,0.10)^\top$ and $D=\operatorname{diag}(d)$, and the shared numerical parameters are
\begin{equation*}
  \alpha=1.10,
  \qquad \delta=0.80,
  \qquad \alpha\eta=0.30.
\end{equation*}
For the simulation, we use the initial values \(Q_{-1}=Q_0=(3,1.5,-2.5,0)^\top\), and the optimal Q-function is obtained by value iteration and is $  Q^\star\approx
  (9.16923,7.93692,8.28615,8.55385)^\top$.
For comparative analysis, define the relative error
\begin{equation*}
  e_k^Q=\frac{\left\|Q_k-Q^\star\right\|_2}
  {\left\|Q_0-Q^\star\right\|_2}.
\end{equation*}
\Cref{fig:toy-tabular-convergence} compares \Cref{alg:cpd-standard-update} and
\Cref{alg:cpd-hb-update}. The figure shows that HBCQL reduces the relative error faster than CQL on this instance. This example is an empirical transient-acceleration illustration; it is not claimed to verify the projected-gap sufficient condition in~\Cref{ass:cpd-projected-gap}.
\begin{figure}[H]
\centering
\includegraphics[width=0.78\textwidth]{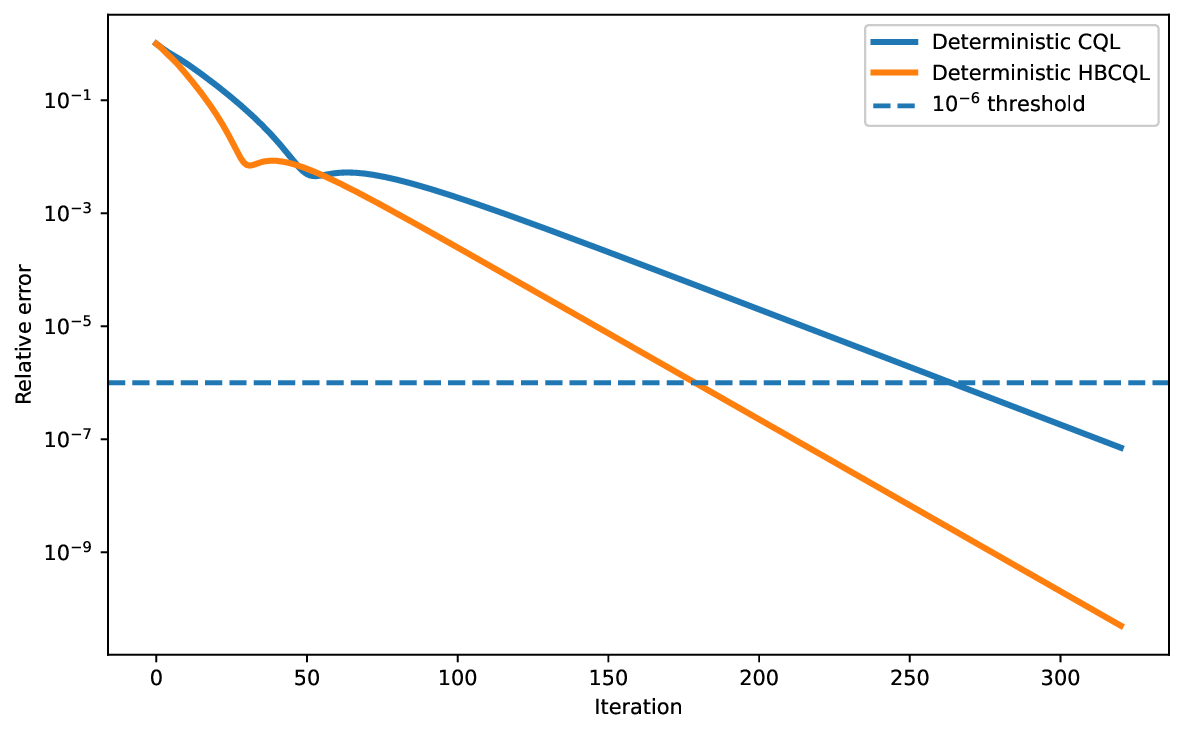}
\caption{The HBCQL in~\Cref{alg:cpd-hb-update} reaches the same relative \(Q\)-error
threshold in fewer iterations than the CQL in~\Cref{alg:cpd-standard-update}.}
\label{fig:toy-tabular-convergence}
\end{figure}

\section{Stochastic heavy-ball corrected Q-learning}
\label{sec:sampled_stochastic_q_learning}

This section studies a model-free stochastic counterpart of the HBCQL recursion in~\Cref{alg:cpd-hb-update}.
We first recall standard tabular Q-learning. Given $Q_k$, draw
$i_k=(s_k,a_k)\sim d$, draw $s_k^\prime\sim P(\cdot\mid s_k,a_k)$, and define
$\Delta_k:=r(s_k,a_k,s_k^\prime)+\gamma\max_{b\in\mathcal A}Q_k(s_k^\prime,b)-Q_k(s_k,a_k)$.
The standard sampled update changes the sampled coordinate only:
\begin{equation*}
  Q_{k+1}=Q_k+\alpha_k\Delta_ke_{i_k}.
\end{equation*}
Equivalently, it can be written as
\begin{equation*}
  Q_{k+1}(s,a)=
  \begin{cases}
  Q_k(s,a)+\alpha_k\Delta_k, & (s,a)=i_k,\\
  Q_k(s,a), & (s,a)\ne i_k .
  \end{cases}
\end{equation*}
The complete procedure is stated in~\Cref{alg:sampled-standard-q-learning}.
\begin{algorithm}[H]
\caption{Stochastic standard Q-learning}
\label{alg:sampled-standard-q-learning}
\begin{algorithmic}
\Require $Q_0\in\R^n$, step-sizes $\alpha_k>0$, and full-support sampling distribution $d$
\For{$k=0,1,2,\ldots$}
  \State Draw $i_k=(s_k,a_k)\sim d$
  \State Draw $s_k^\prime\sim P(\cdot\mid s_k,a_k)$ and compute $\Delta_k$
  \State Update
  \[
    Q_{k+1}\gets Q_k+\alpha_k\Delta_ke_{i_k}.
  \]
\EndFor
\end{algorithmic}
\end{algorithm}
Throughout this section, samples are drawn i.i.d. from the full-support distribution $d$. Markovian sampling is not treated here in order to keep the presentation focused, but the same conditional-mean switching viewpoint can be combined with Markovian stochastic-approximation tools, as in~\citep{lee2026lyapunovcertified}. A replay buffer can also be used to approximate the independent sampling required below, following the experience-replay idea used in deep Q-networks (DQN)~\citep{mnih2015human}.

We now consider a sampled counterpart of the HBCQL in~\Cref{alg:cpd-hb-update}. At time $k$, we sample
\begin{equation*}
  i_k\sim d,
  \qquad
  j_k\sim d,
\end{equation*}
where $j_k$ is independent of $i_k$ and of the transition sample used below.
After $i_k=(s_k,a_k)$ is selected, draw $s'_k$ according to
$P(\cdot\mid s_k,a_k)$ and form
\begin{equation}
  \Delta_k
  :=r(s_k,a_k,s'_k)
    +\gamma\max_{b\in\mathcal A}Q_k(s'_k,b)-Q_k(s_k,a_k).
  \label{eq:sampled-hb-td-error}
\end{equation}
The stochastic HBCQL is written as
\begin{equation}
  Q_{k+1}
  =Q_k
  +\alpha\Delta_k\bigl(e_{i_k}+\delta\mathbf{1}-e_{j_k}\bigr)
  +\alpha\eta(Q_k-Q_{k-1}).
  \label{eq:sampled-hb-d-randomized-update}
\end{equation}
For each state-action pair $(s,a)$, this is equivalently written as
\begin{equation*}
  Q_{k+1}(s,a)
  =Q_k(s,a)
  +\alpha\Delta_k
  \left(\mathbf 1_{\{(s,a)=i_k\}}+\delta-\mathbf 1_{\{(s,a)=j_k\}}\right)
  +\alpha\eta\{Q_k(s,a)-Q_{k-1}(s,a)\}.
\end{equation*}
The complete pseudocode is given in~\Cref{alg:sampled-hb-d-randomized-coordinate}.
\begin{algorithm}[H]
\caption{Heavy-ball CQL}
\label{alg:sampled-hb-d-randomized-coordinate}
\begin{algorithmic}
\Require $Q_{-1},Q_0\in\R^n$, step-size $\alpha>0$, momentum gain $\eta\geq0$, scalar $\delta>0$, and full-support $d$
\For{$k=0,1,2,\ldots$}
  \State Draw $i_k=(s_k,a_k)\sim d$ and $j_k\sim d$ independently
  \State Draw $s'_k\sim P(\cdot\mid s_k,a_k)$ and compute $\Delta_k$ by~\Cref{eq:sampled-hb-td-error}
  \ForAll{$(s,a)\in\mathcal S\times\mathcal A$}
    \State Update
\[
      Q_{k+1}(s,a)
      \gets Q_k(s,a)
      +\alpha\Delta_k
      \left(\mathbf 1_{\{(s,a)=i_k\}}+\delta-\mathbf 1_{\{(s,a)=j_k\}}\right)
      +\alpha\eta\{Q_k(s,a)-Q_{k-1}(s,a)\}.
    \]
  \EndFor
\EndFor
\end{algorithmic}
\end{algorithm}
Setting $\eta=0$ in~\Cref{alg:sampled-hb-d-randomized-coordinate} gives the sampled stochastic version of CQL. Compared with classical Q-learning, the update uses two independent state-action indices, $i_k$ and $j_k$. This increases the sampling burden, but it is not the double-sampling issue associated with residual-gradient methods in~\citep{baird1995residual}: only one next-state sample is drawn from the selected pair $i_k$, while the second draw is an independent state-action index used to approximate the centering term. In practice, the second index can be drawn from a replay buffer.
Equivalently, with the noise
\begin{equation*}
  \xi_{k+1}
  :=\Delta_k\bigl(e_{i_k}+\delta\mathbf{1}-e_{j_k}\bigr)
    -H\{F(Q_k)-Q_k\},
\end{equation*}
one has $\mathbb E[\xi_{k+1}\mid Q_k,Q_{k-1}]=0$, and the recursion in~\Cref{eq:sampled-hb-d-randomized-update} can equivalently be written as
\begin{equation}
  Q_{k+1}
  =Q_k+
  \alpha H\{F(Q_k)-Q_k\}
  +\alpha\eta(Q_k-Q_{k-1})
  +\alpha\xi_{k+1}.
  \label{eq:sampled-hb-d-randomized-noise-form}
\end{equation}
The noise form in~\Cref{eq:sampled-hb-d-randomized-noise-form} shows that~\Cref{eq:sampled-hb-d-randomized-update} is an unbiased stochastic approximation of the deterministic HBCQL update in~\Cref{alg:cpd-hb-update}. The next lemma gives the corresponding conditional-mean identity.
\begin{lemma}
\label{lem:sampled-hb-unbiased}
The following conditional-mean identity holds:
\begin{equation}
  \mathbb E[Q_{k+1}-Q_k\mid Q_k,Q_{k-1}]
  =\alpha H\{F(Q_k)-Q_k\}+\alpha\eta(Q_k-Q_{k-1}).
  \label{eq:sampled-hb-d-randomized-conditional-mean}
\end{equation}
\end{lemma}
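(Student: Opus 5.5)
Since $Q_k$ and $Q_{k-1}$ are fixed under the conditioning, the only random quantity in the increment $Q_{k+1}-Q_k$ given by~\Cref{eq:sampled-hb-d-randomized-update} is the term $\alpha\Delta_k(e_{i_k}+\delta\mathbf 1-e_{j_k})$. The momentum term $\alpha\eta(Q_k-Q_{k-1})$ is measurable and passes through the expectation unchanged. It therefore suffices to show
\begin{equation*}
  \mathbb E\bigl[\Delta_k(e_{i_k}+\delta\mathbf 1-e_{j_k})\mid Q_k,Q_{k-1}\bigr]=H\{F(Q_k)-Q_k\}.
\end{equation*}
This is exactly the statement $\mathbb E[\xi_{k+1}\mid Q_k,Q_{k-1}]=0$.

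I will split the left-hand side into the three terms $\Delta_ke_{i_k}$, $\delta\Delta_k\mathbf 1$ and $-\Delta_ke_{j_k}$, and compute each by conditioning first on $i_k=(s,a)$. Given $i_k=(s,a)$, averaging over $s'_k\sim P(\cdot\mid s,a)$ gives
\begin{equation*}
  \mathbb E[\Delta_k\mid i_k=(s,a),Q_k]=R(s,a)+\gamma\sum_{s'}P(s'\mid s,a)V_{Q_k}(s')-Q_k(s,a)=(F(Q_k)-Q_k)(s,a)=:\bar\Delta(s,a).
\end{equation*}
Write $\bar\Delta:=F(Q_k)-Q_k$. Averaging over $i_k\sim d$ then yields the following two identities:
\begin{equation*}
  \mathbb E[\Delta_ke_{i_k}]=\sum_{i}d_i\bar\Delta_ie_i=D\bar\Delta,
  \qquad
  \mathbb E[\delta\Delta_k\mathbf 1]=\delta(d^\top\bar\Delta)\mathbf 1=\delta\Pi_d\bar\Delta.
\end{equation*}
For the third term, the independence of $j_k$ from $(i_k,s'_k)$ and from the past gives $\mathbb E[\Delta_ke_{j_k}]=\mathbb E[\Delta_k]\,\mathbb E[e_{j_k}]=(d^\top\bar\Delta)d$. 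Adding the three terms gives
\begin{equation*}
  D\bar\Delta+\delta\mathbf 1d^\top\bar\Delta-dd^\top\bar\Delta=\bigl(D+(\delta\mathbf 1-d)d^\top\bigr)\bar\Delta=H\bar\Delta,
\end{equation*}
by the second form of $H$ in~\Cref{eq:constant-preserving-d-preconditioner}.

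The only step needing care is the independence bookkeeping. The samples $(i_k,j_k,s'_k)$ must be independent of $(Q_k,Q_{k-1})$, since they are fresh i.i.d.\ draws and $(Q_k,Q_{k-1})$ are functions of earlier samples and the initial values. In addition, $j_k$ must be independent of the pair $(i_k,s'_k)$ that determines $\Delta_k$. This second fact is what allows the product expectation in the third term to factor. Once it is in place, the argument is a direct computation.
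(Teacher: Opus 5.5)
Your proposal is correct and follows essentially the same route as the paper: split $\Delta_k(e_{i_k}+\delta\mathbf 1-e_{j_k})$ into three terms, use $\mathbb E[\Delta_k e_{i_k}\mid Q_k]=D\{F(Q_k)-Q_k\}$, $\mathbb E[\Delta_k\mid Q_k]=d^\top\{F(Q_k)-Q_k\}$, and the independence of $j_k$ to get $\mathbb E[\Delta_k e_{j_k}\mid Q_k]=(d^\top\{F(Q_k)-Q_k\})d$, then recognize $D+(\delta\mathbf 1-d)d^\top=H$. Your extra step of conditioning on $i_k=(s,a)$ first and your explicit independence bookkeeping simply spell out details that the paper states directly as sampling identities.
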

\begin{proof}
The sampling identities are
\begin{equation*}
  \mathbb E[\Delta_k e_{i_k}\mid Q_k]=D\{F(Q_k)-Q_k\},
  \qquad
  \mathbb E[\Delta_k\mid Q_k]=d^\top\{F(Q_k)-Q_k\} .
\end{equation*}
Because $j_k\sim d$ is independent of the random variables used to form
$\Delta_k$,
\begin{equation*}
  \mathbb E[\Delta_k e_{j_k}\mid Q_k]
  =\bigl(d^\top\{F(Q_k)-Q_k\}\bigr)d .
\end{equation*}
Therefore,
\begin{align*}
  &\mathbb E\left[
    \Delta_k\bigl(e_{i_k}+\delta\mathbf{1}-e_{j_k}\bigr)
    \mid Q_k
  \right] \\
  &\quad=D\{F(Q_k)-Q_k\}
    +\delta\mathbf{1}d^\top\{F(Q_k)-Q_k\}
    -dd^\top\{F(Q_k)-Q_k\} \\
  &\quad=\left[D+(\delta\mathbf{1}-d)d^\top\right]\{F(Q_k)-Q_k\}
  =H\{F(Q_k)-Q_k\}.
\end{align*}
Taking the conditional mean of
\Cref{eq:sampled-hb-d-randomized-update} gives
\Cref{eq:sampled-hb-d-randomized-conditional-mean}.
\end{proof}

The conditional-mean update in~\Cref{eq:sampled-hb-d-randomized-conditional-mean} has the same drift as the deterministic HBCQL recursion analyzed in the previous section. Hence its right-hand side admits the same SLS representation as~\Cref{eq:cpd-hb-error-system}. Therefore, the deterministic HBCQL JSR certificates apply to the conditional-mean recursion associated with~\Cref{eq:sampled-hb-d-randomized-update}. This statement is only a conditional-mean certificate: with a constant step-size, the martingale term in~\Cref{eq:sampled-hb-d-randomized-noise-form} remains, and the stochastic iterates generally track a neighborhood of the fixed point rather than converging exactly. An explicit finite-time stochastic bound is not proved in this tabular section.

\section{Numerical illustration}
\label{sec:stochastic_toy_q_learning}
This section adds an experiment for~\Cref{alg:sampled-hb-d-randomized-coordinate}, where we use the same MDP and parameter settings as in~\Cref{sec:numerical_toy_example}.
The only change is that the Bellman residual is now observed through transition
samples rather than evaluated as a full vector.
We compare two choices of this update, \(\alpha\eta=0\) (CQL counterpart) and \(\alpha\eta=0.45\) (HBCQL), against
standard tabular Q-learning with the same samples. All three methods are run with common random numbers, so differences between the curves
are not due to different random seeds. The numerical parameters are $\alpha=0.020$, $\delta=0.80$, $\alpha\eta\in\{0,0.45\}$, and $K=4000$.
The reported curve is the median over \(400\) independent Monte Carlo paths of the relative error
\begin{equation*}
  e_k^{\rm mc}
  =\frac{\|Q_k-Q^\star\|_2}{\|Q_0-Q^\star\|_2}.
\end{equation*}
Because the step-size is constant, the stochastic methods should be interpreted
as approaching a noise neighborhood of \(Q^\star\), not as converging exactly to
zero error.
\Cref{fig:toy-stochastic-qlearning-comparison} shows the median relative
\(Q\)-error curves. The CQL update (\(\alpha\eta=0\)) is already
much faster than standard Q-learning in this nonuniformly sampled
example. Adding the heavy-ball term gives a further transient acceleration: the
median first passage time below \(10^{-2}\) drops from about \(2693\) iterations
to about \(1556\) iterations.
\begin{figure}[H]
\centering
\includegraphics[width=0.78\textwidth]{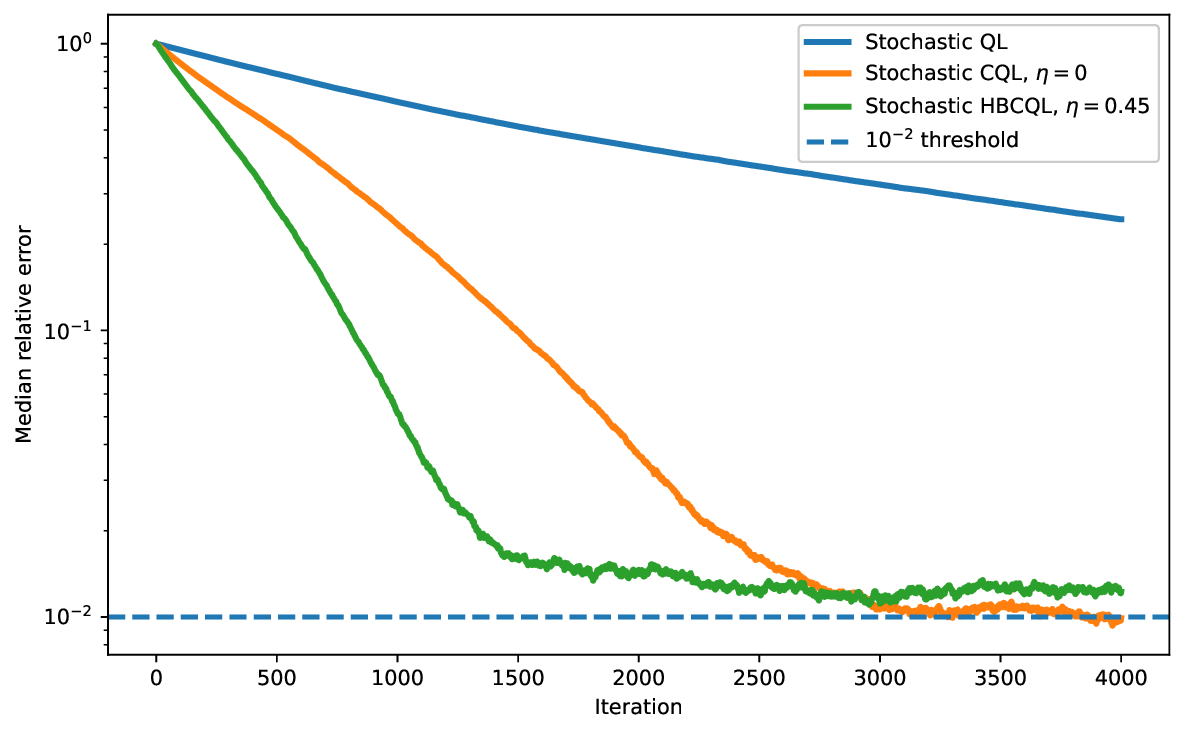}
\caption{The plotted quantity is the median relative
\(Q\)-error over 400 Monte Carlo paths.}
\label{fig:toy-stochastic-qlearning-comparison}
\end{figure}

\section{Corrected Q-learning with linear function approximation}
\label{sec:lfa_constant_preserving_weighting}

The correction method from the previous sections can be extended to deterministic Q-learning with linear function approximation, referred to as linear Q-learning (LQL). In this section, convergence of the corrected algorithms refers to the parameter fixed point defined by the corrected map below. The corresponding value vector $\Phi\theta^\star$ generally need not equal $Q^\star$, and $\theta^\star$ generally need not be the standard $D$-weighted projected Bellman solution. Let the parameter vector be $\theta\in\R^m$, and approximate the Q-vector by $\Phi\theta\in\R^n$, where the feature matrix $\Phi\in\R^{n\times m}$ has one row for each state-action pair and one column for each feature.
We impose the standard rank condition on the feature matrix.
\begin{assumption}
\label{ass:lfa-full-column-rank}
The feature matrix $\Phi\in\R^{n\times m}$ has full column rank.
\end{assumption}
To use the ideas from the previous sections, we introduce the following assumption.
\begin{assumption}
\label{ass:lfa-constant-feature-direction}
There exists $c\in\R^m$ such that
\begin{equation*}
  \Phi c=\mathbf{1}.
\end{equation*}
\end{assumption}
This assumption is easy to enforce in tabular feature constructions: one may include an intercept feature, namely a column of $\Phi$ filled with ones. Then the corresponding coordinate vector can be chosen as $c$.
The standard deterministic LQL update is summarized in~\Cref{alg:lfa-direct-phi-d-update}.
\begin{algorithm}[H]
\caption{Deterministic LQL}
\label{alg:lfa-direct-phi-d-update}
\begin{algorithmic}
\Require $\theta_0\in\R^m$, step-size $\alpha>0$, features $\Phi$, and $D=\diag(d)$
\For{$k=0,1,2,\ldots$}
  \State Update
  \[
    \theta_{k+1}\gets \theta_k+\alpha\Phi^\top D\{F(\Phi\theta_k)-\Phi\theta_k\}.
  \]
\EndFor
\end{algorithmic}
\end{algorithm}
The corresponding map is defined as
\begin{equation*}
  g(\theta):=\theta+\alpha\Phi^\top D\{F(\Phi\theta)-\Phi\theta\}.
\end{equation*}
Its fixed points, $g(\theta)=\theta$, are characterized by the associated residual equation below.
\begin{lemma}
\label{lem:lfa-direct-fixed-point-equation}
A parameter vector $\bar\theta\in\R^m$ is a fixed point of $g$ if and only if
\begin{equation}
  \Phi^\top D\{F(\Phi\bar\theta)-\Phi\bar\theta\}=0.
  \label{eq:lfa-direct-fixed-point-equation}
\end{equation}
\end{lemma}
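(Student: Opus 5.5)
This is a direct unfolding of the definition of a fixed point, followed by cancellation of the scalar step-size. The plan is to write out $g(\bar\theta)=\bar\theta$ using the definition of $g$ and show that it is equivalent to \Cref{eq:lfa-direct-fixed-point-equation}.

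\emph{Forward direction.} Suppose $g(\bar\theta)=\bar\theta$. Substituting the definition of $g$ gives
\begin{equation*}
  \bar\theta+\alpha\Phi^\top D\{F(\Phi\bar\theta)-\Phi\bar\theta\}=\bar\theta .
\end{equation*}
Subtracting $\bar\theta$ from both sides leaves $\alpha\Phi^\top D\{F(\Phi\bar\theta)-\Phi\bar\theta\}=0$. Since the step-size satisfies $\alpha>0$, dividing by $\alpha$ yields \Cref{eq:lfa-direct-fixed-point-equation}.

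\emph{Converse.} If \Cref{eq:lfa-direct-fixed-point-equation} holds, the correction term in $g(\bar\theta)$ is $\alpha$ times the zero vector. Hence $g(\bar\theta)=\bar\theta$.

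There is no genuine obstacle. The only point to state explicitly is that $\alpha\neq0$, which is guaranteed by the standing requirement $\alpha>0$ in \Cref{alg:lfa-direct-phi-d-update}.

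Unlike \Cref{lem:cpd-standard-fixed-point}, we do not claim existence or uniqueness of a solution. We also do not attempt to remove $\Phi^\top D$: this matrix is not invertible in general when $m<n$, so the equation cannot be reduced to $F(\Phi\bar\theta)=\Phi\bar\theta$. The statement is purely a characterization, and the argument never uses \Cref{ass:lfa-full-column-rank} or \Cref{ass:lfa-constant-feature-direction}.
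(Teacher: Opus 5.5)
Your proof is correct and matches the paper's own argument. Both unfold $g(\bar\theta)=\bar\theta$ and cancel the positive step-size $\alpha$ for the forward direction, and for the converse both observe that the correction term vanishes, so $\bar\theta$ is left unchanged.

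One wording point in your closing remark: $\Phi^\top D$ is a rectangular $m\times n$ matrix, so "not invertible" is loose. The precise reason the equation cannot be reduced to $F(\Phi\bar\theta)=\Phi\bar\theta$ is that $\Phi^\top D$ has a nontrivial null space when $m<n$.
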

\begin{proof}
A fixed point of $g$ satisfies
\begin{equation*}
  \bar\theta=\bar\theta+\alpha\Phi^\top D\{F(\Phi\bar\theta)-\Phi\bar\theta\}.
\end{equation*}
Because $\alpha>0$, this is equivalent to
\Cref{eq:lfa-direct-fixed-point-equation}. Conversely, suppose that
\Cref{eq:lfa-direct-fixed-point-equation} holds. Substituting this identity into the update in~\Cref{alg:lfa-direct-phi-d-update}, with $\theta_k=\bar\theta$, gives
\begin{align*}
  \theta_{k+1}
  &=\bar\theta+
    \alpha\Phi^\top D\{F(\Phi\bar\theta)-\Phi\bar\theta\}\\
  &=\bar\theta+
    \alpha\cdot 0
    =\bar\theta .
\end{align*}
Thus the update leaves $\bar\theta$ unchanged, so $\bar\theta$ is a fixed point
of $g$.
\end{proof}

Subtracting the fixed-point identity in~\Cref{eq:lfa-direct-fixed-point-equation} from the update in~\Cref{alg:lfa-direct-phi-d-update} gives
\begin{align}
  \theta_{k+1}-\bar\theta =\theta_k-\bar\theta
    +\alpha\Phi^\top D\{F(\Phi\theta_k)-F(\Phi\bar\theta)
       -\Phi(\theta_k-\bar\theta)\}.\label{eq:5}
\end{align}
The next lemma states an SLS error system associated with the above error recursion.
\begin{lemma}
\label{lem:lfa-direct-difference-system}
Assume that $\bar\theta$ is a fixed point of $g$. Then the error recursion in~\Cref{eq:5} satisfies the SLS
\begin{equation}
  \theta_{k+1}-\bar\theta
  =A_{\mu_k}^{\rm LQL}(\theta_k-\bar\theta),
  \label{eq:lfa-direct-fixed-point-error}
\end{equation}
where for any stochastic policy $\mu$,
\begin{equation}
  A_\mu^{\rm LQL}
  :=I_m+\alpha\Phi^\top D(\gamma P\Pi^\mu-I)\Phi .
  \label{eq:lfa-direct-mode}
\end{equation}
\end{lemma}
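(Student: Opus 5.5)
The plan is to mirror the proofs of \Cref{lem:cpd-standard-sls} and \Cref{lem:direct-d-standard-sls}. The only new ingredient is that the Bellman difference is now evaluated at the two $Q$-vectors $\Phi\theta_k$ and $\Phi\bar\theta$. We then pull the feature matrix through.

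First, apply \Cref{lem:bellman_difference_selector} with $Q=\Phi\theta_k$ and $\bar Q=\Phi\bar\theta$. This produces a stochastic policy $\mu_k:=\mu_{\Phi\theta_k,\Phi\bar\theta}$ such that
\begin{equation*}
  F(\Phi\theta_k)-F(\Phi\bar\theta)=\gamma P\Pi^{\mu_k}\Phi(\theta_k-\bar\theta).
\end{equation*}
Substituting this identity into the error recursion \Cref{eq:5}, and writing $\Phi(\theta_k-\bar\theta)=I\,\Phi(\theta_k-\bar\theta)$, the bracketed term becomes $(\gamma P\Pi^{\mu_k}-I)\Phi(\theta_k-\bar\theta)$. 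Hence
\begin{equation*}
  \theta_{k+1}-\bar\theta=\bigl[I_m+\alpha\Phi^\top D(\gamma P\Pi^{\mu_k}-I)\Phi\bigr](\theta_k-\bar\theta)=A_{\mu_k}^{\rm LQL}(\theta_k-\bar\theta),
\end{equation*}
which is \Cref{eq:lfa-direct-fixed-point-error}. Two facts are used here. The fixed-point property of $\bar\theta$ enters only through \Cref{eq:5}, which already encodes it. The recursion is exact, with no linearization, because the selector identity is exact.

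As in the earlier SLS lemmas, I will also record the convex-hull membership. The map $P\Pi^\mu\mapsto I_m+\alpha\Phi^\top D(\gamma P\Pi^\mu-I)\Phi$ is affine. By \Cref{lem:bellman_difference_selector}, $P\Pi^{\mu}\in\co\{P\Pi^\pi:\pi\in\Theta\}$. Together these give $A_{\mu_k}^{\rm LQL}\in\co\{A_\pi^{\rm LQL}:\pi\in\Theta\}$.

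There is no substantive obstacle; the proof is routine. The only point needing care is that the selector $\mu_k$ depends on both $\Phi\theta_k$ and $\Phi\bar\theta$, not merely on a greedy policy of $\Phi\theta_k$. This is exactly why stochastic policies, and hence the convex hull, are required.
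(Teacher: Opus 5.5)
Your proposal is correct and matches the paper's proof. Like the paper, you apply \Cref{lem:bellman_difference_selector} with $Q=\Phi\theta_k$ and $\bar Q=\Phi\bar\theta$, substitute the resulting identity into \Cref{eq:5}, and factor out $\theta_k-\bar\theta$. Your additional convex-hull membership remark is not in the paper's proof of this lemma (it appears in the analogous tabular SLS lemmas), but it is correct and harmless.
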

\begin{proof}
By~\Cref{lem:bellman_difference_selector}, there is a stochastic policy
$\mu_k$ such that $F(\Phi\theta_k)-F(\Phi\bar\theta)
  =\gamma P\Pi^{\mu_k}\Phi(\theta_k-\bar\theta)$. Substituting this identity gives
\begin{align*}
  \theta_{k+1}-\bar\theta
  &=\{I_m+\alpha\Phi^\top D(\gamma P\Pi^{\mu_k}-I)\Phi\}
    (\theta_k-\bar\theta)\\
  &=A_{\mu_k}^{\rm LQL}(\theta_k-\bar\theta),
\end{align*}
which proves~\Cref{eq:lfa-direct-fixed-point-error}.
\end{proof}
For the matrix in~\Cref{eq:lfa-direct-mode}, the modes generally do not share a common eigenvector. Consequently, the common-eigenvector acceleration argument used in the tabular analysis does not apply directly. To remove this obstruction, we introduce the corrected map below.
\begin{definition}
\label{def:lfa-constant-preserving-map}
We define the correction matrix
\begin{equation}
  G :=\Phi^\top D+(\delta c-\Phi^\top D\mathbf{1})d^\top
  =\delta c d^\top+\Phi^\top D(I-\mathbf{1}d^\top),
  \qquad d^\top\mathbf{1}=1.
  \label{eq:lfa-constant-preserving-map}
\end{equation}
\end{definition}
The map $G$ has a structure similar to the tabular correction, as summarized in the following result.
\begin{lemma}
\label{lem:lfa-G-preserves-constant-feature}
The map in~\Cref{eq:lfa-constant-preserving-map} satisfies
\begin{equation}
  G\mathbf{1}=\delta c.
  \label{eq:lfa-G-preserves-constant-feature}
\end{equation}
\end{lemma}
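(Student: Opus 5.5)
The plan is a direct computation, exactly parallel to the proof of \Cref{lem:hd-preserves-one}. I would use the second form of $G$ in \Cref{eq:lfa-constant-preserving-map},
\begin{equation*}
  G=\delta c d^\top+\Phi^\top D(I-\mathbf{1}d^\top),
\end{equation*}
and apply it to $\mathbf{1}$ term by term.

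For the first term, the normalization $d^\top\mathbf{1}=1$ gives $\delta c d^\top\mathbf{1}=\delta c$.

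For the second term, the same normalization gives $(I-\mathbf{1}d^\top)\mathbf{1}=\mathbf{1}-\mathbf{1}(d^\top\mathbf{1})=0$. Hence $\Phi^\top D(I-\mathbf{1}d^\top)\mathbf{1}=0$.

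Adding the two pieces yields $G\mathbf{1}=\delta c$, which is \Cref{eq:lfa-G-preserves-constant-feature}.

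As a consistency check, I would also confirm that the two expressions for $G$ in \Cref{eq:lfa-constant-preserving-map} agree. Expanding the first form gives $\Phi^\top D+\delta c d^\top-\Phi^\top D\mathbf{1}d^\top$, which equals $\delta c d^\top+\Phi^\top D(I-\mathbf{1}d^\top)$.

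Starting from the first form instead works equally well. There one uses $\Phi^\top D\mathbf{1}\,d^\top\mathbf{1}=\Phi^\top D\mathbf{1}$, so the $\Phi^\top D\mathbf{1}$ contributions cancel and only $\delta c$ remains.

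There is no real obstacle. The only point to watch is that the identity uses only $d^\top\mathbf{1}=1$. In particular, it does not need \Cref{ass:lfa-constant-feature-direction}: $c$ enters only as a fixed vector in the definition of $G$. That assumption becomes relevant later, when one wants $\Phi c=\mathbf{1}$ so that $c$ is a common eigenvector of the corrected parameter modes.
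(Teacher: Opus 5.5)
Your proof is correct and is essentially the paper's argument: the paper computes $G\mathbf{1}$ from the first form of $G$ using $d^\top\mathbf{1}=1$ and then checks the second form, which is your computation with the order reversed. You are also right that only $d^\top\mathbf{1}=1$ is used here, not $\Phi c=\mathbf{1}$.
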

\begin{proof}
Starting from~\Cref{eq:lfa-constant-preserving-map} and using
$d^\top\mathbf 1=1$, we compute
\begin{align*}
  G\mathbf 1
  &=\Phi^\top D\mathbf 1+(
      \delta c-\Phi^\top D\mathbf 1)d^\top\mathbf 1\\
  &=\Phi^\top D\mathbf 1+
      \delta c-\Phi^\top D\mathbf 1\\
  &=\delta c.
\end{align*}
Equivalently, from the second expression in~\Cref{eq:lfa-constant-preserving-map},
$G\mathbf 1=\delta c d^\top\mathbf 1+\Phi^\top D(I-\mathbf 1d^\top)\mathbf 1=
\delta c$.
\end{proof}

The corresponding deterministic corrected linear Q-learning (CLQL) is given in~\Cref{alg:lfa-cpd-standard-update}.
Note that this is exactly the tabular correction when $\Phi=I$ and $c=\mathbf{1}$.
\begin{algorithm}[H]
\caption{Deterministic CLQL}
\label{alg:lfa-cpd-standard-update}
\begin{algorithmic}
\Require $\theta_0\in\R^m$, step-size $\alpha>0$, features $\Phi$, and $G$ from~\Cref{eq:lfa-constant-preserving-map}
\For{$k=0,1,2,\ldots$}
  \State Update
  \[
    \theta_{k+1}
    \gets \theta_k+\alpha G\{F(\Phi\theta_k)-\Phi\theta_k\}.
  \]
\EndFor
\end{algorithmic}
\end{algorithm}
Define the corresponding feature-space map
\begin{equation}
  g(\theta)
  :=\theta+\alpha G\{F(\Phi\theta)-\Phi\theta\}.
  \label{eq:lfa-cpd-standard-map}
\end{equation}
The next lemma characterizes the fixed points of this map.
\begin{lemma}
\label{lem:lfa-cpd-fixed-point-equation}
A parameter vector $\theta^\star\in\R^m$ is a fixed point of the map $g$ in~\Cref{eq:lfa-cpd-standard-map} if and only if
\begin{equation}
  G\{F(\Phi\theta^\star)-\Phi\theta^\star\}=0.
  \label{eq:lfa-cpd-fixed-point-equation}
\end{equation}
\end{lemma}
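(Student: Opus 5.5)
The plan is to mirror the argument of \Cref{lem:lfa-direct-fixed-point-equation}, replacing $\Phi^\top D$ by the correction matrix $G$ from \Cref{eq:lfa-constant-preserving-map}. The statement is a pure algebraic equivalence between the fixed-point equation $g(\theta^\star)=\theta^\star$ for the map in \Cref{eq:lfa-cpd-standard-map} and the residual equation \Cref{eq:lfa-cpd-fixed-point-equation}. No nonsingularity of $G$ is needed, unlike in the tabular \Cref{lem:cpd-standard-fixed-point}. Indeed, $G\in\R^{m\times n}$ is generally rectangular, so we do not try to cancel it. The fixed point is characterized exactly by the $G$-weighted residual equation.

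For the forward direction, we assume $g(\theta^\star)=\theta^\star$. Substituting the definition of $g$ gives
\begin{equation*}
  \theta^\star=\theta^\star+\alpha G\{F(\Phi\theta^\star)-\Phi\theta^\star\}.
\end{equation*}
Subtracting $\theta^\star$ from both sides yields $\alpha G\{F(\Phi\theta^\star)-\Phi\theta^\star\}=0$. Since the step-size satisfies $\alpha>0$, we may divide by $\alpha$ and obtain \Cref{eq:lfa-cpd-fixed-point-equation}.

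For the converse, we assume \Cref{eq:lfa-cpd-fixed-point-equation} holds. Plugging it into \Cref{eq:lfa-cpd-standard-map} gives $g(\theta^\star)=\theta^\star+\alpha\cdot 0=\theta^\star$. Equivalently, one step of \Cref{alg:lfa-cpd-standard-update} started at $\theta_k=\theta^\star$ returns $\theta^\star$.

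There is no genuine obstacle. The only point worth a remark is that this lemma asserts neither existence nor uniqueness of $\theta^\star$. We would add a sentence noting that \Cref{eq:lfa-cpd-fixed-point-equation} characterizes the corrected fixed point rather than $Q^\star$ or the standard projected Bellman solution. This matches the caveat stated at the start of \Cref{sec:lfa_constant_preserving_weighting}. Existence would be obtained later from a contraction or JSR argument for the corresponding error system.
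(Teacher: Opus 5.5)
Your proof is correct and matches the paper's argument: substitute the definition of $g$, cancel $\theta^\star$, divide by $\alpha>0$, and for the converse plug the residual equation back into the map. One small aside is that the paper does not derive existence of $\theta^\star$ from a contraction or JSR argument; it assumes a unique fixed point in the assumption stated right after this lemma, which does not affect the lemma itself.
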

\begin{proof}
A fixed point of $g$ satisfies $\theta^\star=\theta^\star+\alpha G\{F(\Phi\theta^\star)-\Phi\theta^\star\}$.
Because the step-size is positive, this is equivalent to $G\{F(\Phi\theta^\star)-\Phi\theta^\star\}=0$, which is~\Cref{eq:lfa-cpd-fixed-point-equation}.  Conversely, if
\Cref{eq:lfa-cpd-fixed-point-equation} holds, substituting it into the update in~\Cref{alg:lfa-cpd-standard-update} gives $\theta_{k+1}=\theta_k=\theta^\star$.
\end{proof}
Unlike the tabular matrix $H$, the map $G$ is generally not nonsingular. It is
often rectangular, since it maps a Bellman residual in $\R^n$ to a parameter
update in $\R^m$, and even in the square case its rank depends on $\Phi$, $D$,
$d$, and $c$. Hence, the feature-space analysis in the sequel uses the fixed-point equation
in~\Cref{eq:lfa-cpd-fixed-point-equation} and explicitly assumes uniqueness.
\begin{assumption}
\label{ass:lfa-cpd-unique-fixed-point}
The map $g$ in~\Cref{eq:lfa-cpd-standard-map} has a unique fixed point denoted by $\theta^\star$.
\end{assumption}

The fixed point condition in~\Cref{eq:lfa-cpd-fixed-point-equation} should be compared with the standard LQL update in~\Cref{alg:lfa-direct-phi-d-update}. A fixed point corresponding to the standard LQL satisfies $\Phi^\top D\{F(\Phi\theta)-\Phi\theta\}=0$, which is the usual $D$-weighted projected Bellman equation: the Bellman
residual is orthogonal to every feature column in the $D$-weighted inner
product. By contrast, a fixed point of the update in~\Cref{alg:lfa-cpd-standard-update} satisfies $G\{F(\Phi\theta)-\Phi\theta\}=0$.
Thus the two fixed-point equations generally need not have the same solution. The convergence and acceleration statements for CLQL and HBCLQL below are therefore statements about the unique corrected fixed point $\theta^\star$ in~\Cref{ass:lfa-cpd-unique-fixed-point}, not about $Q^\star$ or, in general, the standard projected Bellman solution.

Next, subtracting the fixed-point equation in~\Cref{eq:lfa-cpd-fixed-point-equation} from the update in~\Cref{alg:lfa-cpd-standard-update} gives the error recursion
\begin{align}
  \theta_{k+1}-\theta^\star =(\theta_k-\theta^\star)+
    \alpha G\{F(\Phi\theta_k)-F(\Phi\theta^\star)
    -\Phi(\theta_k-\theta^\star)\}.\label{eq:4}
\end{align}
We next rewrite this error recursion as an SLS.
\begin{lemma}
\label{lem:lfa-cpd-difference-system}
The error recursion in~\Cref{eq:4} satisfies the SLS
\begin{equation}
  \theta_{k+1}-\theta^\star
  =A_{\mu_k}^{\rm CLQL}(\theta_k-\theta^\star),
  \label{eq:lfa-cpd-fixed-point-error}
\end{equation}
where $\mu_k$ is a stochastic policy and for any stochastic policy $\mu$,
\begin{equation*}
  A_\mu^{\rm CLQL}
  :=I_m+\alpha G(\gamma P\Pi^\mu-I)\Phi .
\end{equation*}
\end{lemma}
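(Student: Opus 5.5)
The plan mirrors the tabular proofs of \Cref{lem:cpd-standard-sls} and \Cref{lem:lfa-direct-difference-system}. Start from the error recursion \Cref{eq:4}, which is itself obtained by subtracting $\theta^\star=\theta^\star+\alpha G\{F(\Phi\theta^\star)-\Phi\theta^\star\}$ from the CLQL update. This step uses \Cref{lem:lfa-cpd-fixed-point-equation}, so no nonsingularity of $G$ is needed; we only need that $\theta^\star$ satisfies \Cref{eq:lfa-cpd-fixed-point-equation}. The only nonlinear term is then the Bellman difference $F(\Phi\theta_k)-F(\Phi\theta^\star)$.

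Next, apply \Cref{lem:bellman_difference_selector} with $Q=\Phi\theta_k$ and $\bar Q=\Phi\theta^\star$. This yields a stochastic policy $\mu_k:=\mu_{\Phi\theta_k,\Phi\theta^\star}$ with
\begin{equation*}
  F(\Phi\theta_k)-F(\Phi\theta^\star)=\gamma P\Pi^{\mu_k}\Phi(\theta_k-\theta^\star).
\end{equation*}
Substitute this into \Cref{eq:4} and factor $\Phi(\theta_k-\theta^\star)$ out of the bracket. This gives
\begin{equation*}
  \theta_{k+1}-\theta^\star=\{I_m+\alpha G(\gamma P\Pi^{\mu_k}-I)\Phi\}(\theta_k-\theta^\star)=A_{\mu_k}^{\rm CLQL}(\theta_k-\theta^\star),
\end{equation*}
which is \Cref{eq:lfa-cpd-fixed-point-error}.

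As in the earlier SLS lemmas, I would also record that $A_\mu^{\rm CLQL}$ is affine in $P\Pi^\mu$. The convex-hull statement of \Cref{lem:bellman_difference_selector} then places $A_{\mu_k}^{\rm CLQL}$ in $\co\{A_\pi^{\rm CLQL}:\pi\in\Theta\}$, which is what the later JSR arguments need.

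There is no real obstacle here. The only point requiring care is the dimensions: $G\in\R^{m\times n}$, $(\gamma P\Pi^\mu-I)\in\R^{n\times n}$ and $\Phi\in\R^{n\times m}$, so $A_\mu^{\rm CLQL}\in\R^{m\times m}$. The selector also depends on $\theta_k$, which is harmless because the representation holds pointwise at each $k$.
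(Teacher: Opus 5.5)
Your proposal is correct and follows essentially the same route as the paper: apply \Cref{lem:bellman_difference_selector} with $Q=\Phi\theta_k$ and $\bar Q=\Phi\theta^\star$, substitute into \Cref{eq:4}, and factor out $\theta_k-\theta^\star$. Your extra remarks on dimensions and on $A_{\mu_k}^{\rm CLQL}\in\co\{A_\pi^{\rm CLQL}:\pi\in\Theta\}$ are accurate. The paper records the convex-hull fact later, in the proof of \Cref{lem:lfa-cpd-hb-sls}.
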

\begin{proof}
By~\Cref{lem:bellman_difference_selector}, there is a stochastic policy
$\mu_k$ such that $F(\Phi\theta_k)-F(\Phi\theta^\star)=\gamma P\Pi^{\mu_k}\Phi(\theta_k-\theta^\star)$. Substitution gives $\theta_{k+1}-\theta^\star
=\{I_m+\alpha G(\gamma P\Pi^{\mu_k}-I)\Phi\} (\theta_k-\theta^\star)=A_{\mu_k}^{\rm CLQL}(\theta_k-\theta^\star)$, which proves~\Cref{eq:lfa-cpd-fixed-point-error}.
\end{proof}
The corresponding SLS family is
\begin{equation*}
  \mathcal A^{\rm CLQL}
  :=\left\{A_\pi^{\rm CLQL}:\pi\in\Theta\right\}.
\end{equation*}
As in the tabular case, every mode in $\mathcal A^{\rm CLQL}$ has a common eigenvector as shown below.
\begin{lemma}
\label{lem:lfa-cpd-common-vector}
For every stochastic policy $\mu$,
\begin{equation}
  A_\mu^{\rm CLQL}c
  =\left(1-\alpha\delta(1-\gamma)\right)c .
  \label{eq:lfa-cpd-common-vector}
\end{equation}
\end{lemma}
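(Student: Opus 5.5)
The argument is a direct computation that mirrors the proof of \Cref{lem:cpd-common-vector}, with the feature-space vector $c$ in place of $\mathbf 1$ and $G$ in place of $H$. I will apply $A_\mu^{\rm CLQL}=I_m+\alpha G(\gamma P\Pi^\mu-I)\Phi$ to $c$ and simplify the factors one at a time, from right to left.

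First, \Cref{ass:lfa-constant-feature-direction} gives $\Phi c=\mathbf 1$. This turns the innermost product into the tabular quantity $(\gamma P\Pi^\mu-I)\mathbf 1$.

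Next, by \Cref{lem:bellman_difference_selector}, $P\Pi^\mu$ is row-stochastic for every stochastic policy $\mu$, so $P\Pi^\mu\mathbf 1=\mathbf 1$. Hence
\begin{equation*}
  (\gamma P\Pi^\mu-I)\mathbf 1=-(1-\gamma)\mathbf 1 .
\end{equation*}

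Then I invoke \Cref{lem:lfa-G-preserves-constant-feature}, namely $G\mathbf 1=\delta c$. This gives $G(\gamma P\Pi^\mu-I)\Phi c=-(1-\gamma)\delta c$. Assembling the pieces,
\begin{equation*}
  A_\mu^{\rm CLQL}c=c-\alpha\delta(1-\gamma)c=\bigl(1-\alpha\delta(1-\gamma)\bigr)c ,
\end{equation*}
which is exactly \Cref{eq:lfa-cpd-common-vector}.

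No step here is a real obstacle. The one point to check is that the identity must hold for arbitrary stochastic $\mu$, not only for deterministic $\pi\in\Theta$. This is covered because row-stochasticity of $P\Pi^\mu$ is stated for every stochastic policy in \Cref{lem:bellman_difference_selector}. It is also worth recording that $c\neq0$, since $\Phi c=\mathbf 1\neq0$; this makes $c$ a genuine common eigenvector and not merely a vector satisfying the identity.
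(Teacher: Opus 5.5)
Your proposal is correct and matches the paper's proof: it uses $\Phi c=\mathbf 1$ and the row-stochasticity of $P\Pi^\mu$ to get $(\gamma P\Pi^\mu-I)\Phi c=-(1-\gamma)\mathbf 1$, then applies $G\mathbf 1=\delta c$. Your added remark that $c\neq 0$ (because $\Phi c=\mathbf 1\neq 0$) is a correct small supplement that the paper leaves implicit.
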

\begin{proof}
Since $\Phi c=\mathbf 1$ and $P\Pi^\mu\mathbf 1=\mathbf 1$, one has $(\gamma P\Pi^\mu-I)\Phi c =(\gamma P\Pi^\mu-I)\mathbf 1 =-(1-\gamma)\mathbf 1$. Using~\Cref{eq:lfa-G-preserves-constant-feature},
\begin{align*}
  A_\mu^{\rm CLQL}c
  &=\left[I_m+\alpha G(\gamma P\Pi^\mu-I)\Phi\right]c\\
  &=c-\alpha(1-\gamma)G\mathbf 1\\
  &=c-\alpha\delta(1-\gamma)c\\
  &=\left(1-\alpha\delta(1-\gamma)\right)c.
\end{align*}
This completes the proof.
\end{proof}
By~\Cref{eq:lfa-cpd-common-vector}, the direction $\operatorname{span}\{c\}$ is a common eigenvector direction of the SLS family $\mathcal A^{\rm CLQL}$. As in the tabular analysis, we decompose the feature-space dynamics into this common direction and the induced dynamics on the quotient space. This decomposition provides the same type of convergence benchmark as in the tabular analysis. Let $\bar{\mathcal A}^{\rm CLQL}$ denote the quotient family induced by $\mathcal A^{\rm CLQL}$ on $\R^m/\operatorname{span}\{c\}$.
\begin{proposition}
\label{prop:lfa-cpd-jsr-benchmark}
Under~\Cref{ass:cpd-constant-mode-range}, it holds that
\begin{equation}
  \rho(\mathcal A^{\rm CLQL})
  =\max\left\{1-\alpha\delta(1-\gamma),
  \rho(\bar{\mathcal A}^{\rm CLQL})\right\}.
  \label{eq:lfa-cpd-jsr-decomposition}
\end{equation}
\end{proposition}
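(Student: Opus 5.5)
The plan is to mirror the proof of \Cref{lem:cpd-standard-jsr-decomposition}, now in parameter space and with $c$ in place of $\mathbf 1$. Since $\Phi c=\mathbf 1\neq0$, we have $c\neq0$. Set $\hat c:=c/\|c\|_2$ and choose $W\in\R^{m\times(m-1)}$ with orthonormal columns spanning $\operatorname{span}\{c\}^\perp$. Then $T:=[\hat c\ W]$ is orthogonal.

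The quotient family $\bar{\mathcal A}^{\rm CLQL}$ on $\R^m/\operatorname{span}\{c\}$ will be represented concretely by the matrices $\bar A_\pi^{\rm CLQL}:=W^\top A_\pi^{\rm CLQL}W$. This is justified because the map $\R^m/\operatorname{span}\{c\}\to\operatorname{span}\{c\}^\perp$, $[x]\mapsto W^\top x$, is a linear isomorphism. Moreover, by \Cref{eq:lfa-cpd-common-vector}, the induced quotient map $[x]\mapsto[A_\pi^{\rm CLQL}x]$ is well defined, and in these coordinates it equals $W^\top A_\pi^{\rm CLQL}W$. Indeed, $W^\top A_\pi^{\rm CLQL}\hat c=(1-\alpha\delta(1-\gamma))W^\top\hat c=0$, and $x=\hat c\hat c^\top x+WW^\top x$. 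If the quotient is instead represented with some other complement, the resulting family differs only by a common similarity, so its JSR is unchanged by \Cref{lem:similarity-jsr}.

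The main computation applies \Cref{lem:lfa-cpd-common-vector} to each deterministic $\pi\in\Theta$ to get
\begin{equation*}
  T^\top A_\pi^{\rm CLQL}T=
  \begin{bmatrix}
    1-\alpha\delta(1-\gamma) & \hat c^\top A_\pi^{\rm CLQL}W\\
    0 & \bar A_\pi^{\rm CLQL}
  \end{bmatrix},
\end{equation*}
using $\hat c^\top A_\pi^{\rm CLQL}\hat c=1-\alpha\delta(1-\gamma)$ and $W^\top A_\pi^{\rm CLQL}\hat c=0$. The same $T$ works for every $\pi$, so the whole finite family is simultaneously block upper triangular. The first diagonal family is the singleton $\{1-\alpha\delta(1-\gamma)\}$, and the second is $\bar{\mathcal A}^{\rm CLQL}$.

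\Cref{lem:similarity-jsr} together with \Cref{lem:block-triangular-jsr} then gives
\begin{equation*}
  \rho(\mathcal A^{\rm CLQL})=\max\{|1-\alpha\delta(1-\gamma)|,\rho(\bar{\mathcal A}^{\rm CLQL})\}.
\end{equation*}
Under \Cref{ass:cpd-constant-mode-range}, $1-\alpha\delta(1-\gamma)>0$, so the absolute value can be dropped, which yields \Cref{eq:lfa-cpd-jsr-decomposition}.

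I do not expect a real obstacle. The only delicate point is the identification of the abstract quotient family with a concrete matrix family, and that is handled by the isomorphism above and the similarity invariance of the JSR. One should also note that the argument uses only \Cref{ass:lfa-constant-feature-direction} and the common-eigenvector identity. It does not need full column rank of $\Phi$ or uniqueness of the fixed point, since the proposition concerns the JSR of the mode family only.
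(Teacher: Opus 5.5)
Your proof is correct and follows essentially the same route as the paper: a common change of basis whose first vector spans $\operatorname{span}\{c\}$, simultaneous block upper triangularity from \Cref{lem:lfa-cpd-common-vector}, then \Cref{lem:similarity-jsr} and \Cref{lem:block-triangular-jsr}, with the absolute value removed by \Cref{ass:cpd-constant-mode-range}. The paper uses a general nonsingular $S=[c\ W]$ with quotient block $LA_\pi^{\rm CLQL}W$ built from the rows of $S^{-1}$. Your orthonormal choice, with $L=W^\top$, is a special case of this, and as you note the two representations differ only by a common similarity.
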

\begin{proof}
Choose $W\in\R^{m\times(m-1)}$ so that $S:=[c\ W]$ is nonsingular,
and write
\begin{equation*}
  S^{-1}=\begin{bmatrix}\ell^\top\\ L\end{bmatrix}.
\end{equation*}
Then $Lc=0$ and $LW=I_{m-1}$. By~\Cref{eq:lfa-cpd-common-vector}, each deterministic mode satisfies
\begin{equation*}
  A_\pi^{\rm CLQL}c=\bigl(1-\alpha\delta(1-\gamma)\bigr)c.
\end{equation*}
Consequently,
\begin{align*}
  S^{-1}A_\pi^{\rm CLQL}S
  &=\begin{bmatrix}\ell^\top\\ L\end{bmatrix}
    A_\pi^{\rm CLQL}
    \begin{bmatrix}c&W\end{bmatrix}\\
  &=\begin{bmatrix}
      \ell^\top A_\pi^{\rm CLQL}c & \ell^\top A_\pi^{\rm CLQL}W\\
      L A_\pi^{\rm CLQL}c & L A_\pi^{\rm CLQL}W
    \end{bmatrix}\\
  &=\begin{bmatrix}
      1-\alpha\delta(1-\gamma) & \ell^\top A_\pi^{\rm CLQL}W\\
      0 & L A_\pi^{\rm CLQL}W
    \end{bmatrix}.
\end{align*}
The lower-right block is the matrix of the induced quotient mode in the basis
given by $W$, so write
\begin{equation*}
  \bar A_\pi^{\rm CLQL}=L A_\pi^{\rm CLQL}W,
  \qquad
  \bar{\mathcal A}^{\rm CLQL}
  =\{\bar A_\pi^{\rm CLQL}:\pi\in\Theta\}.
\end{equation*}
For any product $A_{\pi_k}^{\rm CLQL}\cdots A_{\pi_1}^{\rm CLQL}$, the same
similarity gives
\begin{equation*}
  S^{-1}A_{\pi_k}^{\rm CLQL}\cdots A_{\pi_1}^{\rm CLQL}S
  =\begin{bmatrix}
      \bigl(1-\alpha\delta(1-\gamma)\bigr)^k & B_k\\
      0 & \bar A_{\pi_k}^{\rm CLQL}\cdots \bar A_{\pi_1}^{\rm CLQL}
    \end{bmatrix}
\end{equation*}
for some upper-right block $B_k$. Hence the whole family is simultaneously
block upper triangular with diagonal families
$\{1-\alpha\delta(1-\gamma)\}$ and
$\bar{\mathcal A}^{\rm CLQL}$. The block-triangular JSR identity in
\Cref{lem:block-triangular-jsr} gives
\begin{equation*}
  \rho(\mathcal A^{\rm CLQL})
  =\max\left\{|1-\alpha\delta(1-\gamma)|,\rho(\bar{\mathcal A}^{\rm CLQL})\right\}.
\end{equation*}
Under~\Cref{ass:cpd-constant-mode-range}, $1-\alpha\delta(1-\gamma)>0$, so this is exactly
\Cref{eq:lfa-cpd-jsr-decomposition}.
\end{proof}

The block-triangular decomposition also gives the following lower bounds.
\begin{corollary}
\label{lem:lfa-cpd-weak-inequality}
The following inequality holds:
\begin{equation*}
  \rho(\bar{\mathcal A}^{\rm CLQL})\leq \rho(\mathcal A^{\rm CLQL}).
\end{equation*}
\end{corollary}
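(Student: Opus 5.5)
The inequality $\rho(\bar{\mathcal A}^{\rm CLQL})\leq \rho(\mathcal A^{\rm CLQL})$ should follow from the block-triangular decomposition in the proof of \Cref{prop:lfa-cpd-jsr-benchmark}. I will avoid \Cref{ass:cpd-constant-mode-range}, since the corollary is stated without it. To do so, I will use the general form of the decomposition, which keeps the absolute value:
\begin{equation*}
  \rho(\mathcal A^{\rm CLQL})
  =\max\left\{|1-\alpha\delta(1-\gamma)|,\rho(\bar{\mathcal A}^{\rm CLQL})\right\}.
\end{equation*}
The maximum is at least its second entry, so the claim follows immediately.

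To make the argument self-contained, I will recall the relevant steps. Fix the basis $S=[c\ W]$, with $S^{-1}$ having rows $\ell^\top$ and $L$. By \Cref{lem:lfa-cpd-common-vector}, each deterministic mode satisfies
\begin{equation*}
  S^{-1}A_\pi^{\rm CLQL}S=
  \begin{bmatrix}
    1-\alpha\delta(1-\gamma) & \ell^\top A_\pi^{\rm CLQL}W\\
    0 & \bar A_\pi^{\rm CLQL}
  \end{bmatrix},
\end{equation*}
where $\bar A_\pi^{\rm CLQL}=LA_\pi^{\rm CLQL}W$. Since the same $S$ works for every $\pi\in\Theta$, \Cref{lem:similarity-jsr} lets me compute the JSR in this basis. \Cref{lem:block-triangular-jsr} then gives the displayed identity, and this step uses no sign condition on $1-\alpha\delta(1-\gamma)$.

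As a direct check that does not rely on \Cref{lem:block-triangular-jsr}, I would also note a norm comparison. For any word, the lower-right block of $S^{-1}A_{\pi_k}^{\rm CLQL}\cdots A_{\pi_1}^{\rm CLQL}S$ equals $\bar A_{\pi_k}^{\rm CLQL}\cdots\bar A_{\pi_1}^{\rm CLQL}$. The spectral norm of a submatrix is at most the norm of the whole matrix, so
\begin{equation*}
  \|\bar A_{\pi_k}^{\rm CLQL}\cdots\bar A_{\pi_1}^{\rm CLQL}\|_2
  \le \|S^{-1}\|_2\|S\|_2\,
  \|A_{\pi_k}^{\rm CLQL}\cdots A_{\pi_1}^{\rm CLQL}\|_2 .
\end{equation*}
Taking the maximum over words of length $k$, raising to the power $1/k$ and letting $k\to\infty$ removes the constant factor. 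This yields the inequality directly from \Cref{def:jsr}.

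The only point needing care is that the statement must hold without the positivity assumption, so the absolute-value form of the decomposition must be used. The quotient family $\bar{\mathcal A}^{\rm CLQL}$ is well defined up to simultaneous similarity, since a different choice of $W$ conjugates all of its matrices by the same invertible matrix. Its JSR is therefore basis-independent by \Cref{lem:similarity-jsr}.
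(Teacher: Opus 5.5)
Your proposal is correct and follows the paper's proof. Both invoke the absolute-value identity $\rho(\mathcal A^{\rm CLQL})=\max\{|1-\alpha\delta(1-\gamma)|,\rho(\bar{\mathcal A}^{\rm CLQL})\}$, which comes from the block-triangular argument before the positivity assumption is used, and both conclude because a maximum dominates its second entry. Your extra submatrix-norm check is also valid and gives the inequality directly from the JSR definition without \Cref{lem:block-triangular-jsr}. It bounds the lower-right block of each conjugated product by $\|S^{-1}\|_2\|S\|_2$ times the norm of the full product.
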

\begin{proof}
The block-triangular argument in the proof of~\Cref{prop:lfa-cpd-jsr-benchmark}
gives, before using~\Cref{ass:cpd-constant-mode-range},
\[
\rho(\mathcal A^{\rm CLQL})
=\max\left\{|1-\alpha\delta(1-\gamma)|,
\rho(\bar{\mathcal A}^{\rm CLQL})\right\}.
\]
The maximum is at least its second entry, so
$\rho(\bar{\mathcal A}^{\rm CLQL})\le \rho(\mathcal A^{\rm CLQL})$.
\end{proof}
\begin{lemma}\label{lem:lfa-cpd-constant-lower-bound}
Under~\Cref{ass:cpd-constant-mode-range}, the following inequality holds:
\begin{equation*}
  1-\alpha\delta(1-\gamma)\leq \rho(\mathcal A^{\rm CLQL}).
\end{equation*}
\end{lemma}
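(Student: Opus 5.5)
The plan is to read the claim off the decomposition in \Cref{prop:lfa-cpd-jsr-benchmark}, mirroring how \Cref{cor:cpd-constant-mode-lower-bound} followed from \Cref{lem:cpd-standard-jsr-decomposition} in the tabular case. Under \Cref{ass:cpd-constant-mode-range}, \Cref{eq:lfa-cpd-jsr-decomposition} gives
\begin{equation*}
  \rho(\mathcal A^{\rm CLQL})
  =\max\left\{1-\alpha\delta(1-\gamma),\rho(\bar{\mathcal A}^{\rm CLQL})\right\},
\end{equation*}
and a maximum is at least its first entry. This yields $1-\alpha\delta(1-\gamma)\le\rho(\mathcal A^{\rm CLQL})$ immediately.

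As an independent check that does not pass through the block-triangular JSR identity, I would also give a direct argument. By \Cref{lem:lfa-cpd-common-vector}, every deterministic mode satisfies $A_\pi^{\rm CLQL}c=\bigl(1-\alpha\delta(1-\gamma)\bigr)c$. Here $c\neq0$ because $\Phi c=\mathbf 1\neq0$. Hence $1-\alpha\delta(1-\gamma)$ is an eigenvalue of each $A_\pi^{\rm CLQL}$, so $\rho(A_\pi^{\rm CLQL})\ge|1-\alpha\delta(1-\gamma)|$.

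To pass from one mode to the family, I would use the fact that the JSR of a finite family dominates the spectral radius of each member. Indeed, for a fixed $\pi$ the constant word $\pi,\pi,\ldots,\pi$ is admissible in \Cref{def:jsr}, and $\|(A_\pi^{\rm CLQL})^k\|^{1/k}\to\rho(A_\pi^{\rm CLQL})$ by Gelfand's formula. Finally, \Cref{ass:cpd-constant-mode-range} makes the scalar $1-\alpha\delta(1-\gamma)$ positive, so the absolute value can be dropped.

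No real obstacle is expected. The only points to state explicitly are that $c$ is nonzero and that positivity comes from \Cref{ass:cpd-constant-mode-range}.
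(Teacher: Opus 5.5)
Your main argument is correct and is exactly the paper's proof: under \Cref{ass:cpd-constant-mode-range}, \Cref{eq:lfa-cpd-jsr-decomposition} writes $\rho(\mathcal A^{\rm CLQL})$ as a maximum whose first entry is $1-\alpha\delta(1-\gamma)$, and a maximum is at least its first entry. Your supplementary direct argument is also valid and has the mild advantage of not relying on the block-triangular JSR identity: since $c\neq0$ is a common eigenvector, $1-\alpha\delta(1-\gamma)$ is an eigenvalue of every mode, and via constant words and Gelfand's formula the JSR dominates each mode's spectral radius.
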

\begin{proof}
The claim follows directly from~\Cref{eq:lfa-cpd-jsr-decomposition}, since the maximum is at least its first entry.
\end{proof}
As in the tabular case, acceleration is certified under a strict JSR gap between the full SLS family $\mathcal A^{\rm CLQL}$ and the projected SLS family $\bar{\mathcal A}^{\rm CLQL}$ as summarized in the following assumption.
\begin{assumption}
\label{ass:lfa-cpd-projected-gap}
Suppose that the following strict inequality holds:
\begin{equation}
  \rho(\bar{\mathcal A}^{\rm CLQL})<\rho(\mathcal A^{\rm CLQL}).
  \label{eq:lfa-cpd-projected-gap}
\end{equation}
\end{assumption}
Under~\Cref{ass:cpd-constant-mode-range,ass:lfa-cpd-projected-gap}, the decomposition in~\Cref{eq:lfa-cpd-jsr-decomposition} shows that $\rho(\mathcal A^{\rm CLQL})$ is identical to the eigenvalue associated with the common eigenvector direction.
\begin{lemma}
\label{lem:lfa-cpd-standard-benchmark}
Under~\Cref{ass:cpd-constant-mode-range,ass:lfa-cpd-projected-gap}, we have
\begin{equation}
  \rho(\mathcal A^{\rm CLQL})=\bigl(1-\alpha\delta(1-\gamma)\bigr) .
  \label{eq:lfa-cpd-benchmark}
\end{equation}
\end{lemma}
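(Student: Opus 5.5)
The plan is to read the claim directly off the decomposition in \Cref{prop:lfa-cpd-jsr-benchmark}, in the same way \Cref{lem:cpd-standard-benchmark} was obtained from \Cref{lem:cpd-standard-jsr-decomposition} in the tabular case.

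First, under \Cref{ass:cpd-constant-mode-range} the identity \Cref{eq:lfa-cpd-jsr-decomposition} holds:
\[
\rho(\mathcal A^{\rm CLQL})=\max\left\{1-\alpha\delta(1-\gamma),\rho(\bar{\mathcal A}^{\rm CLQL})\right\}.
\]
So $\rho(\mathcal A^{\rm CLQL})$ equals one of the two entries. I will rule out the second by contradiction. Suppose the maximum were attained only by the quotient entry, or more precisely suppose $\rho(\mathcal A^{\rm CLQL})=\rho(\bar{\mathcal A}^{\rm CLQL})$. This directly contradicts the strict inequality \Cref{eq:lfa-cpd-projected-gap} in \Cref{ass:lfa-cpd-projected-gap}.

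Hence the maximum cannot equal $\rho(\bar{\mathcal A}^{\rm CLQL})$. It must therefore equal the first entry, $1-\alpha\delta(1-\gamma)$, which gives \Cref{eq:lfa-cpd-benchmark}. Equivalently, the gap assumption forces $\rho(\bar{\mathcal A}^{\rm CLQL})<1-\alpha\delta(1-\gamma)$.

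There is no real obstacle here. The only point to check is that \Cref{ass:cpd-constant-mode-range} is indeed what licenses dropping the absolute value in the decomposition, and that was already handled inside the proof of \Cref{prop:lfa-cpd-jsr-benchmark}. The quotient family is well defined independently of the choice of complement $W$ up to similarity, so the quantity in \Cref{ass:lfa-cpd-projected-gap} is unambiguous.
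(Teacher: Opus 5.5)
Your proposal is correct and follows the paper's own proof: invoke the decomposition $\rho(\mathcal A^{\rm CLQL})=\max\{1-\alpha\delta(1-\gamma),\rho(\bar{\mathcal A}^{\rm CLQL})\}$ from \Cref{prop:lfa-cpd-jsr-benchmark} under \Cref{ass:cpd-constant-mode-range}, and use the strict gap in \Cref{ass:lfa-cpd-projected-gap} to exclude the quotient entry as the maximum. Your side remark is also right: a change of complement $W$ acts as the same similarity on every quotient mode, so $\rho(\bar{\mathcal A}^{\rm CLQL})$ does not depend on that choice.
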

\begin{proof}
By~\Cref{eq:lfa-cpd-jsr-decomposition},
\begin{align*}
  \rho(\mathcal A^{\rm CLQL})
  &=\max\left\{1-\alpha\delta(1-\gamma),
    \rho(\bar{\mathcal A}^{\rm CLQL})\right\}.
\end{align*}
The strict inequality in~\Cref{eq:lfa-cpd-projected-gap} rules out the second
entry as the maximum. Therefore,
\begin{align*}
  \rho(\mathcal A^{\rm CLQL})
  &=1-\alpha\delta(1-\gamma).
\end{align*}
This is exactly~\Cref{eq:lfa-cpd-benchmark}.
\end{proof}

We now add the heavy-ball momentum term to the CLQL recursion in~\Cref{alg:lfa-cpd-standard-update}. The resulting algorithm is summarized in~\Cref{alg:lfa-cpd-hb-update} and is called heavy-ball corrected linear Q-learning (HBCLQL). As above, $\eta$ is a momentum gain and $\alpha\eta$ is the actual momentum coefficient.
\begin{algorithm}[H]
\caption{Deterministic HBCLQL}
\label{alg:lfa-cpd-hb-update}
\begin{algorithmic}
\Require $\theta_{-1},\theta_0\in\R^m$, step-size $\alpha>0$, momentum gain $\eta\geq0$, features $\Phi$, and $G$ from~\Cref{eq:lfa-constant-preserving-map}
\For{$k=0,1,2,\ldots$}
  \State Update
  \begin{equation}
    \theta_{k+1}
    \gets \theta_k+\alpha G\{F(\Phi\theta_k)-\Phi\theta_k\}
    +\alpha\eta(\theta_k-\theta_{k-1}).
    \label{eq:lfa-cpd-hb-update}
  \end{equation}
\EndFor
\end{algorithmic}
\end{algorithm}

The next lemma writes the resulting second-order error recursion as a first-order
augmented SLS.
\begin{lemma}
\label{lem:lfa-cpd-hb-sls}
Around any fixed point $\theta^\star$ of the map in~\Cref{eq:lfa-cpd-standard-map}, the
heavy-ball error satisfies
\begin{equation}
  \begin{bmatrix}
    \theta_{k+1}-\theta^\star\\
    \theta_k-\theta^\star
  \end{bmatrix}
  =A_{\mu_k}^{\rm HBCLQL}
  \begin{bmatrix}
    \theta_k-\theta^\star\\
    \theta_{k-1}-\theta^\star
  \end{bmatrix}
  \label{eq:lfa-cpd-hb-error-system}
\end{equation}
for a stochastic Bellman-difference selector $\mu_k$, where
\begin{equation}
  A_\mu^{\rm HBCLQL}
  :=
  \begin{bmatrix}
    A_\mu^{\rm CLQL}+\alpha\eta I_m&-\alpha\eta I_m\\
    I_m&0
  \end{bmatrix}.
  \label{eq:lfa-cpd-hb-mode}
\end{equation}
\end{lemma}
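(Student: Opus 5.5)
The plan mirrors the proofs of \Cref{lem:cpd-hb-sls} and \Cref{lem:lfa-cpd-difference-system}. First, use \Cref{lem:lfa-cpd-fixed-point-equation}: a fixed point $\theta^\star$ of the map in~\Cref{eq:lfa-cpd-standard-map} satisfies $G\{F(\Phi\theta^\star)-\Phi\theta^\star\}=0$. Subtract this zero term, together with the trivial identity $\theta^\star=\theta^\star+\alpha\eta(\theta^\star-\theta^\star)$, from the update~\Cref{eq:lfa-cpd-hb-update}. This gives the second-order error recursion
\begin{equation*}
  \theta_{k+1}-\theta^\star
  =(\theta_k-\theta^\star)
  +\alpha G\{F(\Phi\theta_k)-F(\Phi\theta^\star)-\Phi(\theta_k-\theta^\star)\}
  +\alpha\eta\{(\theta_k-\theta^\star)-(\theta_{k-1}-\theta^\star)\}.
\end{equation*}
The fixed point enters only through this identity, so uniqueness (\Cref{ass:lfa-cpd-unique-fixed-point}) is not needed. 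This matches the phrase ``around any fixed point'' in the statement.

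Next, apply \Cref{lem:bellman_difference_selector} with $Q=\Phi\theta_k$ and $\bar Q=\Phi\theta^\star$. It yields a stochastic policy $\mu_k$ with $F(\Phi\theta_k)-F(\Phi\theta^\star)=\gamma P\Pi^{\mu_k}\Phi(\theta_k-\theta^\star)$. Substituting, the bracket becomes $(\gamma P\Pi^{\mu_k}-I)\Phi(\theta_k-\theta^\star)$, so the coefficient of $\theta_k-\theta^\star$ is $I_m+\alpha G(\gamma P\Pi^{\mu_k}-I)\Phi+\alpha\eta I_m=A_{\mu_k}^{\rm CLQL}+\alpha\eta I_m$. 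The coefficient of $\theta_{k-1}-\theta^\star$ is $-\alpha\eta I_m$. Stacking this identity on top of the trivial identity $\theta_k-\theta^\star=\theta_k-\theta^\star$ gives~\Cref{eq:lfa-cpd-hb-error-system} with the block matrix~\Cref{eq:lfa-cpd-hb-mode}.

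For completeness, I would also record that $A_{\mu_k}^{\rm HBCLQL}\in\co\{A_\pi^{\rm HBCLQL}:\pi\in\Theta\}$. This follows because the upper-left block depends affinely on $P\Pi^{\mu}$, the other blocks are constant, and \Cref{lem:bellman_difference_selector} places $P\Pi^\mu$ in the convex hull of the deterministic-policy matrices.

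I do not expect a real obstacle. The only point needing care is that the selector $\mu_k$ is chosen for the pair $(\Phi\theta_k,\Phi\theta^\star)$ in Q-space, not in parameter space. One must check that the feature map $\Phi$ sits on the right of $(\gamma P\Pi^{\mu_k}-I)$ and that $G$ sits on the left, so that the dimensions ($m\times n$, $n\times n$, $n\times m$) compose to an $m\times m$ block.
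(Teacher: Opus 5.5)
Your proposal is correct and follows essentially the same route as the paper. The only cosmetic difference is that the paper invokes \Cref{lem:lfa-cpd-difference-system} for the first-order part $(\theta_k-\theta^\star)+\alpha G\{\cdots\}=A_{\mu_k}^{\rm CLQL}(\theta_k-\theta^\star)$, whereas you re-derive that identity inline from \Cref{lem:bellman_difference_selector}, which is exactly how that lemma is proved; the stacking step and the convex-hull remark are the same as in the paper.
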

\begin{proof}
Subtracting $\theta^\star$ from
\Cref{eq:lfa-cpd-hb-update} gives
\begin{align*}
  \theta_{k+1}-\theta^\star
  &=(\theta_k-\theta^\star)+
    \alpha G\{F(\Phi\theta_k)-F(\Phi\theta^\star)
    -\Phi(\theta_k-\theta^\star)\}\\
  &\qquad+\alpha\eta\{(\theta_k-\theta^\star)-(\theta_{k-1}-\theta^\star)\}.
\end{align*}
By the fixed-point error identity in~\Cref{lem:lfa-cpd-difference-system},
\begin{equation*}
  (\theta_k-\theta^\star)+\alpha G\{F(\Phi\theta_k)-F(\Phi\theta^\star)
  -\Phi(\theta_k-\theta^\star)\}
  =A_{\mu_k}^{\rm CLQL}(\theta_k-\theta^\star).
\end{equation*}
Therefore
\begin{equation*}
  \theta_{k+1}-\theta^\star
  =(A_{\mu_k}^{\rm CLQL}+\alpha\eta I_m)(\theta_k-\theta^\star)
  -\alpha\eta(\theta_{k-1}-\theta^\star).
\end{equation*}
Stacking $\theta_{k+1}-\theta^\star$ and $\theta_k-\theta^\star$ yields~\Cref{eq:lfa-cpd-hb-error-system} with the
mode in~\Cref{eq:lfa-cpd-hb-mode}.  The matrix $A_\mu^{\rm CLQL}$ depends
linearly on $P\Pi^\mu$, so the convex-hull statement in
\Cref{lem:bellman_difference_selector} gives
$A_{\mu_k}^{\rm HBCLQL}\in\co(\mathcal A^{\rm HBCLQL})$.
\end{proof}

The corresponding SLS family is given by
\begin{equation}
  \mathcal A^{\rm HBCLQL}
  :=\left\{A_\pi^{\rm HBCLQL}:\pi\in\Theta\right\}.
  \label{eq:lfa-cpd-hb-family}
\end{equation}
As in the tabular case, the augmented feature-space dynamics preserve the subspace
\begin{equation*}
  {\mathcal I}:=\left\{
  \begin{bmatrix}ac\\bc\end{bmatrix}:a,b\in\R
  \right\}
  \subset \R^{2m}.
\end{equation*}
Indeed, for every policy $\mu$, \Cref{eq:lfa-cpd-common-vector,eq:lfa-cpd-hb-mode} imply that $A_\mu^{\rm HBCLQL}$ maps any vector in ${\mathcal I}$ to another vector in ${\mathcal I}$. If the augmented error belongs to this subspace, then the next augmented error also belongs to it, and the scalar coefficients are updated by the following constant matrix:
\begin{equation*}
  C
  :=
  \begin{bmatrix}
    \bigl(1-\alpha\delta(1-\gamma)\bigr)+\alpha\eta&-\alpha\eta\\
    1&0
  \end{bmatrix}.
\end{equation*}

Using the quotient basis from~\Cref{prop:lfa-cpd-jsr-benchmark}, define
\begin{equation*}
  \bar A_\pi^{\rm HBCLQL}
  :=\begin{bmatrix}\bar A_\pi^{\rm CLQL}+\alpha\eta I_{m-1}&-\alpha\eta I_{m-1}\\ I_{m-1}&0\end{bmatrix},
  \qquad
  \bar{\mathcal A}^{\rm HBCLQL}
  :=\left\{\bar A_\pi^{\rm HBCLQL}:\pi\in\Theta\right\}.
\end{equation*}
The block-triangular decomposition in the same basis gives
\begin{equation}
  \rho(\mathcal A^{\rm HBCLQL})
  =\max\{\rho(C),\rho(\bar{\mathcal A}^{\rm HBCLQL})\}.
  \label{eq:lfa-cpd-hb-jsr-decomposition}
\end{equation}
\begin{proof}[Justification of~\Cref{eq:lfa-cpd-hb-jsr-decomposition}]
Let \(S=[c\ W]\) and \(S^{-1}=\begin{bmatrix}\ell^\top\\ L\end{bmatrix}\)
be the quotient basis used in the proof of~\Cref{prop:lfa-cpd-jsr-benchmark}.
In the augmented basis \(\diag(S,S)\), the coordinates are ordered as current
constant, current quotient, previous constant, and previous quotient.  For each
\(\pi\), this gives
\begin{equation*}
\begin{bmatrix}
1-\alpha\delta(1-\gamma)+\alpha\eta & \ell^\top A_\pi^{\rm CLQL}W & -\alpha\eta & 0\\
0 & \bar A_\pi^{\rm CLQL}+\alpha\eta I_{m-1} & 0 & -\alpha\eta I_{m-1}\\
1 & 0 & 0 & 0\\
0 & I_{m-1} & 0 & 0
\end{bmatrix}.
\end{equation*}
Permuting the coordinates to place the two constant coordinates first gives
\begin{equation*}
  A_\pi^{\rm HBCLQL}
  \sim
  \begin{bmatrix}
    C & E_\pi\\
    0 & \bar A_\pi^{\rm HBCLQL}
  \end{bmatrix},
  \qquad
  E_\pi:=
  \begin{bmatrix}
    \ell^\top A_\pi^{\rm CLQL}W & 0\\
    0 & 0
  \end{bmatrix}.
\end{equation*}
The same similarity and permutation are used for every mode, so
\Cref{lem:block-triangular-jsr} gives~\Cref{eq:lfa-cpd-hb-jsr-decomposition}.
\end{proof}
The next theorem gives conditions under which, when the feature-space fixed point
is unique, HBCLQL converges faster than CLQL.
\begin{theorem}
\label{thm:lfa-cpd-hb-improvement}
Under~\Cref{ass:cpd-constant-mode-range,ass:lfa-cpd-unique-fixed-point,ass:lfa-cpd-projected-gap}, there
exists $\eta_0>0$ such that every momentum gain satisfying
\begin{equation*}
  0<\alpha\eta<\min\{\bigl(1-\alpha\delta(1-\gamma)\bigr)^2,\eta_0\}
\end{equation*}
satisfies
\begin{equation}
  \rho(\mathcal A^{\rm HBCLQL})
  <\rho(\mathcal A^{\rm CLQL})=\bigl(1-\alpha\delta(1-\gamma)\bigr) .
  \label{eq:lfa-cpd-hb-improvement}
\end{equation}
\end{theorem}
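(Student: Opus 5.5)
The plan is to reproduce the argument of \Cref{thm:cpd-global-jsr-improvement} in the feature-space basis, using the decomposition \Cref{eq:lfa-cpd-hb-jsr-decomposition}. The proof has three steps.

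\textbf{Step 1: identify the CLQL rate.} By \Cref{ass:cpd-constant-mode-range} we have $0<1-\alpha\delta(1-\gamma)<1$. By \Cref{prop:lfa-cpd-jsr-benchmark} together with \Cref{ass:lfa-cpd-projected-gap} and \Cref{lem:lfa-cpd-standard-benchmark}, we obtain
\[
\rho(\mathcal A^{\rm CLQL})=1-\alpha\delta(1-\gamma),
\qquad
\rho(\bar{\mathcal A}^{\rm CLQL})<1-\alpha\delta(1-\gamma).
\]
This fixes the benchmark on the right-hand side of \Cref{eq:lfa-cpd-hb-improvement}. \Cref{ass:lfa-cpd-unique-fixed-point} enters only to make $\theta^\star$, and hence the error recursion \Cref{eq:lfa-cpd-hb-error-system}, well defined. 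The JSR quantities themselves depend only on the mode families.

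\textbf{Step 2: control the quotient block by continuity.} At $\eta=0$, each quotient heavy-ball mode is
\[
\bar A_\pi^{\rm HBCLQL}\big|_{\eta=0}=\begin{bmatrix}\bar A_\pi^{\rm CLQL}&0\\ I_{m-1}&0\end{bmatrix}.
\]
This family is block lower triangular, with diagonal families $\bar{\mathcal A}^{\rm CLQL}$ and $\{0\}$. Products keep this structure: the upper-left block is $\bar A_{\pi_k}\cdots\bar A_{\pi_1}$ and the lower-left block is the product of length $k-1$. Hence \Cref{lem:block-triangular-jsr} gives
\[
\rho\bigl(\bar{\mathcal A}^{\rm HBCLQL}\big|_{\eta=0}\bigr)=\rho(\bar{\mathcal A}^{\rm CLQL}).
\]
Each matrix in the finite family depends continuously (affinely) on $\alpha\eta$, and the JSR is continuous on finite families. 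So there exists $\eta_0>0$ such that
\[
\rho(\bar{\mathcal A}^{\rm HBCLQL})<1-\alpha\delta(1-\gamma)
\quad\text{for } 0<\alpha\eta<\eta_0.
\]
This continuity step is the only non-explicit ingredient, and it is why $\eta_0$ is merely existential. Upper semicontinuity of the JSR would already suffice, since we only need a strict upper bound to persist under small perturbation.

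\textbf{Step 3: the $2\times2$ constant block.} Write $\lambda:=1-\alpha\delta(1-\gamma)\in(0,1)$ and $\beta:=\alpha\eta$. The matrix $C$ has characteristic polynomial $z^2-(\lambda+\beta)z+\beta$. This is the same $C$ as in the tabular case, so I would invoke \Cref{app:prop-cpd-constant-mode-acceleration} directly, as the tabular proof does. Arguing directly as a check:
\begin{itemize}
\item At $z=\lambda$ the polynomial equals $\lambda^2-\lambda^2-\beta\lambda+\beta=\beta(1-\lambda)>0$. Its vertex is at $z=(\lambda+\beta)/2$.
\item \emph{Real roots.} If $(\lambda+\beta)^2\ge4\beta$, the roots are real and positive. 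Since the value at $\lambda$ is positive, $\lambda$ lies outside $[z_-,z_+]$. If $\beta<\lambda^2$, then $\beta<\lambda$, so the vertex $(\lambda+\beta)/2$ is below $\lambda$. Hence $\lambda>z_+$, and $\rho(C)=z_+<\lambda$.
\item \emph{Complex roots.} Otherwise the roots are complex conjugate with $\rho(C)=\sqrt{\beta}$, and $\sqrt\beta<\lambda$ exactly when $\beta<\lambda^2$.
\end{itemize}
Either way, $0<\alpha\eta<\lambda^2$ implies $\rho(C)<\lambda$.

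\textbf{Conclusion.} Combining Steps 2 and 3 with \Cref{eq:lfa-cpd-hb-jsr-decomposition} gives
\[
\rho(\mathcal A^{\rm HBCLQL})=\max\{\rho(C),\rho(\bar{\mathcal A}^{\rm HBCLQL})\}<1-\alpha\delta(1-\gamma)=\rho(\mathcal A^{\rm CLQL}),
\]
which is \Cref{eq:lfa-cpd-hb-improvement}. Stochastic selectors give modes in the convex hull of the deterministic-policy family, so \Cref{lem:convex-hull-jsr} and \Cref{lem:basic_jsr_convergence} then yield exponential convergence of $\theta_k$ to $\theta^\star$.
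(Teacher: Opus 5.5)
Your proposal is correct and follows the paper's proof essentially step for step: it identifies the CLQL benchmark, evaluates the quotient heavy-ball family at $\eta=0$ as block lower triangular and uses JSR continuity to control the quotient block, bounds $\rho(C)$ via the scalar acceleration condition, and combines these through \eqref{eq:lfa-cpd-hb-jsr-decomposition}. Your direct real/complex root analysis showing $\rho(C)<1-\alpha\delta(1-\gamma)$ is a valid elementary substitute for the Jury-criterion argument in the appendix, and your remark that upper semicontinuity of the JSR would already suffice is correct.
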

\begin{proof}
The unique fixed-point assumption ensures that the JSR comparison below is a
convergence-rate comparison to the unique fixed point; the JSR calculation itself
uses the remaining assumptions.  From~\Cref{eq:lfa-cpd-projected-gap,eq:lfa-cpd-benchmark},
\begin{equation*}
  \rho(\mathcal A^{\rm CLQL})=1-\alpha\delta(1-\gamma),
  \qquad
  \rho(\bar{\mathcal A}^{\rm CLQL})<1-\alpha\delta(1-\gamma).
\end{equation*}
At $\eta=0$, the quotient augmented family induced by
\Cref{eq:lfa-cpd-hb-family} has matrices
\begin{equation*}
  \begin{bmatrix}\bar A_\pi^{\rm CLQL}&0\\I&0\end{bmatrix},
\end{equation*}
and therefore has JSR $\rho(\bar{\mathcal A}^{\rm CLQL})$. By continuity of the
JSR for finite matrix families~\citep[Chapter~1]{jungers2009joint}, there exists $\eta_0>0$ such that
\begin{equation*}
  \rho(\bar{\mathcal A}^{\rm HBCLQL})<1-\alpha\delta(1-\gamma),
  \qquad 0<\alpha\eta<\eta_0.
\end{equation*}
If $0<\alpha\eta<\bigl(1-\alpha\delta(1-\gamma)\bigr)^2$, then
\Cref{app:prop-lfa-cpd-hb-common-acceleration} gives
$\rho(C)<1-\alpha\delta(1-\gamma)$.  Using~\Cref{eq:lfa-cpd-hb-jsr-decomposition},
\begin{align*}
  \rho(\mathcal A^{\rm HBCLQL})
  &=\max\{\rho(C),\rho(\bar{\mathcal A}^{\rm HBCLQL})\}\\
  &<1-\alpha\delta(1-\gamma)
   =\rho(\mathcal A^{\rm CLQL}).
\end{align*}
This proves~\Cref{eq:lfa-cpd-hb-improvement}.
\end{proof}

This result establishes a condition in which HBCLQL improves over CLQL. As in the tabular case, the comparison is for a fixed step-size because the actual momentum coefficient is $\alpha\eta$; when $\eta$ is fixed and $\alpha$ is taken to zero, the common-mode improvement is second order in $\alpha$. Conditions under which both HBCLQL and CLQL improve over LQL can be obtained by the same argument as in the tabular case; the details are omitted to avoid repetition.

\section{Numerical illustration}
\label{sec:feature_numerical_toy_example}
This example compares~\Cref{alg:lfa-direct-phi-d-update,alg:lfa-cpd-standard-update,alg:lfa-cpd-hb-update}.
We use the same two-state, two-action MDP as in~\Cref{sec:numerical_toy_example}, together with the feature matrix specified below. Throughout this comparison, we use the parameters $\alpha=2.5,\delta= 2,\alpha\eta = 0.1$, and
\begin{equation*}
  \Phi=
  \begin{bmatrix}
    1& 1& 1\\
    1&-1& 1\\
    1& 1&-1\\
    1&-1&-1
  \end{bmatrix},
\end{equation*}
and one verifies that $c=(1,0,0)^\top$ satisfies $\Phi c=\mathbf 1$.
The feature recursions use
\(\theta_{-1}=\theta_0=(3,-2.5,1.5)^\top\). In this example, the fixed points of LQL and CLQL coincide numerically:
\begin{equation*}
  \theta^\star\approx(6.21874,0.75652,0.02213)^\top,
  \qquad
  \Phi\theta^\star\approx(6.99739,5.48435,6.95313,5.44009)^\top .
\end{equation*}
For the comparison, we define the relative error
\begin{equation*}
  e_k^\theta=\frac{\left\|\theta_k-\theta^\star\right\|_2}
  {\left\|\theta_0-\theta^\star\right\|_2}.
\end{equation*}
\Cref{fig:toy-feature-convergence} compares the three algorithms.
CLQL reaches the \(10^{-6}\) threshold after 22 iterations, compared with 28 iterations for
LQL. Adding the small heavy-ball momentum term further reduces the hit time to 15 iterations. All three final relative
parameter errors are at the \(10^{-15}\) scale by the end of the run. This deterministic feature example is an empirical illustration; no projected-gap certificate is claimed for these displayed parameters.

\begin{figure}[H]
\centering
\includegraphics[width=0.78\textwidth]{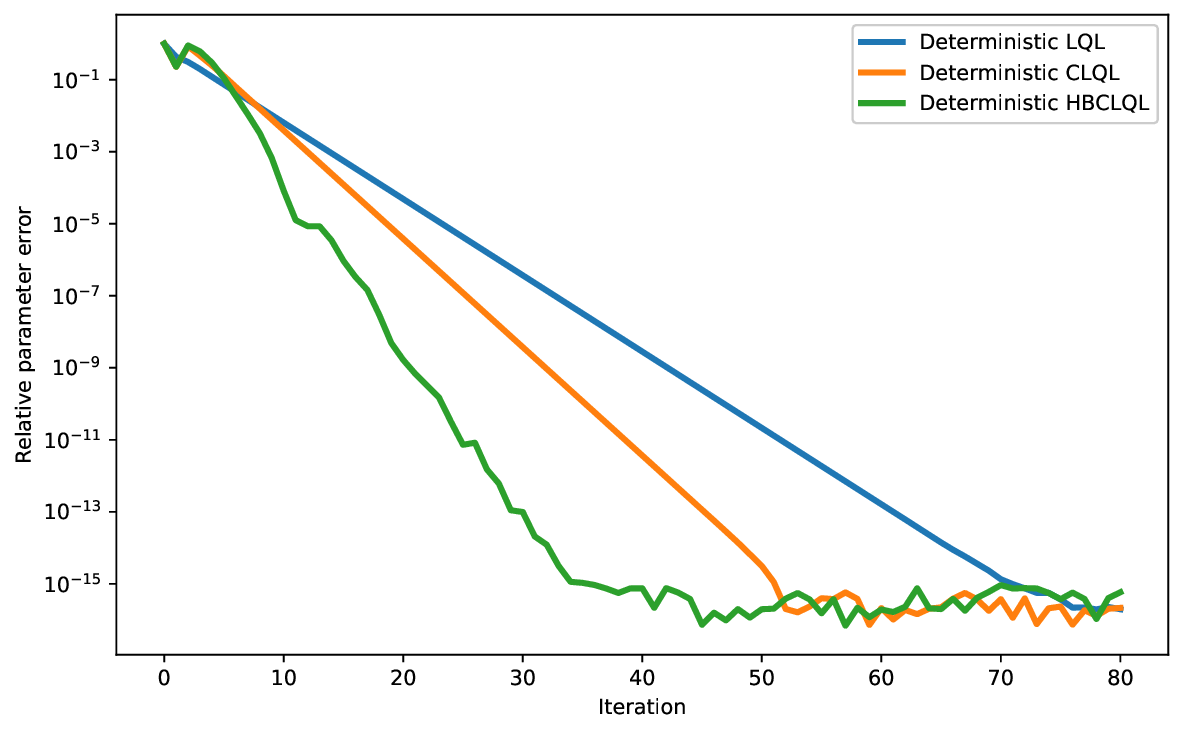}
\caption{The plot contains exactly the three method curves in
\Cref{alg:lfa-direct-phi-d-update,alg:lfa-cpd-standard-update,alg:lfa-cpd-hb-update}.}
\label{fig:toy-feature-convergence}
\end{figure}

The actual momentum coefficient remains deliberately small in this illustration. Larger
values of \(\delta\) or \(\alpha\eta\) can make the numerical comparison even more
aggressive, but this moderate setting already shows the intended ordering:
CLQL is faster than the direct feature recursion, and HBCLQL is faster than CLQL on this instance.

\section{Sampled stochastic heavy-ball linear function approximation}
\label{sec:sampled_stochastic_hb_lfa}

Based on the deterministic HBCLQL analysis, this section extends the sampled model-free construction to linear function approximation.
For comparison, we first recall sampled Q-learning with linear function approximation. Let
$Q_k=\Phi\theta_k$. Given $\theta_k$, draw $i_k=(s_k,a_k)\sim d$, draw
$s_k^\prime\sim P(\cdot\mid s_k,a_k)$, and define
\begin{equation}
  \Delta_k
  :=r(s_k,a_k,s_k^\prime)
    +\gamma\max_{b\in\mathcal A}(\Phi\theta_k)(s_k^\prime,b)
    -(\Phi\theta_k)(s_k,a_k).
  \label{eq:sampled-standard-lfa-q-learning-td-error}
\end{equation}
The corresponding standard linear Q-learning update is given in~\Cref{alg:sampled-standard-lfa-q-learning}.
\begin{algorithm}[H]
\caption{Stochastic LQL}
\label{alg:sampled-standard-lfa-q-learning}
\begin{algorithmic}
\Require $\theta_0\in\R^m$, step-sizes $\alpha_k>0$, features $\Phi$, and full-support sampling distribution $d$
\For{$k=0,1,2,\ldots$}
  \State Draw $i_k=(s_k,a_k)\sim d$
  \State Draw $s_k^\prime\sim P(\cdot\mid s_k,a_k)$ and compute $\Delta_k$ by~\Cref{eq:sampled-standard-lfa-q-learning-td-error}
  \State Update
  \[
    \theta_{k+1}\gets\theta_k+\alpha_k\Delta_k\Phi^\top e_{i_k}.
  \]
\EndFor
\end{algorithmic}
\end{algorithm}

We now consider a stochastic HBCLQL counterpart. Set $Q_k=\Phi\theta_k$ and start from
$\theta_{-1},\theta_0\in\R^m$. At time $k$, we draw $i_k\sim d$ and $j_k\sim d$
independently. If $i_k=(s_k,a_k)$, draw $s'_k\sim P(\cdot\mid s_k,a_k)$ and form
\begin{equation}
  \Delta_k
  :=r(s_k,a_k,s'_k)
    +\gamma\max_{b\in\mathcal A}(\Phi\theta_k)(s'_k,b)
    -(\Phi\theta_k)(s_k,a_k).
  \label{eq:sampled-lfa-hb-td-error}
\end{equation}
The parameter is then updated as in~\Cref{alg:sampled-lfa-hb-update}.
\begin{algorithm}[H]
\caption{Stochastic HBCLQL}
\label{alg:sampled-lfa-hb-update}
\begin{algorithmic}
\Require $\theta_{-1},\theta_0\in\R^m$, step-size $\alpha>0$, momentum gain $\eta\geq0$, scalar $\delta>0$, features $\Phi$, constant direction $c$, and full-support $d$
\For{$k=0,1,2,\ldots$}
  \State Draw $i_k=(s_k,a_k)\sim d$ and $j_k\sim d$ independently
  \State Draw $s'_k\sim P(\cdot\mid s_k,a_k)$ and compute $\Delta_k$ by~\Cref{eq:sampled-lfa-hb-td-error}
  \State Update
  \Statex \[
    \theta_{k+1}
    \gets \theta_k
    +\alpha\Delta_k\left(\Phi^\top e_{i_k}+\delta c-\Phi^\top e_{j_k}\right)
    +\alpha\eta(\theta_k-\theta_{k-1}).
  \]
\EndFor
\end{algorithmic}
\end{algorithm}

The update in~\Cref{alg:sampled-lfa-hb-update} is an unbiased stochastic approximation of the deterministic HBCLQL recursion in~\Cref{eq:lfa-cpd-hb-update}. The next lemma summarizes this conditional-mean relation.
\begin{lemma}
\label{lem:sampled-lfa-hb-unbiased}
The following equation holds:
\begin{equation}
  \mathbb E[\theta_{k+1}-\theta_k\mid \theta_k,\theta_{k-1}]
  =\alpha G\{F(\Phi\theta_k)-\Phi\theta_k\}
  +\alpha\eta(\theta_k-\theta_{k-1}).
  \label{eq:sampled-lfa-hb-conditional-mean}
\end{equation}
This is the sampled conditional-mean counterpart of
\Cref{eq:lfa-cpd-hb-update}.
\end{lemma}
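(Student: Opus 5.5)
The plan mirrors the proof of \Cref{lem:sampled-hb-unbiased}, with the residual vector replaced by its image under $\Phi^\top$ and the tabular constant vector $\mathbf 1$ replaced by $c$. Condition on $(\theta_k,\theta_{k-1})$. Under this conditioning, $\theta_k$ and $\theta_{k-1}$ are deterministic, so the momentum term $\alpha\eta(\theta_k-\theta_{k-1})$ passes through the conditional expectation unchanged. It therefore suffices to show
\begin{equation*}
  \mathbb E\left[\Delta_k\left(\Phi^\top e_{i_k}+\delta c-\Phi^\top e_{j_k}\right)\mid\theta_k,\theta_{k-1}\right]
  =G\{F(\Phi\theta_k)-\Phi\theta_k\}.
\end{equation*}

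The first step computes the two basic sampling identities. Write $r_k:=F(\Phi\theta_k)-\Phi\theta_k\in\R^n$. Condition first on $i_k=(s,a)$ and average over $s'_k\sim P(\cdot\mid s,a)$. The reward term gives $R(s,a)$, and the next-state term gives $\gamma\sum_{s'}P(s'\mid s,a)\max_b(\Phi\theta_k)(s',b)=\gamma(PV_{\Phi\theta_k})(s,a)$. Hence $\mathbb E[\Delta_k\mid\theta_k,i_k]=e_{i_k}^\top r_k$. Averaging over $i_k\sim d$ then gives
\begin{equation*}
  \mathbb E[\Delta_k e_{i_k}\mid\theta_k]=Dr_k,
  \qquad
  \mathbb E[\Delta_k\mid\theta_k]=d^\top r_k .
\end{equation*}
Multiplying the first identity by the constant matrix $\Phi^\top$ and the second by the constant vector $\delta c$ handles two of the three terms.

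The second step handles the cross term $\Delta_k\Phi^\top e_{j_k}$. This is the only place where independence is used. Since $j_k\sim d$ is drawn independently of $i_k$, $s'_k$, and the past, and is independent of $\theta_k$ given the conditioning, the expectation factorizes:
\begin{equation*}
  \mathbb E[\Delta_k\Phi^\top e_{j_k}\mid\theta_k]=\Phi^\top\mathbb E[e_{j_k}]\,\mathbb E[\Delta_k\mid\theta_k]=\Phi^\top d\,d^\top r_k .
\end{equation*}
Summing the three contributions gives
\begin{equation*}
  \left[\Phi^\top D+\delta c d^\top-\Phi^\top dd^\top\right]r_k .
\end{equation*}
Since $D\mathbf 1=d$, we have $\Phi^\top dd^\top=\Phi^\top D\mathbf 1d^\top$. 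The bracket therefore equals $\Phi^\top D+(\delta c-\Phi^\top D\mathbf 1)d^\top=G$ by \Cref{eq:lfa-constant-preserving-map}. Multiplying by $\alpha$ and adding back the momentum term yields \Cref{eq:sampled-lfa-hb-conditional-mean}.

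There is no substantive obstacle; the argument is a direct computation. The one point needing care is the measurability and independence bookkeeping. The max in $\Delta_k$ is evaluated at $\theta_k$, which is fixed under the conditioning, so the nonlinearity $F$ causes no difficulty. The factorization of the cross term also requires $j_k$ to be independent of $(i_k,s'_k)$, which the stated sampling scheme guarantees.
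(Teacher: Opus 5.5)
Your proposal is correct and follows essentially the same route as the paper: the same three conditional sampling identities, factorization of the $j_k$ cross term by independence, and the identification $\Phi^\top d=\Phi^\top D\mathbf 1$ to recover $G$ from \Cref{eq:lfa-constant-preserving-map}. You also spell out the derivation of $\mathbb E[\Delta_k\mid\theta_k,i_k]=e_{i_k}^\top\{F(\Phi\theta_k)-\Phi\theta_k\}$ and explain why the momentum term passes through the conditional expectation, both of which the paper leaves implicit.
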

\begin{proof}
The conditional identities are
\begin{align*}
  \mathbb E[\Delta_k\Phi^\top e_{i_k}\mid \theta_k]
  &=\Phi^\top D\{F(\Phi\theta_k)-\Phi\theta_k\},\\
  \mathbb E[\Delta_k\Phi^\top e_{j_k}\mid \theta_k]
  &=\bigl(d^\top\{F(\Phi\theta_k)-\Phi\theta_k\}\bigr)\Phi^\top D\mathbf{1},\\
  \mathbb E[\Delta_k\mid \theta_k]
  &=d^\top\{F(\Phi\theta_k)-\Phi\theta_k\}.
\end{align*}
Therefore
\begin{align*}
  &\mathbb E\left[
    \Delta_k\left(\Phi^\top e_{i_k}+\delta c-\Phi^\top e_{j_k}\right)
    \mid \theta_k
  \right] \\
  &\quad=\Phi^\top D\{F(\Phi\theta_k)-\Phi\theta_k\}
  +\left(\delta c-\Phi^\top D\mathbf{1}\right)
    d^\top\{F(\Phi\theta_k)-\Phi\theta_k\} \\
  &\quad=G\{F(\Phi\theta_k)-\Phi\theta_k\}.
\end{align*}
Taking the conditional mean of the update in~\Cref{alg:sampled-lfa-hb-update} gives the second claim.
\end{proof}
For the stochastic convergence proof, define the filtration by
\[
\mathcal F_0:=\sigma(\theta_{-1},\theta_0),
\qquad
\mathcal F_k:=\sigma\left(\theta_{-1},\theta_0,
\{(i_t,j_t,s'_t):0\le t\le k-1\}\right),
\quad k\ge1 .
\]
For the samples used at time $k$, set
\begin{equation}
\widehat g_k(\theta)
:=
\left(
  r(s_k,a_k,s'_k)
  +\gamma\max_{b\in\mathcal A}(\Phi\theta)(s'_k,b)
  -(\Phi\theta)(s_k,a_k)
\right)
\left(\Phi^\top e_{i_k}+\delta c-\Phi^\top e_{j_k}\right).
\label{eq:sampled-lfa-hb-sample-map}
\end{equation}
Then \Cref{alg:sampled-lfa-hb-update} can be written as
\begin{equation}
  \theta_{k+1}
  =\theta_k+\alpha\widehat g_k(\theta_k)
  +\alpha\eta(\theta_k-\theta_{k-1}).
  \label{eq:sampled-lfa-hb-vector-update}
\end{equation}
Define the martingale-difference term
\begin{equation}
  w_k
  :=\widehat g_k(\theta_k)-G\{F(\Phi\theta_k)-\Phi\theta_k\}.
  \label{eq:sampled-lfa-hb-noise}
\end{equation}
By \Cref{lem:sampled-lfa-hb-unbiased},
$\E[w_k\mid\mathcal F_k]=0$.

\begin{lemma}
\label{lem:sampled-lfa-hb-error-system}
Assume that \Cref{ass:lfa-cpd-unique-fixed-point} holds, and let $\theta^\star$ be the unique fixed point of the map in~\Cref{eq:lfa-cpd-standard-map}.  Define
\begin{equation*}
  x_k:=
  \begin{bmatrix}
    \theta_k-\theta^\star\\
    \theta_{k-1}-\theta^\star
  \end{bmatrix},
  \qquad
  \xi_k:=
  \begin{bmatrix}
    w_k\\0
  \end{bmatrix}.
\end{equation*}
Then, for each $k\ge0$, there exists an $\mathcal F_k$-measurable stochastic policy $\mu_k$ such that
\begin{equation}
  x_{k+1}=A_{\mu_k}^{\rm HBCLQL}x_k+\alpha\xi_k,
  \qquad
  \E[\xi_k\mid\mathcal F_k]=0 .
  \label{eq:sampled-lfa-hb-error-system}
\end{equation}
\end{lemma}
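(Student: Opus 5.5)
We start from the vector form of the sampled update in \Cref{eq:sampled-lfa-hb-vector-update} and split the sampled direction into its conditional mean and the noise from \Cref{eq:sampled-lfa-hb-noise}. This gives
\begin{equation*}
  \theta_{k+1}
  =\theta_k+\alpha G\{F(\Phi\theta_k)-\Phi\theta_k\}
  +\alpha\eta(\theta_k-\theta_{k-1})+\alpha w_k .
\end{equation*}
Next we subtract $\theta^\star$ from both sides. By \Cref{lem:lfa-cpd-fixed-point-equation}, $\theta^\star$ satisfies $G\{F(\Phi\theta^\star)-\Phi\theta^\star\}=0$, so we may insert this zero term. The deterministic part then becomes
\begin{equation*}
  (\theta_k-\theta^\star)+\alpha G\{F(\Phi\theta_k)-F(\Phi\theta^\star)-\Phi(\theta_k-\theta^\star)\}
  +\alpha\eta\{(\theta_k-\theta^\star)-(\theta_{k-1}-\theta^\star)\}.
\end{equation*}

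To linearize the Bellman difference, we apply \Cref{lem:bellman_difference_selector} with $Q=\Phi\theta_k$ and $\bar Q=\Phi\theta^\star$. This gives a stochastic policy $\mu_k$ with
\begin{equation*}
  F(\Phi\theta_k)-F(\Phi\theta^\star)=\gamma P\Pi^{\mu_k}\Phi(\theta_k-\theta^\star).
\end{equation*}
The first two terms above then collapse to $A_{\mu_k}^{\rm CLQL}(\theta_k-\theta^\star)$, exactly as in \Cref{lem:lfa-cpd-difference-system}. Adding the momentum term and stacking with the trivial identity $\theta_k-\theta^\star=\theta_k-\theta^\star$ yields $x_{k+1}=A_{\mu_k}^{\rm HBCLQL}x_k+\alpha\xi_k$ with $\xi_k=[w_k^\top\ 0]^\top$. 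This is the same computation as in \Cref{lem:lfa-cpd-hb-sls}, now with an additive noise term.

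\textbf{Measurability of $\mu_k$.} This step needs care, because the selector lemma only asserts existence. I would use the explicit construction: for each state $s$, the selector depends only on the two vectors $\Phi\theta_k$ and $\Phi\theta^\star$.
\begin{itemize}[label={}]
  \item If $V_{\Phi\theta_k}(s)\ge V_{\Phi\theta^\star}(s)$, place mass on a mixture of the greedy actions of the two vectors.
  \item Otherwise, use the symmetric rule.
\end{itemize}
In either case the resulting $\mu_k(s)$ is determined by the scalar interpolation between $(\Phi\theta_k)(s,\pi_{\Phi\theta_k}(s))$ and $(\Phi\theta_k)(s,\pi_{\Phi\theta^\star}(s))$, with a fixed deterministic tie-breaking rule. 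The map $\theta_k\mapsto\mu_k$ is therefore a Borel function of $\theta_k$, since $\theta^\star$ is deterministic. Because $\theta_k$ is $\mathcal F_k$-measurable (it is built from $\theta_{-1},\theta_0$ and samples with index up to $k-1$), $\mu_k$ is $\mathcal F_k$-measurable. If the cited construction is not explicit, I would simply define $\mu_k$ by such a measurable rule and verify the identity $V_Q-V_{\bar Q}=\Pi^{\mu}(Q-\bar Q)$ statewise by the intermediate-value argument.

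\textbf{Martingale property.} Since $\theta_k$ and $\theta_{k-1}$ are $\mathcal F_k$-measurable, and the fresh samples $(i_k,j_k,s'_k)$ are independent of $\mathcal F_k$, the computation in \Cref{lem:sampled-lfa-hb-unbiased} holds conditionally on $\mathcal F_k$. It gives $\E[\widehat g_k(\theta_k)\mid\mathcal F_k]=G\{F(\Phi\theta_k)-\Phi\theta_k\}$, hence $\E[w_k\mid\mathcal F_k]=0$ and therefore $\E[\xi_k\mid\mathcal F_k]=0$.

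The main obstacle is the measurability of the selector; the algebra is otherwise routine.
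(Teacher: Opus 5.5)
Your proposal is correct and follows the paper's proof step for step. You split the sampled direction into $G\{F(\Phi\theta_k)-\Phi\theta_k\}+w_k$, subtract the corrected fixed-point equation, use the Bellman-difference selector to obtain $A_{\mu_k}^{\rm CLQL}$, stack with the momentum term, and get $\E[w_k\mid\mathcal F_k]=0$ from the conditional unbiasedness computation.

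Your explicit measurable-selector construction usefully fills in a point the paper only asserts. One small fix: the weight at state $s$ should be chosen so that it reproduces $V_{\Phi\theta_k}(s)-V_{\Phi\theta^\star}(s)$. That means interpolating between the differences $(\Phi\theta_k-\Phi\theta^\star)(s,\cdot)$ at the two greedy actions, not between values of $\Phi\theta_k$ alone. The resulting weight is still a Borel function of $\theta_k$ because $\theta^\star$ is deterministic, so your measurability conclusion stands.
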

\begin{proof}
Using \Cref{eq:sampled-lfa-hb-noise}, the update in
\Cref{eq:sampled-lfa-hb-vector-update} becomes
\begin{equation*}
  \theta_{k+1}
  =\theta_k+\alpha G\{F(\Phi\theta_k)-\Phi\theta_k\}
  +\alpha\eta(\theta_k-\theta_{k-1})+\alpha w_k .
\end{equation*}
Subtracting $\theta^\star$ and using the fixed-point equation in
\Cref{lem:lfa-cpd-fixed-point-equation} gives the deterministic HBCLQL error
term plus $\alpha w_k$.  The deterministic term is represented by
\Cref{lem:lfa-cpd-hb-sls}; hence there is an $\mathcal F_k$-measurable Bellman-difference selector $\mu_k$ such that the stacked recursion is
\Cref{eq:sampled-lfa-hb-error-system}.  The martingale-difference identity follows from \Cref{eq:sampled-lfa-hb-noise} and
\Cref{lem:sampled-lfa-hb-unbiased}.
\end{proof}

Before deriving the finite-time error bound, we collect the product-growth and noise estimates needed for the stochastic HBCLQL recursion. For a step-size $\alpha$, write $\mathcal A_\alpha^{\rm HBCLQL}$ for the family in~\Cref{eq:lfa-cpd-hb-family} formed with that value of $\alpha$. For any $\beta\in(0,1)$, define
\begin{equation}
  K_{\beta,\alpha}
  :=
  \sup_{\ell\ge0}
  \beta^{-\ell}
  \sup_{M_0,\ldots,M_{\ell-1}\in\co(\mathcal A_\alpha^{\rm HBCLQL})}
  \left\|
  M_{\ell-1}\cdots M_0
  \right\|_2 .
  \label{eq:sampled-lfa-hb-Kbeta-def}
\end{equation}
The first lemma is the product-growth fact used whenever the JSR is strictly below one.
\begin{lemma}
\label{lem:sampled-lfa-hb-product-growth-finite}
Suppose that $\rho(\mathcal A_\alpha^{\rm HBCLQL})<1$ for a fixed $\alpha>0$. Fix $\varepsilon>0$ such that
\[
  \beta_\varepsilon:=\rho(\mathcal A_\alpha^{\rm HBCLQL})+\varepsilon<1,
\]
and let $K_{\beta_\varepsilon,\alpha}$ be defined by~\Cref{eq:sampled-lfa-hb-Kbeta-def}. Then $K_{\beta_\varepsilon,\alpha}<\infty$.
\end{lemma}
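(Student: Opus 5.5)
The plan is to bound every product of convex-hull matrices uniformly by $\beta_\varepsilon^\ell$, using the JSR Lyapunov norm. Apply \Cref{lem:common_lyapunov_construction} to the finite family $\mathcal A_\alpha^{\rm HBCLQL}$, which lives in $\R^{2m\times 2m}$, with this $\varepsilon$. That lemma yields a norm $p_\varepsilon$ with three properties: $\|x\|_2\le p_\varepsilon(x)$, $p_\varepsilon(x)\le\sqrt{C_\varepsilon}\|x\|_2$, and $p_\varepsilon(A_ix)\le\beta_\varepsilon p_\varepsilon(x)$ for every generator $A_i$.

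First I extend the contraction from generators to the convex hull. Write $M=\sum_i\lambda_iA_i\in\co(\mathcal A_\alpha^{\rm HBCLQL})$. Convexity of $p_\varepsilon$, exactly as in the proof of \Cref{lem:basic_jsr_convergence}, gives $p_\varepsilon(Mx)\le\beta_\varepsilon p_\varepsilon(x)$ for all $x$.

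Next, iterating this along any word $M_0,\ldots,M_{\ell-1}$ from the convex hull gives
\[
p_\varepsilon(M_{\ell-1}\cdots M_0x)\le\beta_\varepsilon^\ell p_\varepsilon(x).
\]
Combining with the norm equivalence,
\[
\|M_{\ell-1}\cdots M_0x\|_2\le\beta_\varepsilon^\ell\sqrt{C_\varepsilon}\,\|x\|_2 .
\]
So each product has operator $2$-norm at most $\sqrt{C_\varepsilon}\,\beta_\varepsilon^\ell$. This bound is uniform in the word and in $\ell$; the case $\ell=0$ is the empty product $I$, which has norm $1\le\sqrt{C_\varepsilon}$ since $C_\varepsilon\ge1$. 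Multiplying by $\beta_\varepsilon^{-\ell}$ and taking both suprema in \Cref{eq:sampled-lfa-hb-Kbeta-def} gives $K_{\beta_\varepsilon,\alpha}\le\sqrt{C_\varepsilon}<\infty$.

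The only point needing care is that $\beta_\varepsilon$ must lie in $(0,1)$, as required by \Cref{lem:common_lyapunov_construction}. The lower bound holds because $\rho\ge0$ and $\varepsilon>0$; the upper bound is assumed. The finiteness of $V_\varepsilon^\infty$, and hence of $C_\varepsilon$, also rests on $\beta_\varepsilon>\rho$: by the definition of the JSR, $\max_\sigma\|A_\sigma\|\le c'(\rho+\varepsilon/2)^k$, so the series defining $V_\varepsilon^\infty$ converges geometrically. Since that lemma is assumed, there is no real obstacle here.
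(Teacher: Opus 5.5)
Your argument is correct, but it takes a different route from the paper. The paper first invokes the convex-hull invariance $\rho(\co(\mathcal A_\alpha^{\rm HBCLQL}))=\rho(\mathcal A_\alpha^{\rm HBCLQL})$ from \Cref{lem:convex-hull-jsr}. It then uses the limit definition of the JSR to get a bound $C\eta^\ell$, with $\eta\in(\rho,\beta_\varepsilon)$, on every length-$\ell$ convex-hull product, so that $\beta_\varepsilon^{-\ell}C\eta^\ell\le C$.

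You instead build the JSR Lyapunov norm $p_\varepsilon$ of \Cref{lem:common_lyapunov_construction} for the finite generator family. You then extend the contraction to the convex hull by convexity of $p_\varepsilon$, exactly as in the proof of \Cref{lem:basic_jsr_convergence}. This bypasses \Cref{lem:convex-hull-jsr}; in effect you reprove inline its only nontrivial direction, $\rho(\co(\mathcal H))\le\rho(\mathcal H)$. It also yields the explicit bound $K_{\beta_\varepsilon,\alpha}\le\sqrt{C_\varepsilon}$, tied to the same Lyapunov constant used elsewhere in the paper.

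The paper's version is shorter. It needs only the JSR definition plus a cited invariance fact, and it leaves the constant $C$ unspecified. Yours rests on the (also cited) Lyapunov construction, but it is self-contained given the lemmas stated in the main text. It also makes explicit both the $\ell=0$ case and the requirement $\beta_\varepsilon\in(0,1)$ needed to invoke that construction.
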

\begin{proof}
By the convex-hull invariance of the JSR in~\Cref{lem:convex-hull-jsr},
\[
  \rho(\co(\mathcal A_\alpha^{\rm HBCLQL}))
  =\rho(\mathcal A_\alpha^{\rm HBCLQL}) .
\]
Since $\beta_\varepsilon>\rho(\co(\mathcal A_\alpha^{\rm HBCLQL}))$, the definition of the JSR gives constants $C<\infty$ and $\eta\in(\rho(\co(\mathcal A_\alpha^{\rm HBCLQL})),\beta_\varepsilon)$ such that every length-$\ell$ product generated by $\co(\mathcal A_\alpha^{\rm HBCLQL})$ has Euclidean norm at most $C\eta^\ell$. Hence
\[
  \beta_\varepsilon^{-\ell}
  \sup_{M_0,\ldots,M_{\ell-1}\in\co(\mathcal A_\alpha^{\rm HBCLQL})}
  \left\|
  M_{\ell-1}\cdots M_0
  \right\|_2
  \le
  C\left(\frac{\eta}{\beta_\varepsilon}\right)^\ell
  \le C
\]
for all $\ell\ge0$. Taking the supremum over $\ell$ proves the claim.
\end{proof}

The next two lemmas make the product-growth constants $K_{\beta,\alpha}$ in~\Cref{eq:sampled-lfa-hb-Kbeta-def} uniform for small step-sizes $\alpha$ under the CLQL stability condition.
\begin{assumption}
\label{ass:sampled-lfa-hb-fixed-data-alpha}
The MDP transition kernel, reward function, discount factor, feature matrix, sampling distribution, correction direction, correction weight, and momentum gain are fixed independently of the step-size. Equivalently, $P$, $R$, $\gamma$, $\Phi$, $D$, $c$, $\delta$, and $\eta$ do not depend on $\alpha$.
\end{assumption}

Under~\Cref{ass:sampled-lfa-hb-fixed-data-alpha}, the dependence of each deterministic mode on the step-size is affine.  For each deterministic policy \(\pi\), define
\[
  B_\pi:=G(\gamma P\Pi^\pi-I)\Phi,
\]
which is independent of \(\alpha\).  Then the deterministic CLQL mode can be written as
\begin{equation}
  A_\pi^{\rm CLQL}
  =I_m+\alpha B_\pi .
  \label{eq:sampled-lfa-hb-fixed-data-clql-mode}
\end{equation}
Therefore the augmented heavy-ball mode has the affine form
\begin{equation}
\begin{aligned}
  A_\pi^{\rm HBCLQL}
  &=
  \begin{bmatrix}
    I_m+\alpha B_\pi+\alpha\eta I_m&-\alpha\eta I_m\\
    I_m&0
  \end{bmatrix} \\
  &=
  P_0
  +
  \alpha
  \begin{bmatrix}
    B_\pi+\eta I_m&-\eta I_m\\
    0&0
  \end{bmatrix},
\end{aligned}
  \label{eq:sampled-lfa-hb-fixed-data-mode}
\end{equation}
where
\[
  P_0:=
  \begin{bmatrix}
    I_m&0\\
    I_m&0
  \end{bmatrix}
\]
is the common zero-step-size projection.

\begin{assumption}
\label{ass:sampled-lfa-hb-clql-stability}
There exists \(\bar\alpha>0\) such that the underlying CLQL family at that step-size satisfies
\begin{equation}
  \rho(\mathcal A_{\bar\alpha}^{\rm CLQL})<1,
  \qquad
  \mathcal A_{\bar\alpha}^{\rm CLQL}
  :=\{I_m+\bar\alpha B_\pi:\pi\in\Theta\}.
  \label{eq:sampled-lfa-hb-clql-jsr-condition}
\end{equation}
\end{assumption}

The next lemma replaces the compatible-norm assumption used by the constant-momentum form. Because the zero-step-size matrix is now the projection \(P_0\), a suitable norm can be built directly from a JSR Lyapunov norm for the underlying CLQL family.
\begin{lemma}
\label{lem:sampled-lfa-hb-fixed-data-interpolation}
Suppose that~\Cref{ass:sampled-lfa-hb-fixed-data-alpha,ass:sampled-lfa-hb-clql-stability} hold. Then there exist a norm \(p\) on \(\R^{2m}\), constants \(c>0\) and \(\bar\alpha'\in(0,\bar\alpha]\), all independent of the step-size \(\alpha\), such that for every \(0<\alpha\le\bar\alpha'\),
\begin{equation}
  p(Mx)
  \le
  (1-c\alpha)p(x),
  \qquad
  \forall x\in\R^{2m},
  \quad
  \forall M\in\co(\mathcal A_\alpha^{\rm HBCLQL}).
  \label{eq:sampled-lfa-hb-fixed-data-interpolation}
\end{equation}
\end{lemma}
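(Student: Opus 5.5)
The plan is to change to position--velocity coordinates, in which the heavy-ball mode becomes a small perturbation of the CLQL mode. There I will build a weighted max-norm from the JSR Lyapunov norm of the CLQL family at step-size $\bar\alpha$, and then pull that norm back to the original coordinates. The main difficulty is that at $\alpha=0$ the augmented mode is the projection $P_0$, which is not a contraction in any norm. The velocity coordinate removes this: it kills the $P_0$ part exactly, so that every remaining term is $O(\alpha)$.

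\textbf{Step 1: change of coordinates.} Let $T:=\begin{bmatrix}I_m&0\\ I_m&-I_m\end{bmatrix}$, which maps $(x_1,x_2)$ to $(u,v):=(x_1,x_1-x_2)$. $T$ is nonsingular and does not depend on $\alpha$. For a stochastic policy $\mu$, write $B_\mu:=G(\gamma P\Pi^\mu-I)\Phi$; by \Cref{lem:bellman_difference_selector} this is a convex combination of the $B_\pi$. Every $M\in\co(\mathcal A_\alpha^{\rm HBCLQL})$ has the form \Cref{eq:sampled-lfa-hb-fixed-data-mode} with $B_\pi$ replaced by some $B_\mu$ of this kind. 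A direct computation from \Cref{eq:sampled-lfa-hb-fixed-data-mode} gives
\begin{equation*}
  TMT^{-1}=\begin{bmatrix}I_m+\alpha B_\mu&\alpha\eta I_m\\ \alpha B_\mu&\alpha\eta I_m\end{bmatrix}.
\end{equation*}
This is the recursion $u_+=(I+\alpha B_\mu)u+\alpha\eta v$, $v_+=\alpha B_\mu u+\alpha\eta v$.

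\textbf{Step 2: interpolation for the CLQL block.} By \Cref{ass:sampled-lfa-hb-clql-stability}, fix $\epsilon$ with $\beta:=\rho(\mathcal A_{\bar\alpha}^{\rm CLQL})+\epsilon<1$. Let $q$ be the JSR Lyapunov norm of \Cref{lem:common_lyapunov_construction} for $\mathcal A_{\bar\alpha}^{\rm CLQL}$. By \Cref{lem:basic_jsr_convergence}, $q((I+\bar\alpha B_\mu)u)\le\beta q(u)$ for every stochastic $\mu$. For $0<\alpha\le\bar\alpha$ we write
\begin{equation*}
  I+\alpha B_\mu=(1-\alpha/\bar\alpha)I+(\alpha/\bar\alpha)(I+\bar\alpha B_\mu).
\end{equation*}
The triangle inequality then gives $q((I+\alpha B_\mu)u)\le(1-c_0\alpha)q(u)$ with $c_0:=(1-\beta)/\bar\alpha$. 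Let $L:=\max_\pi\sup_{q(u)=1}q(B_\pi u)$. By convexity, $q(B_\mu u)\le Lq(u)$ for every stochastic $\mu$.

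\textbf{Step 3: weighted max-norm.} Define $\tilde p(u,v):=\max\{q(u),\kappa q(v)\}$ with $\kappa:=2\eta/c_0$; if $\eta=0$, take $\kappa=1$. The first block satisfies
\begin{equation*}
  q(u_+)\le(1-c_0\alpha)q(u)+\alpha\eta q(v)\le(1-c_0\alpha+\alpha\eta/\kappa)\tilde p\le(1-c_0\alpha/2)\tilde p.
\end{equation*}
The second block satisfies
\begin{equation*}
  \kappa q(v_+)\le\kappa\alpha(Lq(u)+\eta q(v))\le\alpha(\kappa L+\eta)\tilde p.
\end{equation*}
Choose $\bar\alpha'\le\bar\alpha$ so that $\bar\alpha'(\kappa L+\eta+c_0/2)\le1$. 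Then $\alpha(\kappa L+\eta)\le1-c_0\alpha/2$ for all $\alpha\le\bar\alpha'$, and hence $\tilde p(TMT^{-1}y)\le(1-c\alpha)\tilde p(y)$ with $c:=c_0/2$.

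\textbf{Step 4: conclusion.} Set $p(x):=\tilde p(Tx)$. This is a norm because $T$ is nonsingular, and it satisfies
\begin{equation*}
  p(Mx)=\tilde p(TMT^{-1}Tx)\le(1-c\alpha)p(x),
\end{equation*}
which is \Cref{eq:sampled-lfa-hb-fixed-data-interpolation}. By \Cref{ass:sampled-lfa-hb-fixed-data-alpha}, the quantities $q,L,c_0,\kappa,c,\bar\alpha'$ depend only on the fixed data and on $\bar\alpha$, not on $\alpha$, which gives the required uniformity.
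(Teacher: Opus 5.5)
Your proof is correct and follows the paper's argument essentially step for step: the same position--velocity coordinates $(u,u-v)$, the same convex interpolation $I+\alpha B=(1-t)I+t(I+\bar\alpha B)$ applied to the JSR Lyapunov norm $q$ of $\mathcal A_{\bar\alpha}^{\rm CLQL}$, and the same $q$-operator-norm bound on $B$. The only difference is cosmetic: the paper combines the two blocks with a weighted sum $q(r)+\omega q(s)$ where $\omega L_q\le a/2$, while your weighted max $\max\{q(u),\kappa q(v)\}$ with $\kappa=2\eta/c_0$ does the same bookkeeping. Your first-block estimate needs $1-c_0\alpha\ge 0$; this holds because $1-c_0\alpha\ge\beta>0$ for $\alpha\le\bar\alpha$, and it is worth stating explicitly.
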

\begin{proof}
Let
\[
  \mathcal B:=\co\{B_\pi:\pi\in\Theta\}.
\]
By~\Cref{ass:sampled-lfa-hb-clql-stability}, the finite family
\(\mathcal A_{\bar\alpha}^{\rm CLQL}\) has JSR strictly less than one. Choose
\(\varepsilon>0\) such that
\[
  \bar\beta:=\rho(\mathcal A_{\bar\alpha}^{\rm CLQL})+\varepsilon<1.
\]
Let \(q\) be the JSR Lyapunov norm from~\Cref{lem:common_lyapunov_construction} applied to \(\mathcal A_{\bar\alpha}^{\rm CLQL}\). By convexity of \(q\),
\begin{equation}
  q((I_m+\bar\alpha B)z)
  \le
  \bar\beta q(z),
  \qquad
  \forall z\in\R^m,
  \quad
  \forall B\in\mathcal B .
  \label{eq:sampled-lfa-hb-q-contraction-baralpha}
\end{equation}
Consequently, for \(0<\alpha\le\bar\alpha\),
\begin{equation}
  q((I_m+\alpha B)z)
  \le
  (1-a\alpha)q(z),
  \qquad
  a:=\frac{1-\bar\beta}{\bar\alpha}>0,
  \label{eq:sampled-lfa-hb-q-contraction-alpha}
\end{equation}
again for every \(B\in\mathcal B\). Indeed, set \(t:=\alpha/\bar\alpha\in(0,1]\). Then
\begin{equation*}
  I_m+\alpha B
  =(1-t)I_m+t(I_m+\bar\alpha B),
\end{equation*}
and hence, for every \(z\in\R^m\),
\begin{align*}
  q((I_m+\alpha B)z)
  &=q\bigl((1-t)z+t(I_m+\bar\alpha B)z\bigr)\\
  &\le (1-t)q(z)+tq((I_m+\bar\alpha B)z)\\
  &\le (1-t)q(z)+t\bar\beta q(z)\\
  &=\bigl(1-t(1-\bar\beta)\bigr)q(z)\\
  &=\left(1-\frac{1-\bar\beta}{\bar\alpha}\alpha\right)q(z),
\end{align*}
which is exactly~\Cref{eq:sampled-lfa-hb-q-contraction-alpha}.

Let
\[
  L_q:=\sup_{B\in\mathcal B}\|B\|_q<\infty,
\]
where \(\|\cdot\|_q\) is the operator norm induced by \(q\). Choose \(\omega>0\) such that \(\omega L_q\le a/2\), and define, for \(x=(u,v)\in\R^m\times\R^m\),
\[
  r:=u,
  \qquad
  s:=u-v,
  \qquad
  p(u,v):=q(r)+\omega q(s).
\]
This is a norm on \(\R^{2m}\). Let \(M\in\co(\mathcal A_\alpha^{\rm HBCLQL})\). Then \(M\) has the same form as~\Cref{eq:sampled-lfa-hb-fixed-data-mode} with some \(B\in\mathcal B\). The affine step-size dependence gives the interpolation identity, now used inside the proof: if \(t:=\alpha/\bar\alpha\), then
\begin{equation*}
  \begin{bmatrix}
    I_m+\alpha B+\alpha\eta I_m&-\alpha\eta I_m\\
    I_m&0
  \end{bmatrix}
  =(1-t)P_0
  +t
  \begin{bmatrix}
    I_m+\bar\alpha B+\bar\alpha\eta I_m&-\bar\alpha\eta I_m\\
    I_m&0
  \end{bmatrix}.
\end{equation*}
If \((u^+,v^+)=M(u,v)\) and \(r^+=u^+\), \(s^+=u^+-v^+\), then
\begin{equation*}
  r^+=(I_m+\alpha B)r+\alpha\eta s,
  \qquad
  s^+=\alpha Br+\alpha\eta s.
\end{equation*}
Using~\Cref{eq:sampled-lfa-hb-q-contraction-alpha}, we obtain
\begin{align*}
  q(r^+)&\le (1-a\alpha)q(r)+\alpha\eta q(s),\\
  q(s^+)&\le \alpha L_q q(r)+\alpha\eta q(s).
\end{align*}
Therefore
\begin{equation*}
  p(u^+,v^+)
  \le
  \{1-(a-\omega L_q)\alpha\}q(r)
  +\alpha\eta(1+\omega)q(s).
\end{equation*}
Set \(c:=a/4\). Decrease \(\bar\alpha\), if necessary, to a number \(\bar\alpha'\in(0,\bar\alpha]\) such that
\[
  c\bar\alpha'\le\frac12,
  \qquad
  \bar\alpha'\eta(1+\omega)\le\frac{\omega}{2},
\]
with the second inequality interpreted as automatic when \(\eta=0\). Then, for every \(0<\alpha\le\bar\alpha'\),
\[
  1-(a-\omega L_q)\alpha\le 1-c\alpha,
  \qquad
  \alpha\eta(1+\omega)\le \omega(1-c\alpha).
\]
Combining these two inequalities gives
\[
  p(Mx)\le (1-c\alpha)p(x),
\]
which proves~\Cref{eq:sampled-lfa-hb-fixed-data-interpolation}.
\end{proof}

\begin{lemma}
\label{lem:sampled-lfa-hb-product-bound}
Suppose that~\Cref{ass:sampled-lfa-hb-fixed-data-alpha,ass:sampled-lfa-hb-clql-stability} hold. Then there exist constants $c>0$, $K<\infty$, and $\bar\alpha'>0$, independent of $\alpha$, with $c\bar\alpha'<1$, such that for every $0<\alpha\le\bar\alpha'$,
\[
  \rho(\mathcal A_\alpha^{\rm HBCLQL})
  \le
  1-c\alpha
  <1,
\]
and for every $\ell\ge0$,
\begin{equation}
  \sup_{M_0,\ldots,M_{\ell-1}\in\co(\mathcal A_\alpha^{\rm HBCLQL})}
  \left\|
  M_{\ell-1}\cdots M_0
  \right\|_2
  \le K(1-c\alpha)^\ell .
  \label{eq:sampled-lfa-hb-product-bound}
\end{equation}
\end{lemma}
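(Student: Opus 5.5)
The plan is to derive the statement directly from \Cref{lem:sampled-lfa-hb-fixed-data-interpolation}. That lemma gives a norm $p$ on $\R^{2m}$, a constant $c>0$, and a threshold $\bar\alpha'\in(0,\bar\alpha]$, none of which depend on $\alpha$. For every $0<\alpha\le\bar\alpha'$, each $M\in\co(\mathcal A_\alpha^{\rm HBCLQL})$ satisfies $p(Mx)\le(1-c\alpha)p(x)$. I will use these same $c$ and $\bar\alpha'$ in the statement. The proof of that lemma already arranged $c\bar\alpha'\le\frac12$, so $c\bar\alpha'<1$, and hence $0<1-c\alpha<1$ throughout the range $0<\alpha\le\bar\alpha'$.

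First, I will iterate the one-step contraction. For any $M_0,\ldots,M_{\ell-1}\in\co(\mathcal A_\alpha^{\rm HBCLQL})$, induction on $\ell$ gives
\[
p(M_{\ell-1}\cdots M_0x)\le(1-c\alpha)^\ell p(x),
\]
with the empty product ($\ell=0$) being trivial. To get back to the Euclidean norm, I will use that all norms on $\R^{2m}$ are equivalent: there are constants $0<\kappa_1\le\kappa_2$ with $\kappa_1\|x\|_2\le p(x)\le\kappa_2\|x\|_2$. Since $p$ was built from $q$ and $\omega$, neither of which depends on $\alpha$, these constants are independent of $\alpha$. It follows that
\[
\|M_{\ell-1}\cdots M_0x\|_2\le \frac{\kappa_2}{\kappa_1}(1-c\alpha)^\ell\|x\|_2 .
\]
Taking the supremum over unit vectors $x$ and over the factors $M_i$ gives \Cref{eq:sampled-lfa-hb-product-bound} with $K:=\kappa_2/\kappa_1$.

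For the JSR bound, the deterministic modes $A_\pi^{\rm HBCLQL}$ belong to $\co(\mathcal A_\alpha^{\rm HBCLQL})$. The product bound therefore applies to every word of length $\ell$, and
\[
\sup_{\text{words of length }\ell}\|A_{\pi_\ell}\cdots A_{\pi_1}\|_2^{1/\ell}\le K^{1/\ell}(1-c\alpha).
\]
Letting $\ell\to\infty$ in \Cref{def:jsr} yields $\rho(\mathcal A_\alpha^{\rm HBCLQL})\le 1-c\alpha<1$. Equivalently, one can note that the operator norm induced by $p$ is submultiplicative and use that the JSR does not depend on the choice of such a norm.

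The only step needing care is the uniformity of $K$ in $\alpha$. It rests on $p$ being fixed once and for all by the data in \Cref{ass:sampled-lfa-hb-fixed-data-alpha,ass:sampled-lfa-hb-clql-stability}, together with the choice of $\varepsilon$ and $\omega$, and not on $\alpha$. I will say this explicitly when defining $\kappa_1,\kappa_2$. Everything else is routine.
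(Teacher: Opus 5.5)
Your proposal is correct and follows essentially the same route as the paper. Both iterate the one-step contraction from \Cref{lem:sampled-lfa-hb-fixed-data-interpolation} in the $\alpha$-independent norm $p$, pass to the Euclidean norm with $\alpha$-independent equivalence constants to obtain $K$, and then read off the JSR bound. There are only two cosmetic differences: you take $c\bar\alpha'\le\frac12$ from the interpolation proof instead of shrinking $c$, and you bound the JSR directly through $\mathcal A_\alpha^{\rm HBCLQL}\subset\co(\mathcal A_\alpha^{\rm HBCLQL})$ rather than invoking the convex-hull invariance lemma.
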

\begin{proof}
Let \(p\), \(c\), and \(\bar\alpha'\) be the norm and constants from~\Cref{lem:sampled-lfa-hb-fixed-data-interpolation}.  Decrease \(c\) slightly, if necessary, so that \(c\bar\alpha'<1\) while preserving~\Cref{eq:sampled-lfa-hb-fixed-data-interpolation}.  Fix \(0<\alpha\le\bar\alpha'\), \(\ell\ge0\), and matrices
\[
  M_0,\ldots,M_{\ell-1}
  \in
  \co(\mathcal A_\alpha^{\rm HBCLQL}).
\]
For \(\ell=0\), the product below is the identity.  For \(\ell\ge1\), repeated application of~\Cref{eq:sampled-lfa-hb-fixed-data-interpolation} gives
\begin{equation*}
\begin{aligned}
  p(M_{\ell-1}\cdots M_0x)
  &\le (1-c\alpha)p(M_{\ell-2}\cdots M_0x) \\
  &\le (1-c\alpha)^2p(M_{\ell-3}\cdots M_0x) \\
  &\le \cdots \le (1-c\alpha)^\ell p(x),
  \qquad \forall x\in\R^{2m}.
\end{aligned}
\end{equation*}
Therefore
\begin{equation*}
\begin{aligned}
  \sup_{M_0,\ldots,M_{\ell-1}\in\co(\mathcal A_\alpha^{\rm HBCLQL})}
  \sup_{p(x)=1}
  p(M_{\ell-1}\cdots M_0x)
  \le
  (1-c\alpha)^\ell .
\end{aligned}
\end{equation*}
Taking \(\ell\)-th roots and passing to the limit in the JSR definition yields
\begin{equation*}
\begin{aligned}
  \rho(\co(\mathcal A_\alpha^{\rm HBCLQL}))
  &\le
  1-c\alpha .
\end{aligned}
\end{equation*}
The convex-hull invariance in~\Cref{lem:convex-hull-jsr} then gives
\begin{equation*}
\begin{aligned}
  \rho(\mathcal A_\alpha^{\rm HBCLQL})
  &=\rho(\co(\mathcal A_\alpha^{\rm HBCLQL})) \\
  &\le 1-c\alpha .
\end{aligned}
\end{equation*}
Since \(c\bar\alpha'<1\), we have \(1-c\alpha>0\) for every \(0<\alpha\le\bar\alpha'\), and the JSR bound is strict.

It remains to convert the \(p\)-norm product estimate into the Euclidean norm.  Since all norms on \(\R^{2m}\) are equivalent, there are constants \(a,b<\infty\), independent of \(\alpha\), such that
\begin{equation*}
  \|x\|_2\le a p(x),
  \qquad
  p(x)\le b\|x\|_2,
  \qquad
  \forall x\in\R^{2m}.
\end{equation*}
Consequently, for every product generated by \(\co(\mathcal A_\alpha^{\rm HBCLQL})\),
\begin{equation*}
\begin{aligned}
  \|M_{\ell-1}\cdots M_0x\|_2
  &\le a p(M_{\ell-1}\cdots M_0x) \\
  &\le a(1-c\alpha)^\ell p(x) \\
  &\le ab(1-c\alpha)^\ell\|x\|_2 .
\end{aligned}
\end{equation*}
Taking the supremum over \(\|x\|_2=1\) and setting \(K:=ab\) proves~\Cref{eq:sampled-lfa-hb-product-bound}.
\end{proof}

\begin{lemma}
\label{lem:sampled-lfa-hb-mode-difference}
Let
\begin{equation}
  L
  :=\sup_{\mu,\nu}
  \left\|G P(\Pi^\mu-\Pi^\nu)\Phi\right\|_2,
  \label{eq:sampled-lfa-hb-L-def}
\end{equation}
where the supremum is over all stochastic policies.  Then $L<\infty$ and, for
any stochastic policies $\mu$ and $\nu$,
\begin{equation}
  \|A_\mu^{\rm HBCLQL}-A_\nu^{\rm HBCLQL}\|_2
  \le \alpha\gamma L .
  \label{eq:sampled-lfa-hb-mode-difference}
\end{equation}
\end{lemma}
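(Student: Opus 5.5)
Two things need to be shown: that $L$ is finite, and that the difference of two augmented modes is controlled by the difference of their upper-left blocks. For finiteness, I will note that for any stochastic policy $\mu$ the matrix $\Pi^\mu$ has nonnegative entries. Each row of $\Pi^\mu$ sums to one, and each column has at most one nonzero entry, because column $(s,a)$ is nonzero only in row $s$. Hence $\|\Pi^\mu\|_2\le\sqrt{\|\Pi^\mu\|_1\|\Pi^\mu\|_\infty}\le 1$. Therefore
\[
\|GP(\Pi^\mu-\Pi^\nu)\Phi\|_2\le 2\|G\|_2\|P\|_2\|\Phi\|_2<\infty,
\]
since $G$, $P$, $\Phi$ are fixed finite matrices. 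Taking the supremum over $\mu,\nu$ gives $L\le 2\|G\|_2\|P\|_2\|\Phi\|_2<\infty$. Alternatively, one can use that $\{\Pi^\mu\}$ is a compact set: it is the convex hull of the finitely many deterministic $\Pi^\pi$, by the same kind of convex-combination argument as in~\Cref{lem:bellman_difference_selector}. Then $L$ is the supremum of a continuous function over a compact set.

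For the difference bound, I will subtract the two block matrices in~\Cref{eq:lfa-cpd-hb-mode}. The blocks $-\alpha\eta I_m$, $I_m$ and $0$ are identical for $\mu$ and $\nu$ and cancel. The only surviving block is the upper-left one, where the $\alpha\eta I_m$ terms also cancel. So
\[
A_\mu^{\rm HBCLQL}-A_\nu^{\rm HBCLQL}=\begin{bmatrix}A_\mu^{\rm CLQL}-A_\nu^{\rm CLQL}&0\\0&0\end{bmatrix}.
\]
By the definition of $A_\mu^{\rm CLQL}$ in~\Cref{lem:lfa-cpd-difference-system}, the identity and the $-\alpha G\Phi$ terms cancel, which leaves
\[
A_\mu^{\rm CLQL}-A_\nu^{\rm CLQL}=\alpha\gamma\, GP(\Pi^\mu-\Pi^\nu)\Phi .
\]
The spectral norm of a block matrix with a single nonzero block equals the norm of that block, since it acts on the first coordinate block and zeroes the rest. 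Therefore $\|A_\mu^{\rm HBCLQL}-A_\nu^{\rm HBCLQL}\|_2=\alpha\gamma\|GP(\Pi^\mu-\Pi^\nu)\Phi\|_2\le\alpha\gamma L$.

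There is no real obstacle here. The only step needing a little care is the finiteness of $L$, since the supremum ranges over infinitely many stochastic policies. The uniform bound $\|\Pi^\mu\|_2\le1$, or equivalently the compactness argument, settles this.
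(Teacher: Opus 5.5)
Your proposal is correct and follows the paper's proof. The same cancellation leaves the single upper-left block $\alpha\gamma GP(\Pi^\mu-\Pi^\nu)\Phi$, and its spectral norm controls the whole difference; for finiteness of $L$ the paper uses your alternative argument (compactness of the product of simplices plus continuity), while your primary bound $\|\Pi^\mu\|_2\le\sqrt{\|\Pi^\mu\|_1\|\Pi^\mu\|_\infty}\le 1$ is also valid and gives the explicit constant $L\le 2\|G\|_2\|P\|_2\|\Phi\|_2$.
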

\begin{proof}
By \Cref{eq:lfa-cpd-hb-mode}, the two modes differ only through the Bellman
policy matrix in the upper-left block:
\begin{equation*}
\begin{aligned}
  A_\mu^{\rm HBCLQL}-A_\nu^{\rm HBCLQL}
  &=
  \begin{bmatrix}
    A_\mu^{\rm CLQL}-A_\nu^{\rm CLQL}&0\\
    0&0
  \end{bmatrix} \\
  &=
  \begin{bmatrix}
    \alpha\gamma G P(\Pi^\mu-\Pi^\nu)\Phi&0\\
    0&0
  \end{bmatrix}.
\end{aligned}
\end{equation*}
For any $u,v\in\R^m$,
\begin{equation*}
  \left\|
  \begin{bmatrix}
    \alpha\gamma G P(\Pi^\mu-\Pi^\nu)\Phi&0\\
    0&0
  \end{bmatrix}
  \begin{bmatrix}u\\v\end{bmatrix}
  \right\|_2
  =
  \alpha\gamma\left\|G P(\Pi^\mu-\Pi^\nu)\Phi u\right\|_2 .
\end{equation*}
Hence
\begin{equation*}
\begin{aligned}
  \|A_\mu^{\rm HBCLQL}-A_\nu^{\rm HBCLQL}\|_2
  &=
  \alpha\gamma
  \sup_{(u,v)\ne0}
  \frac{\left\|G P(\Pi^\mu-\Pi^\nu)\Phi u\right\|_2}
       {(\|u\|_2^2+\|v\|_2^2)^{1/2}} \\
  &\le
  \alpha\gamma\left\|G P(\Pi^\mu-\Pi^\nu)\Phi\right\|_2
  \le \alpha\gamma L .
\end{aligned}
\end{equation*}
The supremum in \Cref{eq:sampled-lfa-hb-L-def} is finite because the stochastic
policy set is a finite product of probability simplices and is therefore
compact, while the map
$(\mu,\nu)\mapsto \|G P(\Pi^\mu-\Pi^\nu)\Phi\|_2$ is continuous.
\end{proof}

\begin{lemma}
\label{lem:sampled-lfa-hb-noise-growth}
There exist finite constants $\sigma_0,\sigma_1\ge0$, depending only on the MDP,
$\Phi$, $\delta$, $c$, and $\theta^\star$, such that the noise term in
\Cref{eq:sampled-lfa-hb-noise} satisfies
\begin{equation}
  \E[\|w_k\|_2^2\mid\mathcal F_k]
  \le \sigma_0^2+\sigma_1^2\|x_k\|_2^2,
  \qquad k\ge0 .
  \label{eq:sampled-lfa-hb-noise-growth}
\end{equation}
\end{lemma}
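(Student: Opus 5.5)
The plan is to bound the second moment of the sampled map by a quadratic function of $\|\theta_k-\theta^\star\|_2$, and then use the variance inequality $\E[\|w_k\|_2^2\mid\mathcal F_k]\le\E[\|\widehat g_k(\theta_k)\|_2^2\mid\mathcal F_k]$. This inequality holds because $w_k$ is $\widehat g_k(\theta_k)$ minus its conditional mean $G\{F(\Phi\theta_k)-\Phi\theta_k\}$, by \Cref{lem:sampled-lfa-hb-unbiased}. It therefore suffices to bound $\|\widehat g_k(\theta_k)\|_2^2$ almost surely by $\sigma_0^2+\sigma_1^2\|\theta_k-\theta^\star\|_2^2$. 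Since $\|\theta_k-\theta^\star\|_2\le\|x_k\|_2$, this gives \Cref{eq:sampled-lfa-hb-noise-growth}.

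First, bound the direction factor. For every sample, $\|\Phi^\top e_{i_k}+\delta c-\Phi^\top e_{j_k}\|_2\le 2\max_i\|\Phi^\top e_i\|_2+\delta\|c\|_2=:\kappa$, which is a deterministic constant. Second, bound the scalar TD error $\Delta_k$. Write $\theta_k=\theta^\star+e$ and split
\[
\Delta_k=\Bigl(r(s_k,a_k,s'_k)+\gamma\max_b(\Phi\theta^\star)(s'_k,b)-(\Phi\theta^\star)(s_k,a_k)\Bigr)+\Bigl(\gamma[\max_b(\Phi\theta_k)(s'_k,b)-\max_b(\Phi\theta^\star)(s'_k,b)]-(\Phi e)(s_k,a_k)\Bigr).
\]
The first bracket is bounded by $r_{\max}+(1+\gamma)\|\Phi\theta^\star\|_\infty=:b_0$, where $r_{\max}:=\max|r|$ is finite because the spaces are finite. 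For the second bracket, the max function is $1$-Lipschitz in $\|\cdot\|_\infty$, so it is at most $(1+\gamma)\|\Phi e\|_\infty\le 2\|\Phi\|_2\|e\|_2$.

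Hence $|\Delta_k|\le b_0+2\|\Phi\|_2\|\theta_k-\theta^\star\|_2$. Using $(x+y)^2\le 2x^2+2y^2$ gives
\[
\|\widehat g_k(\theta_k)\|_2^2\le 2\kappa^2b_0^2+8\kappa^2\|\Phi\|_2^2\|\theta_k-\theta^\star\|_2^2 .
\]
We therefore set $\sigma_0^2:=2\kappa^2b_0^2$ and $\sigma_1^2:=8\kappa^2\|\Phi\|_2^2$. These depend only on the MDP, $\Phi$, $\delta$, $c$, and $\theta^\star$, as required.

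There is no real obstacle. The only point needing care is to use the centered variance inequality rather than the triangle inequality, which avoids bounding the mean term separately. If one prefers the triangle inequality, the mean term $G\{F(\Phi\theta_k)-\Phi\theta_k\}$ can be handled in the same way. Since $G\{F(\Phi\theta^\star)-\Phi\theta^\star\}=0$, \Cref{lem:bellman_difference_selector} shows that this term equals $G(\gamma P\Pi^{\mu}-I)\Phi(\theta_k-\theta^\star)$, which is linearly bounded; this only changes the constants.
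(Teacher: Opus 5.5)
Your proof is correct and follows essentially the same route as the paper: an almost-sure affine bound on $|\Delta_k|$ multiplied by a uniform bound on the direction factor $\Phi^\top e_{i_k}+\delta c-\Phi^\top e_{j_k}$, after which the mean is removed to pass from $\widehat g_k(\theta_k)$ to $w_k$. The differences are minor. First, you use the conditional-variance inequality $\E[\|w_k\|_2^2\mid\mathcal F_k]\le\E[\|\widehat g_k(\theta_k)\|_2^2\mid\mathcal F_k]$, whereas the paper uses the triangle inequality plus Jensen on the mean term, which costs it an extra factor of $4$. Second, you center $\Delta_k$ at $\theta^\star$ via the $1$-Lipschitz property of the max, whereas the paper bounds $|\Delta_k|$ in terms of $\|\theta_k\|_2$ and then uses $\|\theta_k\|_2\le\|\theta^\star\|_2+\|x_k\|_2$.
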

\begin{proof}
Because the state and action spaces are finite, the three maxima
\begin{equation*}
  \max_{i,j}\|\Phi^\top e_i+\delta c-\Phi^\top e_j\|_2,
  \qquad
  \max_i\|\Phi^\top e_i\|_2,
  \qquad
  \max_{s,a,s'}|r(s,a,s')|
\end{equation*}
are finite.  For every realization of the sample at time $k$,
\begin{equation*}
\begin{aligned}
  |\Delta_k|
  &\le
  |r(s_k,a_k,s'_k)|
  +\gamma\max_{b\in\mathcal A}|(\Phi\theta_k)(s'_k,b)|
  +|(\Phi\theta_k)(s_k,a_k)| \\
  &\le
  \max_{s,a,s'}|r(s,a,s')|
  +\gamma\max_i\|\Phi^\top e_i\|_2\|\theta_k\|_2
  +\max_i\|\Phi^\top e_i\|_2\|\theta_k\|_2 \\
  &=
  \max_{s,a,s'}|r(s,a,s')|
  +(1+\gamma)\max_i\|\Phi^\top e_i\|_2\|\theta_k\|_2 .
\end{aligned}
\end{equation*}
Moreover,
\begin{equation*}
  \|\Phi^\top e_{i_k}+\delta c-\Phi^\top e_{j_k}\|_2
  \le
  \max_{i,j}\|\Phi^\top e_i+\delta c-\Phi^\top e_j\|_2 .
\end{equation*}
Therefore, by the definition of $\widehat g_k$ in
\Cref{eq:sampled-lfa-hb-sample-map},
\begin{equation*}
\begin{aligned}
  \|\widehat g_k(\theta_k)\|_2
  &\le
  \left(\max_{i,j}\|\Phi^\top e_i+\delta c-\Phi^\top e_j\|_2\right) \\
  &\quad\times
  \left(
  \max_{s,a,s'}|r(s,a,s')|
  +(1+\gamma)\max_i\|\Phi^\top e_i\|_2\|\theta_k\|_2
  \right).
\end{aligned}
\end{equation*}
Since
\begin{equation*}
  G\{F(\Phi\theta_k)-\Phi\theta_k\}
  =\E[\widehat g_k(\theta_k)\mid\mathcal F_k],
\end{equation*}
Jensen's inequality gives
\begin{equation*}
\begin{aligned}
  \|G\{F(\Phi\theta_k)-\Phi\theta_k\}\|_2
  &\le
  \E[\|\widehat g_k(\theta_k)\|_2\mid\mathcal F_k] \\
  &\le
  \left(\max_{i,j}\|\Phi^\top e_i+\delta c-\Phi^\top e_j\|_2\right) \\
  &\quad\times
  \left(
  \max_{s,a,s'}|r(s,a,s')|
  +(1+\gamma)\max_i\|\Phi^\top e_i\|_2\|\theta_k\|_2
  \right).
\end{aligned}
\end{equation*}
Combining the last two bounds with the definition of $w_k$ yields
\begin{equation*}
\begin{aligned}
  \|w_k\|_2
  &\le
  2\left(\max_{i,j}\|\Phi^\top e_i+\delta c-\Phi^\top e_j\|_2\right) \\
  &\quad\times
  \left(
  \max_{s,a,s'}|r(s,a,s')|
  +(1+\gamma)\max_i\|\Phi^\top e_i\|_2\|\theta_k\|_2
  \right).
\end{aligned}
\end{equation*}
Also,
\begin{equation*}
  \|\theta_k\|_2
  \le
  \|\theta^\star\|_2+\|\theta_k-\theta^\star\|_2
  \le
  \|\theta^\star\|_2+\|x_k\|_2 .
\end{equation*}
Substituting this bound gives
\begin{equation*}
\begin{aligned}
  \|w_k\|_2
  &\le
  2\left(\max_{i,j}\|\Phi^\top e_i+\delta c-\Phi^\top e_j\|_2\right) \\
  &\quad\times
  \left(
  \max_{s,a,s'}|r(s,a,s')|
  +(1+\gamma)\max_i\|\Phi^\top e_i\|_2\|\theta^\star\|_2
  \right) \\
  &\quad+
  2\left(\max_{i,j}\|\Phi^\top e_i+\delta c-\Phi^\top e_j\|_2\right)
  (1+\gamma)\max_i\|\Phi^\top e_i\|_2\|x_k\|_2 .
\end{aligned}
\end{equation*}
Squaring the preceding inequality and using $(u+v)^2\le2u^2+2v^2$ proves
\Cref{eq:sampled-lfa-hb-noise-growth}; for instance, one may take
\begin{align*}
  \sigma_0^2
  &=
  8\left(\max_{i,j}\|\Phi^\top e_i+\delta c-\Phi^\top e_j\|_2\right)^2
  \left(
  \max_{s,a,s'}|r(s,a,s')|
  +(1+\gamma)\max_i\|\Phi^\top e_i\|_2\|\theta^\star\|_2
  \right)^2,\\
  \sigma_1^2
  &=
  8\left(\max_{i,j}\|\Phi^\top e_i+\delta c-\Phi^\top e_j\|_2\right)^2
  (1+\gamma)^2
  \left(\max_i\|\Phi^\top e_i\|_2\right)^2 .
\end{align*}
Taking the conditional expectation preserves this deterministic upper bound
because $x_k$ is $\mathcal F_k$-measurable.
\end{proof}
We next show that, for sufficiently small step-sizes, the iterates of the stochastic HBCLQL error recursion have a uniform second-moment bound.
\begin{lemma}
\label{lem:sampled-lfa-hb-ms-bound}
Suppose that the hypothesis of \Cref{lem:sampled-lfa-hb-product-bound} holds,
and let $c$, $K$, and $\bar\alpha'$ be the constants from
\Cref{lem:sampled-lfa-hb-product-bound}.  Let $L$, $\sigma_0$, and $\sigma_1$ be
the constants from
\Cref{lem:sampled-lfa-hb-mode-difference,lem:sampled-lfa-hb-noise-growth}.
Define
\begin{equation*}
  C_0
  :=1+4K^2
  +4\left(\frac{\gamma L K^2}{c}\right)^2
\end{equation*}
and
\begin{equation*}
  C_1
  :=4\left(
  \frac{K^2}{c}
  +\frac{\gamma^2L^2K^4}{c^3}
  \right).
\end{equation*}
Set
\begin{equation*}
  \alpha_0
  :=\min\left\{
  \bar\alpha',
  \frac{1}{2C_1\sigma_1^2}
  \right\},
\end{equation*}
where the second term is interpreted as $+\infty$ when $\sigma_1=0$.  Then, for every $0<\alpha\le\alpha_0$ and every $k\ge0$, the recursion in
\Cref{eq:sampled-lfa-hb-error-system} satisfies
\begin{equation}
  \sup_{0\le t\le k}\E[\|x_t\|_2^2]
  \le
  2C_0\|x_0\|_2^2+2C_1\alpha\sigma_0^2 .
  \label{eq:sampled-lfa-hb-ms-bound}
\end{equation}
\end{lemma}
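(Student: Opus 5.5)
The plan is to close a self-bounding inequality for $m_k:=\sup_{0\le t\le k}\E[\|x_t\|_2^2]$. First I split $x_t$ into a deterministic propagation of $x_0$ and a noise-driven part. I then show that the noise-driven part has second moment at most $C_1\alpha(\sigma_0^2+\sigma_1^2 m_k)$. Finally, the choice $\alpha\le 1/(2C_1\sigma_1^2)$ lets me absorb the $m_k$ term on the right.

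\textbf{Step 1: nonlinear flow and exact linearization.} Write the noiseless error map as $T(x):=A_{\mu(x)}^{\rm HBCLQL}x$, where $\mu(x)$ is the Bellman-difference selector of \Cref{lem:bellman_difference_selector}. Then \Cref{eq:sampled-lfa-hb-error-system} reads $x_{t+1}=T(x_t)+\alpha\xi_t$. Applying \Cref{lem:bellman_difference_selector} to $F(\Phi\theta)-F(\Phi\theta')$ shows that for any $a,b$ we have $T^j(a)-T^j(b)=M_j(a,b)(a-b)$. Here $M_j(a,b)$ is a length-$j$ product of matrices in $\co(\mathcal A_\alpha^{\rm HBCLQL})$, so \Cref{lem:sampled-lfa-hb-product-bound} gives $\|M_j(a,b)\|_2\le K(1-c\alpha)^j$. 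Telescoping yields
\[
x_k=T^k(x_0)+\alpha\sum_{t=0}^{k-1}M_{k-t-1}\bigl(x_{t+1},T(x_t)\bigr)\,\xi_t .
\]
The same product bound together with $T(0)=0$ gives $\|T^k(x_0)\|_2\le K\|x_0\|_2$, which produces the $K^2$ part of $C_0$.

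\textbf{Step 2: freezing the products (main obstacle).} The matrices $M_{k-t-1}(x_{t+1},T(x_t))$ depend on $\xi_t$ and on the future, so the cross terms in $\E\|\sum\cdot\|^2$ do not vanish. I replace each of them by the $\mathcal F_t$-measurable product $N_{k-t-1}^{(t)}$, which uses the modes along the noiseless path from $T(x_t)$, i.e.\ the selectors $\mu(T^{i}(T(x_t)))$. Each factor of $M$ and $N$ differs by at most $\alpha\gamma L$ by \Cref{lem:sampled-lfa-hb-mode-difference}. A standard telescoping of products then gives
\[
\|M_j-N_j^{(t)}\|_2\le \sum_{i} K(1-c\alpha)^{i}\,\alpha\gamma L\,K(1-c\alpha)^{j-1-i}\lesssim \alpha\gamma L K^2 j(1-c\alpha)^{j-1}.
\]
I expect this step to be the delicate one. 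The factors being compared are chosen by different selectors evaluated at different points, so I must check that only the mode-difference bound is needed, not any Lipschitz continuity of the selectors. I will also have to make sure the resulting remainder is at most $\alpha\gamma LK^2/c$ times $\|\xi\|$-type quantities.

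\textbf{Step 3: second-moment bounds and closing.} The frozen part $\alpha\sum_t N^{(t)}_{k-t-1}\xi_t$ is a sum of martingale differences, because $\E[\xi_t\mid\mathcal F_t]=0$. Its second moment is therefore at most
\[
\alpha^2\sum_t K^2(1-c\alpha)^{2(k-t-1)}\E\|\xi_t\|^2\le \frac{\alpha K^2}{c}\bigl(\sigma_0^2+\sigma_1^2m_k\bigr),
\]
using \Cref{lem:sampled-lfa-hb-noise-growth}. For the remainder I use Cauchy--Schwarz with weights $(1-c\alpha)^{j}$. Since $\sum_j \alpha^2 j(1-c\alpha)^{j}\lesssim 1/c^2$, and one more factor $\alpha/c$ comes from the weights, this gives a contribution $\lesssim(\gamma^2L^2K^4/c^3)\,\alpha(\sigma_0^2+\sigma_1^2m_k)$. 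Any piece that instead multiplies $\|x_0\|$ goes into the $(\gamma LK^2/c)^2$ term of $C_0$.

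Combining the pieces with $(u+v+w)^2\le 4u^2+4v^2+2w^2$-type splittings gives $\E\|x_t\|^2\le C_0\|x_0\|^2+C_1\alpha\sigma_0^2+C_1\alpha\sigma_1^2m_k$ for every $t\le k$. Taking the supremum over $t$ and using $C_1\alpha\sigma_1^2\le\tfrac12$, together with the finiteness of $m_k$ for finite $k$ (bounded increments), yields \Cref{eq:sampled-lfa-hb-ms-bound}.
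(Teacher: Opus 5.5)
There is a genuine gap in your Step 3. Steps 1 and 2 are sound. The nonlinear telescoping identity holds, $\|T^k(x_0)\|_2\le K\|x_0\|_2$, and the frozen sum $\alpha\sum_t N^{(t)}_{k-t-1}\xi_t$ is a martingale sum with second moment at most $\frac{\alpha K^2}{c}(\sigma_0^2+\sigma_1^2m_k)$. The bound $\|M_j-N_j^{(t)}\|_2\le\alpha\gamma LK^2 j(1-c\alpha)^{j-1}$ indeed needs only the mode-difference lemma.

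The problem is the remainder $R_k:=\alpha\sum_t\bigl(M_{k-t-1}-N^{(t)}_{k-t-1}\bigr)\xi_t$, which is not $O(\alpha)$ in second moment. Since $M_{k-t-1}$ depends on $\xi_t$, $R_k$ has no martingale orthogonality, so you can only bound it pathwise. With $q=1-c\alpha$, this gives $\|R_k\|_2\le\alpha^2\gamma LK^2\sum_{j}jq^{j-1}\|\xi_{k-1-j}\|_2$. Weighted Cauchy--Schwarz then yields
\[
\E\|R_k\|_2^2\le\alpha^4\gamma^2L^2K^4\Bigl(\sum_j jq^{j-1}\Bigr)^2\max_{t<k}\E\|\xi_t\|_2^2\le\frac{\gamma^2L^2K^4}{c^4}\bigl(\sigma_0^2+\sigma_1^2m_k\bigr),
\]
because $\sum_j jq^{j-1}\le(c\alpha)^{-2}$. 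Every power of $\alpha$ cancels.

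Your ``extra factor $\alpha/c$ from the weights'' could only come from $\alpha^2\sum_j q^{2j}$, that is, from orthogonality, which $R_k$ lacks. The weight sum itself is $\sum_j q^j\approx 1/(c\alpha)$. This is not a bookkeeping slip. For $j\sim 1/(c\alpha)$ your estimate of $M_j-N_j^{(t)}$ is as large as $M_j$ itself. The two-point selectors along the nearby noiseless paths from $x_{t+1}$ and $T(x_t)$ need not be close to the selectors of one path relative to $\theta^\star$. With an $\alpha$-independent coefficient on $\sigma_1^2 m_k$ you cannot absorb by shrinking $\alpha$. The $\sigma_0^2$ term is also $O(1)$ rather than $2C_1\alpha\sigma_0^2$, so the stated bound does not follow.

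The missing idea is never to pair a switching error with a raw noise increment. The paper uses the time-varying linear form $x_{t+1}=A_{\mu_t}^{\rm HBCLQL}x_t+\alpha\xi_t$ with $\mathcal F_t$-measurable $\mu_t$. It compares this with a fixed-mode filter $y_{t+1}=A_{\bar\mu}^{\rm HBCLQL}y_t+\alpha\xi_t$, $y_0=x_0$, split as $y=\bar y+\zeta$. Its propagators are deterministic, so orthogonality gives $\E\|\zeta_t\|_2^2\le\frac{\alpha K^2}{c}(\sigma_0^2+\sigma_1^2M_{t-1})$.

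The discrepancy $x-y=u+v$ then satisfies $v_{t+1}=A_{\mu_t}^{\rm HBCLQL}v_t+\bigl(A_{\mu_t}^{\rm HBCLQL}-A_{\bar\mu}^{\rm HBCLQL}\bigr)\zeta_t$. The switching error now acts on the already-filtered, $O(\sqrt\alpha)$ signal $\zeta_t$ through a single geometric sum, $\|v_k\|_2\le\alpha\gamma LK\sum_t q^{k-1-t}\|\zeta_t\|_2$. This yields $\E\|v_k\|_2^2\le\frac{\alpha\gamma^2L^2K^4}{c^3}(\sigma_0^2+\sigma_1^2M_{k-1})$. The deterministic piece $u_k$, driven by $\bar y_t$, produces the $(\gamma LK^2/c)^2$ term in $C_0$.

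Your Alekseev-type representation propagates each $\xi_t$ along its own noiseless trajectory. That blocks this resummation, which collects all switching corrections against one common filtered signal. To close the argument, switch to the fixed-reference-filter decomposition.
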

\begin{proof}
Fix a stochastic policy $\bar\mu$.  Let
\begin{equation*}
  y_{k+1}=A_{\bar\mu}^{\rm HBCLQL}y_k+\alpha\xi_k,
  \qquad y_0=x_0,
\end{equation*}
and decompose $y_k=\bar y_k+\zeta_k$, where
\begin{equation*}
  \bar y_{k+1}=A_{\bar\mu}^{\rm HBCLQL}\bar y_k,
  \quad
  \bar y_0=x_0,
  \qquad
  \zeta_{k+1}=A_{\bar\mu}^{\rm HBCLQL}\zeta_k+\alpha\xi_k,
  \quad
  \zeta_0=0.
\end{equation*}
Write $x_k-y_k=u_k+v_k$, where
\begin{align*}
  u_{k+1}
  &=A_{\mu_k}^{\rm HBCLQL}u_k
    +(A_{\mu_k}^{\rm HBCLQL}-A_{\bar\mu}^{\rm HBCLQL})\bar y_k,
  &u_0&=0,\\
  v_{k+1}
  &=A_{\mu_k}^{\rm HBCLQL}v_k
    +(A_{\mu_k}^{\rm HBCLQL}-A_{\bar\mu}^{\rm HBCLQL})\zeta_k,
  &v_0&=0.
\end{align*}
Then $x_k=\bar y_k+\zeta_k+u_k+v_k$.  By
\Cref{eq:sampled-lfa-hb-product-bound},
\begin{equation*}
  \|\bar y_k\|_2
  \le K(1-c\alpha)^k\|x_0\|_2
  \le K\|x_0\|_2.
\end{equation*}
The stochastic part of the fixed-policy filter has the explicit form
\begin{equation*}
  \zeta_k
  =
  \alpha\sum_{t=0}^{k-1}
  (A_{\bar\mu}^{\rm HBCLQL})^{k-1-t}\xi_t .
\end{equation*}
Expanding the square gives
\begin{equation*}
\begin{aligned}
\E[\|\zeta_k\|_2^2]
&=
\alpha^2
\sum_{s=0}^{k-1}\sum_{t=0}^{k-1}
\E\!\left[
\xi_s^\top
\{(A_{\bar\mu}^{\rm HBCLQL})^{k-1-s}\}^\top
(A_{\bar\mu}^{\rm HBCLQL})^{k-1-t}\xi_t
\right].
\end{aligned}
\end{equation*}
If $s<t$, then the factor multiplying $\xi_t$ is $\mathcal F_t$-measurable, so
\begin{equation*}
\begin{aligned}
&\E\!\left[
\xi_s^\top
\{(A_{\bar\mu}^{\rm HBCLQL})^{k-1-s}\}^\top
(A_{\bar\mu}^{\rm HBCLQL})^{k-1-t}\xi_t
\right] \\
&\qquad=
\E\!\left[
\xi_s^\top
\{(A_{\bar\mu}^{\rm HBCLQL})^{k-1-s}\}^\top
(A_{\bar\mu}^{\rm HBCLQL})^{k-1-t}
\E[\xi_t\mid\mathcal F_t]
\right]
=0 .
\end{aligned}
\end{equation*}
The case $t<s$ is the same after conditioning on $\mathcal F_s$.  Thus only the
diagonal terms remain.  With
\begin{equation*}
  M_k:=\sup_{0\le t\le k}\E[\|x_t\|_2^2],
\end{equation*}
\Cref{eq:sampled-lfa-hb-product-bound,eq:sampled-lfa-hb-noise-growth} imply, for
$k\ge1$,
\begin{equation*}
\begin{aligned}
\E[\|\zeta_k\|_2^2]
&\le
\alpha^2K^2
\sum_{t=0}^{k-1}(1-c\alpha)^{2(k-1-t)}
\E[\|\xi_t\|_2^2] \\
&\le
\alpha^2K^2
\sum_{t=0}^{k-1}(1-c\alpha)^{2(k-1-t)}
\{\sigma_0^2+\sigma_1^2M_{k-1}\} \\
&\le
\frac{\alpha K^2}{c}
\{\sigma_0^2+\sigma_1^2M_{k-1}\},
\end{aligned}
\end{equation*}
where the last step uses
\begin{equation*}
  \sum_{j=0}^{k-1}(1-c\alpha)^{2j}
  \le
  \frac{1}{1-(1-c\alpha)^2}
  \le
  \frac{1}{c\alpha},
  \qquad 0<c\alpha<1 .
\end{equation*}
Next, unrolling the recursion for $u_k$ gives
\begin{equation*}
  u_k=
  \sum_{t=0}^{k-1}
  A_{\mu_{k-1}}^{\rm HBCLQL}\cdots A_{\mu_{t+1}}^{\rm HBCLQL}
  (A_{\mu_t}^{\rm HBCLQL}-A_{\bar\mu}^{\rm HBCLQL})\bar y_t,
\end{equation*}
with the empty product interpreted as the identity.  Therefore,
\begin{equation*}
\begin{aligned}
  \|u_k\|_2
  &\le
  \sum_{t=0}^{k-1}
  K(1-c\alpha)^{k-1-t}\alpha\gamma L
  K(1-c\alpha)^t\|x_0\|_2 \\
  &=
  \alpha\gamma L K^2 k(1-c\alpha)^{k-1}\|x_0\|_2 \\
  &\le
  \frac{\gamma L K^2}{c}\|x_0\|_2,
\end{aligned}
\end{equation*}
where the last inequality follows from
$kq^{k-1}\le(1-q)^{-1}$ with $q=1-c\alpha$.  Similarly,
\begin{equation*}
  v_k=
  \sum_{t=0}^{k-1}
  A_{\mu_{k-1}}^{\rm HBCLQL}\cdots A_{\mu_{t+1}}^{\rm HBCLQL}
  (A_{\mu_t}^{\rm HBCLQL}-A_{\bar\mu}^{\rm HBCLQL})\zeta_t.
\end{equation*}
Thus
\begin{equation*}
  \|v_k\|_2
  \le
  \alpha\gamma L K
  \sum_{t=0}^{k-1}(1-c\alpha)^{k-1-t}\|\zeta_t\|_2 .
\end{equation*}
Using the weighted Cauchy inequality,
\begin{equation*}
  \left(\sum_{t=0}^{k-1}q_t\|\zeta_t\|_2\right)^2
  \le
  \left(\sum_{t=0}^{k-1}q_t\right)
  \left(\sum_{t=0}^{k-1}q_t\|\zeta_t\|_2^2\right),
  \qquad q_t:=(1-c\alpha)^{k-1-t},
\end{equation*}
and the preceding bound on $\E[\|\zeta_t\|_2^2]$, we get
\begin{equation*}
\begin{aligned}
\E[\|v_k\|_2^2]
&\le
\alpha^2\gamma^2L^2K^2
\left(\sum_{t=0}^{k-1}q_t\right)
\sum_{t=0}^{k-1}q_t\E[\|\zeta_t\|_2^2] \\
&\le
\alpha^2\gamma^2L^2K^2
\left(\frac{1}{c\alpha}\right)^2
\frac{\alpha K^2}{c}
\{\sigma_0^2+\sigma_1^2M_{k-1}\} \\
&=
\frac{\alpha\gamma^2L^2K^4}{c^3}
\{\sigma_0^2+\sigma_1^2M_{k-1}\}.
\end{aligned}
\end{equation*}
Using
$\|z_1+z_2+z_3+z_4\|_2^2\le4\sum_{j=1}^4\|z_j\|_2^2$ with
$z_1=\bar y_k$, $z_2=\zeta_k$, $z_3=u_k$, and $z_4=v_k$, and then taking the
supremum over $0\le t\le k$, gives
\begin{equation*}
  M_k
  \le
  C_0\|x_0\|_2^2+C_1\alpha\{\sigma_0^2+\sigma_1^2M_{k-1}\}.
\end{equation*}
For $0<\alpha\le\alpha_0$, one has $C_1\alpha\sigma_1^2\le1/2$ and
$M_{k-1}\le M_k$.  Hence
\begin{equation*}
  M_k
  \le
  C_0\|x_0\|_2^2+C_1\alpha\sigma_0^2+\frac12 M_k,
\end{equation*}
which implies \Cref{eq:sampled-lfa-hb-ms-bound}.  For $k=0$, the same bound
holds because $M_0=\|x_0\|_2^2$ and $C_0\ge1$.
\end{proof}
We now combine the preceding estimates to obtain a finite-time error bound for the stochastic HBCLQL recursion.
\begin{theorem}
\label{thm:sampled-lfa-hb-small-alpha-error}
Assume \Cref{ass:lfa-cpd-unique-fixed-point}.  Suppose that \Cref{ass:sampled-lfa-hb-fixed-data-alpha,ass:sampled-lfa-hb-clql-stability} hold, and let $c$ and $K$ be the
constants from \Cref{lem:sampled-lfa-hb-product-bound}. Let $\theta^\star$ be
the unique fixed point of the map in \Cref{eq:lfa-cpd-standard-map}, and define
\begin{equation*}
  x_0:=
  \begin{bmatrix}
    \theta_0-\theta^\star\\
    \theta_{-1}-\theta^\star
  \end{bmatrix}.
\end{equation*}
Let $L$, $\sigma_0$, $\sigma_1$, $C_0$, $C_1$, and $\alpha_0$ be defined as in
\Cref{lem:sampled-lfa-hb-mode-difference,lem:sampled-lfa-hb-noise-growth,lem:sampled-lfa-hb-ms-bound}. Then, for every $0<\alpha\le\alpha_0$ and every $k\ge0$, the stochastic HBCLQL recursion in
\Cref{alg:sampled-lfa-hb-update} satisfies
\begin{equation}
\begin{aligned}
\E[\|\theta_k-\theta^\star\|_2]
\le\;&
K\left(1-\frac{c}{2}\alpha\right)^k
\|x_0\|_2 \\
&+
\alpha\gamma K^2 k
\left(1-\frac{c}{2}\alpha\right)^{k-1}
L\|x_0\|_2 \\
&+
\left(1+\frac{2\gamma K L}{c}\right)
K\sqrt{\frac{2\alpha}{c}} \\
&\qquad\times
\left(
\sigma_0^2+
\sigma_1^2\left(
2C_0\|x_0\|_2^2+2C_1\alpha\sigma_0^2
\right)
\right)^{1/2},
\end{aligned}
\label{eq:sampled-lfa-hb-small-alpha-error}
\end{equation}
with the convention that
$k(1-c\alpha/2)^{k-1}=0$ when $k=0$.  In particular, for fixed
$\theta_{-1}$ and $\theta_0$, the last term in
\Cref{eq:sampled-lfa-hb-small-alpha-error} is $O(\sqrt\alpha)$ as
$\alpha\downarrow0$.
\end{theorem}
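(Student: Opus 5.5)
The bound will come from the same four-term decomposition used in the proof of \Cref{lem:sampled-lfa-hb-ms-bound}. Fix a stochastic policy $\bar\mu$ and write $x_k=\bar y_k+\zeta_k+u_k+v_k$, where $\bar y_k$ is the noiseless fixed-policy trajectory, $\zeta_k$ is the fixed-policy noise filter, and $u_k,v_k$ are the switching corrections driven by $\bar y_k$ and $\zeta_k$. Since $\|\theta_k-\theta^\star\|_2\le\|x_k\|_2$, it suffices to bound $\E\|x_k\|_2$. By the triangle inequality this is at most $\|\bar y_k\|_2+\|u_k\|_2+\E\|\zeta_k\|_2+\E\|v_k\|_2$. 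The first two terms are deterministic and the last two are stochastic, and each will be bounded separately.

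\textbf{Deterministic terms.} By \Cref{lem:sampled-lfa-hb-product-bound}, $\|\bar y_k\|_2\le K(1-c\alpha)^k\|x_0\|_2$. For $u_k$, the unrolled sum together with \Cref{lem:sampled-lfa-hb-mode-difference} gives $\|u_k\|_2\le\alpha\gamma LK^2k(1-c\alpha)^{k-1}\|x_0\|_2$. Since $1-c\alpha\le 1-c\alpha/2$, both quantities are dominated by the first two terms of \Cref{eq:sampled-lfa-hb-small-alpha-error}. The weaker rate $c/2$ in the statement leaves room, but the cruder bound is enough here.

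\textbf{Stochastic terms.} Apply Jensen in the form $\E\|\zeta_k\|_2\le(\E\|\zeta_k\|_2^2)^{1/2}$. The martingale orthogonality computation from \Cref{lem:sampled-lfa-hb-ms-bound} gives
\[
\E\|\zeta_k\|_2^2\le\alpha^2K^2\sum_t(1-c\alpha)^{2(k-1-t)}\bigl(\sigma_0^2+\sigma_1^2M\bigr),
\]
where $M:=2C_0\|x_0\|_2^2+2C_1\alpha\sigma_0^2$ is the uniform second-moment bound from \Cref{lem:sampled-lfa-hb-ms-bound}. The geometric sum is at most $1/(c\alpha)$, and one may use $1/(1-(1-c\alpha)^2)\le 2/(c\alpha)$ safely. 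This yields $\E\|\zeta_k\|_2\le K\sqrt{2\alpha/c}\,(\sigma_0^2+\sigma_1^2M)^{1/2}$, which is the ``$1$'' part of the prefactor $1+2\gamma KL/c$. For $v_k$, use $\|v_k\|_2\le\alpha\gamma LK\sum_t(1-c\alpha)^{k-1-t}\|\zeta_t\|_2$ and take expectations termwise, bounding each $\E\|\zeta_t\|_2$ by the same uniform bound. The sum $\sum_t(1-c\alpha)^{k-1-t}\le 1/(c\alpha)$ then gives $\E\|v_k\|_2\le(\gamma LK/c)\,K\sqrt{2\alpha/c}\,(\sigma_0^2+\sigma_1^2M)^{1/2}$. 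This is within the stated factor $2\gamma KL/c$, and the factor $2$ absorbs any slack from the $c/2$ versus $c$ bookkeeping.

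\textbf{Main obstacle and conclusion.} The delicate point is that $\zeta_t$ uses a fixed policy while the noise bound $\E[\|\xi_t\|^2]\le\sigma_0^2+\sigma_1^2\E\|x_t\|^2$ needs the second moment of the true iterate. This is exactly what \Cref{lem:sampled-lfa-hb-ms-bound} supplies, and it is the reason for the restriction $\alpha\le\alpha_0$. A second concern is that the conditional expectation of cross terms must vanish even though $\mu_t$ is random. This only affects $v_k$, and there I avoid it by using the triangle inequality rather than orthogonality. Summing the four bounds gives \Cref{eq:sampled-lfa-hb-small-alpha-error}, with the convention for $k=0$ reflecting $u_0=0$. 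Finally, since $M$ stays bounded as $\alpha\downarrow0$, the last term is $O(\sqrt\alpha)$.
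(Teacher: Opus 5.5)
Your proposal is correct and follows essentially the paper's proof. It uses the same fixed-policy reference filter and four-term split $x_k=\bar y_k+\zeta_k+u_k+v_k$, martingale orthogonality plus Jensen with the uniform second-moment bound for $\zeta_k$, and the triangle inequality (not orthogonality) for the switching correction $v_k$. The only difference is bookkeeping: the paper passes to $\beta_\alpha=1-c\alpha/2$ at the outset, which is exactly where the constants $2\gamma KL/c$ and $\sqrt{2\alpha/c}$ come from, whereas you keep the rate $1-c\alpha$ throughout and relax at the end, which yields a slightly sharper intermediate bound that is still within the stated one.
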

\begin{proof}
By \Cref{lem:sampled-lfa-hb-error-system}, the augmented error satisfies
\begin{equation*}
  x_{k+1}=A_{\mu_k}^{\rm HBCLQL}x_k+\alpha\xi_k,
  \qquad
  \E[\xi_k\mid\mathcal F_k]=0 .
\end{equation*}
Every stochastic-policy mode \(A_{\mu_k}^{\rm HBCLQL}\) belongs to \(\co(\mathcal A_\alpha^{\rm HBCLQL})\).  Set
\begin{equation*}
  \beta_\alpha:=1-\frac{c}{2}\alpha .
\end{equation*}
Since \(1-c\alpha<\beta_\alpha<1\), \Cref{eq:sampled-lfa-hb-product-bound} implies that every product of length \(\ell\) from the convexified family satisfies
\begin{equation*}
  \|M_{\ell-1}\cdots M_0\|_2
  \le
  K(1-c\alpha)^\ell
  \le
  K\beta_\alpha^\ell,
  \qquad \ell\ge0 .
\end{equation*}

Fix an arbitrary stochastic policy \(\bar\mu\).  Introduce the fixed-policy reference filter
\begin{equation*}
  y_{k+1}=A_{\bar\mu}^{\rm HBCLQL}y_k+\alpha\xi_k,
  \qquad
  y_0=x_0 .
\end{equation*}
Write \(y_k=\bar y_k+\zeta_k\), where
\begin{equation*}
  \bar y_{k+1}=A_{\bar\mu}^{\rm HBCLQL}\bar y_k,
  \quad
  \bar y_0=x_0,
  \qquad
  \zeta_{k+1}=A_{\bar\mu}^{\rm HBCLQL}\zeta_k+\alpha\xi_k,
  \quad
  \zeta_0=0 .
\end{equation*}
The product bound gives the deterministic estimate
\begin{equation*}
  \|\bar y_k\|_2
  \le
  K\beta_\alpha^k\|x_0\|_2 .
\end{equation*}

Next define \(r_k:=x_k-y_k\).  Subtracting the reference filter from the actual recursion gives
\begin{equation*}
\begin{aligned}
  r_{k+1}
  &=A_{\mu_k}^{\rm HBCLQL}x_k-A_{\bar\mu}^{\rm HBCLQL}y_k \\
  &=A_{\mu_k}^{\rm HBCLQL}r_k
    +\left(A_{\mu_k}^{\rm HBCLQL}-A_{\bar\mu}^{\rm HBCLQL}\right)y_k,
  \qquad r_0=0 .
\end{aligned}
\end{equation*}
Split \(r_k=u_k+v_k\) according to \(y_k=\bar y_k+\zeta_k\):
\begin{align*}
  u_{k+1}
  &=A_{\mu_k}^{\rm HBCLQL}u_k
    +\left(A_{\mu_k}^{\rm HBCLQL}-A_{\bar\mu}^{\rm HBCLQL}\right)\bar y_k,
  &u_0&=0,\\
  v_{k+1}
  &=A_{\mu_k}^{\rm HBCLQL}v_k
    +\left(A_{\mu_k}^{\rm HBCLQL}-A_{\bar\mu}^{\rm HBCLQL}\right)\zeta_k,
  &v_0&=0 .
\end{align*}
For \(k=0\), the following sums are empty.  For \(k\ge1\), unrolling the recursion for \(u_k\) gives
\begin{equation*}
  u_k=
  \sum_{t=0}^{k-1}
  A_{\mu_{k-1}}^{\rm HBCLQL}\cdots A_{\mu_{t+1}}^{\rm HBCLQL}
  \left(A_{\mu_t}^{\rm HBCLQL}-A_{\bar\mu}^{\rm HBCLQL}\right)\bar y_t,
\end{equation*}
where the product is the identity when \(t=k-1\).  Using the product bound, \Cref{eq:sampled-lfa-hb-mode-difference}, and the bound on \(\bar y_t\),
\begin{equation*}
\begin{aligned}
  \|u_k\|_2
  &\le
  \sum_{t=0}^{k-1}
  K\beta_\alpha^{k-1-t}\,\alpha\gamma L\,
  K\beta_\alpha^t\|x_0\|_2 \\
  &=
  \alpha\gamma L K^2 k\beta_\alpha^{k-1}\|x_0\|_2 .
\end{aligned}
\end{equation*}

We now bound the martingale part \(\zeta_k\).  Its explicit form is
\begin{equation*}
  \zeta_k
  =
  \alpha\sum_{t=0}^{k-1}
  \left(A_{\bar\mu}^{\rm HBCLQL}\right)^{k-1-t}\xi_t .
\end{equation*}
Expanding the squared norm yields
\begin{equation*}
\begin{aligned}
  \E[\|\zeta_k\|_2^2]
  =\alpha^2
  \sum_{s=0}^{k-1}\sum_{t=0}^{k-1}
  \E\!\left[
    \xi_s^\top
    \left\{\left(A_{\bar\mu}^{\rm HBCLQL}\right)^{k-1-s}\right\}^\top
    \left(A_{\bar\mu}^{\rm HBCLQL}\right)^{k-1-t}
    \xi_t
  \right].
\end{aligned}
\end{equation*}
If \(s<t\), then the factor multiplying \(\xi_t\) is \(\mathcal F_t\)-measurable, and therefore
\begin{equation*}
\begin{aligned}
&\E\!\left[
    \xi_s^\top
    \left\{\left(A_{\bar\mu}^{\rm HBCLQL}\right)^{k-1-s}\right\}^\top
    \left(A_{\bar\mu}^{\rm HBCLQL}\right)^{k-1-t}
    \xi_t
  \right] \\
&\qquad=
  \E\!\left[
    \xi_s^\top
    \left\{\left(A_{\bar\mu}^{\rm HBCLQL}\right)^{k-1-s}\right\}^\top
    \left(A_{\bar\mu}^{\rm HBCLQL}\right)^{k-1-t}
    \E[\xi_t\mid\mathcal F_t]
  \right]
  =0 .
\end{aligned}
\end{equation*}
The case \(t<s\) is identical after conditioning on \(\mathcal F_s\).  Thus only the diagonal terms remain.  Let
\begin{equation*}
  M_\alpha:=2C_0\|x_0\|_2^2+2C_1\alpha\sigma_0^2 .
\end{equation*}
By \Cref{lem:sampled-lfa-hb-noise-growth,lem:sampled-lfa-hb-ms-bound}, for every \(0\le t\le k\),
\begin{equation*}
\begin{aligned}
  \E[\|\xi_t\|_2^2]
  &=\E[\|w_t\|_2^2] \\
  &\le
  \sigma_0^2+
  \sigma_1^2
  \sup_{0\le j\le t}\E[\|x_j\|_2^2] \\
  &\le
  \sigma_0^2+\sigma_1^2M_\alpha .
\end{aligned}
\end{equation*}
Consequently,
\begin{equation*}
\begin{aligned}
  \E[\|\zeta_k\|_2^2]
  &\le
  \alpha^2K^2
  \sum_{t=0}^{k-1}\beta_\alpha^{2(k-1-t)}
  \left(\sigma_0^2+\sigma_1^2M_\alpha\right) \\
  &\le
  \frac{\alpha^2K^2}{1-\beta_\alpha^2}
  \left(\sigma_0^2+\sigma_1^2M_\alpha\right).
\end{aligned}
\end{equation*}
Jensen's inequality gives the first-moment bound
\begin{equation*}
  \E[\|\zeta_k\|_2]
  \le
  \frac{\alpha K}{\sqrt{1-\beta_\alpha^2}}
  \left(\sigma_0^2+\sigma_1^2M_\alpha\right)^{1/2}.
\end{equation*}

The remaining switching correction satisfies
\begin{equation*}
  v_k=
  \sum_{t=0}^{k-1}
  A_{\mu_{k-1}}^{\rm HBCLQL}\cdots A_{\mu_{t+1}}^{\rm HBCLQL}
  \left(A_{\mu_t}^{\rm HBCLQL}-A_{\bar\mu}^{\rm HBCLQL}\right)\zeta_t .
\end{equation*}
Hence, using the product bound, \Cref{eq:sampled-lfa-hb-mode-difference}, and the preceding bound on \(\zeta_t\),
\begin{equation*}
\begin{aligned}
  \E[\|v_k\|_2]
  &\le
  \alpha\gamma L K
  \sum_{t=0}^{k-1}\beta_\alpha^{k-1-t}\E[\|\zeta_t\|_2] \\
  &\le
  \alpha\gamma L K
  \sum_{t=0}^{k-1}\beta_\alpha^{k-1-t}
  \frac{\alpha K}{\sqrt{1-\beta_\alpha^2}}
  \left(\sigma_0^2+\sigma_1^2M_\alpha\right)^{1/2} \\
  &\le
  \frac{\alpha\gamma L K}{1-\beta_\alpha}
  \frac{\alpha K}{\sqrt{1-\beta_\alpha^2}}
  \left(\sigma_0^2+\sigma_1^2M_\alpha\right)^{1/2} .
\end{aligned}
\end{equation*}

Since \(x_k=y_k+r_k=\bar y_k+\zeta_k+u_k+v_k\) and \(\theta_k-\theta^\star\) is the first block of \(x_k\), we have \(\|\theta_k-\theta^\star\|_2\le\|x_k\|_2\).  Combining the four estimates gives
\begin{equation*}
\begin{aligned}
  \E[\|\theta_k-\theta^\star\|_2]
  \le\;&
  K\beta_\alpha^k\|x_0\|_2
  +\alpha\gamma L K^2 k\beta_\alpha^{k-1}\|x_0\|_2 \\
  &+
  \left(
  1+\frac{\alpha\gamma L K}{1-\beta_\alpha}
  \right)
  \frac{\alpha K}{\sqrt{1-\beta_\alpha^2}}
  \left(\sigma_0^2+\sigma_1^2M_\alpha\right)^{1/2}.
\end{aligned}
\end{equation*}
Finally,
\begin{equation*}
  1-\beta_\alpha=\frac{c\alpha}{2},
  \qquad
  1-\beta_\alpha^2
  =(1-\beta_\alpha)(1+\beta_\alpha)
  \ge \frac{c\alpha}{2}.
\end{equation*}
Therefore
\begin{equation*}
  1+\frac{\alpha\gamma L K}{1-\beta_\alpha}
  =
  1+\frac{2\gamma K L}{c},
  \qquad
  \frac{\alpha K}{\sqrt{1-\beta_\alpha^2}}
  \le
  K\sqrt{\frac{2\alpha}{c}} .
\end{equation*}
This proves~\Cref{eq:sampled-lfa-hb-small-alpha-error}.
The last term is \(O(\sqrt\alpha)\) for fixed \(\theta_{-1}\) and \(\theta_0\), because the constants \(c\), \(K\), \(L\), \(\sigma_0\), \(\sigma_1\), \(C_0\), and \(C_1\) are independent of \(\alpha\).
\end{proof}

Thus the sampled stochastic HBCLQL iterates track the fixed point up to a first-moment neighborhood whose radius is of order $\sqrt\alpha$.

\section{Numerical illustration}
\label{sec:sampled_stochastic_feature_example}

Consider again the example in~\Cref{sec:feature_numerical_toy_example}, where the MDP, feature matrix, sampling distribution $d$, constant direction $c$, initial parameter, and fixed point $\theta^\star$ are the same as in that section. We use the constant step-size $\alpha=0.002$, the correction parameter $\delta=2$, and the actual momentum coefficient $\alpha\eta=0.1$ for HBCLQL. Stochastic CLQL is obtained from~\Cref{alg:sampled-lfa-hb-update} by setting $\eta=0$, while stochastic LQL uses~\Cref{alg:sampled-standard-lfa-q-learning}. The three methods are run for $12{,}000$ iterations over $400$ independent sample paths. To make the comparison depend on the update rule rather than sampling variation, the same sampled state-action pairs, centering samples, and next-state draws are used across the methods whenever applicable.

The quantity in~\Cref{fig:toy-sampled-feature-convergence} is the median, over the $400$ paths, of the relative parameter error $e_k^\theta$ defined in~\Cref{sec:feature_numerical_toy_example}. In this run, stochastic HBCLQL reaches median error $10^{-1}$, $5\times10^{-2}$, and $2\times10^{-2}$ after $4074$, $5418$, and $7177$ iterations, respectively. Stochastic CLQL reaches the same thresholds after $4560$, $6036$, and $8202$ iterations, while stochastic LQL reaches the first two thresholds after $6840$ and $9219$ iterations and does not reach $2\times10^{-2}$ within $12{,}000$ iterations. Thus the sampled experiment preserves the early-stage ordering observed in the deterministic example in~\Cref{sec:feature_numerical_toy_example}. At later iterations, stochastic fluctuations produce a small neighborhood around the fixed point, so the final curves need not remain strictly ordered.
\begin{figure}[H]
\centering
\includegraphics[width=0.78\textwidth]{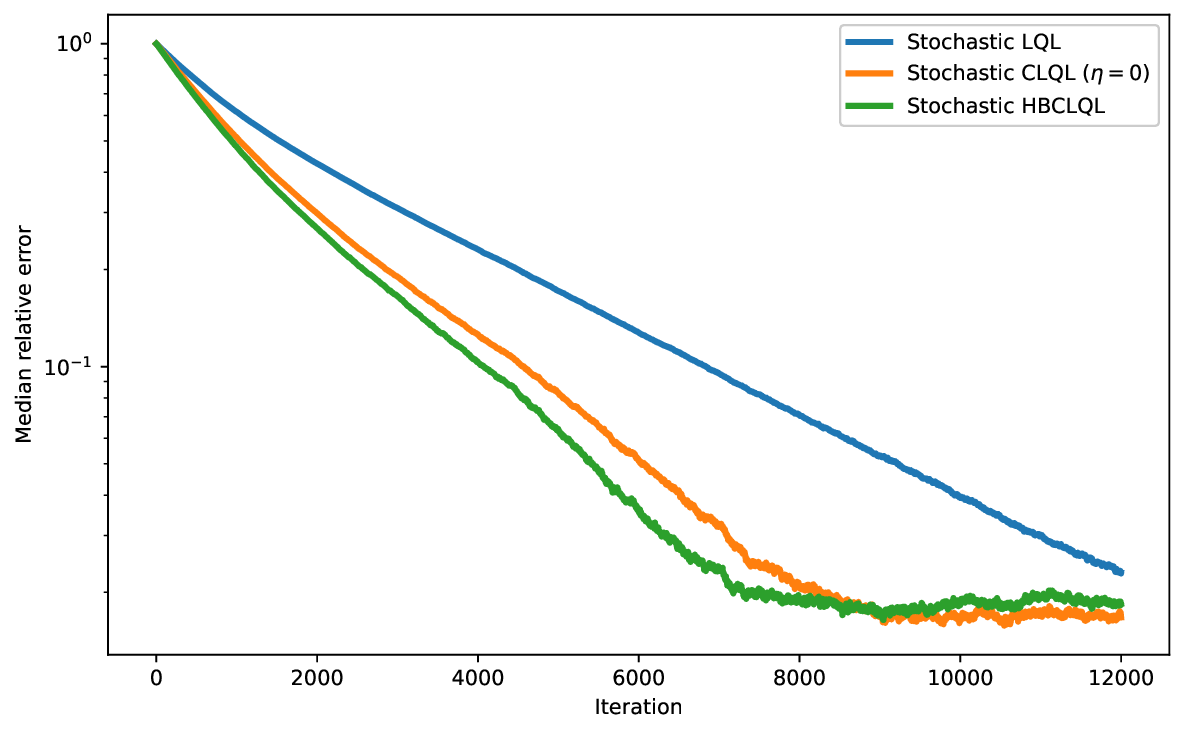}
\caption{Sampled stochastic version of the example in~\Cref{sec:feature_numerical_toy_example}. The curves give the median relative parameter error over $400$ independent paths.}
\label{fig:toy-sampled-feature-convergence}
\end{figure}

\section{Conclusion}
\label{sec:conclusion}

This paper proposed CQL and combined it with heavy-ball momentum, where
$\eta$ is a gain and the actual momentum coefficient is $\alpha\eta$. The analysis
used SLS and JSR tools to separate the dynamics along a common eigenvector direction from the dynamics along the orthogonal directions. Under the stated projected-gap and small-actual-momentum conditions,
HBCQL has a smaller fixed-step-size JSR certificate than CQL, and analogous statements hold for linear function approximation. With the gain fixed and $\alpha\downarrow0$, the certified improvement is second order in $\alpha$.

\clearpage
\appendix
\section{Auxiliary results}
\label{app:auxiliary-results}

\subsection{Nonnegative matrix row-sum bounds}
\label{app:nonnegative-row-sum-bounds}

The following standard row-sum bound is the form of the Collatz--Wielandt
inequality used below; see, for example, \citep[Chapter~8]{meyer2000matrix}.
\begin{lemma}
\label{lem:collatz-wielandt-row-sum}
Let $B\in\R^{m\times m}$ be nonnegative. Then
\begin{equation}
  \min_{1\le i\le m}\sum_{j=1}^m B_{ij}
  \le \rho(B)
  \le \max_{1\le i\le m}\sum_{j=1}^m B_{ij} .
  \label{eq:collatz-wielandt-row-sum}
\end{equation}
\end{lemma}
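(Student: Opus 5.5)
The plan is to prove the upper and lower bounds in \Cref{eq:collatz-wielandt-row-sum} separately. Throughout, write $r_i:=\sum_j B_{ij}$, $r_{\min}:=\min_i r_i$ and $r_{\max}:=\max_i r_i$.

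For the upper bound, I would use the induced $\infty$-norm. Because $B$ is nonnegative, $\|B\|_\infty=\max_i\sum_j|B_{ij}|=r_{\max}$. Every induced norm dominates the spectral radius: if $Bx=\lambda x$ with $x\neq0$, then $|\lambda|\|x\|_\infty=\|Bx\|_\infty\le\|B\|_\infty\|x\|_\infty$. Hence $\rho(B)\le r_{\max}$. This step uses only the entrywise absolute row sums, so nonnegativity is needed only to identify those sums with $r_i$.

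For the lower bound, I would iterate on the all-ones vector. Nonnegativity gives $B\mathbf 1\ge r_{\min}\mathbf 1$ entrywise. Since $B$ preserves the entrywise order on nonnegative vectors, induction gives $B^k\mathbf 1\ge r_{\min}^k\mathbf 1$ for every $k\ge1$. Again because $B^k$ is nonnegative, its row sums equal the entries of $B^k\mathbf 1$, so $\|B^k\|_\infty=\max_i (B^k\mathbf 1)_i\ge r_{\min}^k$. Gelfand's formula $\rho(B)=\lim_k\|B^k\|_\infty^{1/k}$ then yields $\rho(B)\ge r_{\min}$.

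There is a small case to dispose of first: if $r_{\min}=0$, the lower bound is trivial because $\rho(B)\ge0$. For $r_{\min}>0$, the order-preservation and induction argument above is the only nontrivial step. An alternative route is Perron--Frobenius: take a nonnegative left eigenvector $y\neq 0$ with $y^\top B=\rho(B)y^\top$. Then $\rho(B)\,y^\top\mathbf 1=y^\top B\mathbf 1\ge r_{\min}\,y^\top\mathbf 1$, and dividing by $y^\top\mathbf 1>0$ gives the bound. The same computation with $B\mathbf 1\le r_{\max}\mathbf 1$ also recovers the upper bound. I would present the Gelfand-formula argument, since it avoids invoking the existence of a nonnegative eigenvector; this matches the paper's citation of \citep[Chapter~8]{meyer2000matrix}.
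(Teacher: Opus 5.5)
Your proof is correct, but it takes a different route from the paper. The paper gives a one-line specialization: it quotes the general Collatz--Wielandt inequality $\min_i (Bx)_i/x_i\le\rho(B)\le\max_i (Bx)_i/x_i$ for an arbitrary entrywise positive vector $x$ (citing Meyer) and sets $x=\mathbf 1$.

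You instead prove the $x=\mathbf 1$ case directly. The upper bound comes from $\rho(B)\le\|B\|_\infty=r_{\max}$. The lower bound comes from the order-preservation induction $B^k\mathbf 1\ge r_{\min}^k\mathbf 1$ together with Gelfand's formula; the induction step uses $r_{\min}\ge0$, which nonnegativity of $B$ supplies.

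Your argument is self-contained. It needs neither the cited theorem nor the existence of a nonnegative Perron eigenvector, which for reducible $B$ itself requires a limiting argument, so your alternative Perron--Frobenius route is the heavier one. The paper's citation is shorter and covers every positive test vector. Your method reaches that generality too if you replace $\|\cdot\|_\infty$ by the weighted norm $\|v\|_x:=\max_i|v_i|/x_i$, whose induced matrix norm for nonnegative $B$ is exactly $\max_i (Bx)_i/x_i$.

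Your closing remark slightly mischaracterizes the paper: it does not present a Gelfand-type argument at all, it only cites the Collatz--Wielandt inequality.
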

\begin{proof}
The Collatz--Wielandt inequality for nonnegative matrices gives
\begin{equation*}
  \min_i\frac{(Bx)_i}{x_i}
  \le \rho(B)
  \le \max_i\frac{(Bx)_i}{x_i}
\end{equation*}
for every vector $x$ with strictly positive entries. Taking
$x=\mathbf 1$ gives~\Cref{eq:collatz-wielandt-row-sum}.
\end{proof}

\begin{lemma}
\label{lem:direct-d-nonnegative-row-sum}
Assume
\begin{equation*}
  0<\alpha d(s,a)\le 1
  \quad\text{for every }(s,a)\in\mathcal S\times\mathcal A.
\end{equation*}
Then each matrix $A_\pi^{\rm QL}$ in the direct family
\Cref{eq:direct-d-standard-family} is nonnegative. Moreover, its row indexed by
$(s,a)$ has row sum
\begin{equation}
  1-\alpha d(s,a)(1-\gamma),
  \label{eq:direct-d-row-sum}
\end{equation}
and every deterministic policy $\pi$ satisfies
\begin{equation}
  \rho(A_\pi^{\rm QL})
  \ge 1-\alpha d_{\max}(1-\gamma).
  \label{eq:direct-d-cw-lower-bound}
\end{equation}
\end{lemma}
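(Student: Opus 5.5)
The plan is to write $A_\pi^{\rm QL}=I-\alpha D+\alpha\gamma DP\Pi^\pi$ and read off the three claims directly from this decomposition.

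\emph{Nonnegativity.} The term $I-\alpha D$ is diagonal with entries $1-\alpha d(s,a)$. These entries lie in $[0,1)$ by the hypothesis $0<\alpha d(s,a)\le1$. The second term $\alpha\gamma DP\Pi^\pi$ is a product of nonnegative matrices, because $D\ge0$, $P\ge0$, and $\Pi^\pi\ge0$ is a selector matrix with entries in $\{0,1\}$. Its coefficient satisfies $\alpha\gamma>0$. A sum of nonnegative matrices is nonnegative, so $A_\pi^{\rm QL}\ge0$ entrywise. This is the only place where the upper bound $\alpha d(s,a)\le 1$ is used.

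\emph{Row sums.} We compute $A_\pi^{\rm QL}\mathbf 1$. By \Cref{lem:bellman_difference_selector}, $P\Pi^\pi\mathbf 1=\mathbf 1$. Hence
\begin{equation*}
  A_\pi^{\rm QL}\mathbf 1=\mathbf 1-\alpha D\mathbf 1+\alpha\gamma D\mathbf 1=\mathbf 1-\alpha(1-\gamma)d .
\end{equation*}
Its $(s,a)$ entry is $1-\alpha d(s,a)(1-\gamma)$, which is \Cref{eq:direct-d-row-sum}.

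\emph{Spectral bound.} Because $A_\pi^{\rm QL}$ is nonnegative, we can apply \Cref{lem:collatz-wielandt-row-sum}, whose lower bound gives
\begin{equation*}
  \rho(A_\pi^{\rm QL})\ge\min_{(s,a)}\bigl(1-\alpha d(s,a)(1-\gamma)\bigr).
\end{equation*}
The minimum over rows is attained where $d(s,a)$ is largest. It therefore equals $1-\alpha d_{\max}(1-\gamma)$, which proves \Cref{eq:direct-d-cw-lower-bound}.

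The argument is routine. The only real point is checking nonnegativity, which is what licenses the Collatz--Wielandt inequality.
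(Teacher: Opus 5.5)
Your proposal is correct and follows essentially the same route as the paper: the decomposition $A_\pi^{\rm QL}=I-\alpha D+\alpha\gamma DP\Pi^\pi$, entrywise nonnegativity from $0<\alpha d(s,a)\le 1$, row sums via $P\Pi^\pi\mathbf 1=\mathbf 1$, and then the lower row-sum bound of \Cref{lem:collatz-wielandt-row-sum}, whose minimum is attained at $d_{\max}$ because $\alpha(1-\gamma)>0$. Your observation that nonnegativity is precisely what makes the Collatz--Wielandt bound applicable, and is the only use of the hypothesis $\alpha d(s,a)\le 1$, matches the structure of the paper's argument.
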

\begin{proof}
For a deterministic policy $\pi$, the mode in~\Cref{eq:direct-d-standard-mode}
is
\begin{equation*}
  A_\pi^{\rm QL}=I-\alpha D+\alpha\gamma D P\Pi^\pi .
\end{equation*}
The diagonal part $I-\alpha D$ is nonnegative by the assumption
$0<\alpha d(s,a)\le 1$, and $D P\Pi^\pi$ is nonnegative because $D$ is
nonnegative diagonal and $P\Pi^\pi$ is row-stochastic. Hence
$A_\pi^{\rm QL}$ is nonnegative. Since every row of $P\Pi^\pi$ sums to one,
the row indexed by $(s,a)$ has sum
\begin{equation*}
  1-\alpha d(s,a)+\alpha\gamma d(s,a)
  =1-\alpha d(s,a)(1-\gamma),
\end{equation*}
which proves~\Cref{eq:direct-d-row-sum}. Applying
\Cref{lem:collatz-wielandt-row-sum} gives
\begin{equation*}
  \rho(A_\pi^{\rm QL})
  \ge \min_{(s,a)}\{1-\alpha d(s,a)(1-\gamma)\}
  =1-\alpha d_{\max}(1-\gamma),
\end{equation*}
which proves~\Cref{eq:direct-d-cw-lower-bound}.
\end{proof}

\subsection{Standard JSR lemmas}
\label{app:standard-jsr-and-selector-lemmas}

\begin{lemma}
\label{lem:convex-hull-jsr}
For a finite matrix family $\mathcal H$, one has
\begin{align*}
  \rho(\co(\mathcal H))=\rho(\mathcal H),
\end{align*}
where the left-hand side is computed over all convex combinations of matrices in
$\mathcal H$.
\end{lemma}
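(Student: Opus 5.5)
The claim is that $\rho(\co(\mathcal H))=\rho(\mathcal H)$ for a finite family $\mathcal H=\{A_1,\ldots,A_M\}$. The plan is to prove the two inequalities separately.

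The inequality $\rho(\mathcal H)\le\rho(\co(\mathcal H))$ is immediate. Since $\mathcal H\subset\co(\mathcal H)$, for every length $k$ the supremum defining the JSR over $\co(\mathcal H)$ dominates the one over $\mathcal H$. Taking $k$-th roots and passing to the limit in \Cref{def:jsr} gives the bound.

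For the reverse inequality, I will use multilinearity of products rather than the Lyapunov construction, so that nothing circular is invoked. Take factors $M_i=\sum_{j}\lambda^{(i)}_jA_j$ with $\lambda^{(i)}\in\Delta_M$, and expand the product:
\begin{equation*}
  M_k\cdots M_1=\sum_{j_1,\ldots,j_k}\lambda^{(k)}_{j_k}\cdots\lambda^{(1)}_{j_1}\,A_{j_k}\cdots A_{j_1}.
\end{equation*}
This is a convex combination of length-$k$ products from $\mathcal H$, because the weights are nonnegative and sum to $\prod_i\sum_j\lambda^{(i)}_j=1$. The triangle inequality for any submultiplicative norm then gives
\begin{equation*}
  \|M_k\cdots M_1\|\le\max_{\sigma\in\{1,\ldots,M\}^k}\|A_\sigma\|.
\end{equation*}
Taking the supremum over all choices of $M_i\in\co(\mathcal H)$, then $k$-th roots, and letting $k\to\infty$ yields $\rho(\co(\mathcal H))\le\rho(\mathcal H)$.

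The only delicate point is that the limit in \Cref{def:jsr} exists for $\co(\mathcal H)$. Existence follows from Fekete's lemma, since $k\mapsto\log\sup\|\cdot\|$ over length-$k$ products is subadditive by submultiplicativity; $\co(\mathcal H)$ is bounded, so these quantities are finite. Even without invoking existence, the pointwise inequality above holds for each $k$, so the inequality passes to $\limsup$ and $\liminf$ alike. I expect no real obstacle beyond checking this bookkeeping.
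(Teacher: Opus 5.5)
Your proof is correct. The paper gives no argument of its own for this lemma; it only defers to \citep[Chapter~1]{jungers2009joint}. Your multilinear expansion is the standard argument behind that citation, so it fills in the omitted step in a self-contained way. The argument shows that a length-$k$ product of convex combinations is a convex combination of length-$k$ products from $\mathcal H$, and then applies convexity of the norm.

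Two small remarks. First, the key bound $\|M_k\cdots M_1\|\le\max_\sigma\|A_\sigma\|$ uses only the triangle inequality and homogeneity of the norm; submultiplicativity is needed only for the Fekete/existence-of-limit step. Second, your fallback of passing the per-$k$ inequality to $\limsup$ and $\liminf$ also handles the degenerate case where the supremum vanishes for some $k$ and the logarithm would be $-\infty$.

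A Lyapunov-norm route would also work. It would use convexity of $p_\epsilon$ from \Cref{lem:common_lyapunov_construction}, exactly as in the proof of \Cref{lem:basic_jsr_convergence}. As stated in the paper, however, that lemma requires $\rho(\mathcal H)+\epsilon<1$, so you would have to rescale the family first. Your direct expansion needs neither the extremal-norm machinery nor that normalization.
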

\begin{proof}
This is the convex-hull invariance property of the JSR; see,
for example, \citep[Chapter~1]{jungers2009joint}. We omit the proof.
\end{proof}

\begin{lemma}
\label{lem:similarity-jsr}
Let $S$ be nonsingular. For every bounded matrix family
$\mathcal H\subset\R^{m\times m}$,
\begin{equation*}
  \rho\left(\left\{S^{-1}AS:A\in\mathcal H\right\}\right)
  =\rho(\mathcal H).
\end{equation*}
\end{lemma}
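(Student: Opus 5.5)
The plan is to compare the products generated by $\{S^{-1}AS:A\in\mathcal H\}$ with those generated by $\mathcal H$ itself, and then pass to the limit in \Cref{def:jsr}.

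For any word $A_1,\ldots,A_k\in\mathcal H$, the inner factors $SS^{-1}$ telescope:
\begin{equation*}
  (S^{-1}A_kS)\cdots(S^{-1}A_1S)=S^{-1}A_k\cdots A_1S.
\end{equation*}
Hence every length-$k$ product from the conjugated family is the conjugate of the corresponding length-$k$ product from $\mathcal H$, and conversely. Submultiplicativity of the operator norm then gives two-sided bounds:
\begin{align*}
  \|S^{-1}A_k\cdots A_1S\|&\le \|S^{-1}\|\,\|S\|\,\|A_k\cdots A_1\|,\\
  \|A_k\cdots A_1\|&\le \|S\|\,\|S^{-1}\|\,\|S^{-1}A_k\cdots A_1S\|.
\end{align*}
Write $\kappa:=\|S\|\,\|S^{-1}\|\ge1$. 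Taking the supremum over all words of length $k$, the $k$-step quantity for the conjugated family lies between $\kappa^{-1}$ and $\kappa$ times the $k$-step quantity for $\mathcal H$.

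Next take $k$-th roots. Since $\kappa^{1/k}\to1$ and $\kappa^{-1/k}\to1$, the limit defining $\rho$ of the conjugated family is squeezed to equal $\rho(\mathcal H)$. The only mild point is that the limit in \Cref{def:jsr} exists. This is standard by Fekete's lemma, because $k\mapsto\log\sup\|A_k\cdots A_1\|$ is subadditive, and the paper already takes the existence of this limit for granted. If one prefers to avoid relying on existence, the same squeeze can be run with $\limsup$ and $\liminf$ separately.

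An alternative route is to use the norm-independence of the JSR noted after \Cref{def:jsr}. The map $\|X\|_S:=\|S^{-1}XS\|$ is a submultiplicative norm, so $\rho(\mathcal H)$ computed in this norm equals the JSR of the conjugated family in the original norm. I would present the direct squeeze as the main argument. No step is a real obstacle; the only care needed is handling the limit.
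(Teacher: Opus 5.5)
Your proof is correct and matches the paper's approach: the paper only cites norm equivalence in the definition of the JSR, and your squeeze with $\kappa=\|S\|\,\|S^{-1}\|$ (or your alternative via the submultiplicative norm $\|X\|_S:=\|S^{-1}XS\|$) is that argument written out. The bounded factor $\kappa$ disappears under $k$-th roots, as you show.
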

This standard property follows from norm equivalence in the definition of the
JSR; see, for example, \citep[Chapter~1]{jungers2009joint}.

\begin{lemma}
\label{lem:block-triangular-jsr}
Let $\mathcal T$ be a bounded family of block upper triangular matrices of the
form
\begin{align*}
  T_i=\begin{bmatrix}B_i&E_i\\0&D_i\end{bmatrix}.
\end{align*}
Then
\begin{align*}
  \rho(\mathcal T)
  =\max\left\{
    \rho\left(\left\{B_i:T_i\in\mathcal T\right\}\right),
    \rho\left(\left\{D_i:T_i\in\mathcal T\right\}\right)
  \right\}.
\end{align*}
The same statement holds with any finite number of diagonal blocks.
\end{lemma}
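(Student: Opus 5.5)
The plan is to prove the two inequalities separately, with the upper bound handled by a norm argument and the lower bound by restricting products to the diagonal blocks. Write a product of length $k$ from $\mathcal T$ as
\begin{equation*}
  T_{i_k}\cdots T_{i_1}
  =\begin{bmatrix}B_{i_k}\cdots B_{i_1}&E^{(k)}\\0&D_{i_k}\cdots D_{i_1}\end{bmatrix},
  \qquad
  E^{(k)}=\sum_{j=1}^{k}B_{i_k}\cdots B_{i_{j+1}}E_{i_j}D_{i_{j-1}}\cdots D_{i_1},
\end{equation*}
which follows by induction on $k$. Set $\rho_B:=\rho(\{B_i\})$ and $\rho_D:=\rho(\{D_i\})$.

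For the lower bound, the diagonal blocks of a product are exactly the products of the diagonal blocks. In a block-compatible norm, for instance the operator norm induced by the Euclidean norm, a principal block has norm at most that of the whole matrix. Hence $\|B_{i_k}\cdots B_{i_1}\|\le\|T_{i_k}\cdots T_{i_1}\|$, and the same holds for the $D$-products. Taking suprema over words and $k$-th roots, and passing to the limit, gives $\max\{\rho_B,\rho_D\}\le\rho(\mathcal T)$.

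For the upper bound, fix $\epsilon>0$. By the definition of the JSR there is $C_\epsilon$ such that every $B$-product and every $D$-product of length $\ell$ has norm at most $C_\epsilon(\max\{\rho_B,\rho_D\}+\epsilon)^\ell$. Because $\mathcal T$ is bounded, $\sup_i\|E_i\|=:e<\infty$. The expansion of $E^{(k)}$ then gives
\begin{equation*}
  \|E^{(k)}\|\le k\,C_\epsilon^2\,e\,(\max\{\rho_B,\rho_D\}+\epsilon)^{k-1}.
\end{equation*}
The norm of the whole product is bounded by a constant times the sum of the norms of its three nonzero blocks. So $\sup\|T_{i_k}\cdots T_{i_1}\|^{1/k}\le\bigl(\mathrm{poly}(k)\bigr)^{1/k}(\max\{\rho_B,\rho_D\}+\epsilon)$, which tends to $\max\{\rho_B,\rho_D\}+\epsilon$. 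Letting $\epsilon\downarrow0$ gives the reverse inequality.

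For finitely many diagonal blocks, I would induct: group the first block against the remaining block-triangular part and apply the two-block case repeatedly. The only mildly delicate point is the off-diagonal growth, where I must check that the linear factor $k$ is harmless after taking $k$-th roots. This is routine, so no real obstacle is expected beyond bookkeeping of the uniform constants $C_\epsilon$ and $e$.
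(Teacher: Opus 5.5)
Your proof is correct. The lower bound takes principal sub-blocks of products in the Euclidean operator norm. The upper bound uses the expansion $E^{(k)}=\sum_{j}B\cdots B\,E_{i_j}\,D\cdots D$ together with uniform constants $C_\epsilon$, and the linear factor $k$ disappears under $k$-th roots; the induction then handles finitely many blocks.

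The paper gives no argument of its own for this lemma and only defers to Chapter~1 of Jungers' monograph. What you wrote is essentially the standard proof given there.
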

\begin{proof}
This is the block upper triangular decomposition property of the joint spectral
radius; see, for example, \citep[Chapter~1]{jungers2009joint}. We omit the
proof.
\end{proof}

\subsection{Feature-space two-point difference identity}
\label{app:lfa-two-point-difference-identity}

The following identity is the two-point version of the fixed-point error system
in~\Cref{lem:lfa-cpd-difference-system}.  It is placed in the appendix because
the main feature-space development only needs the fixed-point form.

\begin{lemma}
\label{lem:app-lfa-cpd-two-point-difference}
For any two parameter vectors $\theta,\bar\theta\in\R^m$, there exists a
stochastic Bellman-difference selector $\mu_{\theta,\bar\theta}$ such that the
map in~\Cref{eq:lfa-cpd-standard-map} satisfies
\begin{equation}
  T(\theta)-T(\bar\theta)
  =A_{\mu_{\theta,\bar\theta}}^{\rm CLQL}(\theta-\bar\theta),
  \label{eq:lfa-cpd-difference-system}
\end{equation}
where
\begin{equation*}
  T(\theta):=\theta+\alpha G\{F(\Phi\theta)-\Phi\theta\}.
\end{equation*}
\end{lemma}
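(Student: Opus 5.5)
The identity follows by expanding $T(\theta)-T(\bar\theta)$ directly and then applying the Bellman-difference selector of \Cref{lem:bellman_difference_selector} to the pair of value vectors $\Phi\theta$ and $\Phi\bar\theta$.

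First, from the definition of $T$, the difference is
\begin{equation*}
  T(\theta)-T(\bar\theta)
  =(\theta-\bar\theta)
  +\alpha G\{F(\Phi\theta)-F(\Phi\bar\theta)\}
  -\alpha G\Phi(\theta-\bar\theta),
\end{equation*}
which uses only linearity of $G$. Second, we apply \Cref{lem:bellman_difference_selector} with $Q:=\Phi\theta$ and $\bar Q:=\Phi\bar\theta$. This gives a stochastic policy $\mu_{\theta,\bar\theta}:=\mu_{\Phi\theta,\Phi\bar\theta}$ such that
\begin{equation*}
  F(\Phi\theta)-F(\Phi\bar\theta)=\gamma P\Pi^{\mu_{\theta,\bar\theta}}\Phi(\theta-\bar\theta).
\end{equation*}
Third, substituting this into the first display and factoring out $\theta-\bar\theta$ yields
\begin{equation*}
  \{I_m+\alpha G(\gamma P\Pi^{\mu_{\theta,\bar\theta}}-I)\Phi\}(\theta-\bar\theta),
\end{equation*}
which is exactly $A_{\mu_{\theta,\bar\theta}}^{\rm CLQL}(\theta-\bar\theta)$ by the definition in \Cref{lem:lfa-cpd-difference-system}.

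If one also wants membership in the convex hull, it follows as before: $A_\mu^{\rm CLQL}$ is affine in $P\Pi^\mu$, and $P\Pi^\mu\in\co\{P\Pi^\pi:\pi\in\Theta\}$, so $A_{\mu_{\theta,\bar\theta}}^{\rm CLQL}\in\co(\mathcal A^{\rm CLQL})$.

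No step is a genuine obstacle. The only point that needs care is that the selector lemma is applied to the pair of Q-vectors $\Phi\theta$ and $\Phi\bar\theta$, not to the parameters themselves. This is legitimate because that lemma holds for arbitrary $Q,\bar Q\in\R^n$, and no fixed-point property of $\bar\theta$ is needed, unlike in \Cref{lem:lfa-cpd-difference-system}.
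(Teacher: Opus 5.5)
Your proposal is correct and follows essentially the same route as the paper's proof: you apply \Cref{lem:bellman_difference_selector} to the pair $\Phi\theta,\Phi\bar\theta$, subtract the two updates, and factor out $\theta-\bar\theta$ to obtain $A_{\mu_{\theta,\bar\theta}}^{\rm CLQL}(\theta-\bar\theta)$. Your convex-hull remark is a harmless addition that the paper does not include in this lemma.
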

\begin{proof}
By~\Cref{lem:bellman_difference_selector}, there exists a stochastic policy
$\mu_{\theta,\bar\theta}$ such that
\begin{equation*}
  F(\Phi\theta)-F(\Phi\bar\theta)
  =\gamma P\Pi^{\mu_{\theta,\bar\theta}}\Phi(\theta-\bar\theta).
\end{equation*}
Subtracting the two feature updates gives
\begin{align*}
  T(\theta)-T(\bar\theta)
  &=(\theta-\bar\theta)
    +\alpha G\{F(\Phi\theta)-F(\Phi\bar\theta)
       -\Phi(\theta-\bar\theta)\}\\
  &=(\theta-\bar\theta)
    +\alpha G(\gamma P\Pi^{\mu_{\theta,\bar\theta}}-I)
       \Phi(\theta-\bar\theta)\\
  &=\{I_m+\alpha G(\gamma P\Pi^{\mu_{\theta,\bar\theta}}-I)\Phi\}
       (\theta-\bar\theta)\\
  &=A_{\mu_{\theta,\bar\theta}}^{\rm CLQL}(\theta-\bar\theta),
\end{align*}
which proves~\Cref{eq:lfa-cpd-difference-system}.
\end{proof}

\subsection{Jury stability criterion}
\label{app:jury-stability-criterion}

\begin{lemma}
\label{lem:jury-quadratic}
For the real quadratic
\begin{align*}
  p(z)=z^2-az+b,
\end{align*}
all roots of $p$ belong to $\left\{z\in\mathbb C:|z|<1\right\}$ if and only if
\begin{equation*}
  1-a+b>0,
  \qquad
  1+a+b>0,
  \qquad
  1-b>0.
\end{equation*}
This is the second-order Jury stability criterion \citep{jury1964ztransform}.
\end{lemma}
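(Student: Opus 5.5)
The statement to prove is the second-order Jury criterion: for the real quadratic $p(z)=z^2-az+b$, both roots lie in the open unit disk if and only if $1-a+b>0$, $1+a+b>0$ and $1-b>0$. The plan is to argue directly from the roots $z_1,z_2$, using Vieta's relations $z_1+z_2=a$, $z_1z_2=b$, and to split into the complex-conjugate case and the real-root case.

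\emph{Necessity.} Suppose $|z_1|,|z_2|<1$. Then $|b|=|z_1||z_2|<1$, so $1-b>0$. Next observe that
\begin{equation*}
p(1)=(1-z_1)(1-z_2)=1-a+b,\qquad p(-1)=(1+z_1)(1+z_2)=1+a+b.
\end{equation*}
If the roots are complex conjugates, each product has the form $|1\mp z_1|^2$. This is strictly positive because $z_1\neq\pm1$. If the roots are real, each factor $1\mp z_i$ lies in $(0,2)$, so each product is again strictly positive.

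\emph{Sufficiency.} Assume the three inequalities hold.
\begin{itemize}
\item In the complex case, $b=|z_1|^2$, and $1-b>0$ gives $|z_i|<1$ immediately.
\item In the real case, $p$ is an upward parabola with $p(1)>0$ and $p(-1)>0$. So either both roots lie strictly inside $(-1,1)$, or both lie on the same side outside $[-1,1]$. The latter cannot happen: two roots both exceeding $1$ in absolute value would give $|b|>1$. Roots with $|z_i|\ge1$ are already excluded, since $\pm1$ are not roots.
\item To rule out $b<-1$, add the first two inequalities. This gives $2+2b>0$, so $b>-1$. Combined with $1-b>0$, we get $|b|<1$.
\end{itemize}
Hence both roots lie in $(-1,1)$.

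\textbf{Main obstacle.} The only delicate point is the real-root sufficiency case. Positivity of $p(\pm1)$ alone does not exclude two roots both outside $[-1,1]$; the product bound $|b|<1$, obtained from summing the first two conditions together with $1-b>0$, is what closes this gap. Everything else follows directly from Vieta's relations.
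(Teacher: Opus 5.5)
Your proof is correct. It differs from the paper mainly in that the paper gives no proof at all: it states the second-order Jury criterion and defers to \citep{jury1964ztransform}, where the result appears as the degree-two case of the general Jury stability table. You instead give a self-contained elementary argument specific to degree two. You read the first two conditions as $p(1)>0$ and $p(-1)>0$, use Vieta's relations, and split into two cases. In the non-real case $b=|z_1|^2$, so $1-b>0$ finishes immediately. In the real case, the sign of the upward parabola at $\pm1$ confines $[z_1,z_2]$ to one of $(-\infty,-1)$, $(-1,1)$, $(1,\infty)$.

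The citation buys generality: the Jury table handles polynomials of any degree, which would matter if the analysis were extended to higher-order momentum schemes. Your argument buys transparency. It makes the appendix self-contained for the only case the paper actually uses, namely \Cref{app:prop-cpd-constant-mode-acceleration,app:prop-lfa-cpd-hb-common-acceleration} after the rescaling $\lambda=(1-\alpha\delta(1-\gamma))z$.

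Two small remarks.
\begin{itemize}
\item The step you flag as the ``main obstacle'' is easier than you make it. Two real roots on the same side outside $[-1,1]$ have the same sign, so $b=z_1z_2>1$, which already contradicts $1-b>0$. Summing the first two inequalities to get $b>-1$ is true but not needed anywhere in the argument.
\item The sentence about excluding ``roots with $|z_i|\ge1$ since $\pm1$ are not roots'' should say $|z_i|=1$. The boundary case is excluded because $p(\pm1)\neq0$, while $|z_i|>1$ is handled by the parabola argument.
\end{itemize}
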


The proof is omitted; see \citep{jury1964ztransform}.

\subsection{Scalar heavy-ball acceleration conditions}
\label{app:scalar-heavy-ball-conditions}

\begin{proposition}
\label{app:prop-cpd-constant-mode-acceleration}
Under~\Cref{ass:cpd-constant-mode-range},
\begin{equation*}
  \rho(C)<\bigl(1-\alpha\delta(1-\gamma)\bigr)
  \quad\Longleftrightarrow\quad
  0<\alpha\eta<\bigl(1-\alpha\delta(1-\gamma)\bigr)^2.
\end{equation*}
\end{proposition}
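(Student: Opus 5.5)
Write $\lambda:=1-\alpha\delta(1-\gamma)\in(0,1)$ by \Cref{ass:cpd-constant-mode-range} and $\beta:=\alpha\eta$. The plan is to reduce everything to the characteristic polynomial of $C$,
\begin{equation*}
  p(z)=\det(zI-C)=z^2-(\lambda+\beta)z+\beta .
\end{equation*}
This already shows $p(\lambda)=\lambda^2-\lambda^2-\lambda\beta+\beta=\beta(1-\lambda)$. The condition $\rho(C)<\lambda$ is equivalent to both roots of the rescaled polynomial $\tilde p(w):=\lambda^{-2}p(\lambda w)=w^2-\tfrac{\lambda+\beta}{\lambda}w+\tfrac{\beta}{\lambda^2}$ lying in the open unit disk. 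So we apply the Jury criterion of \Cref{lem:jury-quadratic} with $a=(\lambda+\beta)/\lambda$ and $b=\beta/\lambda^2$.

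Next we write out the three Jury inequalities. The first is $1-a+b>0$, i.e.\ $-\beta/\lambda+\beta/\lambda^2>0$, i.e.\ $\beta(1-\lambda)/\lambda^2>0$. Since $\lambda\in(0,1)$, this is exactly $\beta>0$; it is the same quantity as $p(\lambda)>0$. The second is $1+a+b>0$, i.e.\ $2+\beta/\lambda+\beta/\lambda^2>0$. The third is $1-b>0$, i.e.\ $\beta<\lambda^2$.

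For the forward direction ($\Leftarrow$), assume $0<\beta<\lambda^2$. Then the first and third inequalities hold, and the second holds trivially because $\beta>0$. Jury then gives $\rho(C)<\lambda$.

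For the converse ($\Rightarrow$), Jury's equivalence gives the first and third inequalities, hence $\beta>0$ and $\beta<\lambda^2$. The second inequality is not needed here.

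The only delicate point is the case $\beta\le0$. The statement's right-hand side demands $\alpha\eta>0$, and the equivalence must exclude $\beta=0$, where $\rho(C)=\lambda$ exactly. Since the paper restricts $\eta\ge0$, I would note that $\beta=0$ gives roots $\{0,\lambda\}$, so $\rho(C)=\lambda$, which is consistent with the failure of inequality one. For negative $\beta$, if allowed, the first inequality fails as well, so the equivalence holds for all real $\beta$ without further work.

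I expect no real obstacle beyond carefully tracking the rescaling by $\lambda$. This requires $\lambda>0$, which is exactly where \Cref{ass:cpd-constant-mode-range} enters.
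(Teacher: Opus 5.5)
Your proof is correct and follows the paper's argument essentially step for step. Like the paper, you rescale the characteristic polynomial of $C$ by $1-\alpha\delta(1-\gamma)>0$, apply the second-order Jury criterion of \Cref{lem:jury-quadratic}, and read off $\alpha\eta>0$ from the first inequality and $\alpha\eta<\bigl(1-\alpha\delta(1-\gamma)\bigr)^2$ from the third. Your handling of the second inequality (implied by $\alpha\eta>0$, and not needed for the converse) is slightly more careful than the paper's phrasing that ``the first two inequalities hold exactly when $\alpha\eta>0$.''
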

\begin{proof}
By~\Cref{ass:cpd-constant-mode-range},
\begin{equation*}
  0<1-\alpha\delta(1-\gamma)<1.
\end{equation*}
The characteristic polynomial of $C$ is
\begin{equation*}
  \lambda^2-
  \left(1-\alpha\delta(1-\gamma)+\alpha\eta\right)\lambda+
  \alpha\eta=0.
\end{equation*}
The inequality
\begin{equation*}
  \rho(C)<1-\alpha\delta(1-\gamma)
\end{equation*}
is equivalent to all roots of the polynomial obtained by setting
\(\lambda=(1-\alpha\delta(1-\gamma))z\) lying in the open unit disk.  After
dividing by \((1-\alpha\delta(1-\gamma))^2\), this polynomial is
\begin{equation*}
  z^2-
  \frac{1-\alpha\delta(1-\gamma)+\alpha\eta}{1-\alpha\delta(1-\gamma)}z+
  \frac{\alpha\eta}{(1-\alpha\delta(1-\gamma))^2}=0.
\end{equation*}
For a real quadratic $z^2-az+b$, the second-order Jury criterion in~\Cref{lem:jury-quadratic} says that all
roots lie in the open unit disk if and only if
\begin{equation*}
  1-a+b>0,
  \qquad
  1+a+b>0,
  \qquad
  1-b>0.
\end{equation*}
Here these three inequalities become
\begin{align*}
  &1-
  \frac{1-\alpha\delta(1-\gamma)+\alpha\eta}{1-\alpha\delta(1-\gamma)}+
  \frac{\alpha\eta}{(1-\alpha\delta(1-\gamma))^2} \\
  &\qquad=
  \frac{\alpha\eta\,\alpha\delta(1-\gamma)}{(1-\alpha\delta(1-\gamma))^2}>0,\\
  &1+
  \frac{1-\alpha\delta(1-\gamma)+\alpha\eta}{1-\alpha\delta(1-\gamma)}+
  \frac{\alpha\eta}{(1-\alpha\delta(1-\gamma))^2} \\
  &\qquad=
  2+
  \frac{\alpha\eta}{1-\alpha\delta(1-\gamma)}+
  \frac{\alpha\eta}{(1-\alpha\delta(1-\gamma))^2}>0,\\
  &1-
  \frac{\alpha\eta}{(1-\alpha\delta(1-\gamma))^2}>0.
\end{align*}
Because \(0<1-\alpha\delta(1-\gamma)<1\), the first two inequalities hold
exactly when $\alpha\eta>0$, while the third is equivalent to
\(\alpha\eta<(1-\alpha\delta(1-\gamma))^2\).  This proves the claimed equivalence.
\end{proof}

\begin{proposition}
\label{app:prop-lfa-cpd-hb-common-acceleration}
Under~\Cref{ass:cpd-constant-mode-range},
\begin{equation*}
  \rho(C)<\bigl(1-\alpha\delta(1-\gamma)\bigr)
  \quad\Longleftrightarrow\quad
  0<\alpha\eta<\bigl(1-\alpha\delta(1-\gamma)\bigr)^2.
\end{equation*}
\end{proposition}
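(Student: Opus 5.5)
The matrix $C$ that appears in the feature-space heavy-ball analysis is literally the same $2\times2$ matrix as in the tabular case:
\[
C=\begin{bmatrix}\bigl(1-\alpha\delta(1-\gamma)\bigr)+\alpha\eta&-\alpha\eta\\ 1&0\end{bmatrix}.
\]
It depends only on the scalars $\alpha,\delta,\gamma,\eta$ and not on $\Phi$, $G$, or $c$. The plan is therefore to reduce the statement to \Cref{app:prop-cpd-constant-mode-acceleration}, and, for self-containment, to sketch again the direct Jury-criterion argument.

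Write $\kappa:=1-\alpha\delta(1-\gamma)$. \Cref{ass:cpd-constant-mode-range} gives $0<\kappa<1$. The characteristic polynomial of $C$ is
\[
\lambda^2-(\kappa+\alpha\eta)\lambda+\alpha\eta .
\]
Since $\kappa>0$, the condition $\rho(C)<\kappa$ is equivalent to all roots of the rescaled polynomial lying in the open unit disk. Substituting $\lambda=\kappa z$ and dividing by $\kappa^2$ gives
\[
z^2-az+b,\qquad a=\frac{\kappa+\alpha\eta}{\kappa},\qquad b=\frac{\alpha\eta}{\kappa^2}.
\]
\Cref{lem:jury-quadratic} then says the roots lie in the open unit disk if and only if the three Jury inequalities hold:
\[
1-a+b=\frac{\alpha\eta(1-\kappa)}{\kappa^2}>0,\qquad
1+a+b=2+\frac{\alpha\eta}{\kappa}+\frac{\alpha\eta}{\kappa^2}>0,\qquad
1-b>0 .
\]
Using $0<\kappa<1$, these reduce as follows:
\begin{itemize}
\item The first inequality holds if and only if $\alpha\eta>0$.
\item The third inequality holds if and only if $\alpha\eta<\kappa^2$.
\item The second inequality holds automatically once $\alpha\eta>0$.
\end{itemize}
Combining the three gives the equivalence with $0<\alpha\eta<\kappa^2$.

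One subtlety needs care: the sign of $\alpha\eta$ is not assumed a priori on the left-hand side of the equivalence. For the direction $\rho(C)<\kappa\Rightarrow\alpha\eta>0$, I would use the first Jury inequality directly, which forces $\alpha\eta(1-\kappa)>0$ and hence $\alpha\eta>0$. For the converse direction, the middle inequality needs $\alpha\eta>0$, and this is available from the hypothesis. If one insists on not using positivity there, note that the sum of the first two Jury expressions equals $2+2b>0$ whenever $b>-1$, which follows from $1-b>0$ only if $b$ is bounded below; so it is cleaner to simply use $\alpha\eta>0$.

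The main obstacle is not computational. It is making sure the rescaling step is an exact equivalence, namely that $\rho(C)<\kappa$ holds if and only if $\rho(C/\kappa)<1$. This requires $\kappa>0$, which is exactly what \Cref{ass:cpd-constant-mode-range} supplies.
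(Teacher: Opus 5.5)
Your proposal is correct and follows the paper's own proof essentially step for step: rescale by $\kappa=1-\alpha\delta(1-\gamma)>0$, apply the second-order Jury criterion of \Cref{lem:jury-quadratic}, get $\alpha\eta>0$ from the first inequality and $\alpha\eta<\kappa^2$ from the third, and note the second is automatic (the paper's proof likewise just repeats the rescaling from \Cref{app:prop-cpd-constant-mode-acceleration}, since $C$ is the same matrix). Your aside about the sum $2+2b$ is unnecessary, because the first and third Jury inequalities already give $0<\alpha\eta<\kappa^2$ in the forward direction and $\alpha\eta>0$ gives the second in the converse; you can drop it.
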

\begin{proof}
Under~\Cref{ass:cpd-constant-mode-range},
$0<1-\alpha\delta(1-\gamma)<1$, and
\begin{equation*}
  C=
  \begin{bmatrix}
    1-\alpha\delta(1-\gamma)+\alpha\eta&-\alpha\eta\\
    1&0
  \end{bmatrix}.
\end{equation*}
Its characteristic polynomial is $\lambda^2-
\left(1-\alpha\delta(1-\gamma)+\alpha\eta\right)\lambda+\alpha\eta$. The same rescaling
$\lambda=\bigl(1-\alpha\delta(1-\gamma)\bigr)z$ used in
\Cref{app:prop-cpd-constant-mode-acceleration} gives the second-order Jury
inequalities~\citep{jury1964ztransform}
\begin{align*}
  &\frac{\alpha\eta\,\alpha\delta(1-\gamma)}{(1-\alpha\delta(1-\gamma))^2}>0,\\
  &2+\frac{\alpha\eta}{1-\alpha\delta(1-\gamma)}
  +\frac{\alpha\eta}{(1-\alpha\delta(1-\gamma))^2}>0,\\
  &1-\frac{\alpha\eta}{(1-\alpha\delta(1-\gamma))^2}>0.
\end{align*}
Since $0<1-\alpha\delta(1-\gamma)<1$, these inequalities are equivalent to
$0<\alpha\eta<\bigl(1-\alpha\delta(1-\gamma)\bigr)^2$, which proves the claimed equivalence.
\end{proof}

\end{document}